\theoremstyle{plain}
\newtheorem{theorem}{Theorem}[section]
\theoremstyle{plain}
\theoremstyle{plain}
\newtheorem{proposition}[theorem]{Proposition}
\newtheorem{lemma}[theorem]{Lemma}
\newtheorem{corollary}[theorem]{Corollary}
\theoremstyle{definition}
\newtheorem{definition}[theorem]{Definition}
\newtheorem{assumption}[theorem]{Assumption}
\theoremstyle{remark}
\theoremstyle{definition}
\providecommand{\R}{\ensuremath{\mathbb{R}}}
\providecommand{\N}{\ensuremath{\mathbb{N}}}
\providecommand{\abs}[1]{\lvert#1\rvert}
\providecommand{\norm}[1]{\lVert#1\rVert}
\providecommand{\bydef}{\overset{\text{def}}{=}}
\renewcommand{\vec}[1]{#1}
\newcommand{\E}{\mathbb{E}}
\renewcommand{\P}{\mathbb{P}}
\newcommand{\Ea}[1]{\E\left[#1\right]}
\newcommand{\Eb}[2]{\E_{#1}\left[#2\right]}
\DeclareMathOperator{\diag}{diag}
\newcommand{\bA}{{\boldsymbol{A}}}
\newcommand{\bW}{{{\boldsymbol{{W}}}}}
\newcommand{\bV}{{\boldsymbol{V}}}
\newcommand{\bI}{{\boldsymbol{I}}}
\newcommand{\ba}{{\boldsymbol{a}}}
\newcommand{\polylog}{\operatorname{polylog}}
\newcommand{\btheta}{{\boldsymbol{\theta}}}
\newcommand{\bL}{{\boldsymbol{L}}}
\newcommand{\bx}{{\boldsymbol{x}}}
\newcommand{\bX}{{\boldsymbol{X}}}
\newcommand{\by}{{\boldsymbol{y}}}
\renewcommand{\vec}[1]{\boldsymbol{#1}}
  \providecommand{\R}{\mathbb{R}} 
  \providecommand{\N}{\mathbb{N}} 
  \def\sign{\@ifnextchar*{\@sgnargscaled}{\@ifnextchar[{\sgnargscaleas}{\@ifnextchar{\bgroup}{\@sgnarg}{\sgn} }}}
  \def\@sgnarg#1{\sgn\rbr{#1}}
  \def\@sgnargscaled#1{\sgn\rbr*{#1}}
  \def\@sgnargscaleas[#1]#2{\sgn\rbr[#1]{#2}}
\providecommand{\E}{\mathbb{E}}%
  \providecommand{\cO}{\mathcal{O}}
\newcommand{\speedup}[1]{{\color{gray}(\ifdim #1 pt > 0.3pt #1\else $< #1$\fi{}$\times$)}}
\newsavebox{\@brx}
\newcommand{\llangle}[1][]{\savebox{\@brx}{\(\m@th{#1\langle}\)}%
  \mathopen{\copy\@brx\mkern2mu\kern-0.9\wd\@brx\usebox{\@brx}}}
\newcommand{\rrangle}[1][]{\savebox{\@brx}{\(\m@th{#1\rangle}\)}%
  \mathclose{\copy\@brx\mkern2mu\kern-0.9\wd\@brx\usebox{\@brx}}}
\providecommand{\abs}[1]{\left\lvert#1\right\rvert}
\providecommand{\norm}[1]{\left\lVert#1\right\rVert}
  \providecommand{\R}{\mathbb{R}} %
  \providecommand{\N}{\mathbb{N}} %
  \providecommand{\Eb}[1]{{\mathbb E}\left[#1\right] }
  \providecommand{\P}[1]{{\rm Pr}\left.#1\right. }
  \providecommand{\cO}{\mathcal{O}}
\providecommand{\mycomment}[3]{\todo[caption={},size=footnotesize,color=#1!20]{\textbf{#2: }#3}}%
\providecommand{\inlinecomment}[3]{%
  {\color{#1}#2: #3}}%
\newcommand\commenter[2]%
\newcommand\csname i#1\endcsname[1]{\inlinecomment{#2}{#1}{##1}}
\newcommand\csname #1\endcsname[1]{\mycomment{#2}{#1}{##1}}
  \definecolor{mydarkblue}{rgb}{0,0.08,0.45}
\title{A Random Matrix Theory Perspective on the Spectrum of \\Learned Features and Asymptotic Generalization Capabilities}
\author[1,2]{Yatin Dandi}
\author[2]{Luca Pesce}
\author[1,5]{Hugo Cui}
\author[2]{Florent Krzakala}
\author[3]{Yue M. Lu}
\author[4]{Bruno Loureiro}
\affil[1]{\small Statistical Physics Of Computation laboratory, 
\'Ecole Polytechnique F\'ed\'erale de Lausanne (EPFL), 1015 Lausanne, Switzerland}
\affil[2]{\small Information Learning \& Physics laboratory,
\'Ecole Polytechnique F\'ed\'erale de Lausanne (EPFL), 1015 Lausanne, Switzerland}
\affil[3]{\small  
John A. Paulson School of Engineering and Applied Sciences, Harvard University}
\affil[4]{\small D\'epartement d'Informatique, \'Ecole Normale Sup\'erieure (ENS) - PSL \& CNRS, 
F-75230 Paris cedex 05, France}
\affil[5]{\small Center of Mathematical Sciences and Applications, Harvard University}
\newtheorem*{rep@theorem}{\rep@title}
\newcommand{\newreptheorem}[2]{%
\newenvironment{rep#1}[1]{%
 \def\rep@title{#2 \ref{##1}}%
 \begin{rep@theorem}}%
 {\end{rep@theorem}}}
\theoremstyle{plain}
\numberwithin{theorem}{section}
\theoremstyle{remark}
\date{\today}
\newcommand{\change}[1]{\textcolor{black}{ #1}} 
\begin{document}
\date{}
\maketitle

\begin{abstract}
  A key property of neural networks is their capacity of adapting to data during training. Yet, our current mathematical understanding of feature learning and its relationship to generalization remain limited. In this work, we provide a random matrix analysis of how fully-connected two-layer neural networks adapt to the target function after a single, but aggressive, gradient descent step. We rigorously establish the equivalence between the updated features and an isotropic spiked random feature model, in the limit of large batch size. For the latter model, we derive a \emph{deterministic equivalent} description of the feature empirical covariance matrix in terms of certain low-dimensional operators.
  This allows us to sharply characterize the impact of training in the asymptotic feature spectrum, and in particular, provides a theoretical grounding for how the tails of the feature spectrum modify with training.  The deterministic equivalent further yields the exact asymptotic generalization error, shedding light on the mechanisms behind its improvement in the presence of feature learning.
  Our result goes beyond standard random matrix ensembles, and therefore we believe it is of independent technical interest. Different from previous work, our result holds in the challenging maximal learning rate regime, is fully rigorous and allows for finitely supported second layer initialization, which turns out to be crucial for studying the functional expressivity of the learned features. This provides a sharp description of the impact of feature learning in the generalization of two-layer neural networks, beyond the random features and lazy training regimes.
\end{abstract}


\section{Introduction}

An essential property of neural networks is their capacity to extract relevant low-dimensional features from high-dimensional data. This \textit{feature learning} is usually signaled by an array of telltale phenomena ---  such as the improvement of the test error over non-adaptive methods \citep{Bach2021}, or the lengthening of the tails in the spectra of the network weights and activations \citep{martin2021implicit, martin2021predicting, wang2024spectral}. Yet, a precise theoretical characterization of the learned features, and how they translate into the aforementioned generalization and spectral properties, is still largely lacking, and arguably constitutes one of the key open questions in machine learning theory.

In this work, we provide a rigorous answer to these questions for two layer neural networks trained with a single but large gradient step. More precisely,
\begin{itemize}
    \item we provide an exact characterization of statistics associated to the learned features, and in particular the achieved test error;
    \item we quantitatively characterize how feature learning results in modified tails in the spectrum of the feature covariance matrix, as illustrated in Fig.\,\ref{fig:spectrum}.
\end{itemize}
Our results provide a sharp mathematical description of feature learning in this context and allow us to explore the interplay between representational learning and generalization. Before exposing our main technical results, we first offer an overview of known results on feature learning (or the lack thereof) in two-layer neural networks, so as to put our work in context.

\paragraph{Models ---}
The present manuscript addresses the simplest class of neural network architectures (used in Fig.~\ref{fig:spectrum}), namely fully-connected, shallow two-layer neural networks:
\begin{align}
\label{eq:def:2LNN}
    f( x; W, a) = \frac{1}{\sqrt{p}} \sum_{j=1}^p a_j \sigma(w_j^{\top}x),
\end{align}
where $W = \{w_j, j \in [p] \} \in \mathbb{R}^{p \times d}$ and $a = \{a_{j}, j\in[p]\} \in\mathbb{R}^{p}$ denote the first and second layer weights, respectively, and $\sigma$ is an activation function. Motivated by the lazy regime of large-width networks \citep{jacot2018neural,chizat2019lazy}, the generalization properties of two-layer neural networks have been thoroughly investigated in the simple case where only the second-layer weights $a$ are trained (typically by ridge regression), while the first-layer weights $w = w^{0}$ are fixed at (typically random) initialization. This model, which is equivalent to the \emph{Random Features} (RF) approximation of kernel methods introduced by \cite{rahimi2007random}, is particularly amenable to mathematical treatment. The reason is that, besides being a convex problem, in the asymptotic regime where $n,p=\Theta(d)$ with $d\to\infty$, the random feature map $\varphi(x) = \sigma(w^{\top}x)$ statistically behaves as a \textit{linear} function with additive noise --- a result often referred to as the Gaussian Equivalence Principle (GEP) \citep{goldt_gaussian_2021, hu2022universality, mei_generalization_2022}. While this surprising property makes the problem tractable with random matrix theory arguments, it implies that in this regime random features can learn, at best, a linear approximation of the underlying target function. This sets a benchmark for the fundamental limitation of not adapting the first-layer weights to the data.

\begin{figure}[t]
    \centering
    \includegraphics[width=0.48\textwidth]{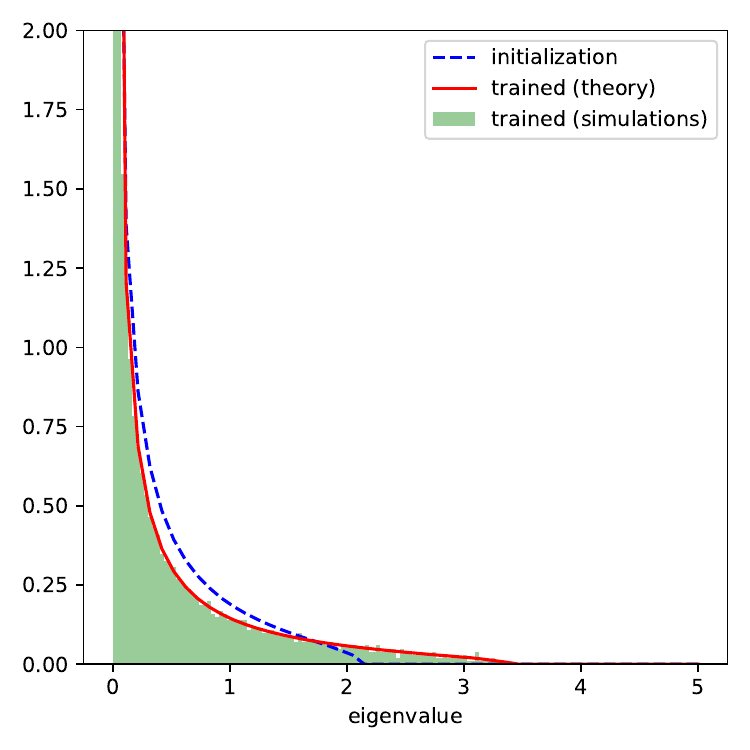}
    \caption{    
    {\bf Bulk spectrum of the empirical features covariance} 
    at initialization (dashed blue) and after training (green); the red line corresponds to the theoretical characterization derived in this manuscript.
    }
    \label{fig:spectrum}
\end{figure}

\paragraph{Gradient descent ---}
Going (literally) one step beyond random features, \cite{ba2022high} showed that training the first layer weights with a single Gradient Descent (GD) step on a batch of data $\{(x_{\mu}, y_{\mu}): \mu\in[n_{0}]\}$ from the same target distribution:
\begin{align}
    \label{eq:def:gdstep}
        w_{j}^1 &= w_{j}^0 - \eta \nabla_{w_{j}}\frac{1}{2n_{0}}\sum\limits_{\mu \in[n_{0}]}(y_{\mu}-f(x_{\mu}; a^{0},w^{0}))^{2} 
\end{align}
followed by ridge regression on the second layer weights $a$ with $n$ fresh samples can drastically change the story above, depending on the scaling of the learning rate $\eta$ where $n \propto d, n_0 \propto d$. More precisely, they showed that for $\eta = \Theta_{d}(\sqrt{d})$, Gaussian Equivalence asymptotically holds as $d\to\infty$, implying that an exact asymptotic treatment based on the GEP still holds. However, in the regime of an aggressive learning rate $\eta = \Theta_d(d)$ (the so-called Maximal Update parametrization \citep{yang2022tensor}) they showed that the first-layer weights adapt to the data distribution, translating into an improvement over the RF lower-bound.

This finding has sparked considerable interest in exactly characterizing the asymptotic generalization error achieved under this single, large step setting. \cite{dandi2023twolayer} proved a novel \textit{Conditional Gaussian Equivalence Principle} (cGEP), and derived more general lower-bounds for the performance of two-layer neural networks after the gradient step. However, these bounds do not provide a fine-grained description of what is learned in the feature learning regime. Reaching such a sharp description instead requires addressing the challenging Random Matrix Theory (RMT) problem of characterizing the non-linear transformation of a highly structured random matrix $\sigma\left(W - \eta G\right)$
with $G$ denoting the gradient matrix,  in the regime $\Theta_d(\norm{\eta G}_F)=\Theta_{d}(||W||_{F})$. \cite{moniri2023theory} provided the first result in this direction for the intermediate learning rate regime $\eta = \Theta_{d}(d^{\nicefrac{1}{2}+\zeta})$, with $\zeta\in (0,\nicefrac{1}{2})$, proving a polynomial GEP together with an exact RMT analysis of the asymptotic generalization error. Their findings, however, do not hold in the more challenging Maximal Update regime $\eta=\Theta(d)$. Leveraging the results on the low-rank approximation of the Gradient matrix \citep{ba2022high} and the cGEP from \citep{ dandi2023twolayer}, \cite{cui2024asymptotics} studied the latter regime approximating the two-layer network by a Spiked Random Features Model (SRFM) with the non-rigorous replica method \citep{Mezard1986}. Though such heuristic arguments are inspirational, they often lack interpretability when compared to other approaches. 

\paragraph{Summary of main results ---} In this work, we provide a rigorous RMT treatment of feature learning after a single GD step in the challenging Maximal Update step size regime. Beyond proving the conjectured results from \cite{cui2024asymptotics}, our analysis extends these findings in several key directions, enabling a quantitative exploration and deeper understanding of fundamental aspects of feature learning, particularly with respect to generalization and spectral properties. Specifically:
\begin{itemize}
    \item We prove a \emph{deterministic equivalent} description for the empirical feature matrix after the gradient step. This characterization is non-asymptotic in the problem dimensions, and in particular sharply holds in the Maximal-update scaling in the proportional regime where $n,p,\eta=\Theta(d)$ and $n_0=\Omega(d^{1+\epsilon})$ as $d\to\infty$, for some $\epsilon>0$. This result characterizes the spectral properties as well as the generalization error upon updating the second layer.
    \item Our proof proceeds through multiple stages of deterministic equivalences and the asymptotic description of the high-dimensional features through low-dimensional non-linear functions.
    \item We derive an exact asymptotic formula for the generalization error of ridge regression on features updated via a gradient step in the proportional high-dimensional regime, where $n,p,\eta=\Theta(d)$ with $d\to\infty$. Our result offers a rigorous proof of the conjectures in \cite{cui2024asymptotics}, while extending them in several directions, as it applies to finitely supported second-layer initialization and structured first-layer initialization.
    \item Using the deterministic equivalent, we demonstrate how the ``spikes'' in the weights resulting from feature learning alter the entire shape of the feature covariance spectrum and its tail behavior. This observation aligns with the empirical findings of \cite{wang2024spectral} and provides a rigorous foundation for them in the context of our setting.
    \item Finally, we precisely characterize the effect of the  variability in the second-layer initialization on the functional expressivity of the network after one GD step.
\end{itemize}
These findings provide a detailed understanding of the consequences of feature learning in our setting, helping to establish several widely accepted intuitions on a more quantitative and rigorous basis.

\subsection*{Further related works}

\paragraph{Fixed feature methods --} A plethora of works characterized the generalization capabilities of two-layer networks in the high-dimensional regime when the first hidden layer is not trained, with the most prominent example of such fixed feature method being kernel machines \citep{bordelon20a,Canatar2021,Cui2021, Cui2023error,Dietrich1999, Donhauser2021,Ghorbani2019, Ghorbani2020, Opper2001,xiao2022precise}. This class of algorithms is amenable to theoretical analysis and comes with sharp generalization guarantees. 
However, these methods adapt to relevant low-dimensional structures at much higher sample complexities than fully trained two-layer networks. In simple terms, the sample complexity of kernel methods is not driven by the presence (or lack thereof) of a low-dimensional target subspace. Identical considerations hold for Random Feature Models, where the number of samples $n$ in the generalization guarantees is replaced with $\min(n,p)$, with $p$ being the number of random features \citep{gerace_generalisation_2020, mei_generalization_2022, MEI20223, hu2024asymptotics, aguirre2024random}.

\paragraph{Feature learning --} The discussion above portrays the limitations of fixed feature methods. Inspired by this, a large body of work has studied the sharp separation between the generalization capabilities of such methods versus fully trained two-layer networks that learn features through gradient-based training.
Many of these works fall under the umbrella of the so-called mean-field regime \citep{chizat2018global, mei2018mean, rotskoff2018trainability, sirignano2020mean}. The authors mapped the optimization of two-layer shallow networks onto a convex problem in the space of measures on the weights and paved the way for understanding how features are learned in the high-dimensional regime. The arguably most popular data model in the theoretical community for addressing this question is the multi-index data model with isotropic Gaussian data. This setting has attracted considerable attention in the theoretical community with numerous works that have analyzed the feature learning capabilities of shallow networks trained with gradient-based schemes \citep{arous2021online, abbe2023sgd,ba2023learning,
bardone2024sliding, berthier2023learning, bietti2023learning,
damian2023smoothing, 
dandi2023twolayer,
Paquette2021SGDIT,veiga2022phase, zweig2023symmetric,dandi2024the}.

\paragraph{Deterministic equivalents --}
Deterministic equivalents of large empirical covariance matrices have been extensively studied, beginning with the seminal work of \cite{marchenko1967distribution}. This was extended by \cite{burda2004signal, knowles2017anisotropic} to separable data covariances, and further by \cite{bai2008large, louart2018concentration, chouard2022quantitative} for non-separable covariances. These methodologies have enabled precise asymptotic characterizations of the learning dynamics in single-layer neural networks \citep{louart2018random} and deep random feature models \citep{schroder2023deterministic, schroder2024asymptotics, bosch2023precise}.

\section{Notations and Setting} 
\label{sec:main:setting}
Consider a supervised learning problem with training data $\mathcal{D}=\{(x_{\mu}, y_{\mu})\in\mathbb{R}^{d+1}, \mu\in[N]\}$. As motivated in the introduction, our goal is to study the problem of feature learning with two-layer neural networks defined in \eqref{eq:def:2LNN}. A widespread intuition in the machine learning literature for why learning is possible despite the curse of dimensionality is that real data distributions typically exhibit low-dimensional latent structures \citep{bellman1957dynamic}. To reflect and model this intuition, we assume our training data have been independently drawn from an isotropic \emph{Gaussian single-index model}:
\begin{align}
\label{eq:def:data}
y_{\mu} = f_{\star}(x_{\mu}) = g(x_{\mu}^{\top}w^{\star}), \qquad x_{\mu}\sim\mathcal{N}(0,I_{d}),
\end{align}
where the \change{unit norm} vector $w^\star$ denotes the target weights and \change{ $g:\mathbb{R}\to\mathbb{R}$} is the link function. \change{ We refer to the appendix for the extension to more general stochastic mapping.} Note that in \ref{eq:def:data}, the high-dimensional covariates $X = (x_{\mu})_{\mu\in[n]}\in\mathbb{R}^{n\times d}$ are isotropic in $\mathbb{R}^{d}$, and therefore the structure in the data distribution is in the conditional distribution of the labels $y|x$, which depends on the covariates only through their projection onto a $1$-dimensional subspace of $\mathbb{R}^{d}$. Therefore, learning features in this model translate to learning the target weight $w^\star$. 

Given a batch of $N$ samples $\mathcal{D} = \{(x_{\mu}, y_{\mu}), {\mu \in [N]}\}$ independently drawn from model \eqref{eq:def:data}, we are interested in studying how our two-layer neural network \eqref{eq:def:2LNN} learns the target feature $w^\star$  through the \emph{Empirical Risk Minimization} (ERM) on the training data. We follow the same procedure as in \citep{ba2022high, dandi2023twolayer, moniri2023theory, cui2024asymptotics} and consider the following two-step training procedure:
\begin{enumerate}
    \item Let $W^{0}$ and $a^{0}$ denote the first and second layer 
    weights at initialization. Consider a partition of the training data $\mathcal{D} = \mathcal{D}_{0}\cup \mathcal{D}_{1}$ in two disjoint sets of size $n_{0}$ and $n\coloneqq N-n_{0}$, respectively. First, we apply a single gradient step on the square loss for the first-layer weights, keeping the $2^{\rm nd}$ layer ${a}^{0}$ fixed: 
    \begin{align}
        \label{eq:training_step}
         &w_{j}^1 \!\!= w_{j}^0 - \eta g_j^0 \\
         &g_j^0 = \frac{1}{n_0\sqrt{p}} \!\sum_{\mu \in [n_0]}\!\! \!\!\left(f(x_{\mu}; W^{0},a^{0})\! -\!y_\mu\right)a_j^0 x_\mu \sigma^\prime ({w_j^0}^{\top}\!\!x_{\mu}) \notag
    \end{align}
    In this first \textit{representation learning} step, the hidden layer weights adapt to the low-dimensional relevant features from the data.
    \item Given the updated weights $W^{1}$, we update the second-layer weights via  ridge regression on the remaining data $\mathcal{D}_{1}$:
    \begin{align}
        \label{eq:def:main:erm}
        \hat{a}_{\lambda}&=\underset{a\in\mathbb{R}^{p}}{\rm argmin} \sum\limits_{\mu\in[n]}\left(y_{\mu}-f(x_{\mu};a,W^{1})\right)^{2}+\lambda||a||_{2}^{2} \nonumber \\
        &=\left(\nicefrac{\Phi^{\top}\Phi}{p}+\lambda I_{n}\right)^{-1}\nicefrac{\Phi^{\top} y}{\sqrt{p}}
    \end{align}
    where we defined the feature matrix $\Phi \in\mathbb{R}^{n\times p}$ with elements $\phi_{\mu j}=\sigma(x_\mu^{\top}{w^{1}_{j}})$ and the label vector $y = (y_{\mu})_{\mu\in[n]}$.
\end{enumerate}
In the following, we will assume the following initial conditions for the training protocol above:
\begin{assumption}[Initialization]
    \label{ass:init}
     We assume the first layer weights are initialized uniformly at random from the hypersphere $w^{0}_{j}\sim {\rm Unif}(\mathbb{S}^{d-1}(1))$ and the second-layer weights read $a^{0}_{j} = \nicefrac{\tilde{a}^0_j}{\sqrt{p}}$, where the $\{\tilde{a}^0_j, j \in [p]\}$ are $O_d(1)$ scalars initialized i.i.d. by sampling from a dimension-independent vocabulary of size $k$, with probabilities $\pi = (\pi_{q})_{q\in[k]}$.
\end{assumption}
Our goal in the following is two-fold. First, to characterize the properties of the empirical feature matrix $\Phi^{\top}\Phi$. Second, to characterize the generalization error associated with the minimizer of equation \eqref{eq:def:main:erm}, which is defined as:
\begin{align}
\label{eq:def:generror}
    \varepsilon_{\rm gen} = \mathbb{E}_{\mathcal{D}_1} \mathbb{E}_{y_{\rm new}, \vec{x}_{\rm new}} \left[ \left( y_{\rm new} - f(\vec{x}_{\rm new}; W^{1}, \hat{a}_{\lambda})\right)^2\right]
\end{align}
where the expectation is over the joint distribution defined by the model in \ref{eq:def:data}. In particular, we will focus on the proportional high-dimensional regime with Maximal Update scaling, which we formalize in the following assumption.
\begin{assumption}[High-dimensional regime] 
\label{ass:highd}
We assume that $n_0=\Omega(d^{1+\epsilon})$ for some $\epsilon > 0$.
We work under the proportional regime with Maximal Update scaling, defined as the limit where $n,p,\eta, d\to\infty$ at fixed ratios:
\change{\begin{align}\label{eq:joint_limit}
    \alpha \coloneqq \frac{n}{d}, \qquad \beta \coloneqq \frac{p}{d}, \qquad \tilde{\eta}\coloneqq \frac{\eta}{d}
\end{align}}
\end{assumption}

\begin{assumption}[Activation function]
\label{ass:activation}
$\sigma$ is odd, uniformly Lipschitz such that $\sigma'', \sigma'''$ exist almost surely and
are bounded in absolute value by some constant $C$
almost surely with respect to the Lebesgue measure.
Furthermore, $g$ is uniformly bounded and Lipschitz with $\Eb{z \sim \mathcal{N}(0,1)}{g(z)}=0$ and $\Eb{z\sim \mathcal{N}(0,1)}{g'(z)} \neq 0$.
\end{assumption}

\section{Asymptotics of the First Gradient Step} 
We introduce in this section our first result that establishes a rigorous framework for studying the exact asymptotics after one GD step. This question has been the subject of intense theoretical scrutiny in recent years. First, \cite{ba2022high} proved that in the high-dimensional regime, specified in Assumption~\ref{ass:highd}, the hidden layer weights after one GD step are approximately low rank:
\begin{align}
\label{eq:main:rank_one_approx}
     W^1 &= W^0  + uv^\top + \Delta
\end{align}
where the spiked structure is identified by: i) $u = \nicefrac{\eta c_1c^\star_1 a^0}{\sqrt{p}}$ is proportional to the second layer at initialization $a^0$, the learning rate $\eta$, and the first Hermite coefficients of the network activation $\sigma$ and the target activation $g$, i.e, $c_1 = \Eb{\xi \sim \mathcal{N}(0,1)}{\sigma(\xi)\xi}$, $c^\star_1 = \Eb{\xi \sim \mathcal{N}(0,1)}{g(\xi)\xi}$; ii) $v$ lies along the first Hermite coefficient of the target $f_\star(\cdot)$, i.e., $v \propto \sum_{\mu \in [n]}y_\mu x_\mu$. The spike component $v$ is correlated with the target vector $w^\star$ in eq.~\eqref{eq:def:data}. It is precisely the presence of such correlated components that enables the trained model to surpass the Random Features performance \cite{damian2022neural,abbe2022merged,abbe2023sgd,dandi2023twolayer}. The ``noise'' term $\Delta$, arises from higher-order components of the activations (see details in \ref{sec:app:mappting_sfrm})
\\ The decomposition in eq.~\eqref{eq:main:rank_one_approx} underlies the analysis of \cite{ba2022high, moniri2023theory, cui2024asymptotics} in the propotional regime $p \propto d,p \propto n$.
Heuristically, one could hope to analyze the trained weights by mapping the problem to a Spiked Random Feature Model (SRFM), where the weights $F$ in a feature map $\sigma(Fx)$ --- can be decomposed as the sum of a random bulk $(F_0)$ and a spike:
\begin{align}\label{eq:spiked}
    F = F_0 + u v^\top
\end{align}
Along these lines, \cite{cui2024asymptotics} approximate the noise term $\Delta$ in eq.~\eqref{eq:main:rank_one_approx} as an isotropic Gaussian matrix to reach an asymptotic description of the equivalent SRFM model using non-rigorous tools from Statistical Physics. 
It however remained unclear whether
the uniform/Gaussian isotropic description of $W^0 + \Delta$ in eq.~\eqref{eq:main:rank_one_approx} is an accurate approximation of the bulk in the actual GD step. We offer a rigorous answer to the validity of the approximation. 
\paragraph{Anisotropic bulk covariance --} We provide the asymptotic description of the covariance for the bulk weights in eq.~\eqref{eq:main:rank_one_approx} after one GD step.
For $\nicefrac{n_0}{d} = \Theta(1)$, we unveil the presence of \textit{anisotropic} components that contrast with the uniform approximation considered in \cite{cui2024asymptotics}, which corresponds to taking a diagonal approximation of the covariance. We refer to Appendix~\ref{app:extra} for formal results and additional investigations.
\paragraph{Diagonal approximation regime --} On the other hand, we provably show that the diagonal approximation considered in \cite{cui2024asymptotics} for the covariance of the bulk weights is valid as soon as the number of samples $n_0$ used in the GD step is sufficiently large ($n_0 = \Theta(d^{1+\epsilon})$ for any $\epsilon >0$). In this regime,   the spike $v$ in Equation \ref{eq:main:rank_one_approx} can be further replaced by the signal $w^*$:
\begin{lemma}
\label{lemma:SRFM}
    Let $W^{(1)} \in \mathbb{R}^{p \times d}$ denote the weight matrix after the first gradient step. Then,
    under Assumptions \ref{ass:init}, \ref{ass:highd}, and \ref{ass:activation}:
    \begin{equation}
        \norm{W^{(1)}-\left(W^{(0)}+ u (w^\star)^\top \right)}_2  \xrightarrow[d \rightarrow \infty]{a.s} 0.
    \end{equation}
    where $w^\star$ is the target vector in eq.~\eqref{eq:def:data}, and $u = \nicefrac{\eta c_1c^\star_1 a^0}{\sqrt{p}}$ as defined in eq.~\eqref{eq:main:rank_one_approx}, with $\eta$ being the learning rate.
\end{lemma}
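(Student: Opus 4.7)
}

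The plan is to expand the gradient step row by row, isolate the rank-one contribution that arises from pairing the leading Hermite coefficients of $\sigma'$ and $g$, and bound the remaining contributions in operator norm using concentration of sample averages under the stronger batch-size scaling $n_0 = \Omega(d^{1+\epsilon})$. Concretely, I would write
\begin{equation}
    -\eta g_j^{0} \;=\; \frac{\eta\, a_j^{0}}{n_0\sqrt{p}} \sum_{\mu=1}^{n_0} \bigl(y_\mu - f(x_\mu; W^{0}, a^{0})\bigr)\, \sigma'\!\bigl(w_j^{0\top}x_\mu\bigr)\, x_\mu,
\end{equation}
and observe that, since $\sigma$ is odd and $a_j^{0}=\tilde{a}_j^{0}/\sqrt{p}$ with $\tilde a_j^{0}=O_d(1)$, the output $f(x_\mu;W^{0},a^{0})$ is the $p^{-1/2}$-scaled sum of $p$ mean-zero random variables and is therefore negligible after averaging. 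This reduces the analysis to the term driven by the labels $y_\mu$.

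Next, I would perform a Hermite decomposition. Writing $\sigma'(t)=c_1+\tilde{\sigma}'(t)$ with $c_1 = \mathbb{E}_{\xi\sim\mathcal{N}(0,1)}[\sigma'(\xi)] = \mathbb{E}[\sigma(\xi)\xi]$ (Stein), and $g(t) = c_1^{\star} t + \tilde{g}(t)$ where $\tilde{g}$ is orthogonal to constants and to the identity under the Gaussian measure, the product $y_\mu\,\sigma'(w_j^{0\top}x_\mu)$ splits into four pieces. The only piece giving a contribution of the target size is the $c_1 c_1^\star$ piece, which assembles into
\begin{equation}
   \frac{\eta\, a_j^{0} c_1 c_1^{\star}}{n_0\sqrt{p}}\sum_\mu (x_\mu^\top w^\star)\,x_\mu \;=\; u_j \cdot \Bigl(\tfrac{X^\top X}{n_0}w^\star\Bigr)^{\!\top},
\end{equation}
so that, stacked across $j$, this leading piece is the rank-one matrix $u\bigl(\tfrac{X^\top X}{n_0}w^\star\bigr)^{\top}$. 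Since $n_0/d\to\infty$ under Assumption~\ref{ass:highd}, the sample covariance satisfies $\|X^\top X/n_0 - I_d\|_2 = O_p(\sqrt{d/n_0})$, and hence this rank-one matrix converges in operator norm to $u(w^\star)^{\top}$, exploiting in particular that $w^\star$ is independent of $X$.

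The remainder consists of three classes of terms: the $c_1 \tilde g$, $c_1^\star \tilde{\sigma}'$ and $\tilde g \tilde{\sigma}'$ cross-Hermite contributions, and the negligible $f(x_\mu;W^0,a^0)$ contribution. Each is a structured matrix of the form $\tfrac{\eta}{n_0\sqrt{p}}\operatorname{diag}(a^{0})\,R\,X$ with $R\in\mathbb{R}^{p\times n_0}$ built from mean-zero scalar factors that are independent across the $j$-index (since the rows $w_j^{0}$ are i.i.d.\ and $a_j^{0}$ are i.i.d.). For such matrices, the absence of systematic alignment across rows prevents any hidden rank-one component from surviving, and so one can bound the operator norm either by a matrix Bernstein argument or by proving concentration of the empirical covariance $R R^\top/n_0$ to its diagonal. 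Averaging over $n_0 = \Omega(d^{1+\epsilon})$ samples shrinks each such term below the $o(1)$ threshold in operator norm.

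The main obstacle is that the spike $u(w^\star)^{\top}$ itself has operator norm $\Theta(\sqrt{d})$ (since $\|u\|=\Theta(\eta/\sqrt{p})=\Theta(\sqrt{d})$), so the conclusion $\|W^{(1)}-(W^{(0)}+u(w^\star)^{\top})\|_2\to 0$ requires a very tight cancellation. In particular, the residual $u\bigl(\tfrac{X^\top X}{n_0}w^\star-w^\star\bigr)^{\top}$ from the leading Hermite term has operator norm $\|u\|\cdot O_p(\sqrt{d/n_0}) = O_p(d/\sqrt{n_0})$, which forces $\epsilon$ in Assumption~\ref{ass:highd} to be chosen large enough (heuristically $n_0\gg d^{2}$) unless one exploits additional cancellations between the deviation of $X^\top X/n_0$ from $I_d$ and the cross-Hermite remainders. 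Making this last step rigorous --- simultaneously controlling the non-asymptotic operator-norm fluctuations of the sample covariance applied to $w^\star$, and of the mean-zero mixed-Hermite matrices with independent row noise --- is the technical crux of the proof.
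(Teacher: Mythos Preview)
Your strategy is the same as the paper's: split $\sigma'=c_1+\sigma'_{>1}$, isolate the rank-one piece, and control the rest in operator norm. Two differences in execution are worth noting.

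\textbf{Grouping.} The paper does \emph{not} further expand $g$. It keeps the full label paired with the constant $c_1$, so the rank-one part is $uv^\top$ with $v=(c_1^\star n_0)^{-1}\sum_\mu y_\mu x_\mu$, and then argues $\|v-w^\star\|=\mathcal O_\prec(\sqrt{d/n_0})$ directly from sub-Gaussianity of $y_\mu x_{\mu,i}$ ($g$ bounded). Your ``$c_1\tilde g$'' remainder is thus absorbed into $v-w^\star$, leaving only two remainders: the $\sigma'_{>1}$-against-$y$ term and the $\hat y$ term.

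\textbf{The $\sigma'_{>1}$ remainder.} Here the paper is sharper than your plan. Writing this term as $\frac{1}{n_0}\sum_\mu b_\mu x_\mu^\top$, it first computes $\mathbb E_{x_\mu}[b_\mu^{\,i} x_\mu]$ via Stein/Hermite and uses $c_2=0$ (from $\sigma$ odd) so that the surviving contributions are $\mathcal O_\prec(d^{-3/2})$ per entry, giving $\|\mathbb E[\Delta_1]\|=\mathcal O_\prec(d^{-1/2})$; then it applies matrix Bernstein with sub-exponential tails (Tropp, Theorem~6.2) to the symmetrized rank-one summands. The relevant independence is over the \emph{sample} index $\mu$, not over $j$ as you wrote; and the role of $c_2=0$ is essential, not incidental. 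The $\hat y$ term is bounded crudely via $\hat y_\mu=\mathcal O_\prec(d^{-1/2})$, exactly as you anticipate.

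\textbf{Your final concern.} You are right that $\|u\|\cdot\|v-w^\star\|=\Theta(\sqrt d)\cdot\mathcal O_\prec(\sqrt{d/n_0})=\mathcal O_\prec(d/\sqrt{n_0})$ does not vanish for $n_0=d^{1+\epsilon}$ with $\epsilon\le 1$. The paper's proof does not resolve this either: it ends by recording $v\to w^\star$ without multiplying through by $\|u\|$. So you have not overlooked a cancellation trick; the gap you identify is present in the paper's argument as written, and the stated conclusion appears to require $n_0\gg d^2$ rather than merely $n_0=\Omega(d^{1+\epsilon})$.
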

From eq.~\eqref{eq:main:rank_one_approx}, we see that the finite support assumption for the second layer (\ref{ass:init}) translates to finite support of the entries $u_i$ and we denote with  $A_{u} = \{\zeta^{u}_1, \dots, \zeta^{u}_{k}\}$ its vocabulary with the corresponding probabilities $\pi = (\pi_{q})_{q\in[k]}$.
The above Lemma rigorously characterizes the regimes in which the isotropic SRFM approximation of \cite{cui2024asymptotics} is justified and correctly describes the network after one GD step.

\section{Main Results }\label{sec:main:det_equiv}
We are now in the position to state our main technical results. Namely, a rigorous characterization of the empirical feature matrix after one gradient step through a \textit{deterministic equivalent} description. This picture enables the characterization of the features spectrum and the resulting asymptotic generalization error.
\paragraph{Extended features and resolvent --}
 As is well established in random matrix theory, the resolvent 
$G(z) = (\frac{1}{p}\Phi^\top\Phi -zI)^{-1}$, where  $\Phi = \sigma(X (W^1)^\top)$, allows the extraction of a large class of summary statistics related to the spectrum of $\Phi$ \citep{bai2008large,anderson2010introduction}.
To additionally characterize the generalization error, and capture the mean dependence of $\{\phi_\mu, \mu \in [n]\}$ on the spike components $\kappa_\mu \bydef  x_\mu^\top w^\star$,  we construct an augmented version of $G(z)$.
We find that the relevant statistics in our setup are captured by the resolvent of certain \textit{extended} features that we introduce below:

\begin{definition}[Extended resolvent] 
\label{def:resolvent} Let $(X,y)$ denote a batch of data drawn from the Gaussian single-index model in \ref{eq:def:data}, and consider the feature matrix $\Phi = \sigma(X (W^1)^\top)$ after the first gradient step (Eq. \ref{eq:training_step}), with $\kappa_\mu = x_{\mu}^\top w^{\star}$. Let $s_q$ denote the subset of coordinates such that $u_j = \zeta^u_q$ and the ``mean'' $\bar{\phi}^q_\mu = \frac{1}{\abs{s_q}}\sum_{j \in s_q} \Phi_{\mu,j}$. 
We define the \emph{extended features} $\phi^e_\mu \in \mathbb{R}^{(p+k+1)}$ and the  \emph{extended resolvent} ${G}_e(z) \in \mathbb{R}^{(p+k+1)\times (p+k+1)}$ for , $z \in \mathbb{C}/\mathbb{R}^+$ as: 
\begin{equation}\label{eq:surr_feat}
    \phi^e_\mu = \begin{pmatrix}
        y_\mu\\
        \bar{\phi}_\mu \\
        \tilde{\phi}_\mu
    \end{pmatrix}, \  {G}_e(z) \bydef \left(\frac{(\Phi^e)^{\top}\Phi^e}{p}-z{I}\right)^{-1}
\end{equation}
where $\tilde{\phi}_{\mu,j} = \phi_{\mu,j}-\bar{\phi}_j, \Phi^e = \{\phi^e_{\mu} \in \mathbb{R}^{p+k+1}, \mu \in [n]\}$
\end{definition}

A few comments about the definition of the extended features \emph{extended features} $\phi^e_\mu$ and the resolvent ${G}_e(z)$ in \ref{def:resolvent} are in place. 
First, due to the extensive spike and the finite support over u by Assumption \ref{ass:init},  each subset $s_q$ possesses a non-zero mean $\bar{\phi}^q_\mu = \frac{1}{\abs{s_q}}\sum_{j \in s_q} \Phi_{\mu j}$, asymptotically converging to $c_0(\kappa_\mu,\zeta^u_q)$ defined below.
\begin{definition}[Shifted Hermite coefficient] 
\label{def:main:shifted_hermite}
We define the shifted Hermite coefficient $c_{\ell}(\kappa,\zeta)$ of the activation $\sigma(\cdot)$
\begin{align}
    \label{eq:main:shifted_hermite}
    c_{\ell}(\kappa,\zeta) &= \mathbb{E}_{z \sim \mathcal{N}(0,1)}[\sigma(z + \kappa \zeta) h_\ell(z)]
\end{align}
where $(h_\ell)_{\ell>0}$ denote the Hermite polynomials.
\end{definition}

Therefore, unlike typical random-matrix ensembles, the means $\bar{\phi}^q_\mu$ have $\mathcal{O}(1)$ fluctuations due to a large dependence on $\kappa$. Note that the means
$\bar{\phi}^q_\mu$ can be equivalently described as the projections of $\phi_\mu$ along the directions
$(e^1,\cdots,e^k)$ defined as:
\begin{align}
\label{eq:def:eq}
e^{q}_{j} = \frac{1}{\sqrt{p}}
    \begin{cases}
        1 & \text{ if } u_{j} = \zeta^{u}_{q}\\ 
        0 & \text{otherwise}
    \end{cases}, \qquad j\in[p], \quad q\in[k].
\end{align} 
By decomposing $\phi_\mu$ as $\sum_{q=1}^k \bar{\phi}^q_\mu  e^{q} + \tilde{\phi}_\mu$, we realize that the first term varies only along a $k$-dimensional subspace with variations governed by $\kappa_\mu$, while the second term contributes to the ``bulk" of the feature covariance.
The surrogate form $\phi^e_\mu$ splits the features into $\bar{\phi}_\mu, \tilde{\phi}_\mu$ precisely to account for these different scales of fluctuations. To exclude degeneracies arising from the leading-order contributions from $\bar{\phi}$, we introduce the following additional assumption.

\begin{assumption}[non-degeneracy]\label{ass:non-deg}
 The activation $\sigma$ and the vocabulary over $u$, $A_u = \{\zeta^u_1, \cdots, \zeta^u_k\}$ are such that the set of vectors $[c_1(\kappa,\zeta^u_1), \cdots c_1(\kappa,\zeta^u_k)]$ span $\mathbb{R}^k$ as $\kappa$ varies over $\mathbb{R}$.
\end{assumption}

\paragraph{Deterministic Equivalent --}
The extended resolvent ${G}_e(z)$ is a high-dimensional random matrix, inheriting the randomness from the training data $(X, y)$ and the initialization weights $W^0$. 
To reach the deterministic equivalent ${\mathcal{G}}_e$ for the above extended resolvent ${G}_e(z)$, our proof proceeds by subsequently addressing and removing the randomness over the data ${X}$, and the weights $W^0$, eventually obtaining an equivalent dependent only on the coefficients $u_i$ and the projections of $W^0$ on $w^\star$ denoted as $\theta \coloneqq W^0 w^\star \in \mathbb{R}^p.$
This ``special" dependence on $u_i,\theta_i$ is expected, since by Lemma \ref{lemma:SRFM}, these determine the component along the spike in the updated matrix $W^1$, while the remaining directions in the weights maintain isotropic dependence and are averaged out.
The resulting description is characterized through low-dimensional kernels and functions. Concretely, we show that for a large class of functions $\mathcal{F}$ associated to the feature matrix covariance $\Phi^\top\Phi$ and labels $y \in \R^{n}$, with entries $\{y_\mu\}_{\mu=1}^n$, $
   \mathcal{F}(\Phi^\top\Phi, y) \xrightarrow{a.s}   \mathcal{F}^\star(\theta,u)$
In contrast to the high dimensional matrices $(X,W^1)$, $\mathcal{F}^\star(\theta,u)$ depends only on sequences of scalars $(\theta,u)$, turning $\mathcal{F}^\star(\theta,u)$ into finite-dimensional expectations.

As stated in Assumption \ref{ass:init}, we consider a finitely supported second layer, leading to the entries of ${u}$ being supported on finitely-many values $A_{u}=\{\zeta^{u}_{1},\cdots, \zeta^{u}_{k}\}$ with probabilities $\pi = (\pi_{q})_{q\in[k]}$. From Lemma \ref{lemma:SRFM}  and equation \ref{eq:main:rank_one_approx}, we see that neurons with identical values of $u_i$ contain identical contributions from the spike. This 
leads to the deterministic equivalent ${\mathcal{G}}_e$ of the extended resolvent inheriting a block structure, with blocks corresponding to different values of $u_i$. 
Let $p_1,\cdots, p_k$ denote the number of neurons with $u_i$ taking values $\zeta^{u}_{1},\cdots, \zeta^{u}_{k}$ respectively. Then, by the strong law of large numbers
 $\frac{p_q}{p} \xrightarrow{a.s} \pi_q$ as $p \rightarrow \infty$.
Without loss of generality, we assume that the neurons are arranged such that:
\begin{equation}\label{eq:neuron}
    [u_1,\cdots, u_p]= [\zeta^{u}_{1} {1}_{1\times p_1},  \zeta^{u}_{2} 1_{1 \times p_2}, \dots \zeta^{u}_{k} 1_{1\times p_k}]
\end{equation}

To compactly express this block structure, we introduce a notation for block-structured matrices and vectors:


\begin{definition}
\label{def:block_ext}
Let $p_1, \cdots p_k$ be the sequence defined above with $\frac{p_q}{p} \xrightarrow{a.s} \pi_q$. Let $C \in \mathbb{R}^{k \times k}$ be a fixed matrix. We define the extended matrix ${C}_e$ as:
\begin{align*}
    {C}_e &= \begin{pmatrix}
      C_{11} {1}_{p_1 \times p_1}, &\cdots  & \cdots &C_{1k} {1}_{p_1 \times p_k} \\
       C_{21} {1}_{p_2 \times p_1}, & C_{22} {1}_{p_2 \times p_2}, & \cdots &C_{2k} {1}_{p_2 \times p_k}\\
       \vdots & \vdots & \vdots &\vdots\\
    \end{pmatrix},
\end{align*}
\end{definition}
We are now ready to state the definition of the extended deterministic equivalent:
\begin{definition}[Deterministic equivalent]
\label{def:equivalent}
 Let $\mathbb{C}^+,\mathbb{C}^-$ denote the set of complex numbers with positive and negative imaginary parts respectively.
 Suppose that $z \in \mathbb{C}/\mathbb{R}^+$. Let $V^{\star} \in \mathbb{C}^{k\times k}, \nu^{\star} \in \mathbb{C}^k, b^\star \in \mathbb{C}^k$ be uniquely defined through the following conditions:
\begin{itemize}[leftmargin=*,noitemsep,wide=0pt]
    \item[(i)] $V^{\star},\nu^{\star},b^\star$ satisfy the following set of self-consistent equations:
    \begin{align*}
       &V^{\star}_{qq'}(z) = \Eb{\kappa}{\alpha\frac{c_1(\kappa, \zeta^{u}_{q})c_1(\kappa, \zeta^{u}_{q'})}{1+\chi\left(z;\kappa\right)}}\\
    &\nu^{\star}_{q}(z) =\Eb{\kappa}{\sum_{\ell \geq 2}\frac{\alpha c^2_{\ell}(\kappa, \zeta^{u}_{q})}{1+\chi(z;\kappa)}}\\
    & 
    b^\star_q(z) = \pi_{q} \beta \left(L_{q,q}(z) 
    + (\operatorname{diag}(\nu^{\star}(z)) -zI_k))_{q,q}\right)^{-1},
    \end{align*}
where $\kappa\sim\mathcal{N}(0,1)$, $(c_{\ell}(\kappa,\zeta))_{\ell >0}$ are defined in~\ref{def:main:shifted_hermite} and $(\chi(z;\kappa), L(z))$ read as follows:
    \begin{align*}
        &\beta \chi(z; \kappa) = \sum_{q,q' \in[k]} \psi_{qq'} c_1(\kappa,\zeta^{u}_{q})c_1(\kappa,\zeta^{u}_{q'}) + \sum_{q\in[k]}b^\star_{q}\sum_{\ell\ge 2}  c_{\ell}^2(\kappa,\zeta^{u}_{q}), \\
        &L(z) =\left(V^\star(z)^{-1} + \operatorname{diag}({b^\star(z)})\right)^{-1},
    \end{align*}
    where $ \psi(z) \in \mathbb{R}^{k \times k}$ is defined as:
    \begin{align}
        \psi(z)&=b^\star(z) - L(z) \odot (b^\star(z) (b^\star(z))^\top), \notag \\
    \end{align}
\item[(ii)] $V^{\star}, \nu^{\star}, b^\star$ are analytic mappings satisfying $V^{\star}_{i,j}: \mathbb{C}^+ \rightarrow \mathbb{C}^-$ for $i,j \in [k], \nu^{\star}_i : \mathbb{C}^+ \rightarrow \mathbb{C}^-$ for $i\in [k]$,  $b^\star:\mathbb{C}^+_i \rightarrow \mathbb{C}^+$ for $i\in [k]$. For $z \in \mathbb{C}^+$ with imaginary part $\zeta > 0$, $\abs{b^{\star}_q(z)} \leq \frac{\pi_q}{\beta\zeta} \ \forall q \in [p]$ 
\end{itemize}
We define the \emph{deterministic equivalent extended resolvent} ${\mathcal{G}}_e(z) \in\mathbb{R}^{(p+1)\times (p+1)}$ as:
\begin{align}
     \label{eq:def:detequiv}
        &{\mathcal{G}}_e(z) =\begin{bmatrix}
            A^\ast_{11}-zI_{k} & (A^\ast_{21})^\top \odot \theta^\top \\
          \theta \odot {A}^{\ast}_{21} & A^\ast_{22}+\alpha S_e^* \odot \theta \theta^\top
        \end{bmatrix}^{-1}, 
\end{align}

where $\theta = W w^\star \in \mathbb{R}^p$, and $S_e^\star \in \mathbb{R}^{p \times p}$, $A^\ast_{11} \in \mathbb{R}^{(k+1) \times (k+1)}, A^\ast_{21} \in \mathbb{R}^{p \times (k+1)}$ are defined as:
\begin{align*}\label{eq:A11}
    &S^* = \Eb{\kappa}{(\kappa^2-1)\frac{c_1(\kappa, \zeta^u_i)c_1(\kappa, \zeta^u_j)}{1+\chi(z;\kappa)}}\\
    &A^\ast_{11} =  \, \Eb{\kappa}{\frac{\alpha}{1 + \chi(z;\kappa)}  
    \iota \,\, \iota^\top},\\
    &A^\ast_{21}[j,:] = \alpha \, \Eb{\kappa}{\frac{c_1(\kappa,u_j)}{1 + \chi(z;\kappa)}\kappa \iota^\top},  \forall j \in [p] \\
    &A^\ast_{22} = \left(\operatorname{diag}\left(\frac{\pi}{\beta b^\star}\right) -
    \operatorname{diag}(\frac{b^\star}{\pi\beta}) (\bV^{-1}_\star+\operatorname{diag}(\frac{b^\star}{\pi \beta}))^{-1}_e \odot \theta\theta^\top
    \operatorname{diag}(\frac{b^\star}{\pi\beta})
    \right)_e,
\end{align*}
where $\kappa \sim \mathcal{N}(0,1),\iota = (g(\kappa), c_0(\kappa, \zeta^u_1), \cdots c_0(\kappa, \zeta^u_k))^\top$ and the subscript $e$ in $S^\star_e$, $A^\star_{22}$ refers to the block matrix notation in Definition \ref{def:block_ext}.
\end{definition}

We are now in a position to state our main result, which states that ${\mathcal{G}}_e(z)$ approximates ${G}_e(z)$ for ``typical" linear functionals. 
\begin{theorem}[Deterministic equivalent]
\label{thm:det_eq} Consider the extended resolvent ${G}_e(z)$ (\ref{def:resolvent}) associated to a batch of training data $(X,y)$ \ref{eq:def:data} after one gradient step. \change{Let $\mathcal{G}_e(z)$} denote the deterministic equivalent  (\ref{def:equivalent}). Then, under Assumptions \ref{ass:init}, \ref{ass:highd}, \ref{ass:activation} and \ref{ass:non-deg}, with neurons arranged as Equation \ref{eq:neuron}, for any $z \in \mathbb{C}/\mathbb{R}^+$ and sequence of deterministic matrices ${A} \in \mathbb{C}^{(p+1) \times (p+1)}$ with $\norm{A}_{\operatorname{tr}}=\tr(({A}{A}^*)^{1/2})$ uniformly bounded in $d$:
\begin{equation}
{\rm Tr}(A G_e(z))  \xrightarrow[d \rightarrow \infty]{a.s} {\rm Tr}(A \mathcal{G}_e(z)). 
\end{equation}
\end{theorem}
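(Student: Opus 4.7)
My plan is to establish the deterministic equivalent through three substantive stages of reduction, culminating in the self-consistent equations of Definition~\ref{def:equivalent}. \textbf{Stage 1: structural reduction.} I would first invoke Lemma~\ref{lemma:SRFM} to replace the actual trained weights $W^1$ by the rank-one perturbation $W^0 + u(w^\star)^\top$, up to an operator-norm error that vanishes almost surely. A standard resolvent perturbation identity gives $|\operatorname{Tr}(A(G_e(z) - \tilde G_e(z)))| \leq \norm{A}_{\operatorname{tr}} \, \norm{G_e - \tilde G_e}_{\mathrm{op}}$, and the right-hand side vanishes for $z$ bounded away from $\mathbb{R}^+$, so it suffices to prove the theorem for the simplified spiked model.

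\textbf{Stage 2: conditioning and conditional Gaussian equivalent.} The full distributional structure is carried by the scalar projections $\kappa_\mu = x_\mu^\top w^\star$ and $\theta_j = (w_j^0)^\top w^\star$. Decomposing $x_\mu = \kappa_\mu w^\star + x_\mu^\perp$ and $w_j^0 = \theta_j w^\star + w_j^{0,\perp}$, the post-update feature reduces to $\sigma((x_\mu^\perp)^\top w_j^{0,\perp} + (u_j + \theta_j)\kappa_\mu)$. Since $(x_\mu^\perp)^\top w_j^{0,\perp}$ is, conditionally on $\kappa_\mu$, asymptotically standard Gaussian with weak dependence across $(\mu,j)$, a Hermite expansion produces $\sum_{\ell\geq 0} c_\ell(\kappa_\mu, u_j + \theta_j) h_\ell(\cdot)/\sqrt{\ell!}$, with coefficients matching those of Definition~\ref{def:main:shifted_hermite}. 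The $\ell=0$ component yields the block-constant mean $c_0(\kappa_\mu, \zeta_q^u)$ that motivates the mean/fluctuation splitting in the extended features; the $\ell=1$ component couples all neurons through the single direction $w^\star$; and the $\ell\geq 2$ tail constitutes the bulk, which by a conditional Gaussian-equivalence argument (in the spirit of Dandi et al.) can be replaced by a Gaussian model with $\kappa$-dependent variance $\sum_{\ell\geq 2} c_\ell^2(\kappa_\mu, u_j)$ for the purposes of linear resolvent functionals.

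\textbf{Stage 3: cavity fixed point and block collapse.} Having reduced to a Gaussian equivalent with explicit block structure in the neurons, I would apply the leave-one-out method. Leaving out row $\mu$ and invoking Sherman--Morrison gives $G_e(z) \phi^e_\mu = (1 + p^{-1}\phi^{e,\top}_\mu G_e^{(\mu)}(z) \phi^e_\mu)^{-1}(\cdots)$, whose denominator concentrates conditionally on $\kappa_\mu$ to $1 + \chi(z;\kappa_\mu)$, producing the $V^\star(z)$ and $\nu^\star(z)$ equations as the $\ell=1$ and $\ell\geq 2$ parts of the conditional expectation. Dually, leaving out neuron $j$ produces scalar blockwise weights $b^\star_q(z)$ via Stieltjes-type fixed-point equations for each class of exchangeable neurons. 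The block structure of $\mathcal{G}_e$ follows since neurons sharing $u_j = \zeta_q^u$ are exchangeable conditional on $\theta_j$; the extensive correction $\theta\theta^\top$ in $A^\star_{22}$ then arises from a first-order Taylor expansion of $c_\ell(\kappa, u_j + \theta_j)$ around $\theta_j = 0$, whose aggregated $O(1)$ effect survives despite each $\theta_j$ being $O(1/\sqrt{d})$, and is isolated through a Schur complement that produces the explicit dependence on $L(z)$ and $\psi(z)$.

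\textbf{Main obstacle.} The principal difficulty is that the conditional covariance of the Gaussian equivalent depends nonlinearly on $\kappa_\mu$, so the resulting random matrix is neither separable nor fits into the off-the-shelf anisotropic deterministic-equivalent results of Knowles--Yin or Louart et al. Controlling the fixed point therefore requires uniform-in-$\kappa$ estimates on the cavity denominator $1 + \chi(z;\kappa)$ together with a simultaneous handling of the extensive low-rank spike $u(w^\star)^\top$ and the low-rank mean-correction $\theta\theta^\top$; Assumption~\ref{ass:non-deg} is needed precisely to ensure the $k$ mean directions $e^q$ remain linearly independent after averaging against $\kappa$, so that the $k$-dimensional block collapse is well-posed. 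Combining these two sources of structure consistently is what forces the intricate coupled form of $V^\star, \nu^\star, b^\star$ and constitutes the main technical step beyond standard random-matrix machinery.
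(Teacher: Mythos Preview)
Your three-stage plan matches the paper's architecture closely: reduction to the isotropic spiked model via Lemma~\ref{lemma:SRFM}, a Hermite-based conditional covariance approximation, and a leave-one-out cavity argument producing the self-consistent equations. Two implementation points deserve correction, however. First, the paper does \emph{not} invoke a conditional Gaussian equivalence to replace the features by a Gaussian surrogate; it works directly with the true features $\phi^e_\mu$ and only approximates their conditional \emph{covariance} by the explicit matrix $R^\star_\kappa$ (Proposition~\ref{prop:phi_moments}), then controls quadratic forms via a Hanson--Wright inequality for Lipschitz-concentrated vectors. This sidesteps any universality step entirely. Second, your ``dual'' leave-one-out in Stage~3 differs from the paper's: after the sample-wise leave-one-out (averaging over $X$) produces an intermediate equivalent $\mathcal{G}_W(z)$ still depending on $W$, the paper averages over $W$ not by removing a neuron (row of $W$) but by removing a \emph{column} of $W$---a rank-$k$ perturbation thanks to the block-Hadamard structure $V_e \odot WW^\top$---which is what gives the clean Wishart-type fixed point for $b^\star$ (Lemma~\ref{lemma:block_res}); the $\theta$-dependence is then re-inserted \emph{a posteriori} via Woodbury (Lemma~\ref{lem:mthet}). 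Finally, you omit the last step: establishing that the coupled fixed-point equations for $(V^\star,\nu^\star,b^\star)$ have a unique solution, which the paper handles by a contraction argument for $\mathrm{Im}(z)$ large and analytic continuation (Lemma~\ref{lem:contrac}). Your identification of the main obstacle---non-separable $\kappa$-dependent covariance combined with the low-rank $\theta\theta^\top$ correction---is on target.
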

The class of linear functionals $A$ characterized above includes weighted traces as well as low-rank projections \cite{rubio2011spectral}. A direct consequence of Theorem \ref{thm:det_eq} is that it yields the Stieltjes transform of the bulk covariance:
\begin{corollary}[Stieltjes transform]
Let $\mu_d$ denote the empirical spectral measure of the bulk covariance
$\nicefrac{\tilde{\Phi}^\top\tilde{\Phi}}{p}$. 
Let $m_d(z)$ denote the Stieltjes transform $m_d(z) = \int \frac{1}{\lambda-z} d\mu_d(\lambda)$.
Let $b^\star(z)$ be as defined in~\ref{def:equivalent}, then:
\begin{equation}
    m_d(z)\xrightarrow[d \rightarrow \infty]{a.s} \beta \sum_{q=1}^k b^\star_q(z).
\end{equation}
\end{corollary}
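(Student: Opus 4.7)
The plan is to deduce the corollary as a direct consequence of Theorem \ref{thm:det_eq}, combined with elementary block-matrix manipulations (Schur complement and the Sherman--Morrison--Woodbury identity).

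First, I would reduce the Stieltjes transform to a trace of the extended resolvent. Since $\tilde{\Phi}^{\top}\tilde{\Phi}/p$ is the principal $p\times p$ sub-block of $(\Phi^{e})^{\top}\Phi^{e}/p$ corresponding to the last $p$ coordinates, the Schur complement formula expresses the bottom-right $p\times p$ block of $G_{e}(z)$ as $(\tilde{\Phi}^{\top}\tilde{\Phi}/p - zI_{p} - C)^{-1}$ with $C$ a matrix of rank at most $k+1$. The resolvent identity $(A-C)^{-1} - A^{-1} = A^{-1} C (A-C)^{-1}$ together with this rank bound gives $\mathrm{Tr}(M\,G_{e}(z)) = \mathrm{Tr}(\tilde{G}(z)) + O(k)$, where $M \in \mathbb{R}^{(p+k+1)\times(p+k+1)}$ is the orthogonal projector onto the $\tilde{\Phi}$-block. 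Dividing by $p$ yields $m_{d}(z) = \tfrac{1}{p}\mathrm{Tr}(M\,G_{e}(z)) + O(k/p)$, and since $k$ is fixed the remainder tends to zero. I would then apply Theorem \ref{thm:det_eq} with $A = M/p$ (deterministic, trace norm $\mathrm{rank}(M)/p = 1$, hence uniformly bounded in $d$) to conclude $\tfrac{1}{p}\mathrm{Tr}(M\,G_{e}(z)) \xrightarrow[d\to\infty]{a.s} \tfrac{1}{p}\mathrm{Tr}(M\,\mathcal{G}_{e}(z))$.

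Next, I would evaluate the limiting trace from the block structure $\mathcal{G}_{e}(z) = \left(\begin{smallmatrix}P & Q \\ Q^{\top} & R\end{smallmatrix}\right)^{-1}$ with $P = A^{*}_{11} - zI_{k+1}$, $Q = (A^{*}_{21})^{\top} \odot \theta^{\top}$, and $R = A^{*}_{22} + \alpha S^{*}_{e} \odot \theta\theta^{\top}$. The block-inverse formula gives $\mathrm{Tr}(M\,\mathcal{G}_{e}(z)) = \mathrm{Tr}\bigl((R - Q^{\top}P^{-1}Q)^{-1}\bigr)$. All non-diagonal contributions to $R - Q^{\top}P^{-1}Q$ are of rank $O(k)$: $Q^{\top}P^{-1}Q$ has rank at most $k+1$; the Hadamard product $\alpha S^{*}_{e}\odot\theta\theta^{\top}$ has rank at most $k$ because $S^{*}_{e}$ is block-constant (rank $\leq k$) and $\theta\theta^{\top}$ is rank one; and the analogous Hadamard-with-$\theta\theta^{\top}$ correction appearing inside $A^{*}_{22}$ (see Definition \ref{def:equivalent}) is likewise of rank $\leq k$. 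What remains is the $p\times p$ diagonal matrix $D$ with entries $D_{ii} = \pi_{q(i)}/(\beta\,b^{\star}_{q(i)}(z))$, where $q(i)$ denotes the block index of neuron $i$. Applying the Sherman--Morrison--Woodbury identity to these rank-$O(k)$ perturbations yields $\mathrm{Tr}\bigl((R - Q^{\top}P^{-1}Q)^{-1}\bigr) = \mathrm{Tr}(D^{-1}) + O(k)$.

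Finally, since $\mathrm{Tr}(D^{-1}) = \sum_{q=1}^{k} p_{q}\,\beta\,b^{\star}_{q}(z)/\pi_{q}$ and $p_{q}/p \xrightarrow{a.s} \pi_{q}$ by the strong law of large numbers applied to the i.i.d.\ sampling of the second-layer entries in Assumption \ref{ass:init}, we get $\tfrac{1}{p}\mathrm{Tr}(D^{-1}) \xrightarrow[d\to\infty]{a.s} \beta \sum_{q=1}^{k} b^{\star}_{q}(z)$. Chaining the estimates above with Theorem \ref{thm:det_eq} yields the claim. The main obstacle is the careful bookkeeping of the rank-$O(k)$ corrections at each step; all of them are controlled by the fact that $k$ is fixed as $p \to \infty$ and that the finite-support structure of $u$ (Assumption \ref{ass:init}) renders every block-constant matrix entering $\mathcal{G}_{e}(z)$ of rank at most $k$, so that the diagonal part $D$ captures the entire leading contribution to the normalized trace.
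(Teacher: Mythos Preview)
Your proposal is correct and matches the approach the paper intends: the corollary is stated without explicit proof as a direct consequence of Theorem~\ref{thm:det_eq}, and your argument via the Schur complement, the rank-$O(k)$ structure of the off-diagonal and $\theta\theta^\top$-Hadamard corrections, and the strong law of large numbers on $p_q/p$ is precisely the natural way to extract the normalized trace from the block description of $\mathcal{G}_e(z)$. The only minor point worth making explicit is that the $O(k)$ trace corrections are uniformly bounded because both $G_e(z)$ and the diagonal matrix $D^{-1}$ have operator norm controlled by $\zeta^{-1}$ (respectively by $|b^\star_q(z)|/\pi_q$, which is bounded on $\mathbb{C}\setminus\mathbb{R}^+$ by condition~(ii) in Definition~\ref{def:equivalent}); you allude to this but it is the one place where the bookkeeping could fail if left implicit.
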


Furthermore, as discussed previously, the deterministic equivalent ${\mathcal{G}}_e(z)$ contains all the necessary summary statistics for a full asymptotic characterization of the generalization error. This is the objective of the following theorem.

\begin{theorem}[Generalization Error] Under the proportional asymptotics \ref{ass:highd}, the generalization error \ref{eq:def:generror} is given by the following low-dimensional, deterministic formula:
\label{thm:gen_e}
\begin{equation}  
\lim\limits_{n,d,p\to\infty}\mathbb{E}[\change{\varepsilon_{\rm gen}}] =  \mathbb{E}_{\kappa}\left[\Lambda_{\kappa}(\{\tau_{0,q}, \tau_{1,q}\}_{q \in [k]}, \tau_2, \tau_3) \right]
\end{equation}
where $\{\tau_{0,q}, \tau_{1,q},, q \in [k] \}, \tau_2, \tau_3$ are  certain scalar deterministic functions of  $ V^\star(-\lambda), \nu^\star(-\lambda)$ reported in Appendix~\ref{sec:gen} along with the precise expression of the function $\Lambda_{\kappa}(\cdot)$.
\end{theorem}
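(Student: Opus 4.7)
My plan starts from the closed-form ridge solution $\hat a_\lambda = K^{-1}\Phi^\top y/\sqrt{p}$, where $K = \Phi^\top\Phi/p + \lambda I$, and expresses the generalization error \eqref{eq:def:generror} as a small collection of bilinear and trace expressions involving the ridge resolvent $K^{-1}=K(-\lambda)^{-1}$. The key observation is that each of these scalars can be written as $\operatorname{Tr}(A\,G_e(-\lambda))$ for an explicit deterministic matrix $A$ of rank at most $k+1$, so the trace-norm hypothesis of Theorem~\ref{thm:det_eq} holds uniformly in $d$ and replacing $G_e(-\lambda)$ by $\mathcal{G}_e(-\lambda)$ from Definition~\ref{def:equivalent} is immediate.

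First I would condition on the spike projection $\kappa = x_{\text{new}}^\top w^\star$ of the test point and expand $\phi(x_{\text{new}})_j = \sigma(w^{0,\top}_j x_{\text{new}} + u_j \kappa)$ via its Hermite decomposition, after using Lemma~\ref{lemma:SRFM} to replace $W^1$ by $W^0 + u(w^\star)^\top$. This yields a conditional mean $\mu_j(\kappa) = c_0(\kappa,u_j)$ taking only $k$ distinct values, and a conditional covariance $\Sigma(\kappa)$ whose entries are built from the shifted Hermite coefficients $c_\ell(\kappa,\zeta^u_q)$ of Definition~\ref{def:main:shifted_hermite} with $\ell\geq 2$. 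The generalization error then decomposes into three terms,
\begin{align*}
\varepsilon_{\rm gen} \;=\; \mathbb{E}_\kappa[g(\kappa)^2] \;-\; \tfrac{2}{\sqrt{p}}\,\mathbb{E}_\kappa[g(\kappa)\,\mu(\kappa)^\top]\,\hat a_\lambda \;+\; \tfrac{1}{p}\,\hat a_\lambda^\top\,\mathbb{E}_\kappa\!\left[\mu(\kappa)\mu(\kappa)^\top + \Sigma(\kappa)\right]\hat a_\lambda.
\end{align*}

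Next I substitute the closed form of $\hat a_\lambda$ into the above and identify each scalar as a linear functional of $G_e(-\lambda)$. The construction of the extended features $\phi^e_\mu = (y_\mu,\bar\phi_\mu,\tilde\phi_\mu)^\top$ is tailored exactly for this: the $(y,y)$ block of $(\Phi^e)^\top\Phi^e/p$ is $y^\top y/p$, the $(y,\bar\phi)$ block is $y^\top\Phi/p$ projected onto the $e^q$ basis of~\eqref{eq:def:eq}, and a Schur complementation of $G_e(-\lambda)$ against its last $p$ coordinates recovers $K(-\lambda)^{-1}$ and the ridge-estimator combinations $K^{-1}\Phi^\top y/p$. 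Because $\mu(\kappa)$ lives in the $k$-dimensional span of the $e^q$'s, the matrices $A$ needed to extract the three terms above are of rank at most $k+1$; applying Theorem~\ref{thm:det_eq} then replaces all these quantities by their deterministic counterparts in $\mathcal{G}_e(-\lambda)$, which by the block structure of Definition~\ref{def:equivalent} depend only on $V^\star(-\lambda), \nu^\star(-\lambda), b^\star(-\lambda), L(-\lambda)$. The passage from almost-sure to expectation convergence follows from $\|K^{-1}\|\leq 1/\lambda$ and boundedness of $g,\sigma$.

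The main obstacle is the $\Sigma(\kappa)$ contribution. The quadratic term in $\hat a_\lambda$ it generates is, up to lower-order errors, a trace of the form $\tfrac{1}{p^2}\,y^\top\Phi K^{-1}\tilde C(\kappa)K^{-1}\Phi^\top y$ with $\tilde C(\kappa)$ a bulk covariance assembled from $c_\ell(\kappa,\zeta^u_q)$, $\ell\geq 2$. Such doubly-resolvent traces are not direct linear functionals of $G_e(-\lambda)$ and must instead be accessed either by a small rank-one perturbation of $z$ or by a differentiation trick $\partial_z G_e(z) = G_e(z)^2$; one then has to identify the limiting expression with the quantities appearing in Definition~\ref{def:equivalent} by iterating the fixed-point equations for $b^\star$ and $L$ and using the identity $\mathcal{G}_e(z)\,\mathcal{G}_e(z)^{-1}=I$ to re-express derivatives of $b^\star(z),L(z)$ at $z=-\lambda$ back in terms of $b^\star(-\lambda),L(-\lambda)$. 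Once this identification is performed, the scalars $\tau_2,\tau_3$ arise as precisely the resolvent-averaged quantities produced by this manipulation, while $\{\tau_{0,q},\tau_{1,q}\}_{q\in[k]}$ are read off directly from the deterministic equivalent of the cross term, and $\Lambda_\kappa(\cdot)$ is the low-dimensional average over $\kappa\sim\mathcal{N}(0,1)$ that recombines these blocks.
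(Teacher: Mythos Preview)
Your overall strategy matches the paper's: decompose the test error conditionally on $\kappa$ via the shifted Hermite coefficients, identify the summary statistics $\tau_{0,q},\tau_{1,q},\tau_2,\tau_3$ (this is exactly Lemma~\ref{lemma:eg_non_un}), and extract them from blocks of the extended resolvent at $z=-\lambda$ through Schur complementation (Proposition~\ref{prop:gen_e_par}, Corollary~\ref{cor:eq_order_par}). You also correctly locate the obstruction: the variance contribution $\hat a_\lambda^\top\Sigma(\kappa)\hat a_\lambda$ yields doubly-resolvent traces of the form $y^\top\Phi K^{-1} M K^{-1}\Phi^\top y$ that are not linear in $G_e$.

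The gap is in your proposed mechanism for these traces. Differentiating in $z$ gives $\partial_z G_e = G_e^2$, i.e.\ a sandwich with the identity, whereas here the sandwich matrices are $M_1 = C\odot WW^\top$ with $C=\mathbb{E}_\kappa[c_1(\kappa,u)c_1(\kappa,u)^\top]$ (for $\tau_2$) and the diagonal $M_2=D$ with $D_{jj}=\mathbb{E}_\kappa\sum_{\ell\ge 2}c_\ell^2(\kappa,u_j)$ (for $\tau_3$). Neither is a multiple of the identity, and $M_1$ is full rank, so a rank-one perturbation is also insufficient. The paper's fix is to perturb the resolvent by the \emph{specific} matrices $\rho_1 M_1 + \rho_2 M_2$ and differentiate in $\rho_1,\rho_2$ at zero. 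What makes this tractable---and is the step your sketch is missing---is structural: $M_1$ has exactly the block form of the bulk term $V^\star_e\odot WW^\top$ and $M_2$ that of $\operatorname{diag}(\nu^\star_e)$ already appearing in the first-stage equivalent. Hence the perturbed resolvent admits a deterministic equivalent of the \emph{same functional form}, with $V^\star,\nu^\star$ merely shifted by $\rho$-dependent constants (Theorem~\ref{thm:pertub}). The derivatives $\partial_{\rho_i}\mathcal{C}^{-1}|_{\rho=0}$ are then read off directly from the shifted fixed-point equations, with no need to differentiate through the map $z\mapsto b^\star(z)$ as you suggest. Interchanging limit and derivative is justified by a uniform second-derivative bound (Lemma~\ref{lem:pertub}) coming from $\|K^{-1}\|\le 1/\lambda$ and the almost-sure operator-norm control on $WW^\top$.
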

\paragraph{Intepretation and proof Sketch --}
\label{sec:proof}

Unlike the high-dimensional interactions in the true feature covariance $\Phi^\top\Phi$, the interactions between neurons $i,j$ in $\mathcal{G}_e(z)$ depend only on the scalars $u_i, \theta$. $\mathcal{G}_e(z)$ therefore reflects the structure of a non-linear low-dimensional Kernel on $u_i, \theta$. Note that the dimensions of the order-parameters $V^{\star} \in \mathbb{C}^{k\times k}, \nu^{\star} \in \mathbb{C}^k, b^\star \in \mathbb{C}^k$ grows with the support size $k$. In the continuous support limit $k \rightarrow \infty$, we expect $V^{\star}$ and $\nu^{\star},b^\star$ to converge to certain limiting Kernels and functions respectively, satisfying \emph{functional} fixed point equations. The precise characterization of this regime constitutes an interesting avenue for future research.
 
As mentioned earlier, our proof proceeds through two stages of deterministic equivalent, successively eliminating the randomness over ${X}$ and ${W}^{0}$ respectively. A crucial aspect of our analysis is to decouple the randomness of ${X}, W^{0}$ along the spike $w^*$ and the orthogonal subspace --  this is due to the fact that the dependence of the features $\phi_\mu$ on $\kappa_\mu = x_\mu^\top w^\star$ is of a larger order than on the components of ${x}_\mu$ in the orthogonal space. 
Conditioned on $\kappa_\mu$, we show that the covariance of $\phi_\mu$ can be well-approximated through an equivalent linear model. However, due to the variability in $\kappa$, the description of the resolvent does not reduce to a standard random matrix theory ensemble. Lastly,  we obtain the generalization error through the introduction of certain perturbation terms into the deterministic equivalent, with the resolvent acting as a ``generating function" for additional relevant statistics. The detailed proofs are provided in the Appendices.

\section{Consequences of the Main Results} 
\label{sec:main:discussion}
\begin{figure}[t]
    \centering
    \includegraphics[width=0.48\textwidth]{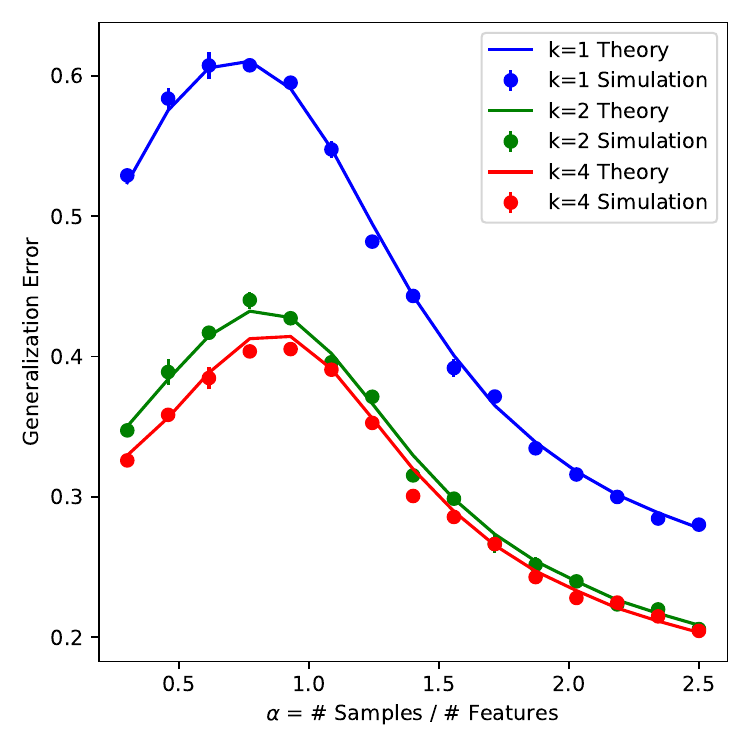}
    \caption{\textbf{Increase fitting accuracy through second layer variability:} Illustration of the benefits of larger support for the $2^{\rm nd}$ layer values $\sigma = \mathrm{ReLu}, \sigma_\star = \mathrm{tanh}$. Theoretical (continuous lines) and numerical (dots) predictions for the generalization error as a function of the number of samples per dimension $\alpha$ for different values of the second layer vocabulary size $k \in (1,2,4)$. The numerical simulations are averaged over 5 seeds and fixed hyper-parameters $\lambda = 0.01, \gamma = 0.5, \beta = 1.5, p = 2048$. Note the significant drop in the generalization error for $k>1$.
    \change{The choice of the probabilities ${\bf \pi} = \{\pi_q\}_{q \in [k]}$ and the vocabulary ${\bf \zeta} = \{\zeta_q\}_{q \in [k] }$ for the numerical illustration are: a) ${\bf k = 1}: {\bf \pi} = \{1\}, {\bf{\zeta}} = \{1\}$; b) ${\bf k = 2}: {\bf \pi} = \{0.9,0.1\}, {\bf{\zeta}} = \{1,-1\}$; c) ${\bf k = 4}: {\bf \pi} = \{0.7,0.1,0.1,0.1\}, {\bf{\zeta}} = \{1,-0.5,1.5-2\}$}
    }
    \label{fig:main:role_of_k}
\end{figure}

\paragraph{Tight characterization of the feature covariance spectrum --} 

The singular values of the features after one, but non-maximal gradient step -- $\eta \asymp p^{\zeta+\nicefrac{1}{2}}$ with $\zeta<\nicefrac{1}{2}$ -- have been characterized in \cite{moniri2023theory}, showing how a series of $\ell$ spikes appear after the step, in addition to the RF bulk corresponding to the features at initialization, as characterized in e.g. \citep{pennington2017nonlinear, benigni2021eigenvalue, benigni2022largest, fan2020spectra, louart2018random}. The number of spikes is given by the integer $\ell$ such that $\nicefrac{\ell-1}{2\ell}<\zeta <\nicefrac{\ell}{2\ell+2}$. Note that in the maximal step size limit $\zeta=\nicefrac{1}{2}$, the corresponding $\ell$ diverges, and it is largely unclear how the bulk and spikes recombine into the limiting features covariance spectrum.  

Our results further provide a tight asymptotic characterization of the bulk spectrum (represented in red in Fig.\,\ref{fig:spectrum} for $k=1, \alpha=0.8, \sigma=\mathrm{ReLu},g=\mathrm{sin},\Tilde{\eta}=3.3$). Note how this bulk is modified from the unspiked RF spectrum (the dashed blue line in Fig.\,\ref{fig:spectrum}), displaying in particular a wider support and longer tails in this particular instance. The spectrum also exhibits outlying eigenvalues of order $\Theta(d)$, arising from the means $\bar{\phi}_\mu$, not represented in Fig.\,\ref{fig:spectrum} for readability. This theoretical result ties in with numerous previous empirical observations \citep{martin2021implicit, martin2021predicting, wang2024spectral} that heavier tails can emerge after feature learning, a behaviour which further tends to correlate with better generalization abilities. Interestingly, this phenomenon persists even when the network is trained with \textit{multiple} large stochastic GD steps, or with adaptive optimizers such as Adam \citep{kingma2014adam}, as empirically observed by \cite{wang2024spectral}.

On a qualitative level, the departure of the bulk from its untrained shape can be intuitively seen as a result of the recombination between the untrained bulk and some of the spikes predicted by \cite{moniri2023theory}, as they proliferate when $\zeta\to \nicefrac{1}{2}$. We stress that in addition to these spikes, the feature covariance spectrum includes  $k$ additional spikes related to the non-zero mean property of the features $\Phi$. Such ``spurious spikes'' are not illustrated in Fig.~\ref{fig:spectrum} and are present due to the finite support assumption for the second layer (Assumption~\ref{ass:init}). It is an interesting avenue of future research to study the recombination of such spurious spikes with the bulk in the continuously supported second layer limit ($k \to \infty$). 
\paragraph{Precise characterization of the learned features --} While two-layer neural networks are known to be universal approximators \citep{cybenko1989approximation, hornik1989multilayer}, a precise characterization of the approximation space spanned by a given trained neural network feature maps remains to a large extent elusive. As we discuss in this paragraph, for features resulting from a maximal update on the first layer weights, the \textit{diversity} of the second-layer initialization $a^0$, namely the number of different values its components take, plays a crucial role in allowing for expressive feature maps. 
Indeed, there is a net increase of the expressivity of the neural network for larger vocabulary sizes $k$. More precisely, the network is able to express non-linear functions in $\kappa$ --the projection on the spike.
This is reflected in Definition \ref{def:equivalent} containing non-linear dependence on $\kappa$  along the functional basis $\{c_0(\cdot,\zeta_q),c_1(\cdot, \zeta_q)\}_{q\in[k]}$, with $c_1(\kappa ,\zeta_q)\equiv \kappa c_1(\kappa, \zeta_q)$. This functional basis is larger, and thus the neural network more expressive, for larger vocabulary sizes $k$, i.e. when the second layer is initialized with more variability.

To illustrate this point concretely,  consider for definiteness the case of an error function activation $\sigma(\cdot)=\mathrm{erf}(\cdot)$. Then,  $\{c_0(\cdot, \zeta_q)\}_{q\in[k]}=\{\mathrm{erf}(\nicefrac{\zeta_q}{\sqrt{3}}\cdot)\}_{q\in[k]}$. While a uniform second layer initialization $a^0\propto 1_p$ ($k=1$) only allows the network to express monotonic sigmoid functions, allowing the second initialization to take $k=2$ values already allows to express non-monotonic function with a derivative which can change sign twice. Pushing the second layer diversity to vocabularies of size $k\ge 3$ further enriches the pool of expressible functions with further non-monotonic functions. Depending on the functional form of the target activation $\sigma_\star$, the variability of the second layer can thus prove particularly instrumental in reaching a good approximation and learning. In Fig.~\ref{fig:main:role_of_k}, we illustrate the role of the vocabulary size $k$, for the simplest possible setting $(\text{single-index target}, \sigma = \mathrm{relu}, \sigma_\star = \mathrm{tanh})$, by plotting the test error as a function of the sample complexity $\alpha$ for varying vocabulary sizes $k\in\{1,2,4\}$. We observe a net decrease in the test error performance with increasing second layer variability $k$ at initialization. Furthermore, the theoretical results (plotted as continuous lines) are in accordance with numerical simulations (dots).



\section{Conclusions and Limitations}

We present a rigorous random matrix theory analysis of feature learning in a two-layer neural network, when the first layer is trained with a single, but aggressive, gradient step, in the limit where the number of samples $n$, the input dimension $d$, the hidden layer width $p$ and the learning rate $\eta$, jointly tend to infinity at proportional rates. We rigorously justify how the trained neural network can be approximated by a spiked random features model in the limit of large batch sizes. We derive a deterministic equivalent for the empirical covariance matrix of the resulting features. We further provide a tight asymptotic characterization of the test error, when the second layer is subsequently trained with ridge regression. Our results provide a rigorous proof to the heuristic work of \cite{cui2024asymptotics}, while extending it in multiple aspects. In particular, we allow for non-uniform initialization for the second-layer weights. We discuss how the second-layer variability enhances the expressivity of the trained network, and its ability to fit a single-index target. Among the limitations are the finitely supported second layer initialization, the use of Gaussian data, and asymptotic nature of the results. We note, however, that numerical experiments shows that predictions are accurate even at fairly moderate sizes. Secondly, a number of universality results shows that such ensembles extend over larger datasets \citep{dudeja2023universality, Gerace2024, NEURIPS2021_9704a4fc, pesce2023gaussian, wang2022universality}.
\section*{Acknowledgements}
We would like to thank Denny Wu, Edgar Dobriban, Lenka Zdeborov\'a and Ludovic Stephan for stimulating discussions. We also thank the Institut d'\'Etudes Scientifiques de Carg\`ese for the hospitality during the ``Statistical physics \& machine learning back together again'' workshop, where this work started. BL acknowledges funding from the \textit{Choose France - CNRS AI Rising Talents} program. YD and HC acknowledge support from the Swiss National Science Foundation grant SNFS SMArtNet (grant number 212049). YD, LP, and FZ acknowledge support from the SNFS grant OperaGOST (grant number 200390). The work of YML was supported in part by the Harvard FAS Dean's Fund for Promising Scholarship and by a Harvard College Professorship. HC acknowledges support from the Center of Mathematical Sciences and Applications (CMSA) of Harvard University.

\newpage 

\appendix
\section*{Supplementary material}
\section{Structure of the Appendix}
The Appendix is organized as follows:
\begin{itemize}
    \item In section \ref{sec:prelim}, we list some notations, definitions and preliminary results utilized throughout the proof.
    \item In section \ref{app:spiked_approx}, we prove the  the isotropic-spike approximation of the gradient and show that in the limit $\alpha \rightarrow \infty$, it suffices to establish the deterministic equivalent and generalization error under the isotropic-spike approximation.
    \item In section \ref{sec:det_eq}, we prove the main Theorem \ref{thm:det_eq} characterizing the asymptotic deterministic equivalent of the sample covariance of the extended features.
    \item In section \ref{sec:gen}, we show how the generalization error can be expressed through certain functionals of the deterministic equivalent and finally obtain Theorem \ref{thm:det_eq}.
    \item Finally in Section \ref{app:extra}, we provide proofs of certain auxiliary results used in the analysis along with additional theoretical investigations
\end{itemize}

\section{Preliminaries}\label{sec:prelim}
\subsection{Stochastic Domination}

Throughout the analysis, we use the following notation for controlling high-probability bounds over stochastic error terms:
\begin{definition}\label{def:stoch-dom}[Stochastic dominance \citep{lu2022equivalence}]
    We say that a sequence of real or complex random variables $X_d$ is stochastically dominated by another sequence $Y_d$ if for all $\epsilon > 0$ and $k$, the following holds for large enough $d$:
    \begin{equation}
        \Pr[\abs{X}_d > d^{\epsilon}{\abs{Y}}_d]  \leq d^{-k}.
\end{equation}

We denote the above relation through the following notation:
\begin{equation}
    X = \mathcal{O}_{\prec}(Y).
\end{equation}

We further denote:
\begin{equation}
    \abs{X-Y}  = \mathcal{O}_{\prec}(Z),
\end{equation}
as:
\begin{equation}
    X-Y  = \mathcal{O}_{\prec}(Z).
\end{equation}

Similarly for vectors $X,Y \in \mathbb{R}^{k_d}$ for some sequence of dimensions $k_d$, we use the shorthand:
\begin{equation}
    X-Y  = \mathcal{O}_{\prec}(Z),
\end{equation}
to denote:
\begin{equation}
    \norm{X-Y}  = \mathcal{O}_{\prec}(Z),
\end{equation}

It is easy to check that stochastic dominance is closed under unions of polynomially many events in $d$. We will often exploit this while taking unions over $p=\mathcal{O}(d)$ neurons and $n=\mathcal{O}(d)$ samples. Furthermore, $\prec$ absorbs polylogarithmic factors i.e:
\begin{equation}
     X = \mathcal{O}_{\prec}(Y) \implies X = \mathcal{O}_{\prec}((\polylog d) Y)
\end{equation}

Furthermore, it subsumes exponential tail bounds of the form:
\begin{equation}
    \Pr[X_d > t Y_d]  \leq e^{-t^\alpha},
\end{equation}
for some $\alpha >0$, as well as polynomial tails of arbitrarily large degree:
\begin{equation}
     \Pr[X_d > t Y_d]  \leq \frac{C_k}{t^k},
\end{equation}
for some sequence of constants $C_k$ dependent on $k$.
\end{definition}

\begin{definition}[Hermite Expansion]
Let $f:\R \rightarrow \R$ be square-integrable function w.r.t the Gaussian measure. Then, $f$ admits a series expansion in the orthonormal basis of Hermite polynomials given by:
\begin{equation}
    f(x) = \sum_{i=0}^\infty a_i h_i(x),
\end{equation}
where the convergence holds in $L_2$ w.r.t the Gaussian measure.
\end{definition}
\begin{lemma}[Resolvent Identity]\label{lem:diff_inv}
Let, $A, B \in \R^{p \times p}$ be two invertible matrices, then: 
   \begin{equation}
       A^{-1}-B^{-1} = A^{-1}(B-A)B^{-1}
   \end{equation} 
\end{lemma}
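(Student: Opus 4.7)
The plan is a one-line algebraic verification; the identity is purely formal and relies only on associativity, distributivity, and the definition of the matrix inverse. My approach is to start from the right-hand side \(A^{-1}(B-A)B^{-1}\) and distribute the subtraction across the matrix product to obtain \(A^{-1}BB^{-1} - A^{-1}AB^{-1}\). Invertibility of \(B\) gives \(BB^{-1} = I\) and invertibility of \(A\) gives \(A^{-1}A = I\), so these two terms collapse to \(A^{-1} - B^{-1}\), which is exactly the left-hand side. This delivers the claim directly, without any need for indexing, spectral decomposition, or analytic estimates.

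An equivalent route that I would mention for completeness is to begin from the left-hand side \(A^{-1} - B^{-1}\), pre-multiply by \(A\) and post-multiply by \(B\) (both legal since \(A,B\) are invertible in \(\R^{p \times p}\)) to obtain \(B - A\), and then recover the statement by restoring the sandwich \(A^{-1}(\cdot)B^{-1}\). There is essentially no obstacle here: the identity lives in the non-commutative ring of invertible \(p \times p\) matrices and requires neither spectral information nor any dimension-dependent control. The only subtlety worth flagging is strict non-commutativity, which forces the order in the product \(A^{-1}(B-A)B^{-1}\) to be preserved; swapping the inverses would invalidate the identity in general. The hypothesis that both \(A\) and \(B\) are invertible is therefore used in two distinct places, namely to define \(A^{-1}\) and \(B^{-1}\) and to guarantee the cancellations \(A^{-1}A = BB^{-1} = I\) that drive the computation.
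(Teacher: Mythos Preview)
Your proof is correct; this is the standard one-line algebraic verification of the resolvent identity. The paper states this lemma without proof in its preliminaries section, treating it as a well-known fact, so there is no approach to compare against.
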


\begin{lemma}\label{lem:op_norm}
Let $z \in \mathbb{C}/\R^+$ be arbitary and let $A \in \R^{N\times N}$ denote a p.s.d matrix. Let $\zeta = \max(\abs{\text{Im}(z)},-\text{Re}(z)$.
Then:
\begin{equation}
    \norm{(A-zI_N)} \leq \frac{1}{\zeta}
\end{equation}

\end{lemma}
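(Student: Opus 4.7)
The plan is to reduce the operator-norm bound to a one-dimensional inequality via the spectral decomposition of $A$. Because $A$ is symmetric positive semi-definite, write $A = U\Lambda U^\top$ with $\Lambda = \operatorname{diag}(\lambda_1,\ldots,\lambda_N)$ and $\lambda_i \geq 0$. Then $(A-zI_N)^{-1} = U(\Lambda - zI_N)^{-1}U^\top$, and since $U$ is orthogonal the operator norm satisfies
\begin{equation*}
  \norm{(A-zI_N)^{-1}} \;=\; \max_{1 \leq i \leq N}\,\frac{1}{|\lambda_i - z|}.
\end{equation*}
It therefore suffices to establish the scalar inequality $|\lambda - z| \geq \zeta$ for every real $\lambda \geq 0$.

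To prove this lower bound, I would expand
\begin{equation*}
  |\lambda - z|^2 \;=\; (\lambda - \operatorname{Re}(z))^2 + \operatorname{Im}(z)^2
\end{equation*}
and split the argument according to the sign of $\operatorname{Re}(z)$. If $\operatorname{Re}(z) \leq 0$, then $\lambda - \operatorname{Re}(z) \geq -\operatorname{Re}(z) \geq 0$, so $(\lambda - \operatorname{Re}(z))^2 \geq \operatorname{Re}(z)^2$, yielding $|\lambda - z|^2 \geq \operatorname{Re}(z)^2 + \operatorname{Im}(z)^2 \geq \max(\operatorname{Re}(z)^2,\operatorname{Im}(z)^2) = \zeta^2$. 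If instead $\operatorname{Re}(z) > 0$, then the assumption $z \in \mathbb{C}\setminus\mathbb{R}^+$ forces $\operatorname{Im}(z) \neq 0$, and the imaginary-part contribution alone gives $|\lambda - z| \geq |\operatorname{Im}(z)|$; since $-\operatorname{Re}(z) < 0$ in this regime, the definition reduces to $\zeta = |\operatorname{Im}(z)|$, matching the bound exactly. In both cases $|\lambda_i - z| \geq \zeta$ for every $i$, which completes the argument.

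This is a textbook resolvent estimate and presents no real technical obstacle; the proof rests only on the unitary invariance of the operator norm and the elementary distance-to-spectrum inequality on the real line. The one mild point to check is that $\zeta$ is strictly positive on the entire domain $\mathbb{C}\setminus\mathbb{R}^+$, which is precisely what the case split above makes explicit: whenever $z$ fails to be a non-negative real, at least one of $|\operatorname{Im}(z)|$ or $-\operatorname{Re}(z)$ is positive, so the upper bound $1/\zeta$ is always finite as required.
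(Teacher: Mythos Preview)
Your argument is correct and is exactly the standard resolvent estimate one would expect: diagonalize the symmetric PSD matrix, reduce to the scalar lower bound $|\lambda - z| \geq \zeta$ for $\lambda \geq 0$, and handle the two regimes of $\operatorname{Re}(z)$ separately. The paper in fact states this lemma as a preliminary without proof, so there is nothing to compare against; your write-up is the canonical justification.
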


\begin{lemma}[Burkholder's inequality (Lemma 2.12 in \cite{bai2008large})]\label{lem:burk}
    Let $X_i, i =1,\cdots n$ be a complex-valued Martingale difference sequence w.r.t a filtration $\mathcal{F}_i$, then for any $p \geq 1$, there exists a constant $K_p$ such that:
    \begin{equation}
        \Ea{\abs{\sum_{i=1}^n X_i}^p} \leq K_p (\Ea{\sum_{i=1}^n X_i^2})^{p/2}
    \end{equation}
    
\end{lemma}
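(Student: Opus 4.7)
The plan is to derive this bound from the classical Burkholder--Davis--Gundy (BDG) machinery for martingales, starting from the trivial $p=2$ case and then extending to general $p$ via standard tools. Writing $S_n = \sum_{i=1}^n X_i$, the base case follows from pure orthogonality: the martingale-difference property gives $\E[X_i \overline{X_j} \mid \mathcal{F}_{\min(i,j)}] = 0$ for $i\neq j$, so all cross terms in $|S_n|^2$ vanish in expectation, yielding
\begin{equation}
\Ea{|S_n|^2} \;=\; \sum_{i=1}^n \Ea{|X_i|^2} \;=\; \Ea{\sum_{i=1}^n |X_i|^2},
\end{equation}
which is the claim with $K_2 = 1$.

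For general $p \geq 1$, I would invoke BDG in the form $\Ea{|S_n|^p} \leq C_p\, \Ea{\big(\sum_{i=1}^n |X_i|^2\big)^{p/2}}$, whose standard derivation proceeds via Doob's maximal inequality combined with a good-$\lambda$ comparison between the tail distributions of $\max_{k\leq n}|S_k|$ and the square function $[S]_n^{1/2} = (\sum_i |X_i|^2)^{1/2}$. To obtain the exact form stated in the lemma, with $\big(\Ea{\sum_i X_i^2}\big)^{p/2}$ on the right, I would split into two subcases. For $1 \leq p \leq 2$, Jensen's inequality applied to the concave map $x \mapsto x^{p/2}$ gives $\Ea{(\sum_i |X_i|^2)^{p/2}} \leq \big(\Ea{\sum_i |X_i|^2}\big)^{p/2}$, and the claim follows directly from BDG. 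For $p > 2$, one uses the Rosenthal-type refinement $\Ea{|S_n|^p} \lesssim \sum_i \Ea{|X_i|^p} + \big(\sum_i \Ea{|X_i|^2}\big)^{p/2}$, and absorbs the first term under the uniform moment control on the increments that is available in the downstream random-matrix applications of the lemma (where each $X_i$ will be a bounded difference of a resolvent entry).

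The only nontrivial step is the BDG bound itself --- specifically the good-$\lambda$ comparison between the running maximum and the quadratic variation. Since this is textbook material in discrete martingale theory, and the statement is explicitly attributed to Lemma~2.12 of Bai--Silverstein in the citation, the proof can simply import it as a black box; no new probabilistic input is required beyond the orthogonality computation above, Doob's maximal inequality, and the cited BDG bound. In the applications (obtaining concentration of resolvent traces via a martingale decomposition over rows of the data matrix), the clean $(\Ea{\sum_i |X_i|^2})^{p/2}$ form is convenient because $\sum_i \Ea{|X_i|^2}$ is directly controllable by per-increment variance bounds, so the main obstacle in using the lemma is not its proof but the bookkeeping needed to verify that the relevant increments of the conditional expectations of resolvent entries indeed form a martingale difference sequence with controllable variance --- that verification is carried out separately in the body of the proof where the lemma is invoked.
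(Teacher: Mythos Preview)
The paper does not supply a proof for this lemma; it is stated as a cited preliminary from Bai--Silverstein and used as a black box in the martingale concentration arguments. So there is nothing in the paper to compare against at the level of proof.

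That said, your sketch surfaces a real issue with the lemma as written. Your treatment of $1\le p\le 2$ is clean: BDG gives $\E|S_n|^p \le C_p\,\E\big[(\sum_i |X_i|^2)^{p/2}\big]$, and concavity of $x\mapsto x^{p/2}$ lets you push the expectation outside. For $p>2$, however, the inequality \emph{as stated} --- with $\big(\E[\sum_i |X_i|^2]\big)^{p/2}$ on the right rather than $\E\big[(\sum_i |X_i|^2)^{p/2}\big]$ --- is false in general: take $n=1$ and $X_1$ with unit variance but arbitrarily large $p$-th moment. What Lemma~2.12 of Bai--Silverstein actually asserts is the standard Burkholder form with the expectation \emph{inside} the $p/2$ power; the placement here looks like a transcription slip. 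Your proposed fix --- invoke Rosenthal and then ``absorb'' $\sum_i \E|X_i|^p$ using moment bounds available in the downstream applications --- is not a proof of the stated lemma, it is an acknowledgment that the statement needs amending. The honest repair is either (i) restate the bound with the expectation inside the power, after which BDG alone handles all $p>1$, or (ii) record the Rosenthal form for $p\ge 2$ with the explicit extra term $\sum_i\E|X_i|^p$, which is what the paper's resolvent-increment applications can actually control.
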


\begin{definition}[Lipschitz concentration \citep{louart2018concentration}] 

A random variable $X \in \R^d$ is said to be $\alpha$-Lipschitz concentrated if for any $1$-Lipschitz function $f$:
\begin{equation}
    \P[f(X) \geq t] \leq \alpha(t)
\end{equation}
    
\end{definition}

\begin{definition}[Trace norm]
For any $A \in \mathbb{C}^{p \times p}$, the Trace norm is defined as:
\begin{equation}
   \norm{A}_{tr} =\Tr(\sqrt{A^\star A}),
\end{equation}
where $A^\star$ denotes Hermitian conjugate of $A$.
\end{definition}

\begin{lemma}[Trace-norm inequality] \label{lem:trace_in}
    For any $A, B \in \mathbb{C}^{p \times p}$:
    \begin{equation}
        \norm{AB}_{tr} \leq \norm{A}_{tr}\norm{B},
    \end{equation}
where $\norm{B}$ denotes the operator norm.
\end{lemma}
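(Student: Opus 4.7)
The plan is to reduce the inequality to a rank-one bound via the singular value decomposition (SVD) of $A$, which is the most direct route and avoids invoking the trace-norm/operator-norm duality separately. Write $A = \sum_{i=1}^{p} s_i(A) \, u_i v_i^{*}$, where $s_i(A) \geq 0$ are the singular values of $A$ and $\{u_i\}, \{v_i\}$ are orthonormal families in $\mathbb{C}^{p}$. Then $AB = \sum_{i=1}^{p} s_i(A) \, u_i (B^{*} v_i)^{*}$, expressing $AB$ as a sum of rank-one matrices.

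Next, I would use the triangle inequality for the trace norm (which is a norm on $\mathbb{C}^{p \times p}$, so it satisfies $\|\sum_i M_i\|_{\mathrm{tr}} \leq \sum_i \|M_i\|_{\mathrm{tr}}$) together with the elementary identity $\|uw^{*}\|_{\mathrm{tr}} = \|u\|_2 \|w\|_2$ for any vectors $u,w \in \mathbb{C}^{p}$. The latter follows from observing that $uw^{*}$ has a single nonzero singular value equal to $\|u\|_2\|w\|_2$. Applying this to each summand gives
\begin{equation}
\|AB\|_{\mathrm{tr}} \;\leq\; \sum_{i=1}^{p} s_i(A) \, \|u_i\|_2 \, \|B^{*} v_i\|_2.
\end{equation}

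Finally, I would bound $\|u_i\|_2 = 1$ (unit vector) and $\|B^{*} v_i\|_2 \leq \|B^{*}\|_{\mathrm{op}} \|v_i\|_2 = \|B\|_{\mathrm{op}}$ (since operator norm equals the norm of the Hermitian adjoint). This yields
\begin{equation}
\|AB\|_{\mathrm{tr}} \;\leq\; \|B\|_{\mathrm{op}} \sum_{i=1}^{p} s_i(A) \;=\; \|A\|_{\mathrm{tr}} \, \|B\|_{\mathrm{op}},
\end{equation}
as claimed. There is no real obstacle here; the only subtlety is correctly handling the complex Hermitian conjugation in the SVD (so that $\|B^{*}\|_{\mathrm{op}} = \|B\|_{\mathrm{op}}$, which is immediate from $(B^{*})^{*}B^{*} = BB^{*}$ having the same nonzero eigenvalues as $B^{*}B$). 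An alternative, essentially equivalent route would be to invoke the duality $\|M\|_{\mathrm{tr}} = \sup_{\|X\|_{\mathrm{op}}\leq 1}|\mathrm{Tr}(XM)|$ and use cyclicity of the trace plus submultiplicativity of the operator norm, but the SVD approach is self-contained and quick.
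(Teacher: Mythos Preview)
Your proof is correct. The paper states this lemma in its preliminaries section as a standard fact and does not supply a proof, so there is nothing to compare against; your SVD-based argument is a clean and self-contained justification of the claimed inequality.
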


\begin{definition}[Schur complement]
\label{def:schur_complement}
    Let $p,q$ two non-negative integers such that $p+q >0$, consider the matrix $A \in \mathbb{R}^{(p+q) \times (p+q)}$:
    \[
A = \begin{pmatrix}
A_{11} & A_{12} \\
A_{21} & A_{22}
\end{pmatrix}
\]
If $A_{22}$ is invertible, the Schur complement of the block $A_{22}$ of the matrix $A$ is the matrix $A / A_{22} \in \mathbb{R}^{p \times p}$ defined as:
\begin{align}
    A / A_{22} = A_{11} - A_{12}A_{22}^{-1}A_{21}
\end{align}
Similarly,  the Schur complement of the block $A_{11}$ of the matrix $A$ is the matrix $A / A_{11} \in \mathbb{R}^{q \times q}$ defined as:
\begin{align}
    A / A_{11} = A_{22} - A_{21}A_{11}^{-1}A_{12}
\end{align}
\end{definition}

\begin{lemma}\label{lem:gauss-lipschitz}[Theorem 5.2.2 in \cite{vershynin2018high}]
    Let $z \sim \mathcal{N}(0,I_N)$ denote a standard Gaussian random vector in $R^N$ and let $f:\mathbb{R}^N \rightarrow \mathbb{R}$ be a Lipschitz-function with Lipschiz-constant bounded by $L$.
    Then:
    \begin{equation}
        \Pr[\abs{f(z)-\Ea{f(z)}} \geq t] \leq 2e^{-c t^2/L} 
    \end{equation}
\end{lemma}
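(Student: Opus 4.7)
The plan is to establish this classical bound via the \emph{Herbst argument} combined with the Gaussian logarithmic Sobolev inequality, which is arguably the cleanest route and makes the correct exponent $ct^2/L^2$ transparent (I read the stated $ct^2/L$ as a typo, since homogeneity in $f$ forces an $L^2$ in the denominator). First, by a standard mollification step (Rademacher's theorem yields that a Lipschitz $f$ is differentiable almost everywhere with $\|\nabla f\|\leq L$, and convolving with a Gaussian mollifier produces smooth $f_\varepsilon$ with $\|\nabla f_\varepsilon\|_\infty\leq L$ and $f_\varepsilon\to f$ uniformly on compact sets), it suffices to prove the tail bound for smooth $f$ satisfying $\|\nabla f\|_\infty \leq L$.

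Next I would invoke the Gaussian log-Sobolev inequality: for $z\sim\mathcal{N}(0,I_N)$ and any smooth $g$,
\[
\mathbb{E}[g(z)^2 \log g(z)^2] - \mathbb{E}[g(z)^2]\,\log \mathbb{E}[g(z)^2] \;\leq\; 2\,\mathbb{E}[\|\nabla g(z)\|^2].
\]
Applying this to $g = \exp(\lambda f / 2)$ and writing $\phi(\lambda):=\mathbb{E}[e^{\lambda f(z)}]$, the left-hand side equals $\lambda \phi'(\lambda) - \phi(\lambda)\log\phi(\lambda)$, while the right-hand side is bounded by $\tfrac{\lambda^2 L^2}{2}\phi(\lambda)$ using $\|\nabla f\|\leq L$. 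Dividing through by $\lambda^2 \phi(\lambda)$ yields the Herbst differential inequality
\[
\frac{\mathrm{d}}{\mathrm{d}\lambda}\left(\frac{\log\phi(\lambda)}{\lambda}\right) \;\leq\; \frac{L^2}{2}.
\]

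Integrating from $0^{+}$ (using $\lim_{\lambda\to 0^{+}} \log\phi(\lambda)/\lambda = \mathbb{E}[f(z)]$) produces the sub-Gaussian moment generating function bound $\mathbb{E}[\exp(\lambda(f(z)-\mathbb{E} f(z)))] \leq \exp(\lambda^2 L^2 / 2)$. A Chernoff bound $\Pr[f(z)-\mathbb{E} f(z) \geq t] \leq \inf_{\lambda > 0} e^{\lambda^2 L^2 / 2 - \lambda t}$, optimized at $\lambda = t/L^2$, gives the one-sided tail $e^{-t^2/(2L^2)}$; applying the same argument to $-f$ and taking a union bound yields the two-sided inequality with constant $c = 1/2$.

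Main obstacle: there is essentially no deep difficulty here -- this is a textbook fact and the author cites it directly from Vershynin. The only mildly technical point is the mollification step, which is completely routine. An alternative route would start from Borell's Gaussian isoperimetric inequality (on which Theorem 5.2.2 of \cite{vershynin2018high} is in fact based), bypassing log-Sobolev entirely but at the cost of a more geometric and less transparent computation; I would prefer the Herbst path because it cleanly exhibits the dimension-free constant and generalizes immediately to other log-Sobolev measures, which may be useful elsewhere in the paper's analysis.
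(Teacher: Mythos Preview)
Your proof via the Herbst argument and the Gaussian log-Sobolev inequality is correct and standard, and your observation that the exponent should be $ct^2/L^2$ rather than $ct^2/L$ is right (homogeneity in $f$ forces it). The paper does not actually prove this lemma at all: it is stated as a preliminary fact and attributed directly to Theorem~5.2.2 in \cite{vershynin2018high}, with no argument given. As you already note, Vershynin's own proof proceeds via the Gaussian isoperimetric inequality rather than log-Sobolev; both routes are textbook and yield the same dimension-free sub-Gaussian tail, so there is nothing further to compare here.
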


\begin{lemma}\label{lem:sphere_lips}[Theorem 5.1.3 in \citep{vershynin2018high}]
    Let $z \sim  \mathcal{U}(\mathbb{S}^{N-1}(\sqrt{d}))$ denote a random vector uniformly sampled from the $N$ dimensional sphere of radius $\sqrt{N}$. Then, for any Lipschitz-function 
 $f:\mathbb{R}^N \rightarrow \mathbb{R}$ with Lipschiz-constant bounded by $L$:
    \begin{equation}
        \Pr[\abs{f(z)-\Ea{f(z)}} \geq t] \leq 2e^{-c t^2/L} 
    \end{equation}
\end{lemma}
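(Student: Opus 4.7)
The plan is to derive the spherical concentration statement from the Gaussian concentration in Lemma \ref{lem:gauss-lipschitz} via the standard representation $z \stackrel{d}{=} \sqrt{N}\, g/\|g\|_2$ with $g \sim \mathcal{N}(0, I_N)$ (interpreting the stated radius as $\sqrt{N}$ so that the usual $O(1)$ per-coordinate scale holds). Writing $F(g) = f(\sqrt{N}\, g/\|g\|_2)$, one has $F(g) \stackrel{d}{=} f(z)$, so it is enough to obtain Gaussian-type concentration for $F$.

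The main technical step is to control the Lipschitz behavior of $F$. A short computation using the decomposition
\begin{equation*}
\frac{g}{\|g\|} - \frac{g'}{\|g'\|} = \frac{g-g'}{\|g\|} + g'\!\left(\frac{\|g'\|-\|g\|}{\|g\|\,\|g'\|}\right)
\end{equation*}
yields $\|g/\|g\| - g'/\|g'\|\| \le 2\|g-g'\|/\min(\|g\|,\|g'\|)$. Hence on the annulus $A_r = \{g: \|g\|\ge r\}$ the function $F$ is Lipschitz with constant $2\sqrt{N}L/r$. Choosing $r = \sqrt{N}/2$ gives Lipschitz constant $4L$ on $A_{\sqrt{N}/2}$. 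I would then extend $F|_{A_{\sqrt{N}/2}}$ to a $(4L)$-Lipschitz function $\widetilde{F}$ on all of $\mathbb{R}^N$ via a McShane extension $\widetilde{F}(g) = \inf_{g' \in A_{\sqrt{N}/2}}\{F(g') + 4L\|g-g'\|\}$, and apply Lemma \ref{lem:gauss-lipschitz} to $\widetilde{F}$, obtaining $\Pr[|\widetilde{F}(g) - \mathbb{E}\widetilde{F}(g)|\ge t] \le 2e^{-ct^2/L^2}$.

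To transfer this bound from $\widetilde F$ back to $F$ (equivalently to $f(z)$), I use the well-known Gaussian norm concentration $\Pr[g \notin A_{\sqrt{N}/2}] \le 2e^{-c'N}$, so $\widetilde F = F$ off an event of exponentially small probability, and moreover $|\mathbb{E}\widetilde F - \mathbb{E} F|$ is bounded by (a constant times) $L\sqrt{N}\,\Pr[A_{\sqrt{N}/2}^{c}]$, which is negligible compared to the target scale $L/\sqrt{c}$ for any $t = O(\sqrt{N})$. Combining these via the triangle inequality, absorbing the exponentially small correction into the constants, delivers the claimed bound.

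The main obstacle is exactly this singularity of the map $g \mapsto g/\|g\|$ near the origin: $F$ fails to be globally Lipschitz on $\mathbb{R}^N$, so one cannot invoke Lemma \ref{lem:gauss-lipschitz} directly. The truncation/extension argument above is the cleanest way to circumvent this, but one must verify that both the pointwise discrepancy $\widetilde F - F$ and the expectation shift $\mathbb{E}\widetilde F - \mathbb{E} F$ are controlled with the \emph{same} exponential rate (in $N$) that dominates the stated bound across the relevant scales of $t$. For $t = \Omega(\sqrt{N})$ the conclusion is handled directly by exponential norm concentration, so the interesting regime is the intermediate one, where the coupling of the McShane extension with the tail $\{g \notin A_{\sqrt{N}/2}\}$ is what makes the argument go through. (An alternative, equally valid route would bypass the extension by applying spherical isoperimetry / Lévy's inequality, or by invoking the $\mathrm{Ric}\ge (N-2)/N$ Bakry–Émery bound on $\mathbb{S}^{N-1}(\sqrt{N})$, but the reduction to Lemma \ref{lem:gauss-lipschitz} is the most self-contained given what the appendix has already set up.)
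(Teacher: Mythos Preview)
The paper does not supply its own proof of this lemma: it is simply quoted as Theorem~5.1.3 from \cite{vershynin2018high}, with no argument given. So there is nothing to compare against on the paper's side.

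Your route---pulling back to Gaussians via $z \stackrel{d}{=} \sqrt{N}\,g/\|g\|$, truncating to the annulus $\{\|g\|\ge \sqrt{N}/2\}$ where the radial map is $4L$-Lipschitz, McShane-extending, and invoking Lemma~\ref{lem:gauss-lipschitz}---is a correct and standard alternative to the direct approach in Vershynin's text, which instead goes through the spherical isoperimetric inequality (L\'evy's lemma) and the ``blow-up'' of the median level set. Your proof trades a geometric input (isoperimetry on $\mathbb{S}^{N-1}$) for an analytic one (Gaussian Lipschitz concentration plus a norm tail bound), which is arguably more self-contained given that Lemma~\ref{lem:gauss-lipschitz} is already stated in the appendix. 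The bookkeeping you outline for the expectation shift $|\mathbb{E}\widetilde F-\mathbb{E}F|\lesssim L\sqrt{N}\,e^{-c'N}$ and the bad-event probability $\Pr[\|g\|<\sqrt{N}/2]\le 2e^{-c'N}$ is correct and suffices to absorb both corrections into the constant $c$. Note also that your argument (correctly) produces an exponent of the form $-c\,t^2/L^2$; the $-c\,t^2/L$ in the paper's displayed bound is evidently a typo.
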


\begin{lemma}\label{lem:operator-norms}  [
Theorem 5.3 in \cite{vershynin2010introduction} and  Theorem 4.3.5 in \citep{vershynin2018high}]
Under assumptions \ref{ass:init}, the operator norms $\norm{W_0}, \norm{X}$ satisfy, for some constants $C_1,C_2$
    \begin{align*}
 \norm{W_0} &\leq \frac{1}{\sqrt{d}}C_1(\sqrt{d}+\sqrt{p})+\mathcal{O}_\prec{(\frac{1}{\sqrt{d}})}\\
 \norm{X} &\leq C_2(\sqrt{n}+\sqrt{d})+\mathcal{O}_\prec(1)\\
    \end{align*}
\end{lemma}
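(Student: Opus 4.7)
The approach is to treat the two bounds independently, since $X \in \mathbb{R}^{n\times d}$ has i.i.d.\ standard Gaussian entries while the rows of $W^0 \in \mathbb{R}^{p\times d}$ are drawn uniformly from the unit sphere $\mathbb{S}^{d-1}(1)$ under Assumption~\ref{ass:init}. Both are standard Davidson--Szarek-type operator-norm estimates, and the only work specific to this paper is to translate the usual sub-Gaussian tail bounds into the $\mathcal{O}_\prec$ stochastic-domination language of Definition~\ref{def:stoch-dom}.

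For $X$, I would follow the classical $\epsilon$-net argument. Write $\norm{X} = \sup_{u\in\mathbb{S}^{n-1},\,v\in\mathbb{S}^{d-1}} u^\top X v$, fix $1/4$-nets $\mathcal{N}_n,\mathcal{N}_d$ of the respective spheres with cardinalities $\le 12^n$ and $12^d$, and use that for each fixed $(u,v)$ the variable $u^\top X v$ is $\mathcal{N}(0,1)$. A union bound over the product net gives $\sup_{\mathcal{N}_n\times\mathcal{N}_d} u^\top X v \le C(\sqrt{n}+\sqrt{d})+t$ with probability $\ge 1-e^{-ct^2}$, and a standard net-to-sphere comparison transfers the bound to $\norm{X}$ at the cost of an absolute constant. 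Concentration of $\norm{X}$ around its mean then follows from Lemma~\ref{lem:gauss-lipschitz} applied to the $1$-Lipschitz map $X \mapsto \norm{X}$, which produces Gaussian tails that sit well inside $\mathcal{O}_\prec(1)$.

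For $W^0$, I would reduce to the Gaussian case by polar decomposition: let $g_j \sim \mathcal{N}(0,I_d)$ be i.i.d.\ and set $w^0_j = g_j/\norm{g_j}$, so that $W^0 = D^{-1} G$ with $G \in \mathbb{R}^{p\times d}$ having rows $g_j^\top$ and $D = \diag(\norm{g_1},\dots,\norm{g_p})$. Gaussian Lipschitz concentration of $g\mapsto\norm{g}$ gives $\norm{g_j} = \sqrt{d} + \mathcal{O}_\prec(1)$ for each $j$, and since $\mathcal{O}_\prec$ is closed under polynomial-in-$d$ unions this is uniform in $j\in[p]$; hence $\norm{D^{-1}} = d^{-1/2}(1+\mathcal{O}_\prec(d^{-1/2}))$. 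The Gaussian bound above applied to $G$ yields $\norm{G} \le \sqrt{d}+\sqrt{p}+\mathcal{O}_\prec(1)$, and submultiplicativity $\norm{W^0} \le \norm{D^{-1}}\,\norm{G}$ delivers the claimed estimate.

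\textbf{Main obstacle.} There is no serious obstruction; the lemma is a technical input destined to control products and resolvents later in the analysis. The only place requiring care is the bookkeeping around $\mathcal{O}_\prec$: one must verify that Gaussian and $\chi$-tails survive both the union bound over $p = \Theta(d)$ rows and the exponentially large $\epsilon$-net in the covering argument. Both survive because $\mathcal{O}_\prec$ absorbs any polynomial factor in $d$ and because Gaussian tails $2e^{-ct^2}$ dominate polynomial rates of arbitrary degree, as recorded directly in Definition~\ref{def:stoch-dom}. Once these conversions are in place, the two bounds assemble as stated.
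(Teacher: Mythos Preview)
Your proposal is correct and follows exactly the standard arguments behind the cited theorems; the paper itself does not supply a proof but simply quotes the result from Vershynin (Theorem~5.3 in the 2010 lecture notes for the Gaussian matrix $X$, and Theorem~4.3.5 in the 2018 book for matrices with independent sub-gaussian rows, which covers $W^0$). Your $\epsilon$-net argument for $X$ and the polar-decomposition reduction for $W^0$ are precisely the textbook routes to those theorems, and your handling of the $\mathcal{O}_\prec$ bookkeeping is the only paper-specific content needed.
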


\begin{lemma}\label{lemma:Hermite}[O'Donnell, Proposition 11.33, p. 338] The Hermite polynomials $(h_\alpha)_{\alpha \in \N}$ form a complete orthonormal basis. Further, for any $\rho \in [-1, 1]$ and two standard normal Gaussian random variables $z, z'$ that are $\rho$-correlated, we have
\begin{align}
    \Eb{z, z'}{h_\alpha(z)h_\beta(z')} = \begin{cases}
    \rho^{\alpha} & \text{if } \alpha = \beta,\\
    0 &\text{if } \alpha \neq \beta.
    \end{cases}
\end{align}
\end{lemma}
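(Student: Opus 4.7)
}
The plan is to split the statement into three independent claims and attack each with classical tools from Gaussian analysis: (a) the $(h_\alpha)_{\alpha\in\N}$ are orthonormal in $L^2(\R,\gamma)$, where $\gamma$ is the standard Gaussian measure; (b) they are complete in $L^2(\R,\gamma)$; and (c) the covariance identity $\E[h_\alpha(z)h_\beta(z')]=\rho^\alpha \mathbf{1}_{\alpha=\beta}$ holds for $\rho$-correlated standard Gaussians. Throughout I use the (probabilists') Rodrigues representation $h_\alpha(x)=\frac{(-1)^\alpha}{\sqrt{\alpha!}}\,e^{x^2/2}\frac{d^\alpha}{dx^\alpha}e^{-x^2/2}$ and the normalization built into the $\sqrt{\alpha!}$ factor.

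For orthonormality (a), the main step is repeated integration by parts against the Gaussian density: for $\alpha\ge\beta$,
\begin{equation*}
    \E[h_\alpha(z)h_\beta(z)]=\frac{(-1)^\alpha}{\sqrt{\alpha!\,\beta!}}\int_{\R} h_\beta(x)\,\frac{d^\alpha}{dx^\alpha}\!\left(e^{-x^2/2}\right)\frac{dx}{\sqrt{2\pi}}=\frac{1}{\sqrt{\alpha!\,\beta!}}\int_{\R} h_\beta^{(\alpha)}(x)\,e^{-x^2/2}\frac{dx}{\sqrt{2\pi}},
\end{equation*}
using that $h_\beta$ is a polynomial of degree $\beta$, so $h_\beta^{(\alpha)}\equiv 0$ when $\alpha>\beta$, and $h_\alpha^{(\alpha)}=\sqrt{\alpha!}$ when $\alpha=\beta$ (which follows from the leading-coefficient computation in Rodrigues' formula). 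Boundary terms vanish because the Gaussian density decays faster than any polynomial grows. This yields $\E[h_\alpha h_\beta]=\delta_{\alpha\beta}$.

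For completeness (b), the standard route is via the moment-generating function: the exponentials $\{e^{tx}:t\in\R\}$ have finite $L^2(\gamma)$ norm and any $f\in L^2(\gamma)$ orthogonal to every $e^{tx}$ must vanish, because the Laplace transform $t\mapsto \E[f(z)e^{tz}]$ is analytic in $t$ and identically zero implies $f=0$ by uniqueness of the moment problem for the Gaussian (Carleman's criterion). Since the partial sums of $e^{tx}=\sum t^k x^k/k!$ converge in $L^2(\gamma)$, the polynomials are dense; then Gram--Schmidt on $\{1,x,x^2,\dots\}$ produces (up to sign) the sequence $(h_\alpha)$, which is therefore an orthonormal basis.

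The main point (c) is the eigenfunction property of the Ornstein--Uhlenbeck semigroup $U_\rho f(z)\defeq\E_{w}[f(\rho z+\sqrt{1-\rho^2}\,w)]$, $w\sim\mathcal{N}(0,1)$ independent of $z$. The plan is to show $U_\rho h_\beta=\rho^\beta h_\beta$; once this holds, writing $z'=\rho z+\sqrt{1-\rho^2}\,w$ (a valid coupling for $\rho$-correlated Gaussians) and conditioning on $z$ gives
\begin{equation*}
    \E[h_\alpha(z)h_\beta(z')]=\E\!\left[h_\alpha(z)\,\E[h_\beta(z')\mid z]\right]=\rho^\beta\,\E[h_\alpha(z)h_\beta(z)]=\rho^\beta\,\delta_{\alpha\beta}=\rho^\alpha\,\delta_{\alpha\beta},
\end{equation*}
which is the claim. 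To prove the eigenrelation I would use the generating function $\sum_{\alpha\ge 0}\frac{t^\alpha}{\sqrt{\alpha!}}h_\alpha(x)=\exp(tx-t^2/2)$, a direct consequence of Rodrigues' formula. Applying $U_\rho$ termwise (justified by dominated convergence on any finite interval of $t$ since $x\mapsto e^{tx-t^2/2}\in L^2(\gamma)$) and computing the Gaussian expectation gives
\begin{equation*}
    U_\rho\!\left[\exp(t\,\cdot-t^2/2)\right](z)=\E_w\!\left[\exp\bigl(t(\rho z+\sqrt{1-\rho^2}\,w)-t^2/2\bigr)\right]=\exp(\rho t\,z-\rho^2 t^2/2)=\sum_{\alpha\ge 0}\frac{(\rho t)^\alpha}{\sqrt{\alpha!}}h_\alpha(z).
\end{equation*}
Matching coefficients of $t^\alpha$ yields $U_\rho h_\alpha=\rho^\alpha h_\alpha$, closing the argument. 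The only mild obstacle is the $\rho=\pm 1$ endpoint (where $z'=\pm z$ is degenerate), which is handled by direct inspection: both sides reduce to $\E[h_\alpha(z)h_\beta(\pm z)]=(\pm 1)^\alpha\delta_{\alpha\beta}$, consistent with $\rho^\alpha\delta_{\alpha\beta}$. The delicate part throughout is not technical but bookkeeping: ensuring the $\sqrt{\alpha!}$ normalization agrees across the Rodrigues definition, the generating function, and the orthonormality computation.
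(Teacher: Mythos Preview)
Your proof is correct and self-contained. The paper does not supply its own proof of this lemma; it simply records the statement with a citation to O'Donnell's textbook (Proposition~11.33), treating it as a standard fact from Gaussian analysis. Your route via Rodrigues' formula for orthonormality, density of exponentials for completeness, and the Ornstein--Uhlenbeck generating-function computation for the correlation identity is exactly the classical argument one finds in that reference, so there is nothing to compare.
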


\begin{lemma}\label{lemma:weak_correlation} Let $f_1(z)$ and $f_2(z)$ be two twice-differentiable functions with the first and second derivatives bounded almost surely. Suppose that $f_1(z), f_2(z)$ admit the following Hermite expansions:
\begin{align}
    f_1(z) = \sum_{k\ge 0} c_k h_k(z) \qquad f_2(z) = \sum_{k \ge 0} \tilde c_k h_k(z),
\end{align}
where $\set{h_k(z)}_k$ denote the set of normalized Hermite polynomials. Given three unit norm vectors $w_1, w_2, w'$ such that:
\begin{equation}
    \abs{w_i^\top w'} =\mathcal{O}_\prec{(\frac{1}{\sqrt{d}})} 
\end{equation}
for $i = 1, 2$. Then for $g \sim \mathcal{N}(0, I)$, the following holds:
\begin{equation}\label{eq:f1f2_linear}
    \Ea{f_1(w_1^\top g) f_2(w_2^\top g)  g^\top w'} = c_0 \tilde c_1 (w_2^\top w') + \tilde c_0  c_1 (w_1^\top w') + \mathcal{O}_\prec(\frac{1}{d}),
\end{equation}
\begin{equation}\label{eq:f1f2_quadratic}
    \Ea{f_1(w_1^\top g) f_2(w_2^\top g) (g^\top w')^2} = c_0 \tilde c_0 + \mathcal{O}(d^{-1/2}),
\end{equation}
\end{lemma}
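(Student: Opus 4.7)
The cleanest route combines Gaussian integration by parts (Stein's lemma) with Hermite orthogonality. For the linear identity \eqref{eq:f1f2_linear}, I would first apply Stein's lemma to the factor $g^{\top}w'=\sum_i w'_i g_i$: differentiating the product $f_1(w_1^{\top}g)f_2(w_2^{\top}g)$ in $g_i$ and summing over $i$ collapses via $\sum_i w'_i w_{j,i}=w_j^{\top}w'$ to give
\[
\E\bigl[f_1(z_1)f_2(z_2)z'\bigr]=(w_1^{\top}w')\,\E\bigl[f_1'(z_1)f_2(z_2)\bigr]+(w_2^{\top}w')\,\E\bigl[f_1(z_1)f_2'(z_2)\bigr],
\]
where $z_i=w_i^{\top}g$ and $z'=(w')^{\top}g$. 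This identity already exposes the two prefactors $(w_1^{\top}w')$, $(w_2^{\top}w')$ that carry the small-correlation hypothesis.

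Next, I would expand both remaining expectations in the Hermite basis of the jointly Gaussian pair $(z_1,z_2)$ with correlation $\rho_{12}=w_1^{\top}w_2$. Using $h_k'=\sqrt{k}\,h_{k-1}$ together with Lemma~\ref{lemma:Hermite}, $\E[h_j(z_1)h_k(z_2)]=\delta_{jk}\rho_{12}^{j}$, I obtain
\[
\E\bigl[f_1'(z_1)f_2(z_2)\bigr]=c_1\tilde c_0+\sum_{k\geq 1}c_{k+1}\tilde c_k\sqrt{k+1}\,\rho_{12}^{k},
\]
and symmetrically for the other expectation. The $k=0$ contributions reproduce the claimed leading order $\tilde c_0 c_1(w_1^{\top}w')+c_0\tilde c_1(w_2^{\top}w')$. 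The $k\geq 1$ tails are controlled via Cauchy--Schwarz together with $\sum_k c_k^2=\|f_1\|_{L^2(\gamma)}^2<\infty$ and $\sum_k k\,\tilde c_k^2=\|f_2'\|_{L^2(\gamma)}^2<\infty$ (finite under the bounded-derivative hypothesis on $f_1,f_2$), yielding an $\mathcal{O}(|\rho_{12}|)$ remainder. In the regime of application $w_1,w_2$ are also near-orthogonal, since they are rows of a uniform-on-sphere weight matrix ($|\rho_{12}|=\mathcal{O}_{\prec}(d^{-1/2})$), so combining with $|w_i^{\top}w'|=\mathcal{O}_{\prec}(d^{-1/2})$ delivers the final $\mathcal{O}_{\prec}(d^{-1})$ error.

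For the quadratic identity \eqref{eq:f1f2_quadratic}, I would split $(z')^2=1+\sqrt 2\,h_2(z')$, so that $\E[f_1 f_2(z')^2]=\E[f_1 f_2]+\sqrt 2\,\E[f_1 f_2\,h_2(z')]$. The first term expands as $c_0\tilde c_0+\sum_{k\geq 1}c_k\tilde c_k\rho_{12}^{k}=c_0\tilde c_0+\mathcal{O}(\rho_{12})$. For the second, two applications of Stein's lemma (or equivalently the triple-Hermite product formula for the correlated Gaussians $(z_1,z_2,z')$) reduce the expression to a convergent combination of monomials in the three pairwise correlations, in which every nonvanishing term contains at least one factor of $\rho_{12}$, $w_1^{\top}w'$, or $w_2^{\top}w'$; each such monomial is therefore $\mathcal{O}_{\prec}(d^{-1/2})$ under the joint near-orthogonality of the three vectors. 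The main technical point I expect to be delicate is establishing a uniform-in-$d$ tail bound on these Hermite series so that the $\mathcal{O}(|\rho_{12}|)$ and $\mathcal{O}_{\prec}(d^{-1/2})$ constants do not accumulate extra factors; this is handled by the $L^2(\gamma)$ decay of the Hermite coefficients inherited from the almost-sure boundedness of $f_i'',f_i'''$ (Assumption~\ref{ass:activation}), which gives polynomial decay of $c_k$ and $\tilde c_k$ sufficient to sum all remainders against $|\rho_{12}|^k\leq 1$ into a geometric-type tail.
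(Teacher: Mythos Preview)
Your argument is correct and complete; it takes a genuinely different route from the paper. The paper decomposes $g$ orthogonally along $w'$ by writing $g=z+w'(s-z^\top w')$ with $s\sim\mathcal N(0,1)$ independent of $z\sim\mathcal N(0,I)$, so that $g^\top w'=s$ and $w_i^\top g=w_i^\top z+(w_i^\top w')(s-z^\top w')$; it then Taylor-expands $f_i$ in the small perturbation $(w_i^\top w')(s-z^\top w')$ and uses $\E[s]=0$, $\E[s^2]=1$ to isolate the surviving terms. Your Stein's-lemma approach is more direct: it produces the exact identity $\E[f_1f_2\,z']=(w_1^\top w')\E[f_1'f_2]+(w_2^\top w')\E[f_1f_2']$ in one step, with no Taylor remainder to control at that stage; both routes then land on the same Hermite computation for $\E[f_1'f_2]$ and $\E[f_1f_2']$. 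The paper's decomposition has the minor advantage that the quadratic case falls out almost for free (the factor $s^2$ decouples immediately), whereas you need a second round of Stein, but your treatment of that step is also correct. You are right to flag that both arguments implicitly require $|w_1^\top w_2|=\mathcal O_\prec(d^{-1/2})$ to extract the leading coefficients from $\E[f_1'f_2]$ and $\E[f_1f_2']$; this is not listed among the lemma's hypotheses but does hold in every application in the paper, where $w_1,w_2$ are distinct rows of the random weight matrix. Your concern about summing the Hermite tails is mild: boundedness of $f_i'$ already gives $\sum_k k\,c_k^2<\infty$, which is enough for Cauchy--Schwarz to yield an $O(|\rho_{12}|)$ remainder with a dimension-free constant.
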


\section{Spiked isotropic approximation}\label{app:spiked_approx} 

In this section, we justify the assumption of an isotropic bulk in the regime $\alpha_0 \rightarrow \infty$, We first establish Lemma \ref{lemma:SRFM} and subsequently control the resulting approximation error for deterministic equivalence and generalization errors.

\subsection{Proof of Lemma \ref{lemma:SRFM}}

Let $\hat{y}_1, \cdots, \hat{y}_n$ denote the outputs of the network at initialization, i.e:
\begin{equation}
    \hat{y}_\mu = f( x_\mu; W^{(0)}, a) 
\end{equation}
We start by expressing the gradient as:
 \begin{equation}
     G = \frac{1}{n \sqrt p } \diag\{a_1, \ldots a_p\} \sigma'(W^0 X) \diag\{y_1-\hat{y}_1, \ldots, y_n-\hat{y}_n \} X \in \R^{p \times d},
 \end{equation}
where $X \in n\times d$ denotes the matrix with rows the data vectors in the first batch, $\set{y_i}_{i \in [n]}$ are the corresponding labels, and $\sigma'(\cdot)$ is the derivative of the student activation function. 

Next, we decompose $\sigma'$ as:
 \begin{equation}\label{eq:dsigma_decomp}
     \sigma'(x) = c_1 + \sigma'_{>1}(x),
 \end{equation}
 where
 \begin{equation}
     \sigma'_{>1}(x) \bydef \sum_{k \ge 2} \sqrt{k!}\,\mu_{k} \, h_{k-1}(x),
 \end{equation}
 and $c_{k}$ denote the Hermite coefficients of $\sigma$.
 Using the decomposition in \eqref{eq:dsigma_decomp}, we can rewrite the weight matrix $W^1$ as
 \begin{align*}
     W^1 &= W^0  + uv^\top + \underbrace{\frac{\eta}{n} \diag\{\sqrt p a_1, \ldots \sqrt p a_p\} \sigma'_{>1}(W^0 X) \diag\{y_1, \ldots, y_n\} X^T}_{\Delta_1}\\ &- \underbrace{\frac{\eta}{n} \diag\{\sqrt p a_1, \ldots \sqrt p a_p\} \sigma'_{>1}(W^0 Z) \diag\{\hat{y}_1, \ldots, \hat{y}_n\} Z^T}_{\Delta_2}
\end{align*}
 where $u = \mu_1 \eta \sqrt{p} a$ and $v = X^\top y / n$. 

Our goal is therefore to bound the contributions from $\Delta_1, \Delta_2$. We start by expressing $\Delta_1$ as a sum of rank one terms:
\begin{equation}
    \Delta_1 = \frac{1}{n} \sum_{\mu=1}^{n_0} b_\mu x_\mu^\top,
\end{equation}
where $b^\top_\mu = y_\mu\sigma'(W^0X)  [\sqrt p a_1, \ldots \sqrt p a_p]$.

Let $\{c^\star_k\}_{k \in \mathbb{N}}$ denote the Hermite coefficients of $g$. An application of Stein's Lemma and Hermite expansion yields (See Lemma 7 in \cite{damian2022neural} or Lemma 4 in \cite{dandi2023twolayer}) yields the following expansion for the $i_{th}$ row of $\Delta_1$:
\begin{equation}
    \Eb{x_\mu}{b^{i}_\mu x_\mu} =  \sum_{k=1}^\infty c_{k+1}\mu^\star_{k+1} \langle (w^\star)^{\otimes {k+1}}, w_i^{\otimes {k}} \rangle + \sum_{k=1}^\infty c_{k+2}c^\star_{k} \langle (w^\star)^{\otimes {k}}, w_i^{\otimes {k}}w_i \rangle
\end{equation}
By assumption $c_{2}=0$, therefore the term $c_{2}, c^\star_{2} w^\star \langle w^\star, w_i \rangle$ vanishes.
Since $\langle w^\star, w_i \rangle = \mathcal{O}_\prec(\frac{1
}{\sqrt{d}})$ for all $i \in [p]$, the remaining terms are bounded in term by $\mathcal{O}_\prec(\frac{1
}{d^{1.5}})$. We obtain:
\begin{equation}
    \norm{\Eb{x_\mu}{b^{i}_\mu x_\mu}}_F = \mathcal{O}_\prec(\frac{1
}{d}),
\end{equation}
which results in the following bound on the operator norm:
\begin{equation}
  \norm{\Eb{x}{\Delta_1}} =   \mathcal{O}_\prec(\frac{1
}{\sqrt{d}}).
\end{equation}

Next, note that the uniform boundedness of $\sigma', \sigma^*$ imply that  $\norm{b_\mu}^2 < Cp$ for some constant $C>0$.
Consider the symmetrized matrix:
\begin{equation}
    B_\mu = \begin{bmatrix}
        0 & b_\mu x_\mu^\top\\
        x_\mu b_\mu^\top  & 0
    \end{bmatrix} \in \mathbb{R}^{(p+d) \times (p+d)}
\end{equation}

Then, for all $k \geq 1$:
\begin{align}\label{eq:b_k}
     B^{2k+1}_\mu &= \begin{bmatrix}
        0 & \norm{x_\mu}^{2k}\norm{b_\mu}^{2k} b_\mu x_\mu^\top\\
    \norm{x_\mu}^{2k}\norm{b_\mu}^{2k} x_\mu b^\top_\mu & 0
    \end{bmatrix}\\
     B^{2k}_\mu &= \begin{bmatrix}
    \norm{x_\mu}^{2k}\norm{b_\mu}^{2k-2} b_\mu b_\mu^\top & 0\\
   0& \norm{x_\mu}^{2k-2}\norm{b_\mu}^{2k} x_\mu x^\top_\mu\\
    \end{bmatrix}
\end{align}

Now, let $z \in \mathbb{R}^{p+d}$ with
$\norm{z} = 1$ and let  $z_p = z[:p]$, $z_d = z[p:]$ denote the first $p$ and the remaining components of $z$ respectively.
From Equation \eqref{eq:b_k}, we have:
\begin{equation}
    \Ea{z^\top B^{2k+1}_\mu z} = \Ea{\norm{x_\mu}^{2k}\norm{b_\mu}^{2k} \langle b_\mu, z_p\rangle\langle x_\mu, z_d\rangle}+ \Ea{\norm{x_\mu}^{2k}\norm{b_\mu}^{2k}  \langle b_\mu, z_p\rangle\langle x_\mu, z_d\rangle}.
\end{equation}
Applying Cauchy–Schwarz to each term in the RHS yields:
\begin{equation}\label{eq:cauchysh}
     \Ea{z^\top B^{2k+1}_\mu z} \leq \Ea{\norm{x_\mu}^{4k}\norm{b_\mu}^{4k}}^{1/2} \Ea{(\langle b_\mu, z_p\rangle)^2(\langle x_\mu, z_d\rangle)^2}^{1/2}+ \Ea{\norm{x_\mu}^{4k}\norm{b_\mu}^{4k}}^{1/2} \Ea{(\langle b_\mu, z_p\rangle)^2(\langle x_\mu, z_d\rangle)^2}^{1/2}
\end{equation}

Now, the boundedness of $\sigma', \sigma^\star$ imply that $\norm{b_\mu}^{4k} \leq C_1^{4k} p^{2k}$ for some constant $C_1$. While $\norm{x_\mu}^2$ is a sub-exponential random variable with parameter $d$ and therefore \citep{vershynin2018high}:
\begin{equation}
    \Ea{\norm{x_\mu}^{4k}} \leq C_2^{2k} d^{2k} {2k}^{2k},
\end{equation}
for some constant $C_2>0$. 
Therefore:
\begin{equation}
\Ea{\norm{x_\mu}^{4k}\norm{b_\mu}^{4k}}  \leq C_2^{2k} d^{2k} {2k}^{2k}\times C_1^{4k} p^{2k}
\end{equation}

Finally, by assumption \ref{ass:activation}, $\sigma',g$ are uniformly-lipschitz. Furthermore,
with high probabilty over $W$, $\norm{W} \leq C_3$ for some constant $C_4$. We therefore obtain that $x \rightarrow  \langle b_\mu, z_p\rangle$ is uniformly lipschitz in $x$ with high probabilty over $W$.

Furthermore, applying Lemma \ref{lemma:Hermite}  to $\sigma',g$ and using $c_2=0$, yields:
\begin{equation}
    \Ea{\langle b_\mu, z_p\rangle} = \mathcal{O}(\frac{1}{\sqrt{d}}).
\end{equation}
Therefore, $\Ea{(\langle b_\mu, z_p\rangle)^2(\langle x_\mu, z_d\rangle)^2}^{1/2}$ is further bounded by some constant $C_4>0$.

Since $p/d=\beta$ is a constant, substituting in Equation \eqref{eq:cauchysh} we obtain:
\begin{equation}
    \Ea{z^\top B^{2k+1}_\mu z} \leq  (C_4d)^{2k-1} d.
\end{equation}
Similarly, 
\begin{equation}
    \Ea{z^\top B^{2k}_\mu z} \leq  (C_5d)^{2k-2} d,
\end{equation}
for some constant $C_5 >0$.

Therefore:
\begin{equation}
    \Ea{B^k_\mu} \prec (C_4d)^{k-2} d I_{p+d}, 
\end{equation}
for some constant $C_4$.

Subsequently, we apply the matrix-Bernstein inequality for self-adjoint matrices with subexponential tails (Theorem 6.2 in \cite{tropp2012user}) to obtain that:
\begin{equation}
   \Pr[\norm{\frac{1}{n} \sum_{\mu=1}^n B_\mu - \norm{\Ea{B_\mu}}} \geq t] \leq d e^{-t(\log d)^2/(c_1+c_2t)},
\end{equation}
for some constants $c_1,c_2 >0$. Borel-Cantelli Lemma then implies:
\begin{equation}
     \norm{\Delta_1} \xrightarrow[a.s]{d \rightarrow \infty} 0.
\end{equation}

Now, to bound $\Delta_2$, we use:
\begin{equation}
  \norm{\Delta_2} \leq  \frac{\eta}{n} \norm{\diag\{\sqrt p a_1, \ldots \sqrt p a_p\}} \norm{\sigma'_{>1}(W^0 Z)} \norm{\diag\{\hat{y}_1, \ldots, \hat{y}_n\}} \norm{X}.
\end{equation}

We show that:
\begin{equation}
    \norm{\diag\{\hat{y}_1, \ldots, \hat{y}_n\}} = \mathcal{O}_\prec(\frac{1}{\sqrt{d}})
\end{equation}

Recall that:
\begin{equation}
    \hat{y}_\mu = \frac{1}{\sqrt{p}} \sum_{j=1}^p a_j \sigma(w_j^{\top}x_\mu)
\end{equation}

By Theorem 3.1.1 in \cite{vershynin2018high},
 $\abs{\norm{x_\mu}-\sqrt{d}}$ for $\mu \in [n]$ are independent sub-Gaussian random variables. Therefore:
 \begin{equation}
     \sup_{\mu \in [n]} \norm{x_\mu} = \sqrt{d}+\mathcal{O}_\prec(\sqrt{\log n}).
 \end{equation}

 Therefore, we may condition on the high-probability event:
 \begin{equation}
    \mathcal{E}_n = \{\sup_{\mu \in [n]} \norm{x_\mu} \leq C\sqrt{d}\},
 \end{equation}
 for some $C>1$. Conditioned on the event $\mathcal{E}_n$, Lemma \ref{lem:sphere_lips} and assumption \ref{ass:activation} imply that for each $\mu \in [n]$, $\sqrt{p}a_j\{\sigma(w_j^{\top}x_\mu), j \in [p]\}$ are independent sub-Gaussian random variables with mean $0$. (Recall that by assumption \ref{ass:activation}, $\sigma$ is odd).
 Therefore $\forall \mu \in [n]$:
 \begin{equation}
     \hat{y}_\mu = \mathcal{O}_\prec(\frac{1}{\sqrt{d}}),
 \end{equation}
 where we absorbed polylogarithmic factors through the stochastic domination notation (Definition \ref{def:stoch-dom}).

Lastly, it remains to show that the spike $v$ converges to $w^\star$.
Recall that $v \coloneqq \frac{1}{c^\star_0}\frac{X^\top y}{n}$. By assumptions \ref{ass:activation}, $y_\mu x^{i}_\mu$ are independent sub-Gaussian random variables for $i \in [d]$. Therefore:
\begin{equation}
    \norm{v-w^\star} = \mathcal{O}_\prec(\sqrt{d/n_0}).
\end{equation}
Since, $n_0 = \mathcal{O}(d^{1+\epsilon})$, we obtain:
\begin{equation}
    v \xrightarrow[a.s]{} w^\star
\end{equation}

It remains to show that the equivalence in the sense of Lemma \ref{lemma:SRFM} extends to generalization error and deterministic equivalence:
\begin{proposition}
  Let $G^e,(G_\star)^e$ denote the extended resolvents with features $W^{(1)}$ and $\tilde{W}=W^0+\eta u (w^\star)^\top$ respectively. Analogously, let $\hat{a},\hat{a}_\star$ denote the ridge-regression predictors corresponding to
  $W^{(1)}$ and $\tilde{W}$ respectively. Then,for
  sequence of deterministic matrices ${A} \in \mathbb{C}^{(p+1) \times (p+1)}$ with $\norm{A}_{\operatorname{tr}}=\Tr(({A}{A}^*)^{1/2})$ uniformly bounded in $d$:
\begin{equation}\label{eq:tr_eq}
\abs{\Tr(A G_e(z))-\Tr(A G_\star)^e}  \xrightarrow[d \rightarrow \infty]{a.s} 0. 
\end{equation}
\begin{equation}
\abs{e_{\text{gen}}(\hat{a},W^{(1)})- e_{\text{gen}}(\hat{a}_\star,W^{(1)})}\xrightarrow[d \rightarrow \infty]{a.s} 0.
\end{equation}
\end{proposition}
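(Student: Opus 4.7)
The strategy is to start from the operator-norm convergence $\|W^{(1)}-\tilde W\|_{\rm op}\xrightarrow{a.s.}0$ provided by Lemma~\ref{lemma:SRFM}, propagate it through the nonlinearity $\sigma$ to the extended feature matrices $\Phi^e, \Phi^e_\star$, and then transfer it through the resolvent identity and the ridge predictor formula to obtain both asymptotic equivalences claimed in the proposition.

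\textbf{Step 1: From weights to features.} By the entrywise mean-value theorem, $\Phi - \Phi_\star = D \odot (X(W^{(1)}-\tilde W)^\top)$ with $\|D\|_\infty \le L$ by Assumption~\ref{ass:activation}. Since the $y$-column of $\Phi^e$ does not depend on the weights and the $\bar\phi$-columns are linear combinations of columns of $\Phi$, it suffices to control $\|\Phi - \Phi_\star\|_{\rm op}$. Combined with $\|X\|_{\rm op} = \mathcal O_\prec(\sqrt d)$ from Lemma~\ref{lem:operator-norms}, the objective is to show $\|\Phi - \Phi_\star\|_{\rm op} = o_\prec(\sqrt d)$. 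Because the naive Frobenius--Lipschitz bound $\|\Phi-\Phi_\star\|_F \le L\|X\|_{\rm op}\|W^{(1)}-\tilde W\|_F$ loses a factor of $\sqrt d$, I would instead exploit the explicit decomposition $W^{(1)}-\tilde W = u(v-w^\star)^\top + (\Delta_1-\Delta_2)$ established in the proof of Lemma~\ref{lemma:SRFM}: the rank-one piece is treated after linearising $\sigma$ around $X\tilde W^\top$, the remainder being controlled through the bound on $\sigma''$ from Assumption~\ref{ass:activation}, while the operator-small bulk $\Delta_1-\Delta_2$ is absorbed through operator-norm Lipschitz estimates for the composition with $\sigma$.

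\textbf{Step 2: Resolvent equivalence.} The resolvent identity (Lemma~\ref{lem:diff_inv}) yields
\begin{equation*}
G_e(z) - G_\star^e(z) \;=\; -\,G_e(z)\,\Omega\,G_\star^e(z), \qquad \Omega \bydef \frac{(\Phi^e)^\top\Phi^e - (\Phi^e_\star)^\top\Phi^e_\star}{p}.
\end{equation*}
Using $M^\top M - N^\top N = (M-N)^\top M + N^\top(M-N)$ together with $\|\Phi^e\|_{\rm op}, \|\Phi^e_\star\|_{\rm op} = \mathcal O_\prec(\sqrt d)$ (which follow from bounded $\sigma$ and Lemma~\ref{lem:operator-norms}), Step~1 gives $\|\Omega\|_{\rm op} = o_\prec(1)$. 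For $z \in \mathbb C \setminus \mathbb R^+$, Lemma~\ref{lem:op_norm} bounds the resolvents by $\zeta^{-1}$, so the trace-operator inequality (Lemma~\ref{lem:trace_in}) yields
\begin{equation*}
|\Tr(A(G_e - G_\star^e))| \;\le\; \|A\|_{\rm tr}\,\|G_e\|_{\rm op}\,\|\Omega\|_{\rm op}\,\|G_\star^e\|_{\rm op} \;\le\; \zeta^{-2}\|A\|_{\rm tr}\,\|\Omega\|_{\rm op} \;\xrightarrow[d\to\infty]{a.s.}\; 0,
\end{equation*}
which proves~\eqref{eq:tr_eq} since $\|A\|_{\rm tr}$ is uniformly bounded by hypothesis.

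\textbf{Step 3: Generalization error and main obstacle.} Writing $\hat a = \sqrt p\,(\Phi^\top\Phi/p + \lambda I)^{-1}\Phi^\top y/p$ and applying the resolvent identity once more, the difference $\hat a - \hat a_\star$ splits into two pieces, each bounded by a product of resolvent operator norms at $z = -\lambda$, feature-difference norms from Step~1, and $\|y\| = \mathcal O(\sqrt n)$; both pieces vanish almost surely. Since the test error is quadratic in $a$ with coefficients given by uniformly bounded moments of $\sigma(w^\top x_{\rm new})$, Cauchy--Schwarz together with $\|\hat a\|, \|\hat a_\star\| = \mathcal O_\prec(1)$ (a consequence of $\|(\Phi^\top\Phi/p + \lambda I)^{-1}\|_{\rm op} \le \lambda^{-1}$ and moment control on $\Phi^\top y/p$) yields $|e_{\rm gen}(\hat a,W^{(1)}) - e_{\rm gen}(\hat a_\star,W^{(1)})| \to 0$ almost surely. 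The main technical obstacle is Step~1: promoting operator-norm smallness of $W^{(1)}-\tilde W$ to operator-norm smallness of $\sigma(XW^{(1)\top}) - \sigma(X\tilde W^\top)$ at the correct scale $o_\prec(\sqrt d)$ requires exploiting the specific low-rank-plus-operator-small structure of the perturbation, rather than just entrywise Lipschitz continuity.
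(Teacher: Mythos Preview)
Your Steps 2 and 3 align closely with the paper's argument: the resolvent identity plus the trace-norm/operator-norm inequality for the trace functional, and an analogous perturbation argument for the ridge predictor and the quadratic test error. These parts are essentially identical to what the paper does.

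The divergence is in Step~1, and this is where your proposal has a genuine gap. The paper does \emph{not} attempt a deterministic decomposition of $W^{(1)}-\tilde W$ followed by linearisation. Instead, it conditions on the first batch (hence on $W^{(1)},\tilde W$) and observes that the rows $\Xi_{\mu,:}=\sigma(W^{(1)}x_\mu)-\sigma(\tilde W x_\mu)$ are i.i.d.\ over the fresh samples $x_\mu$. Using Assumption~\ref{ass:activation} together with Gaussian Lipschitz concentration (Lemma~\ref{lem:gauss-lipschitz}) and the operator-norm smallness of $W^{(1)}-\tilde W$ established in the proof of Lemma~\ref{lemma:SRFM}, it argues that each row is a sub-Gaussian vector with vanishing sub-Gaussian norm, and then invokes standard operator-norm bounds for matrices with independent sub-Gaussian rows \citep{vershynin2010introduction} to obtain $\tfrac{1}{\sqrt p}\|\Xi\|_{\rm op}\to 0$ directly.

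Your proposed route instead relies on ``operator-norm Lipschitz estimates for the composition with $\sigma$'', which is not a standard tool and in fact fails in general: entrywise application of a Lipschitz nonlinearity does not preserve operator-norm smallness, because Hadamard multiplication by a bounded matrix (here $\sigma'(\cdot)$) need not contract the operator norm. Your own observation that the Frobenius--Lipschitz bound loses a factor $\sqrt d$ applies just as much to the bulk piece $\Delta_1-\Delta_2$, whose Frobenius norm is not controlled by Lemma~\ref{lemma:SRFM} (only its operator norm is). The paper's sub-Gaussian-row argument sidesteps this precisely by exploiting independence across the \emph{samples} $x_\mu$, converting the problem into matrix concentration rather than a deterministic propagation of operator norms through the nonlinearity. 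If you tried to make the decomposition approach rigorous, you would still need a row-wise concentration argument to close the gap, at which point the decomposition becomes superfluous.
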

\begin{proof}
    Consider the matrix:
    \begin{equation}
        \Xi =\sigma(XW^{(1 \top)})-\sigma(X\tilde{W}^\top) \in \mathbb{R}^{n \times p}.
    \end{equation}
A straightforward consequence of the proof of Lemma \ref{lemma:SRFM} and Assumption \ref{ass:activation}, Lemma \ref{lem:gauss-lipschitz} is that each row of $\Xi$ is a sub-Gaussian random vector with sub-Gaussian norms $\mathcal{O}(\frac{\polylog d}{d}^{\epsilon})$. Therefore, through tail bounds on the operator norms of matrices with sub-Gaussian rows \citep{vershynin2010introduction}, we obtain:
\begin{equation}\label{eq:xi}
    \frac{1}{\sqrt{p}}\norm{\Xi} \xrightarrow[a.s]{} 0
\end{equation}
The claim in Eq \eqref{eq:tr_eq} then follows since by Lemma \ref{lem:trace_in}:
\begin{align*}
    \abs{\Tr(A G_e(z))-\Tr(A G_\star)^e} &\leq \norm{A}_{\Tr}  \norm{G_e(z)-G_\star}\\
    &\leq \norm{A}_{\Tr}  \frac{C}{\zeta^2}\norm{\sigma(XW^{{(1)}^\top})-\sigma(X\tilde{W}^\top)},
\end{align*}
for some constant $C$. In the last line, we used Lemma \ref{lem:diff_inv} and $\zeta$ is defined as in Lemma \ref{lem:op_norm}. Similarly, Equation \eqref{eq:xi} implies the almost sure convergence for the generalization error.
\end{proof}
 
\section{Deterministic Equivalent}\label{sec:det_eq}

\label{app:proof:one}
The proof of Theorem \ref{thm:det_eq} proceeds in three parts:
\begin{itemize}
    \item Approximation of the covariance $\Ea{\phi \phi^\top}$ through the covariance $R^\star_\kappa$ of a conditional-Gaussian equivalent model at fixed values of $\bW$.  $R^\star_\kappa$ possesses a block-structure due to the finite-support assumption over $u_i$
    \item First-stage deterministic equivalent: here we average over the randomness in the inputs $\bX$ and construct a deterministic equivalent dependent on $\bW$ and expressed as a functional of  $R^\star_\kappa$.
    \item Second-stage deterministic equivalent: Here we further average over the randomness in $\bW$ to establish deterministic equivalents for $R^\star_\kappa$ and associated transforms.
\end{itemize}
We describe each of these stages below:
\subsection{Gaussian-equivalent covariance approximation}

We start by showing that conditioned on $\kappa$, the covariance of the extended features $\phi^e_\mu$ can be well-approximated through that of an equivalent linear model. This approximation is similar to the \emph{conditional Gaussian equaivalence} in \cite{dandi2023twolayer,cui2024asymptotics}. However, we emphasize that we do not directly utilize any universality result on generalization errors established in recent works \citep{lu2022equivalence,Montanari2022,dandi2023twolayer}. Instead, we will directly utilize the approximation of the
covariance in our analysis of the spectrum and generalization errors in the subsequent sections.

Recall the definition of the surrogate feature vectors in equation \eqref{eq:surr_feat}:

\begin{equation}
    \phi^e_\mu = \begin{pmatrix}
        y_\mu\\
        \bar{\phi}_\mu \\
        \tilde{\phi}_\mu
    \end{pmatrix}
\end{equation}

We observe that the sample covariance matrix of $z_\mu$ posseses the following block structure:
\begin{equation}
    (\Phi^e_\mu)^\top (\Phi^e_\mu) = \begin{bmatrix}
        \by\by^\top & \by\bar{\Phi}^\top & \by\tilde{\Phi}^\top\\
    \bar{\Phi}\by^\top & \bar{\Phi}\bar{\Phi}^\top &\tilde{\Phi}\bar{\Phi}^\top  \\
    \tilde{\Phi}\by^\top & \bar{\Phi}\tilde{\Phi}^\top &\tilde{\Phi}\tilde{\Phi}^\top \\
    \end{bmatrix}
\end{equation}

Define:
\begin{equation}\label{eq:def+dec}
  \Sigma_{11} \coloneqq \begin{bmatrix}
      \by\by^\top & \by\bar{\Phi}^\top\\
      \bar{\Phi}\by^\top & \bar{\Phi}\bar{\Phi}^\top
  \end{bmatrix} \quad
  \Sigma_{21}  \coloneqq \begin{bmatrix}
      \tilde{\Phi}y^\top \\
\tilde{\Phi}\bar{\Phi}^\top
  \end{bmatrix} \quad 
   \Sigma_{22}  \coloneqq \begin{bmatrix}
      \tilde{\Phi}\tilde{\Phi}^\top
  \end{bmatrix}
\end{equation}

\begin{proposition}\label{prop:phi_moments}
For a vector $u \in \mathbb{R}^m$ , let $c_j(u,\kappa) \in \mathbb{R}^m$ denote the vector with entries $c_j(u_i,\kappa)$.
    Let $\Eb{\kappa}{\cdot}$ denote the expectation w.r.t $x$ conditioned on the sigma-algebra generated by $\kappa \coloneqq x^\top w^*$. Define:
    \begin{equation}
    R^\star_{11}(\kappa) = \begin{bmatrix}
       g^2(\kappa) & g(\kappa) c_0(\kappa,u_\pi)^\top\\
        g(\kappa) c_0(\kappa,u_\pi) & c_0(\kappa,u_\pi) c_0(\kappa,u_\pi)^\top
    \end{bmatrix},
\end{equation}
where $u_\pi$ denotes the vector $(u_1, \cdots, u_k)$
\begin{equation}
    R^\star_{21}(\kappa) =  c_1(\kappa, u) \odot \btheta \kappa\begin{bmatrix}
    \sigma_\star(\kappa) & c_0(\kappa, u_1) \cdots & c_0(\kappa, u_k)
    \end{bmatrix},
\end{equation}
    \begin{align*}
        R^\star_{22}(\kappa) &= (c_1(\kappa, u)c_1(\kappa, u)^\top) \odot W W ^\top ) + \operatorname{diag}(\bigg(\sum_{k \ge 2} c^2_k(\kappa,u)\bigg) + (\kappa^2-1)c_1(\kappa, u)c_1(\kappa, u)^\top \odot \btheta\btheta^\top\\
        &+ \frac{1}{d}c_2(\kappa,u)c_2(\kappa,u)^\top
    \end{align*}

Then, there exists an $\ell \in \mathbb{N}$ such that, the following holds almost surely over $\kappa$:
\begin{equation}\label{eq:sig11}
    \sup_{\kappa \in \mathbb{R}/\mathcal{E}_0}\norm{\Eb{\kappa}{\Sigma_{11}} - R^\star_{11}(\kappa)} = \mathcal{O}_\prec(\frac{\kappa^\ell}{\sqrt{d}})
\end{equation}

\begin{equation}\label{eq:sig21}
      \sup_{\kappa \in \mathbb{R}/\mathcal{E}_0}\norm{\Eb{\kappa}{\Sigma_{21}}- \R^\star_{21}(\kappa)} = \mathcal{O}_\prec(\frac{\kappa^\ell}{\sqrt{d}})
\end{equation}
\begin{equation}\label{eq:sig22}
     \sup_{\kappa \in \mathbb{R}/\mathcal{E}_0}\norm{\Eb{\kappa}{\Sigma_{22}} -R^\star_{22}(\kappa)}  = \mathcal{O}_\prec(\frac{\kappa^\ell}{\sqrt{d}})
\end{equation}

\end{proposition}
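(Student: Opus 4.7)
The plan is to prove each of \eqref{eq:sig11}--\eqref{eq:sig22} by directly computing the conditional second moment of $\phi^e_\mu$ given $\kappa_\mu = \kappa$. The first step is to reduce to a Gaussian computation: by Lemma \ref{lemma:SRFM}, I may replace $W^{(1)}$ by $W^{(0)} + u(w^\star)^\top$ at the cost of an $o(1)$ operator-norm error. Decomposing $x_\mu = \kappa w^\star + z_\mu^\perp$ with $z_\mu^\perp$ an isotropic Gaussian on the orthogonal complement of $w^\star$ and independent of $\kappa$, the preactivations become $\xi_{\mu,j} = \kappa(u_j + \theta_j) + \eta_{\mu,j}$, where $\eta_{\mu,j} := (z_\mu^\perp)^\top w_j^{(0)}$ is jointly Gaussian (given $\kappa$) with covariance $K := WW^\top - \theta\theta^\top$. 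In particular $K_{jj} = 1 - \theta_j^2$ and $K_{jj'} = \mathcal{O}_\prec(d^{-1/2})$ for $j \neq j'$ by Lemma \ref{lem:sphere_lips}.

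Next, I carry out a joint Hermite expansion of $\sigma(\eta + \kappa(u_j + \theta_j))$ in $\eta_{\mu,j}$, treating $\theta_j = \mathcal{O}_\prec(d^{-1/2})$ as a small perturbation. Applying Lemma \ref{lemma:Hermite} yields
\begin{equation*}
    \mathbb{E}_\kappa\!\left[h_\ell(\eta_{\mu,j}/\sqrt{K_{jj}})\, h_{\ell'}(\eta_{\mu,j'}/\sqrt{K_{j'j'}})\right] = \delta_{\ell\ell'}\bigl(K_{jj'}/\sqrt{K_{jj}K_{j'j'}}\bigr)^\ell.
\end{equation*}
For $j \neq j'$ only the $\ell = 0, 1$ terms survive to leading order, producing the $c_1 c_1^\top \odot WW^\top$ structure in $R^\star_{22}$; for $\ell \geq 2$ the contributions collapse onto the diagonal, yielding $\operatorname{diag}(\sum_{k \geq 2} c_k^2(\kappa, u))$. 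The blocks $R^\star_{11}$ and $R^\star_{21}$ follow similarly: $y_\mu = g(\kappa)$ is $\kappa$-measurable and factors out, while the remaining expectations reduce to $c_0(\kappa, u_j)$ and $\kappa c_1(\kappa, u_j) \theta_j$. The rank-one correction $(\kappa^2-1)(c_1 c_1^\top) \odot \theta\theta^\top$ in $R^\star_{22}$ arises from the interaction of the conditional mean shift $\kappa\theta_j$ with the $\ell = 1$ Hermite term combined with the $-\theta_j\theta_{j'}$ already subtracted inside $K_{jj'}$; the $\frac{1}{d} c_2 c_2^\top$ correction comes from the $\mathcal{O}(\theta_j^2)$ bias $\tfrac{\sqrt{2}}{2}(\kappa^2 - 1)\theta_j^2 c_2(\kappa, u_j)$ in $\mathbb{E}_\kappa[\phi_{\mu,j}]$ induced by the $\sigma''$ term, whose aggregate operator-norm contribution is $\Theta(1)$ since $\mathbb{E}[\theta_j^2] = 1/d$ uniformly.

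The main obstacle will be lifting entrywise estimates to operator-norm bounds uniformly in $\kappa$. Assumption \ref{ass:activation} (bounded $\sigma, \sigma', \sigma'', \sigma'''$) yields Taylor remainders bounded entrywise by $C(1 + |\kappa|^\ell) d^{-1}$; promoting these to operator-norm control requires combining (i) Gaussian concentration of $\eta_{\mu,j}$ via Lemma \ref{lem:gauss-lipschitz}, (ii) operator-norm control of $W$ and $\theta$ via Lemma \ref{lem:operator-norms} and Lemma \ref{lem:sphere_lips}, and (iii) rank-structure arguments for the higher-order corrections, which are low-rank in the directions $\theta$ and $c_1(\kappa, u)$. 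The exceptional set $\mathcal{E}_0$ absorbs the atypical tails of $\kappa$ where the polynomial factor $\kappa^\ell$ would otherwise dominate; excluding a set of polylogarithmic measure is consistent with the $\mathcal{O}_\prec$ notation throughout.
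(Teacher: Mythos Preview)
Your approach is essentially the same as the paper's: both compute the conditional second moment via a Hermite expansion together with a perturbation in $\theta_j = \mathcal{O}_\prec(d^{-1/2})$. The organizational difference is that the paper writes the preactivation as $\kappa u_i + w_i^\top\xi + \theta_i(\kappa - \xi^\top w^\star)$, where $w_i^\top\xi$ is exactly a standard Gaussian, and then Taylor-expands $\sigma$ in the last (small) correction before Hermite-expanding each Taylor term. This keeps the Hermite coefficients $c_\ell(\kappa, u_i)$ at unit variance throughout, and produces the four cross-terms $(A)$--$(D)$ that the paper tracks separately. Your parameterization instead absorbs the $\theta$-dependence into both the shift $\kappa(u_j+\theta_j)$ and the variance $K_{jj}=1-\theta_j^2$, so you must perturb the shifted Hermite coefficients in two places simultaneously. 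Both routes are valid and yield the same answer; the paper's Taylor organization just makes the bookkeeping of the $(\kappa^2-1)\,c_1c_1^\top\odot\theta\theta^\top$ term (its terms $(B)+(D)$) somewhat more transparent.

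There is one concrete misattribution in your sketch. The term $\tfrac{1}{d}c_2(\kappa,u)c_2(\kappa,u)^\top$ does \emph{not} come from the $\mathcal{O}(\theta_j^2)$ bias in the conditional mean: that bias produces an outer product with entries of order $\theta_i^2\theta_j^2\sim d^{-2}$, hence operator norm $\sim p/d^2=\mathcal{O}(d^{-1})$, which is negligible. The actual source is the $\ell=2$ Hermite cross-term $c_2(\kappa,u_i)c_2(\kappa,u_j)(w_i^\top w_j)^2$ for $i\neq j$. The paper writes $(w_i^\top w_j)^2 = \tfrac{1}{d} + \sqrt{2p/d^2}\,(H_2)_{ij}$, where the Hermite-kernel matrix $H_2$ has operator norm $\mathcal{O}_\prec(1)$, so the fluctuation part contributes $\mathcal{O}_\prec(d^{-1/2})$ and drops out, while the deterministic $\tfrac{1}{d}$ part gives exactly $\tfrac{1}{d}c_2c_2^\top$. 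This has operator norm $\tfrac{1}{d}\|c_2(\kappa,u)\|^2=\Theta(1)$ because $c_2(\kappa,u)$ takes only $k$ distinct values and is therefore block-constant. In particular, your assertion that ``for $\ell\geq 2$ the contributions collapse onto the diagonal'' is slightly too strong: the $\ell=2$ off-diagonal contribution survives in operator norm and must be retained, and the control of $H_2$ (via kernel random matrix arguments, not Gaussian concentration) is the step that makes this work.
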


\begin{proof}

Let $w_i$ denote the $i$th row of $W$ and $x \sim \mathcal{N}(0, I_d)$. By the rotational invariance of the Gaussian measure, we can express $x$ as:
\begin{equation}
    x = \kappa w^* + (I_d-w^* (w^*)^\top) \xi,
\end{equation}
where $\xi \sim \mathcal{N}(0, I_d)$ is independent of $x$

We obtain:
\begin{equation}
    x^\top w^1 = \kappa u_i + w_i^\top \xi + w_i^\top w^* (\kappa-\xi^\top w^*)
\end{equation}
Therefore,
\begin{align}
    \phi_i(x) = \sigma(\kappa u_i + w_i^\top \xi + w_i^\top \xi ( - w_i^\top g)  + w_i^\top \xi( u - \xi^\top g)).
\end{align}
Since
\begin{equation}
    \abs{w_i^\top w^* (\kappa-\xi^\top w^*)} = \mathcal{O}_\prec(\frac{\kappa}{\sqrt{d}}),
\end{equation}
by assumption \ref{ass:activation} an
application of the Taylor's theorem yields:
\begin{equation}\label{eq:phi_i_2nd}
    \begin{aligned}
    \phi_i &= \sigma(\kappa u_i + w_i^\top \xi) + \sigma'(\kappa u_i+ w_i^\top \xi) \big[w_i^\top w^* (\kappa-\xi^\top w^*)\big]\\
    &\qquad\qquad\quad + \frac 1 2 \sigma''(\kappa u_i +  w_i^\top \xi) \big[w_i^\top w^* (\kappa-\xi^\top w^*)\big]^2 + \mathcal{O}_\prec(\kappa^3 d^{-3/2}).
    \end{aligned}.
\end{equation}

To simplify the expectations of the second, third terms, we leverage the Hermite expansion and Lemmas \ref{lemma:Hermite}, \ref{lemma:weak_correlation}.

We begin by expanding $\sigma$ along the Hermite-basis for a fixed value of $\kappa u_i$:
\begin{equation}\label{eq:sigma_Hermite}
    \sigma(\kappa u_i + z) = c_0(\kappa u_i) + c_1(\kappa u_i) h_1(z) + c_2(\kappa u_i) h_2(z) + \ldots + 
\end{equation}

Using the identity $h_k'(z) = \sqrt{k} h_{k-1}(z)$, we also have
\begin{equation}\label{eq:dsigma_Hermite}
    \sigma'(u_i\kappa + z) = c_1(\kappa,u_i)  + \sqrt{2}c_2(\kappa,u_i) h_1(z) + \sqrt{3} c_3(\kappa,u_i) h_2(z) + \ldots
\end{equation}
and
\begin{equation}\label{eq:d2sigma_Hermite}
    \sigma''(u_i\kappa + z) = \sqrt{2}c_2(\kappa,u_i) + \sqrt{6}c_3(\kappa,u_i) h_1(z) + \ldots
\end{equation}

Consider the second term in Equation \eqref{eq:phi_i_2nd}:
\begin{equation}
    \Eb{\kappa}{\sigma'(\kappa u_i+ w_i^\top \xi) \big[w_i^\top w^* (\kappa-\xi^\top w^*)\big]}= \kappa(w_i^\top w^*)\Ea{\sigma'(\kappa u_i+ w_i^\top \xi)}- (w_i^\top w^*)\Ea{\sigma'(\kappa u_i+ w_i^\top \xi) \xi^\top w^*}
\end{equation}

From \eqref{eq:dsigma_Hermite}, the the first term in the RHS equals $\kappa(w_i^\top w^*)c_1(\kappa,u_i)$ while from Lemma~\ref{lemma:Hermite}, the second term equals
\begin{equation}
    \Eb{\kappa}{\sigma'(\kappa u_i+ w_i^\top \xi) \xi^\top w^*} = \sqrt{2}c_2(\kappa,u_i)w_i^\top w^*.
\end{equation}

Combining, we obtain:
\begin{equation}
    \Eb{\kappa}{\sigma'(\kappa u_i+ w_i^\top \xi) \big[w_i^\top w^* (\kappa-\xi^\top w^*)\big]} = \kappa(w_i^\top w^*)c_1(\kappa,u_i)- \sqrt{2}c_2(\kappa,u_i)(w_i^\top w^*)^2.
\end{equation}

Next, consider the third term in Equation \eqref{eq:phi_i_2nd}:
\begin{equation}
    \Eb{\kappa}{\frac 1 2 \sigma''(\kappa u_i +  w_i^\top \xi) \big[w_i^\top w^* (\kappa-\xi^\top w^*)\big]^2} =  (w_i^\top w^*)^2\frac 1 2 \Eb{\kappa}{ \sigma''(\kappa u_i +  w_i^\top \xi) \big[\kappa^2-2\xi^\top w^* + (\xi^\top w^*)^2\big]}
\end{equation}

Again applying Equation \eqref{eq:d2sigma_Hermite} and Lemma \ref{lemma:Hermite} to each term in the RHS yields, simplifies as follows:
\begin{align*}
    \Eb{\kappa}{ \frac{1}{2} \kappa^2(w_i^\top w^*)^2 \sigma''(\kappa u_i +  w_i^\top \xi)} = \frac{1}{\sqrt{2}}\kappa^2c_2(\kappa,u_i)\\
    \Eb{\kappa}{(w_i^\top w^*)^2 \sigma''(\kappa u_i +  w_i^\top \xi)\xi^\top w^*}= \sqrt{6}c_3(\kappa,u_i)(w_i^\top w^*)^3 = \mathcal{O}_\prec(\frac{1}{d^{3/2}})\\
     \Eb{\kappa}{\frac{1}{2} (w_i^\top w^*)^2 \sigma''(\kappa u_i +  w_i^\top \xi)(\xi^\top w^*)^2}= \frac{1}{\sqrt{2}}c_2(\kappa,u_i)(w_i^\top w^*)^2 +\mathcal{O}_\prec(\frac{1}{d^{2}}),
\end{align*}
where in the second equation, we used that $c_3(\kappa,u_i)$ is uniformly bounded in $\kappa$ by Assumption \ref{ass:activation} and in the last equation, we used $(\xi^\top w^*)^2 = \sqrt{2}h_2(\xi^\top w^*) + 1$.

Combining, we obtain: 

\begin{equation}\label{eq:E_phi_0}
    \Eb{\kappa}{\phi_i} = c_0(\kappa, \vec{u}) + c_1(\kappa)k w_i^\top w^\star  +\frac{c_2(\kappa,\vec{u})}{\sqrt{2}}\left[(w_i^\top w^\star)^2(\kappa^2-1)\right] + \mathcal{O}_\prec(d^{-3/2}).
\end{equation}
 gives us:

 \begin{equation}
     \Eb{\kappa}{\phi} = c_0(\kappa, u) 1_p + c_1(\kappa, u) \odot W\big(\kappa w^\star)
         + \mathcal{O}_\prec(d^{-1})
 \end{equation}
Now, since $\langle w_1, w^\star \rangle, \cdots, \langle w_p, w^\star \rangle \in \mathbb{R}^d$ are i.i.d. sub-Gaussian random variables, we obtain that:
\begin{equation}
    \Eb{\kappa}{\bar{\phi}}_q = c_0(\kappa, \zeta^u_q).
\end{equation}

Furthermore, by Assumption \ref{ass:activation} and since $\norm{W}_2 < C$ a.s for large enough $C$, we obtain that $\bar{\phi}_q$ is an $\mathcal{O}(\frac{1}{\sqrt{d}})$-Lipschitz function of $x$, we obtain:
\begin{equation}
    \bar{\phi} -  \Eb{\kappa}{\bar{\phi}} = \mathcal{O}_\prec(\frac{1}{\sqrt{d}})
\end{equation}
Therefore, in what follows, we will utilize:
\begin{equation}\label{eq:app}
    \tilde{\phi_i} = \phi_i-c_0(\kappa,u_i) + \mathcal{O}_\prec(\frac{1}{\sqrt{d}}).
\end{equation}

Combining the above approximation with Equation \eqref{eq:E_phi_0}  
and using the uniform boundedness of $g$
directly yields the estimates for $\Sigma_{11},\Sigma_{21}$ in Equations \eqref{eq:sig11}  and \eqref{eq:sig21}.

To study the block $\Sigma_{22}$, we separate the case of diagonal and off-diagonal entries. For the former, we  apply Lemma \ref{lemma:Hermite} to Equation \eqref{eq:phi_i_2nd} to obtain:
    \begin{align}\label{eq:diag_terms}
        \Eb{\kappa}{\tilde{\phi}_i^2} = \sum_{k\ge 1}c_k^2 (\kappa,u_i) + \mathcal{O}_\prec(\kappa/\sqrt{d}).
    \end{align}
    For the off-diagonal terms with $i \neq j$, we start by noting that
    \begin{equation}
        \Eb{\kappa}{\tilde{\phi}\tilde{\phi}^\top} = (I_p-P) \Eb{\kappa}{\phi\phi^\top}(I_p-P)^\top,
    \end{equation}
where $P$ denotes the projection on the directions $e^1,\cdots e^k$ defined in Equation \eqref{eq:def:eq}.
    The Taylor expansion in \eqref{eq:phi_i_2nd} and the approximation \eqref{eq:app} then gives us:
\begin{align}
    &\Eb{\kappa}{\tilde{\phi}_i \tilde{\phi}_j} = \underbrace{\Eb{\kappa}{(\sigma(\kappa u_i + w_i^\top \xi)-\bar{\phi}_i)(\sigma(\kappa  u_j + w_j^\top \xi)-\bar{\phi}_j))}}_{(A)}\\
    &+ \underbrace{\Eb{\kappa}{(\sigma(\kappa u_i + w_i^\top \xi)-c_0(\kappa,u_i))\sigma'( \kappa u_j + w_j^\top \xi) \big[w_j^\top w^\star (\kappa - \xi^\top w^\star ) \big]}_{(B)}} + (B)_{i \leftrightarrow j}\\
    &+ \underbrace{\frac 1 2 \Eb{\kappa}{(\sigma(\kappa u_i + w_i^\top \xi)-c_0(\kappa,u_i))\sigma''( \kappa u_j + w_j^\top \xi) \big[w_j^\top w^\star (\kappa - \xi^\top w^\star ) \big]^2}}_{(C)} + (C)_{i \leftrightarrow j}\\
    &+\underbrace{\Eb{\kappa}{\sigma'(\kappa u_i + w_i^\top \xi) \big[w_i^\top w^\star (\kappa - \xi^\top w^\star)\big]\sigma'(\kappa u_j + w_j^\top \xi) \big[w_j^\top w^\star (\kappa - \xi^\top w^\star)\big]}}_{(D)}\\
    &+ \mathcal{O}(\kappa^3 d^{-3/2}),
\end{align}
where $(B)_{i \leftrightarrow j}, (C)_{i \leftrightarrow j}$ denote the corresponding terms with the roles of $i,j$ interchanged.
Next, we consider each term on the right-hand side. Applying Lemma~\ref{lemma:Hermite}, we obtain that the term $A$ results in:
\begin{equation}
    (A) = (I_p-P)\left( c_1(\kappa, u) c_1(\kappa, u)^\top \odot W^\top W  + c_2(\kappa, u) c_2(\kappa, u)^\top \odot (W^\top W)^2  \right ) (I_p-P)^\top + \mathcal{O}_\prec(d^{-3/2}).
\end{equation}
For terms in $(B)$, we apply Lemma~\ref{lemma:weak_correlation} with $w_1,w_2=w_i,w_j$ and $w'=w^\star$ to obtain:
\begin{equation}
\begin{aligned}
    (B) + (B)_{i \leftrightarrow j} &= \left( \sqrt{2}c_1(\kappa,u_i)c_2(\kappa,u_j) w_i^\top w_j\right)(w_i + w_j)^\top (\kappa w^\star)\\
    &\qquad\qquad-2 c_1(\kappa,u_i)c_1(\kappa,u_j)\big[(w_i^\top w^\star)(w_j^\top w^\star)] + \mathcal{O}_\prec(d^{-3/2}).
    \end{aligned}
\end{equation}
Similarly, for terms in $C$, Lemma~\ref{lemma:weak_correlation} directly implies that:
    \begin{equation}
    \begin{aligned}
        &(C) + (C)_{i \leftrightarrow j}= \mathcal{O}_\prec(d^{-3/2}).
        \end{aligned}
    \end{equation}
Finally,
\begin{equation}
\begin{aligned}
    (D) = c_1(\kappa,u_i)c_1(\kappa,u_j) (\kappa^2+1) (w_i^\top w^\star) (w_j^\top w^\star)+\mathcal{O}_\prec(d^{-3/2})
\end{aligned}
\end{equation}

We are now ready to establish an approximation of the correlation matrix $\Ea{\tilde{\phi} \tilde{\phi}^\top}$, up to an error term whose operator norm is of size $\mathcal{O}_\prec(d^{-1/2})$.  Starting with $(A)$, since $w_i^\top e^q = \mathcal{O}_\prec(\frac{1}{\sqrt{d}})$ for $i \in [p], q \in [k]$, we have:
\begin{equation}
    \norm{(I_p-P)\left( c_1(\kappa, u) c_1(\kappa, u)^\top \odot W^\top W \right)(I_p-P)^\top- c_1(\kappa, u) c_1(\kappa, u)^\top \odot W^\top W} = \mathcal{O}_\prec(\frac{1}{\sqrt{d}}).
\end{equation}
Therefore, the extra term compared to $R^\star_{22}(\kappa)$ is the one containing
$(w_i^\top w_j)^2$. Write
\begin{equation}\label{eq:wiwj_2}
    (w_i^\top w_j)^2 = \frac 1 d + \sqrt{\frac{2p}{d^2}}(H_2)_{ij},
\end{equation}
$H_2 \in \R^{p \times p}$ is a symmetric matrix such that
\begin{equation}
    (H_2)_{ij} = \frac{(\sqrt{d}w_i^\top w_j)^2 - 1}{ \sqrt{2p}} \, 1_{i \neq j}.
\end{equation}
This is in fact a Kernel gram matrix where the ``kernel'' function is the second Hermit polynomial $h_2(z)$. As studied in e.g. [Zhou \& Montanari, 2013], [Lu \& Yau, 2023], the spectrum of this kernel matrix is asymptotically equal to the semicircle law. Moreover, the operator norm of this matrix is bounded by $\mathcal{O}_\prec(1)$. It follows that the term $\sqrt{\frac{2p}{d^2}}(H_2)_{ij}$ in \eqref{eq:wiwj_2} can be ignored due to its vanishing operator norm. Thus, what is left is the contribution from the term $\frac{1}d$, which vanishes since $ (I_p-P)c_2(\kappa,u)c_2(\kappa,u)^\top (I_p-P)^\top = 0$. 

For each $k$, let $h_{k,w^\star}$  be the vector in $\R^p$, defined as
\begin{equation}\label{eq:hk_theta_xi}
    (h_{k,w^\star})_i \bydef h_k(\sqrt{d} w_i^\top w^\star)
\end{equation}
with $h_k$ being the $k$th Hermite polynomial. By substituting $(w_i^\top w^\star)^2 = \frac{1}{d}(h_2(\sqrt{d}d w_i^\top w^\star)-1)$

Next, we examine $(B) + (B)_{i \leftrightarrow j}$ and capture its main terms. First, note that $(w_i^\top w_j) w_i^\top (\kappa w^\star)$ can be ignored, since it corresponds to a matrix
\begin{equation}
    W\diag\set{\kappa w_i^\top w^\star } W^\top,
\end{equation}
whose operator norm is bounded by $\mathcal{O}_\prec(d^{-1/2})$ by Lemma \ref{lem:operator-norms}. 

Let $\theta_i = W^\star w_i$ for $i \in [p]$
The remaining components in $(B) + (B)_{i \leftrightarrow j}$ can be expressed as:
\begin{align*}
    -2c_1(\kappa,u)c_1(\kappa,u)  \theta_i \theta_j 
\end{align*}

It is also easy to verify that the last component $(D)$ can be expressed as follows:
\begin{equation}
    (D) = c_1(\kappa,u)c_1(\kappa,u) (\kappa^2+1) \theta_i \theta_j ,
\end{equation}

The terms $B$ and $D$ combine to give the component:
\begin{equation}
    (\kappa^2-1)c_1(\kappa, u)c_1(\kappa, u)^\top \odot \btheta\btheta^\top
\end{equation}
Combining with the contribution from $A$ and the diagonal terms in Equation \eqref{eq:diag_terms}, we obtain Equation \eqref{eq:sig22}.
\end{proof}

Before moving further, we note down a bound on $R^\star_\kappa$ that will prove useful late:
\begin{proposition}
    Let $R^\star_\kappa$ be as defined in Proposition \ref{prop:phi_moments}. Then, $\exists \ell \in \mathbb{N}$ such that:
    \begin{equation}
        \norm{R^\star_\kappa} = \mathcal{O}_\prec(\kappa^\ell)
    \end{equation}.
\end{proposition}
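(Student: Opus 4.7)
The plan is to decompose $R^\star_\kappa$ into its three blocks $R^\star_{11}$, $R^\star_{21}$, $R^\star_{22}$ as in Definition \eqref{eq:def+dec}, bound the operator norm of each one separately by a deterministic polynomial in $|\kappa|$ times an $\mathcal{O}_\prec(1)$ prefactor coming from the random weights $W$, and conclude via the triangle inequality
\[
\|R^\star_\kappa\| \;\le\; \|R^\star_{11}(\kappa)\| + 2\|R^\star_{21}(\kappa)\| + \|R^\star_{22}(\kappa)\|.
\]

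The first step is to record polynomial growth of the basic scalar building blocks. Since $g$ is uniformly bounded (Assumption \ref{ass:activation}) and $u$ takes values in the finite alphabet $A_u$, it suffices to show that the shifted Hermite coefficients $c_\ell(\kappa,\zeta^u_q)$ are bounded by a polynomial in $|\kappa|$ (uniformly in $q\in[k]$). This is immediate: for $\ell=0,1$ the Lipschitz property of $\sigma$ gives $|\sigma(z+\kappa\zeta)|\le |\sigma(0)|+L(|z|+|\kappa\zeta|)$, so $|c_\ell(\kappa,\zeta)|\le C(1+|\kappa|)$; for $\ell\ge 2$ two integrations by parts and the a.s.\ boundedness of $\sigma''$ yield the uniform bound $|c_\ell(\kappa,\zeta)|\le C/\sqrt{\ell(\ell-1)}$. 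In addition, Lemma \ref{lem:operator-norms} gives $\|W\|=\mathcal{O}_\prec(1)$, hence $\|WW^\top\|=\mathcal{O}_\prec(1)$ and $\|\theta\|=\|Ww^\star\|\le\|W\|=\mathcal{O}_\prec(1)$.

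The block $R^\star_{11}(\kappa)$ is of fixed dimension $(k+1)\times(k+1)$, so its operator norm is controlled by its Frobenius norm; each entry is a product of two terms from $\{g(\kappa)\}\cup\{c_0(\kappa,\zeta^u_q)\}_{q\in[k]}$, so the whole block is bounded by a polynomial in $|\kappa|$. The block $R^\star_{21}(\kappa)$ is explicitly rank one: it factors as $(c_1(\kappa,u)\odot\theta)\,\kappa\,\iota(\kappa)^\top$ for a fixed-dimensional vector $\iota$ with polynomially bounded entries, so its norm equals the product of the two vector norms and is $\mathcal{O}_\prec(|\kappa|^{\ell_1})$ using $\|\theta\|=\mathcal{O}_\prec(1)$ and $\|c_1(\kappa,u)\|_\infty\le C(1+|\kappa|)$. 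For the bulk block $R^\star_{22}(\kappa)$ the key observation is that for any vector $c$ and any matrix $M$ one has the Hadamard identity $(cc^\top)\odot M=\operatorname{diag}(c)\,M\,\operatorname{diag}(c)$, so $\|(cc^\top)\odot M\|\le \|c\|_\infty^2\,\|M\|$. Applying this with $c=c_1(\kappa,u)$ and $M\in\{WW^\top,\theta\theta^\top\}$ controls the first and third summands in $R^\star_{22}$; the diagonal term $\operatorname{diag}(\sum_{\ell\ge 2}c^2_\ell(\kappa,u))$ has operator norm equal to its largest diagonal entry, which is $\mathbb{E}[\sigma(z+\kappa u_i)^2]\le C(1+\kappa^2)$ by Lipschitzness; and the remainder $\tfrac{1}{d}c_2(\kappa,u)c_2(\kappa,u)^\top$ is rank one with norm $\tfrac{\|c_2(\kappa,u)\|^2}{d}=\mathcal{O}(1)$. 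Combining all these contributions and taking $\ell$ large enough to absorb every polynomial factor yields $\|R^\star_\kappa\|=\mathcal{O}_\prec(|\kappa|^\ell)$.

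There is no genuine obstacle here: the statement is a routine deterministic bound whose only ingredients are (i) Lipschitz/boundedness properties of $\sigma$ and $g$ from Assumption \ref{ass:activation}, (ii) the Hadamard inequality $\|cc^\top\odot M\|\le\|c\|_\infty^2\|M\|$, and (iii) the high-probability operator-norm bounds on $W$ recorded in Lemma \ref{lem:operator-norms}. The only care needed is to keep track of which factors carry polynomial dependence on $\kappa$ versus those that are $\mathcal{O}_\prec(1)$ in the randomness, so that the final exponent $\ell$ is a genuine constant independent of $d$.
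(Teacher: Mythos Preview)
Your proof is correct and follows essentially the same approach as the paper: both rely on Assumption \ref{ass:activation} to control the shifted Hermite coefficients (the paper notes that $|c_0(\kappa,u)|\le K|u\kappa|$ and $c_j$ is uniformly bounded for $j\ge 1$ via Stein's lemma, while you obtain the slightly weaker but still sufficient bound $|c_\ell(\kappa,\zeta)|\le C(1+|\kappa|)$ for $\ell=0,1$) together with Lemma \ref{lem:operator-norms} for $\|W\|$. The paper's proof is a two-line sketch, whereas you spell out the block decomposition, the Hadamard identity $\|(cc^\top)\odot M\|\le\|c\|_\infty^2\|M\|$, and the treatment of each term in $R^\star_{22}$ explicitly --- this is all correct and more informative than the original.
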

\begin{proof}
    The above is a direct consequence of Proposition \ref{prop:phi_moments} and Assumption \ref{ass:activation}, with the later implying that $\abs{c_o(\kappa, u)} \leq K\abs{u\kappa}$ for some constant $K$ while $\abs{c_j(\kappa, u)}$ are uniformly bounded for $j=1,2,3$.
    \end{proof}

\subsection{First-Stage Equivalent}

We now proceed with establishing the first stage of deterministic equivalence, aiming to remove the randomness w.r.t $X$. Concretely, our goal in this section is to construct a sequence of matrices $\mathcal{G}^e_W$ dependent on $W$ but not $X$ such that $\mathcal{G}^e_W$ is  deterministically equivalent to $\mathcal{G}^e$ for all suitably-bounded linear functions independent of $X$ (but possibly depending on $W$). Ideally $\mathcal{G}^e_W$ will possess a simpler structure than $\mathcal{G}^e$, allowing us to further simplify it in the next stage.

For subsequent usage, we shall establish a more quantitative version of deterministic equivalent than Theorem \ref{thm:det_eq}, with an explicit error bound of $\mathcal{O}_\prec(\frac{1}{\sqrt{d}})$.

\begin{proposition}\label{prop:first_stag}
Let $X_d, W_d$ be a sequence of random matrices generated as in Assumption \ref{ass:init}. Let $z \in \mathbb{C}/\mathbb{R}^+$ be arbitrary.
Let $\mathcal{G}_W(z)$ denote the the following sequence of random matrices dependent only on $W$:
    \begin{equation}
       \mathcal{G}_W(z) \coloneqq  \left(\frac{\alpha}{p} 
        \Sigma^\star+ \zeta L  -z I\right)^{-1},
    \end{equation}
where:
\begin{equation}\label{eq:chi_mu}
   \chi_d(\kappa)\bydef \frac{\Eb{W,X}{\Tr(G^\mu_e R^\star_\kappa)}}{p},
\end{equation}
and the expectation is w.r.t both $W, X$ conditioned on the projection $\kappa_\mu$ along the spike $\vec{v}$ as fixed. Since $G^\mu_e$ is independent of $\kappa$, $\chi_d(\kappa)$ is a fixed function of $\kappa$ alone. The subscript $d$ is to remind us that $\chi_d(\kappa)$ defined as in Equation \eqref{eq:chi_mu} is an unknown dimension-dependent function.

We further define:
\begin{equation}
   \Sigma^\star= \Eb{\kappa \sim \mathcal{N}(0,1)}{\frac{1}{1+\chi(\kappa)} R^\star(\kappa_\mu)},
\end{equation}
with $R^\star(\kappa)$ denoting the matrix:
\begin{equation}
    R^\star(\kappa) = \begin{pmatrix}
       R^\star(\kappa)_{11} &  R^\star(\kappa)_{12}\\
       R^\star(\kappa)_{21} & R^\star(\kappa)_{22} 
    \end{pmatrix}
\end{equation}
and $R^\star(\kappa)_{11},R^\star(\kappa)_{12},R^\star(\kappa)_{21},R^\star(\kappa)_{22}$ are as defined in Proposition \ref{prop:phi_moments}.

Then for any sequence of matrices $A$, possibly dependent on $W$, but not $X$:
\begin{equation}\label{eq:det_eq}
\abs{\Tr{A\mathcal{G}_W(z)}-\Tr{AG_e(z)}} = \mathcal{O}_\prec(\frac{\norm{A}_{\Tr}}{\sqrt{d}})
\end{equation}
\end{proposition}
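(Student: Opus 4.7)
The plan is a standard leave-one-sample-out argument, run entirely conditional on $W$, with the twist that the ``empirical'' covariance of $\phi^e_\mu$ depends non-trivially on the extra scalar $\kappa_\mu = x_\mu^\top w^\star$. Starting from the resolvent identity
\begin{equation}
A G_e(z) - A \mathcal{G}_W(z) \;=\; A\,\mathcal{G}_W(z)\bigl(\mathcal{G}_W(z)^{-1}-G_e(z)^{-1}\bigr)G_e(z)\;=\;A\,\mathcal{G}_W(z)\Bigl(\tfrac{\alpha}{p}\Sigma^\star-\tfrac{1}{p}(\Phi^e)^\top\Phi^e\Bigr)G_e(z),
\end{equation}
the goal reduces to showing that $\Tr\bigl(A\,\mathcal{G}_W(z)\bigl(\tfrac{1}{p}\sum_\mu \phi^e_\mu(\phi^e_\mu)^\top - \tfrac{\alpha}{p}\Sigma^\star\bigr)G_e(z)\bigr) = \mathcal{O}_\prec\!\bigl(\|A\|_{\operatorname{tr}}/\sqrt{d}\bigr)$, which is a pure spectral-identity plus concentration task. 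Throughout I would use Lemma \ref{lem:op_norm} to bound $\|G_e(z)\|,\|\mathcal{G}_W(z)\|$ deterministically in terms of $\zeta=\max(|\text{Im}\,z|,-\text{Re}\,z)$, and Lemma \ref{lem:trace_in} to pass from trace to trace-norm times operator norm.

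\textbf{Sherman--Morrison and concentration of quadratic forms.} For each $\mu\in[n]$ let $G_e^{(\mu)}(z) = \bigl(\tfrac{1}{p}(\Phi^e)^\top \Phi^e - \tfrac{1}{p}\phi^e_\mu(\phi^e_\mu)^\top - zI\bigr)^{-1}$. Sherman--Morrison gives
\begin{equation}
G_e(z)\phi^e_\mu \;=\; \frac{G_e^{(\mu)}(z)\phi^e_\mu}{\,1+\tfrac{1}{p}(\phi^e_\mu)^\top G_e^{(\mu)}(z)\phi^e_\mu\,}.
\end{equation}
Conditional on $(W,X^{(\mu)})$ and on $\kappa_\mu$, the vector $\phi^e_\mu$ is a $\mathcal{O}(1)$-Lipschitz function of the Gaussian $x_\mu$ (by Assumption \ref{ass:activation} and $\|W\|=\mathcal{O}(1)$ a.s.\ from Lemma \ref{lem:operator-norms}); applying Lemma \ref{lem:gauss-lipschitz} and combining with Proposition \ref{prop:phi_moments} yields
\begin{equation}
\tfrac{1}{p}(\phi^e_\mu)^\top G_e^{(\mu)}(z)\phi^e_\mu \;=\; \tfrac{1}{p}\Tr\!\bigl(G_e^{(\mu)}(z)\,R^\star(\kappa_\mu)\bigr) + \mathcal{O}_\prec\!\bigl(\kappa_\mu^{\ell} d^{-1/2}\bigr).
\end{equation}
Rank-one interlacing controls $\tfrac{1}{p}|\Tr(G_e-G_e^{(\mu)})R^\star(\kappa_\mu)| = \mathcal{O}_\prec(\kappa_\mu^\ell/p)$, and a second averaging over the remaining randomness (together with the self-averaging of the Stieltjes-like quantity $\tfrac{1}{p}\Tr(G_e R^\star(\kappa_\mu))$, which is a Lipschitz function of $X,W$) identifies the limit with the deterministic function $\chi_d(\kappa_\mu)$ of \eqref{eq:chi_mu}, up to $\mathcal{O}_\prec(d^{-1/2})$ times polynomial factors in $\kappa_\mu$.

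\textbf{Identification with $\alpha\Sigma^\star/p$.} Using the Sherman--Morrison rewrite,
\begin{equation}
\tfrac{1}{p}\sum_{\mu}\phi^e_\mu(\phi^e_\mu)^\top G_e(z) \;=\; \sum_\mu \frac{\phi^e_\mu(\phi^e_\mu)^\top G_e^{(\mu)}(z)/p}{\,1+\tfrac{1}{p}(\phi^e_\mu)^\top G_e^{(\mu)}(z)\phi^e_\mu\,},
\end{equation}
I would substitute the denominator by $1+\chi_d(\kappa_\mu)$, then replace $\phi^e_\mu(\phi^e_\mu)^\top$ by $R^\star(\kappa_\mu)$ inside the trace against $A\mathcal{G}_W(z)$ (again using $\phi^e_\mu\perp G_e^{(\mu)}$ given $(\kappa_\mu,W,X^{(\mu)})$ together with Lipschitz concentration). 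Summing $n$ i.i.d.\ contributions in $\kappa_\mu\sim\mathcal{N}(0,1)$ and using the SLLN yields $\tfrac{1}{n}\sum_\mu R^\star(\kappa_\mu)/(1+\chi_d(\kappa_\mu)) \to \Sigma^\star$ almost surely, so that $\tfrac{1}{p}\sum_\mu\phi^e_\mu(\phi^e_\mu)^\top G_e(z)$ is $\mathcal{O}_\prec(d^{-1/2})$-close to $\tfrac{\alpha}{p}\Sigma^\star G_e(z)$ when tested against $A\mathcal{G}_W(z)$, which closes the resolvent identity and yields \eqref{eq:det_eq}. The $\zeta L$ correction in the definition of $\mathcal{G}_W$ is absorbed at this stage as the sub-leading contribution coming from the conditional variance of $\phi^e_\mu$ beyond $R^\star(\kappa_\mu)$.

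\textbf{Main obstacle.} The delicate point is that $\phi^e_\mu$ depends strongly on the shared $W$, so rows of $\Phi^e$ are only \emph{conditionally} independent given $W$; any averaging over $X$ must be done uniformly across all $\mu$ so that error terms sum to $\mathcal{O}_\prec(d^{-1/2})$ rather than $\mathcal{O}_\prec(\sqrt{n}/d)$. This forces one to use stochastic-domination bounds that are \emph{uniform} in $\mu$ (Definition \ref{def:stoch-dom} is tailored for exactly this: closure under polynomial-size unions) and to absorb the polynomial factors $\kappa_\mu^\ell$ into the dominance relation using the sub-Gaussianity of $\kappa_\mu$. A secondary difficulty is that $A$ may depend on $W$ arbitrarily; controlling $\Tr(A\cdot R)$ uniformly in $A$ requires splitting as $|\Tr(A R)| \le \|A\|_{\operatorname{tr}}\|R\|$ and proving operator-norm bounds on the residual $R = \mathcal{G}_W^{-1}-G_e^{-1}$, which in turn reduces to the quadratic-form estimates above.
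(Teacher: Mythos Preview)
Your outline matches the paper's argument in all essentials: leave-one-out via Sherman--Morrison, replace the quadratic form $(\phi^e_\mu)^\top G_e^{(\mu)}\phi^e_\mu$ by $\Tr(G_e^{(\mu)}R^\star(\kappa_\mu))$, then by the deterministic $\chi_d(\kappa_\mu)$, and finally average over $\kappa_\mu$ to recover $\Sigma^\star$. Two specific points, however, would not go through as you wrote them.

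First, the concentration of $\tfrac{1}{p}\Tr(G_e^{(\mu)}R^\star(\kappa_\mu))$ around $\chi_d(\kappa_\mu)$ cannot be obtained by saying this is ``a Lipschitz function of $X,W$'': the resolvent is not globally Lipschitz in the sample matrix, and $R^\star(\kappa_\mu)$ itself depends on $W$. The paper instead runs a two-stage martingale argument (Burkholder's inequality) on the Doob martingales generated first by the rows of $X$ and then by the rows of $W$, exploiting that removing one row of either matrix is a finite-rank perturbation of the resolvent (Lemma~\ref{lem:chi_conc}). Relatedly, the quadratic-form step should invoke a Hanson--Wright inequality for Lipschitz-concentrated vectors (the paper's Lemma~\ref{lem:han_wright}) rather than Lemma~\ref{lem:gauss-lipschitz}, since $(\phi^e_\mu)^\top M\phi^e_\mu$ is not a Lipschitz function of $x_\mu$.

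Second, your closing remark is mistaken: you cannot ``prove operator-norm bounds on the residual $R=\mathcal{G}_W^{-1}-G_e^{-1}$'' of order $d^{-1/2}$, since this is the fluctuation of an empirical covariance about its population mean and has operator norm of order $1$. The paper never attempts this. Instead, after inserting Sherman--Morrison it bounds each error contribution \emph{directly as a trace}, e.g.\ $\Tr(A G_e^{(\mu)}\phi^e_\mu(\phi^e_\mu)^\top)=(\phi^e_\mu)^\top A G_e^{(\mu)}\phi^e_\mu$, and controls the latter by Hanson--Wright with $\|A G_e^{(\mu)}\|_F\le \|A\|_{\operatorname{tr}}\|G_e^{(\mu)}\|\le \|A\|_{\operatorname{tr}}/\zeta$. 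This is how the $\|A\|_{\operatorname{tr}}$ scaling enters, and it works for $A$ depending on $W$ because $A G_e^{(\mu)}$ is independent of $\phi^e_\mu$ conditional on $(W,X^{(\mu)})$. (The $\zeta L$ term in the statement is a stray artifact; the proof only uses $\tfrac{\alpha}{p}\Sigma^\star - zI$.)
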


\begin{proof}

We start by defining:
\begin{equation}
    \chi_{X,W}(\mu) \coloneqq \frac{1}{p}(\phi^e_\mu)^\top G_e^\mu \phi^e_\mu,
\end{equation}
Note that unlike $\chi(\kappa)$ which is solely a function of $\kappa$, $\chi_{X,W}(\mu)$ is a random variable depending on $X,W, z_\mu$. 
We further define:
\begin{equation}
    R_\mu = \Ea{\phi^e_\mu (\phi^e_\mu)^\top|\kappa=\kappa_\mu}
\end{equation}

We proceed through a leave-one-out argument, analogous to the one for Wishart-type matrices \citep{bai2008large}. By definition, $G_e$ satisfies:
\begin{equation}\label{eq:g_eq}
     G_e (\frac{1}{p}\sum_{\mu=1}^n \phi^e_\mu (\phi^e_\mu)^\top -z I) = I.
\end{equation}

Let $G_e^\mu \coloneqq (\sum_{\nu \neq \mu} \phi^e_\mu (\phi^e_\mu)^\top + \lambda I)^{-1}$ for $\mu \in [n]$ denote the resolvent with the sample $\mu$ removed.

Using the Sherman-Morrison formula, we can express $G_e$ for any $\mu \in [n]$ as:
\begin{equation}
    G_e = G_e^\mu - \frac{1}{1+ \frac{1}{p}(\phi^e_\mu)^\top G_e^\mu \phi^e_\mu} \phi^e_\mu (\phi^e_\mu)^\top.
\end{equation}
Next, for each $\mu \in [n]$, we replace $G_e$ in the term $G_e (\phi^e_\mu (\phi^e_\mu)^\top )$ by the above decomposition to obtain:

\begin{equation}\label{eq:g_eq_2}
     \sum_{\mu=1}^n \frac{1}{p}(G_e^\mu - \frac{1}{1+\frac{1}{p}(\phi^e_\mu)^\top G_e^\mu \phi^e_\mu)}\phi^e_\mu (\phi^e_\mu)^\top ) \phi^e_\mu (\phi^e_\mu)^\top -z I G_e  = I.
\end{equation}
The first set of terms simplify as:
\begin{align*}
    G_e (\phi^e_\mu (\phi^e_\mu)^\top) &= (G_e^\mu - \frac{1}{1+\frac{1}{p}(\phi^e_\mu)^\top G_e^\mu \phi^e_\mu} G_e^\mu \phi^e_\mu (\phi^e_\mu)^\top  G_e^\mu)\phi^e_\mu (\phi^e_\mu)^\top  \\
    &=  \frac{1}{1+\chi_{X,W}(\mu)}G_e^\mu \phi^e_\mu (\phi^e_\mu)^\top.
\end{align*}
Substituting in \eqref{eq:g_eq_2} yields:

\begin{equation}
     \frac{1}{p} \sum_{\mu=1}^n \frac{1}{1+\chi_{X,W}(\mu)}G_e^\mu \phi^e_\mu (\phi^e_\mu)^\top -z I G_e  = I,
\end{equation}

By concentration of $\chi_{X,W}(\mu)$ and averaging over $\phi^e_\mu (\phi^e_\mu)^\top$, we expect 
the first term to be well-approximated by $\sum_{\mu \in [n]} (\frac{1}{1 + \chi_d(\kappa)} G_e^\mu  \phi^e_\mu (\phi^e_\mu)^\top )$. We therefore, isolate the errors in $\chi_{X,W}(\mu)$ to obtain:

\begin{equation}\label{eq:Ge_id0}
     \frac{1}{p}\sum_{\mu \in [n]} (\frac{1}{1 + \chi_d(\mu)} G_e^\mu \phi^e_\mu (\phi^e_\mu)^\top  + \mathcal{E}_{1,\mu} G_e^\mu  \phi^e_\mu (\phi^e_\mu)^\top  + \mathcal{E}_{2,\mu} G_e^\mu  \phi^e_\mu (\phi^e_\mu)^\top + \mathcal{E}_{3,\mu} G_e^\mu R^\star(\kappa_\mu)) -z G_e = I.
\end{equation}
where $\mathcal{E}_{1,\mu}, \mathcal{E}_{2,\mu}, \mathcal{E}_{3,\mu}$ account for the error in $\chi_{X,W}(\mu)$ due to randomness over $\phi_\mu$, approximation of  covariance $R_\mu$ by $R^\star_\mu$ and the concentration of $\Tr(G^\mu_e R^\star_\mu)$ to $\chi(\kappa)$ respectively.
\begin{align*}
    \mathcal{E}_{1,\mu} &= \frac{(\phi^e_\mu)^\top  G_e^\mu \phi^e_\mu) - \frac{\Tr(G^\mu_e R_\mu)}{p}}{(1+\frac{1}{p}(\phi^e_\mu)^\top  G_e^\mu \phi^e_\mu)(1+\frac{\Tr(G^\mu_e R_\mu)}{p})}\\
     \mathcal{E}_{2,\mu} &= \frac{ \frac{\Tr(G^\mu_e R_\mu)}{p}-\frac{\Tr(G^\mu_e R^\star_\mu)}{p}}{(1+\frac{\Tr(G^\mu_e R^\star_\mu)}{p})(1+\frac{\Tr(G^\mu_e R^\star_\mu)}{p})}\\
      \mathcal{E}_{3,\mu} &= \frac{ \frac{\Tr(G^\mu_e R^\star_\mu)}{p}-\chi(\kappa)}{(1+\chi(\kappa))(1+\frac{\Tr(G^\mu_e R^\star_\mu)}{p})}
\end{align*}
Next, due to the average over $n$ samples and the small error in replacing $G^\mu_e$ by $G_e$, we further expect $\sum_{\mu \in [n]} (\frac{1}{1 + \chi(\kappa)} G_e^\mu \phi^e_\mu (\phi^e_\mu)^\top)$  to be well-approximated (in the sense of deterministic equivalence) by 
$ G_e(\sum_{\mu \in [n]} \frac{1}{1 + \chi(\kappa)}R^\star_\mu)$.

Therefore, we explicitly introduce the above term into Equation \eqref{eq:Ge_id0} at the cost of additional errors:
\begin{equation}
    \frac{1}{p}G_e(\sum_{\mu \in [n]} \frac{1}{1 + \chi(\kappa)}R^\star_\mu) -z G_e = I+\Delta,
\end{equation}
where:
\begin{equation}
   \Delta =   \sum_{\mu \in [n]} \Delta_\mu,
\end{equation}
where:
\begin{equation}
  \Delta_\mu = \frac{1}{p}\left (\frac{1}{1 + \chi_W(\mu)} G_e R^\star_\mu - G^\mu_e\frac{1}{1 + \chi_W(\mu)} \phi^e_\mu (\phi^e_\mu)^\top -G^\mu_e\mathcal{E}_{1,\mu} \phi^e_\mu (\phi^e_\mu)^\top - G^\mu_e\mathcal{E}_{2,\mu}  \phi^e_\mu (\phi^e_\mu)^\top - G^\mu_e\mathcal{E}_{3,\mu}  \phi^e_\mu (\phi^e_\mu)^\top\right)
\end{equation}

With the above error terms defined, we're now ready to analyze the linear functional $\Tr(AG_e(z))$ in Equation \eqref{eq:det_eq}. We perform the analysis in two-stages:
\begin{itemize}
    \item Show that $\Tr{AG_e(z)} \rightarrow \Tr{A\mathcal{G}_w(z)}$.
    \item Show that $\Tr{AG_e(z)}$ concentrates around its expectation.
\end{itemize}

We start with the first stage. We have:
\begin{equation}
    \Ea{\Tr(AG_e)} = \Ea{\frac{1}{p}\Tr{(A(I+\Delta)(\alpha R^\star -z I)^{-1})}},
\end{equation} 
where $R^\star \coloneqq \sum_{\mu \in [n]} \frac{1}{1 + \chi_W(\mu)}R_\mu$.

Our next goal is therefore to bound the contribution from $\Delta$.


The independence of $G^\mu_e$ and $\phi^e_\mu $, implies that:
\begin{equation}
 \Ea{G^\mu_e\frac{1}{1 + \chi_W(\mu)}  \phi^e_\mu (\phi^e_\mu)^\top} = \Ea{G^\mu_e\frac{1}{1 + \chi_W(\mu)} R_\mu}
\end{equation}
Thus,  we have:
\begin{equation}\label{eq:err_dec}
\begin{split}
    \Ea{\frac{1}{p}\Tr(A\Delta_\mu)} &=  \underbrace{\Ea{\Tr(A (\mathcal{E}_{1,\mu}+\mathcal{E}_{2,\mu}+\mathcal{E}_{3,\mu}) G^\mu_e  \phi^e_\mu (\phi^e_\mu)^\top)}}_{T_1}\\&+\underbrace{\frac{1}{1 + \chi_W(\mu)} \frac{1}{p}\Ea{\Tr(A G_e R^\star_\mu)}-  \frac{1}{p}\frac{1}{1 + \chi_W(\mu)} \Ea{\Tr(A G^\mu_e R_\mu)}}_{T_2} 
\end{split}
\end{equation}

We start with the term $T_1$. We proceeed by bounding the contributions from $\mathcal{E}_{1,\mu}, \mathcal{E}_{2,\mu}, \mathcal{E}_{3,\mu}$.

First, note that cyclicity of the trace implies that for $i=1,2,3$:
\begin{equation}\label{eq:ineq_e_i}
   \Tr(\frac{1}{p} A (\mathcal{E}_{i,\mu}) G^\mu_e \phi^e_\mu (\phi^e_\mu)^\top) = (\mathcal{E}_{i,\mu})\frac{1}{p}\Tr((\phi^e_\mu)^\top A  G^\mu_e \phi^e_\mu )
\end{equation}

Our strategy is to bound the tail behavior of $\abs{\mathcal{E}_{i,\mu}},\Tr(\frac{1}{p}(\phi^e_\mu)^\top A  G^\mu_e \phi^e_\mu )$ and translate them into expectation bounds on the product.
 
Let $\epsilon_{1,\mu} \coloneqq \frac{1}{p} (\phi^e_\mu)^\top G_e^\mu (\phi^e_\mu)- \frac{\Tr(G^\mu_e R_\mu)}{p}, \epsilon_{2,\mu} \coloneqq  \frac{\Tr(G^\mu_e R_\mu)}{p}-\frac{\Tr(G^\mu_e R^\star_\mu)}{p}, \epsilon_{3,\mu} \coloneqq \frac{\Tr(G^\mu_e R^\star_\mu)}{p}-\chi_d(\kappa)$.
Then:
\begin{align*}
    \mathcal{E}_{1,\mu} &= \frac{\epsilon_{1,\mu}}{(1+\frac{\Tr(G^\mu_e R_\mu)}{p}+\epsilon_{1,\mu})(1+\frac{\Tr(G^\mu_e R_\mu)}{p})}\\
    &=\frac{\epsilon_{1,\mu}}{(1+\frac{\Tr(G^\mu_e R_\mu)}{p})^2} + \frac{\epsilon^2_{1,\mu}}{(1+\frac{\Tr(G^\mu_e R_\mu)}{p}+\epsilon_{1,\mu})(1+\frac{\Tr(G^\mu_e R_\mu)}{p})},
\end{align*}
and analogous relations hold for $\mathcal{E}_{2,\mu}, \mathcal{E}_{3,\mu}$.

We start with bounding the term $\Tr(\frac{1}{p}(\phi^e_\mu)^\top A  G^\mu_e \phi^e_\mu )$. Since $\Tr(\frac{1}{p}(\phi^e_\mu)^\top A  G^\mu_e \phi^e_\mu )$ is a quadratic form in the features $\phi^e_\mu$, one could hope to exploit a Hanson-Wright type inequality. This requires $\phi^e_\mu$ to be ``well-concentrated" in some sense. Fortunately, assumption \ref{ass:activation} directly yields the following regularity property:

\begin{proposition}
Let $\xi_\mu \sim \mathcal{N}(0, I_d)$ be defined through the decomposition:
\begin{equation}
     x_\mu = \kappa w^\star + (I - w^\star (w^\star)^\top)\xi_\mu
\end{equation}
Then, w.h.p as $d \rightarrow \infty$,
    The map $\xi_\mu \rightarrow \phi^e_\mu$ is Lipschitz-continuous
\end{proposition}
The Lipschitz-continuity of $\phi_\mu$ allows us to apply a generalization of the Hanson-wright inequality for Lipschitz-maps of independent Gaussian i.i.d vectors \citep{louart2018concentration}, leading to the following result:
\begin{lemma}\label{lem:han_wright}
For any sequence of matrices $M \in \mathbb{C}^{p \times p}$, independent of $\phi^e_\mu$:
\begin{equation}
     \abs{(\phi^e_\mu)^\top M \phi^e_\mu - \Tr(M R_\mu)} =\cO_\prec(\norm{M}_F),
\end{equation}
\end{lemma} 

Applying the above Lemma with 
$M= A  G^\mu_e$ and noting that:
\begin{align*}
    \norm{A  G^\mu_e}_F &\leq 
     \norm{A  G^\mu_e}_{\Tr} \\ 
     &\leq   \norm{A}_{\Tr} \norm{G^\mu_e} \\
     &\leq   \frac{1}{\zeta}\norm{A}_{\Tr}
\end{align*}
Therefore, Lemma \ref{lem:han_wright} yields:
\begin{equation}\label{eq:err_bound1}
    \Tr{(\frac{1}{p}(\phi^e_\mu)^\top A  G^\mu_e \phi^e_\mu )} = \mathcal{O}_\prec(\frac{\norm{A}_{\Tr}\kappa^\ell}{p}).
\end{equation}

We next claim that to obtain the desired bound on $\Tr(A\Delta)$, it suffices to show that:
\begin{equation}
    \epsilon_{i,\mu} = \mathcal{O}_\prec(\frac{\kappa^\ell}{\sqrt{d}}),
\end{equation}
for $i=1,2,3$. To see this, note that $R^\star_\mu$ is positive semi-definite and $G^\mu_e$ has a positive Hermitian component. Therefore, the absolute values of the term $\frac{1}{(1+\frac{\Tr(G^\mu_e R_\mu)}{p}+\epsilon_{1,\mu})(1+\frac{\Tr(G^\mu_e R_\mu)}{p})}$ in $\mathcal{E}_{1,\mu}$ is uniformly bounded by $1$, and similarly for $\mathcal{E}_{2,\mu},\mathcal{E}_{3,\mu}$. Therefore, for $i=1,2,3$:
\begin{equation}
    \mathcal{E}_{i,\mu} = \mathcal{O}_\prec (\epsilon_{i,\mu}+\epsilon^2_{i,\mu}).
\end{equation}

Applying Cauchy-Shwartz inequality to the sequence of variables $\mathcal{E}_{i,\mu}, (z_\mu^\top A z_\mu)$ yields:
\begin{equation}
\Ea{\abs{\mathcal{E}_{i,\mu}, (z_\mu^\top A z_\mu)}} = \mathcal{O}(\kappa^\ell\frac{1}{p}\frac{1}{\sqrt{d}}),
\end{equation}
for some $\ell \in \mathbb{N}$ and $i=1,2,3$. Subsequently, combining with $\sup_\mu (\kappa_\mu) = \mathcal{O}(\sqrt{\log n})$ and summing over $\mu=1 \cdots n$, we obtain:
\begin{equation}
    \Ea{T_1} = \mathcal{O}(\frac{\polylog d}{\sqrt{d}})
\end{equation}

In light of the above discussion, we now move to bounding $\epsilon_{i,\mu}$ for $i=1,2,3$:

\begin{proposition}\label{prop:bounds}
Let $\epsilon_{1,\mu} \coloneqq \frac{1}{p}(\phi^e_\mu)^\top G_e^\mu (\phi^e_\mu)- \frac{\Tr(G^\mu_e R_\mu)}{p}, \epsilon_{2,\mu} \coloneqq  \frac{\Tr(G^\mu_e R_\mu)}{p}-\frac{\Tr(G^\mu_e R^\star_\mu)}{p}, \epsilon_{3,\mu} \coloneqq \frac{\Tr(G^\mu_e R^\star_\mu)}{p}-\chi_d(\kappa)$. Then for $i=1,2,3$, $\exists \ell \in \mathbb{N}$ such that:
 \begin{equation}
    \epsilon_{i,\mu} = \mathcal{O}_\prec(\frac{\kappa^\ell}{\sqrt{d}})
\end{equation}
\end{proposition}
\begin{proof}

We start with $\epsilon_{1,\mu}$. 
Since $\frac{1}{p}(\phi^e_\mu)^\top G_e^\mu (\phi^e_\mu)- \frac{\Tr(G^\mu_e R_\mu)}{p}$ corresponds to the deviation of the quadratic form $(\phi^e_\mu)^\top G_e^\mu (\phi^e_\mu)$ from it's mean, we may again apply the generalized Hanson-Right inequality (Lemma \ref{lem:han_wright}). Note that $\norm{G_e^\mu}_2 \leq \frac{1}{\zeta}$ by Lemma \ref{lem:op_norm} implying $\norm{G_e^\mu}_F \leq \frac{\sqrt{p}}{\zeta}$. Sine $G^\mu_e$ is independent of $\phi^e_\mu$, Lemma \ref{lem:han_wright} yields:
 \begin{equation}
    \epsilon_{1,\mu} = \mathcal{O}_\prec(\frac{\kappa^\ell}{\sqrt{d}})
\end{equation}

$\epsilon_{2,\mu}$ is directly bounded through Proposition \ref{prop:phi_moments}

Bounding the third term containing  $\epsilon_{3,\mu}$ is more challenging and will take up the bulk of the remaining discussion. 

To bound $\epsilon_{3,\mu}$, we require establishing the concentration of $\Tr(G^\mu_e R^\star_\mu)$ around $\chi_d(\kappa)$, both w.r.t $X,W$ (recall the definition of $\chi(\kappa)$ in Equation \eqref{eq:chi_mu}).

A standard way of achieving this is through the use of Martingale-based arguments \citep{bai2008large,cheng2013spectrum,hastie2022surprises}. We proceed through a similar technique, however, our analysis is complicated by the presence of structured covariance $R^\star_\mu$ and the joint-randomness over $X,W$.

\begin{lemma}\label{lem:chi_conc}
Almost surely over $\kappa_\mu \sim \mathcal{N}(0,1)$:
\begin{equation}
    \abs{\Tr(G^\mu_e R^\star_\mu)-\chi_d(\kappa_\mu)} = \mathcal{O}_\prec(\frac{\kappa_\mu^\ell}{\sqrt{d}}),
\end{equation}

\end{lemma}

\begin{proof}
   As mentioned above, the proof follows through a martingale argument. Concretely, proceed by succesively applying Burkholder's inequality w.r.t the doob martingales on filtrations generated by $X,W$ respectively. We first start by conditioning on the following high-probability event over $W$:
\begin{equation}
    \mathcal{E}_w = \{W: \norm{W}_2 = \mathcal{O}(1), \langle w_i,w_j \rangle = \mathcal{O}_\prec(\frac{1}{\sqrt{d}})\}.
\end{equation}

It is easy to check that the moments of $\chi_d(\kappa)$ are bounded, and therefore, the error in the expectation upon restriction to $\mathcal{E}_w$ can be bounded by $\mathcal{O}_\prec(\frac{1}{d}^k)$ for arbitrarily large $k$.

Let $\mathbb{E}_\mu$ denote the conditional expectation w.r.t the sigma-algebra generated by ${x_\mu}_{\mu' < \mu}$. We apply the following martingale decomposition:

\begin{align*}
&\Tr(G_e R^\star_\kappa) - \Ea{\Tr(G_eR^\star_\kappa)}\\
&= \sum_{\mu=1}^n \mathbb{E}_{\mu}{\Tr(G_eR^\star_\kappa)} - \mathbb{E}_{\mu-1}{\Tr(G_e R^\star_\kappa)}.
\end{align*}

Let $e_\mu = \mathbb{E}_{\mu}{\Tr(G_e R^\star_\kappa)} - \mathbb{E}_{\mu-1}{\Tr(G_e R^\star_\kappa)}$. We have:
\begin{equation}
    \mathbb{E}_{\mu}{(\Tr(G_e R^\star_\kappa)} - \mathbb{E}_{\mu-1}{\Tr(G_e R^\star_\kappa))}=   \mathbb{E}_{\mu}{(\Tr(G_e R^\star_\kappa)-\Tr(G^\mu_e R^\star_\kappa))}-  \mathbb{E}_{\mu-1}{(\Tr(G_e R^\star_\kappa)-\Tr(G^\mu_e R^\star_\kappa))},
\end{equation}
where we used that $\mathbb{E}_{\mu-1}{\Tr(G^\mu_e R^\star_\kappa))}= \mathbb{E}_{\mu}{\Tr(G^\mu_e R^\star_\kappa))}$ since $G^\mu_e$ does not depend on $x_\mu$ 

Applying the Sherman-Morrison formula yields:
\begin{equation}
    \mathbb{E}_{\mu}{\Tr(G_e R^\star_\kappa)-\Tr(G^\mu_e R^\star_\kappa)} = - \frac{1}{p}\mathbb{E}_{\mu}{\frac{1}{1+ (\phi^e_\mu)^\top G_e^\mu \phi^e_\mu}  (\phi^e_\mu)^\top R^\star_\kappa\phi^e_\mu}.
\end{equation}
Now, $\abs{\frac{1}{1+ (\phi^e_\mu)^\top G_e^\mu \phi^e_\mu}} \leq 1$ while $(\phi^e_\mu)^\top R^\star_\kappa\phi^e_\mu$ has moments bounded polynomially in $\kappa$ by  Lemma \ref{lem:han_wright}.

Therefore, Lemma \ref{lem:burk} combined with Markov's inequality for large enough $p$, implies:
\begin{equation}
    \abs{\frac{1}{p}\Tr(G^\mu_e R^\star_\mu)-\chi_W(\kappa)} = \mathcal{O}_\prec(\frac{1}{\sqrt{d}})
\end{equation}
Where:
\begin{equation}
\chi_W(\kappa) 
 \bydef \Eb{X}{\Tr(G_e R^\star_\kappa)},    
\end{equation}

where the expectation is only w.r.t the matrix $X$.

It remains to show by establishing concentration w.r.t $W$ that:
\begin{equation}
    \abs{\chi_W(\kappa)-\chi(\kappa)} = \mathcal{O}_\prec(\frac{1}{\sqrt{d}}),
\end{equation}

We again condition on the following high-probability event:
\begin{equation}
    \mathcal{E}_x = \{X: \norm{W}_2 = \mathcal{O}(\sqrt{d}), \langle w_i,w_j \rangle = \mathcal{O}_\prec(1)\}.
\end{equation}

 We again apply a martingale argument, except that unlike the concentration w.r.t $X$, both $G_e, R^\star$ now depend on $W$.

Let $\mathbb{E}_i$ denote the conditional expectation w.r.t the sigma-algebra generated by ${w_j}_{j < i}$. Let $G_e(-i)$ denote the extended resolvent obtained after the removal of the $(k+1+i)_{\text{th}}$ row and column in $(\Phi^e)^\top\Phi^e$, except for the diagonal $(k+1+i), (k+1+i)$ entry. Note that this corresponds to a finite-rank perturbation:
\begin{equation}
    \frac{1}{p}(\Phi^e)^\top\Phi^e-e_{(k+1+i)}b^\top_i-b_ie_{(k+1+i)}^\top, 
\end{equation}
where $b^\top_i$ denotes the (normalized) $(k+1+i)_{\text{th}}$ row of $(\Phi^e)^\top\Phi^e$ given by:
\begin{equation}
    b^\top_i = \frac{1}{p}(\Phi^e_{i:})^\top (\Phi^e)
\end{equation}

Analogously, let $R^\star_\kappa(-i)$ be obtained by the removal of the $(k+1+i)_{\text{th}}$ row and column in $R^\star_\kappa$ (except for the diagonal $(k+1+i), (k+1+i)$ entry).

Using the Woodbury-matrix identity, we have:
\begin{equation}
    \begin{split}
        G_e = G_e(-i)-G_e(-i) \begin{bmatrix}
            e_{(k+1+i)} & b_i 
        \end{bmatrix}
        (\begin{bmatrix}
            0 & 1\\
            1 & 0
        \end{bmatrix}+\Psi
         )^{-1}\begin{bmatrix}
            e_{(k+1+i)} & b_i 
        \end{bmatrix}^\top G_e(-i)
    \end{split},
\end{equation}
where $\Psi = \begin{bmatrix}
            e_{(k+1+i)} & b_i 
        \end{bmatrix}^\top G_e(-i) \begin{bmatrix}
            e_{(k+1+i)} & b_i 
        \end{bmatrix} \in \mathbb{R}^{2\times 2}$

Let $e_i = \mathbb{E}_{i}\Eb{x}{\Tr({G_e} R^\star_\kappa)} - \mathbb{E}_{i-1}{\Eb{x}{\Tr({G_e} R^\star_\kappa)}}$. We express $\Tr({G_e} R^\star_\kappa)$ as:
\begin{align*}
    \Tr({G_e} R^\star_\kappa) = \Tr(G_e(-i)R^\star_\kappa(-i))+\Delta, 
\end{align*}
where $\Delta$ arises from the low-rank projections.

To Apply Lemma \ref{lem:burk}, it remains to bound the second moments of the residual terms $\Delta$.  It is easy to check that conditioned on the high-probability events $\mathcal{E}_w,\mathcal{E}_x$,
for any uniformly Lipschitz $f:\mathbb{R}^p \rightarrow \mathbb{R}$, $f(\frac{1}{\sqrt{p}}b_i)$ is a 
uniformly Lipschitz map of $w_i$. Therefore, Lemma \ref{lem:sphere_lips} implies that $\sqrt{p}b_i$ is a Lipschitz-concentrated vector in the sense of Definition 9 in \cite{couillet2022random}.
Therefore the generalized Hanson-Wright inequality implies that:
\begin{equation}
    \abs{b_i^\top G_e(-i) b_i-\Ea{b_i^\top G_e(-i) b_i}} = \mathcal{O}_\prec(\frac{1}{\sqrt{d}}).
\end{equation}
Similarly, we obtain that the remaining entries of $\Psi$ concentrate around their expectations. As a result, $\delta$ decomposes into quadratic forms in $e_{(k+1+i)},b_i$ which can be bounded as $\mathcal{O}_\prec(\frac{1}{p})$. Applying Lemma \ref{lem:burk} then completes the proof.
\end{proof}

The proof of Lemma \ref{lem:chi_conc} completes the proof of Proposition \ref{prop:bounds} and thus bounds $T_1$ in Equation \eqref{eq:err_dec}
\end{proof}
We now return to bounding the term  $T_2$ in Equation \eqref{eq:err_dec}.
Let $\zeta$ be as defined in Lemma \ref{lem:op_norm}
First, Proposition \ref{prop:phi_moments} and Lemmas \ref{lem:op_norm}, \ref{lem:trace_in} imply that:
\begin{equation}
    \abs{\Tr(A G^\mu_e R_\mu)-\Tr(A G^\mu_e R^\star_\mu)} \leq \mathcal{O}_\prec(\frac{1}{\sqrt{d}}).
\end{equation}
Therefore, we may replace $R_\mu$ by $R^\star_\mu$ at the cost of an error $\mathcal{O}_\prec(\frac{1}{\sqrt{d}})$.

Next, we apply the Sherman-Morrison formula to $G_e-G^\mu_e$ to obtain
\begin{equation}
    \Ea{\Tr(A G_e R^\star_\mu)}-\Ea{\Tr(A G^\mu_e R^\star_\mu)} = \frac{1}{p}\Ea{\frac{1}{1+(\phi^e_\mu)^\top G^\mu_e \phi^e_\mu}\Tr(A \phi^e_\mu (\phi^e_\mu)^\top R^\star_\mu)}
\end{equation}
By the cyclicity of trace, the second term can be expressed as the quadratic form $(\phi^e_\mu)^\top R^\star_\mu A \phi^e_\mu$

We start by noting that, Lemmas \ref{lem:op_norm}  \ref{lem:trace_in} imply:
\begin{equation}
    \Ea{\Tr(A G_e R_\mu)} \leq  \norm{A G_e R_\mu}_{\Tr}   \leq \norm{A}_{\Tr} \norm{G_e R_\mu} \leq \frac{C}{\zeta},
\end{equation}
for some constant $C$.

Therefore,  using Lemmas \ref{lem:diff_inv} and \ref{lem:trace_in}, we obtain::
\begin{equation}
     \abs{\frac{1}{1 + \chi_W(\mu)} \Ea{\Tr(A G_e R_\mu)}-  \frac{1}{1 + \chi_W(\mu)} \Ea{\Tr(A G^\mu_e R_\mu)}} =\mathcal{O}_\prec(\frac{\norm{A}_{\Tr}}{p}).
\end{equation}

We therefore obtain:
\begin{equation}
     \Ea{\Tr(A\Delta_\mu)} = \mathcal{O}(\frac{1}{\sqrt{d}})
\end{equation}

The convexity of $\abs{}$ then implies that 

\begin{equation}
    \Ea{\Tr(AG_e)} = \Ea{\Tr{(A(\alpha R^\star + \lambda I)^{-1})}} +\mathcal{O}(\frac{\polylog d}{\sqrt{d}}).
\end{equation}

Next, we show that the averaging over $\kappa$ allows us to replace $R^\star$ with $\Sigma^\star$:
\begin{lemma}
There exists $\ell \in \mathbb{N}$ such that almost surely over $\kappa$:
\begin{equation}
    \chi_\mu(\kappa) = \mathcal{O}(\kappa^\ell)
\end{equation}
\end{lemma}
\begin{proof}
    Recall that $G^e_\mu$ doesn't depend on $\kappa$, while $R^\star_\mu$ depends on $\kappa$ only through scalars, $c_j(\kappa,u_i)$ for $j=1,2,3$. The claim then directly follows using assumption \ref{ass:activation}.
 \end{proof}

Since $\kappa_\mu$ for $\mu \in [n]$ are independent Gaussians, the above Lemma implies that the covariance $\sum_{\mu \in [n]} \frac{1}{1+\chi_W(\mu)} R^\star(\kappa_\mu)$ 
can further by replaced by $\Sigma^\star$ in proposition \ref{prop:first_stag} with additional error $\mathcal{O}(\frac{1}{\sqrt{d}})$.

It remains to show that $\Tr(G_eA)$ concentrates around its expectation w.r.t $X$. This follows from a martingale argument identical to the first part of the  proof of Lemma \ref{lem:chi_conc}, with the role of $R^\star$ being replaced by $A$.
Then, we have the following martingale decomposition:

\begin{align*}
&\Tr(G_eA) - \Ea{\Tr(G_eA)}= \sum_{\mu=1}^n \mathbb{E}_{\mu}{\Tr(G_eA)} - \mathbb{E}_{\mu-1}{\Tr(G_e A)}.
\end{align*}
Let $e_\mu = \mathbb{E}_{\mu}{\Tr(G_eA)} - \mathbb{E}_{\mu-1}{\Tr(G_e A)}$. We have:
\begin{equation}
    \mathbb{E}_{\mu}{(\Tr(G_eA)} - \mathbb{E}_{\mu-1}{\Tr(G_e A))}=   \mathbb{E}_{\mu}{(\Tr(G_eA)-\Tr(G^\mu_e A))}-  \mathbb{E}_{\mu-1}{(\Tr(G_eA)-\Tr(G^\mu_e A))}
\end{equation}
where $G^\mu_e$ denotes the resolvent with the $\mu_{th}$ example removed. Next, exactly as in Lemma \ref{lem:chi_conc}, we apply the generalized Hanson-Wright inequality to show that:
\begin{equation}
    (\Tr(G_eA)-\Tr(G^\mu_e A)) = \mathcal{O}_\prec(\frac{1}{p}).
\end{equation}

Lemma \ref{lem:burk} for $p=4$ and Markov's inequality then imply that, with high-probability:
\begin{equation}
    \Tr(G_eA) = \Ea{\Tr(G_eA)} +\mathcal{O}_\prec(\frac{\norm{A}_{\Tr}}{\sqrt{d}}),
\end{equation}
completing the proof of Proposition \ref{prop:first_stag}

\end{proof}

\subsection{Second stage of Deterministic Equivalent (Averaging over $W$)}
\label{app:2ndstage}

Substituting $R^\star_\mu$ from Proposition \ref{prop:phi_moments}, we obtain that $\Sigma^*$ in proposition \ref{prop:first_stag} posseses the following structure:

\begin{equation}
\Sigma^*  =  \begin{pmatrix}
\bA^\ast_{11} & (\bA^\ast_{21})^\top \odot \btheta^\top \\
           \btheta \odot \vec{A}^{\ast}_{21} & \scriptstyle (\bV_e \odot \bW \bW ^\top \!+ \operatorname{diag}(\nu_e)))+\alpha S_e \odot \theta\theta^\top ,
\end{pmatrix}
\end{equation}
where:
\begin{align*}
    \bA^\ast_{11} &= \Eb{\kappa}{\frac{\alpha}{1 + \chi(z;\kappa)}R^\star_{11}(\kappa)}\\
    &= \Eb{\kappa}{ \frac{\alpha}{1 + \chi(z;\kappa)}\begin{bmatrix}
        \sigma_\star^2(\kappa) & \sigma_\star(\kappa) c_0(\kappa,u_\pi)^\top\\
        \sigma_\star(\kappa) c_0(\kappa,u_\pi) & c_0(\kappa,u_\pi) c_0(\kappa,u_\pi)^\top
    \end{bmatrix}}\end{align*}
\begin{align*}
    \bA^\ast_{11} &=  \Eb{\kappa}{\frac{\alpha}{1 + \chi(z;\kappa)}R^\star_{21}(\kappa)}
\end{align*}

and $\bV, \nu_e, S$ are defined as:
\begin{align*}
    V^{(d)}_{i,j}(\zeta) &= \Eb{\kappa}{\alpha\frac{(c_1(\kappa, u_i)c_1(\kappa, u_j)}{1+\chi(s,m)}}\\
    \nu^{(d)}_{i}(\rho) &=\Eb{\kappa}{(\sum_{k \geq 2}\frac{\alpha c^2_k(\kappa, u_i)}{1+\chi(\kappa,W)}}\\
    S &= \Ea{(\kappa^2-1)\Eb{\kappa}{\frac{(c_1(\kappa, u_i)c_1(\kappa, u_j)}{1+\chi(\kappa)}}}, 
\end{align*}
with $\kappa \sim \mathcal{N}(0,1)$ throughout.

Note that $\bA^\ast_{11}, \bA^\ast_{21}$ are deterministic, while the terms dependent on $\btheta$, including $\delta$ contribute finite-rank spikes. Therefore, the bulk statistics of $\Sigma^\star$ arise out of the term $\bV_e \odot \bW \bW$.

To average-out the randomness over $\bW$ in , it will be convenient to first extract a deterministic equivalent for the following matrix:

\begin{equation}
    M^* = (\bV^{(d)}_e \odot \bW \bW ^\top \!+ \operatorname{diag}(\nu^{(d)}_{e})-z\bI_p)^{-1}
\end{equation}
where $V_e, \operatorname{diag}(\nu^{(d)})_e$ denote the extended block-structured matrices as per definition \ref{def:block_ext}.

Additionally, to express $\chi_W(\kappa)$ self-consistently in terms of $V,\nu$, we will require the following additional functional:
\begin{equation} 1/d*\Tr( e_i \odot W M^\ast W \odot e_j) 
\end{equation}

\begin{lemma}[Deterministic Equivalent for Block-Structured Wishart]\label{lemma:block_res}
Let $C(z) \in \mathbb{C}^{k \times k}, D(z) \in \mathbb{C}^{k}$ be analytic mappings such that $C(z): \mathbb{C}^{+} \rightarrow \mathbb{C}^{-}$  and  $D(z): \mathbb{C}^{+} \rightarrow \mathbb{C}^{-}$ entry-wise with $\abs{C_{i,j}}, \abs{D_i}$ uniformly bounded by some constant independent of $\zeta$. Furthermore, suppose that $D(z)$ is diagonal.
Let ${C}_e(z), {D}_e \in \mathbb{C}^{p \times p}$ denote the extended matrices as defined in Definition \ref{def:block_ext}. Let $ R_{C,D}$ denote the block structured resolvent defined as:
\begin{equation}\label{eq:def_m^*}
  R_{C,D} \bydef (({C}_e) \odot \tilde{{W}}^0({\tilde{{W}}^0})^\top + \operatorname{diag}({D}_e)  -zI_p)^{-1}.
\end{equation}
Define ${\mathcal{M}^*}({C}_e, {D}_e)$ as the diagonal matrix:
\begin{equation}
    {\mathcal{M}^*}({C}_e, {D}_e) \coloneqq \operatorname{diag}(\frac{b^\star}{\pi \beta}),
\end{equation}
where above, $b^\star, \pi \in \mathbb{R}^k$ are divided element-wise and $b^\star$ satisfies the following self-consistent equation:
\begin{align}
b^\star(C,D)_q = \pi_{u_i} \beta ((C^{-1}+ \operatorname{diag}({b}^\star))^{-1}+ (\operatorname{diag}(D)-z I_p))^{-1}_{q,q},
\end{align}
for $q \in [p]$.
Then for any sequence of matrices $A \in \mathbb{C}^{p \times p}$:
\begin{equation}
    \abs{\Tr{{A\mathcal{M}^\star}}-\Tr{{AR_{C,D}}}} = \mathcal{O}_\prec(\frac{\norm{A}_{\Tr}}{\sqrt{d}})
\end{equation}

Furthermore, the expression:
\begin{equation}
   K^*_{i,j}(C,D) = 1/d*\Tr( e^i \odot W M^\ast W \odot e^j) 
\end{equation}
satisfies the following deterministic equivalence:
\begin{align*}
    K^* = \psi(C,D) +\mathcal{O}_\prec(\frac{1}{\sqrt{d}})
\end{align*}
where $\psi(C,D)$ is defined as:
\begin{equation}\label{eq:def_psi}
    \psi(C,D)=b^\star(C,D) - L(C,D) \odot (b^\star(C,D) (b^\star(C,D))^\top),
\end{equation}
with:
\begin{equation}
    L(C,D) = \left((C)^{-1} + \operatorname{diag}({b^\star(C,D)})\right)^{-1}.
\end{equation}
\end{lemma}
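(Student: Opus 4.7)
The plan is to adapt the standard leave-one-out / Schur complement strategy for Wishart-type resolvents to the block-structured Hadamard product $C_e \odot WW^\top$, exploiting the key observation that $(C_e)_{ij}$ depends only on the block indices $q(i),q(j)$. Consequently all neurons in a common block are statistically exchangeable, so the deterministic equivalent of the diagonal of $R_{C,D}$ should be block-wise constant and parametrized by the $k$-dimensional fixed point $b^\star$. The structure of $\mathcal{M}^\star$ as an \emph{extended} block matrix is thus natural rather than ad hoc.

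For each $i$ with $q(i)=q$, I would write $(R_{C,D})_{ii}$ via the Schur complement
\begin{equation*}
(R_{C,D})_{ii} = \bigl( C_{qq}\|w_i\|^2 + D_q - z - m_i^\top R^{(i)}_{C,D} m_i\bigr)^{-1},
\end{equation*}
where $R^{(i)}_{C,D}$ is the leave-one-out resolvent and $m_i = \operatorname{diag}(\mathbf{c}_{q,:}) W^{(i)} w_i$ with $(\mathbf{c}_{q,:})_j = C_{q,q(j)}$. Because $w_i$ is uniform on $\mathbb{S}^{d-1}$ and independent of $R^{(i)}_{C,D}$, the generalized Hanson-Wright estimate (Lemma~\ref{lem:han_wright}) combined with a rank-one perturbation bound (Lemma~\ref{lem:diff_inv}) yields
\begin{equation*}
m_i^\top R^{(i)}_{C,D} m_i = \tfrac{1}{d}\operatorname{Tr}\!\bigl(\operatorname{diag}(\mathbf{c}_{q,:}) R_{C,D} \operatorname{diag}(\mathbf{c}_{q,:}) WW^\top\bigr) + \mathcal{O}_\prec(d^{-1/2}).
\end{equation*}
Averaging this identity over the $p_q$ neurons in block $q$ produces a system of $k$ equations in the block-diagonal averages $t_q \coloneqq \tfrac{1}{p_q}\sum_{i:\,q(i)=q}(R_{C,D})_{ii}$. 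Identifying $b^\star_q = \pi_q\beta\,t_q$ and resumming the block interactions $\sum_{q',q''} C_{qq'}C_{qq''}[\cdots]$ as the Schur complement of the $k{\times}k$ linear system defined by $C^{-1}+\operatorname{diag}(b^\star)$ gives exactly the advertised fixed-point equation. Uniqueness and analyticity of $b^\star$ in the required sector follow from the standard Stieltjes-transform monotonicity argument, using the imaginary-part bounds imposed in Definition~\ref{def:equivalent}; concentration of $t_q$ around its expectation is then obtained via a doob-martingale + Burkholder argument on the rows of $W$, mirroring the proof of Lemma~\ref{lem:chi_conc}.

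Extending from the diagonal to an arbitrary test matrix $A$ with bounded $\|A\|_{\operatorname{tr}}$ is handled by decomposing $A$ into rank-one pieces $u_k u_k^\top$ and treating each $u_k^\top R_{C,D} u_k$ through the same Schur-complement/quadratic-form route, giving the stated $\mathcal{O}_\prec(\|A\|_{\operatorname{tr}}/\sqrt{d})$ rate. For the $K^\star$ identity, I would apply the same machinery to the bilinear form $\tfrac{1}{d}\operatorname{Tr}(e^i\odot WM^\star W^\top\odot e^j)$: expressing it as a quadratic form $\tfrac{1}{d p_i p_j}\sum_{\ell\in s_i,\ell'\in s_j} w_\ell^\top M^\star w_{\ell'}$, applying Hanson-Wright on the off-diagonal ($\ell\neq\ell'$) pairs and the on-diagonal correction, then substituting the already-derived equivalent $M^\star\approx\operatorname{diag}(b^\star/(\pi\beta))_e$, produces $b^\star$ as the leading term and the bilinear correction $-L\odot b^\star (b^\star)^\top$ as the sub-leading Schur-complement contribution, recovering $\psi(C,D)$.

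The main obstacle is the \emph{resummation} of the block-level interactions into the closed form $L=(C^{-1}+\operatorname{diag}(b^\star))^{-1}$. Unlike a scalar or diagonal Wishart problem, the cross-block weights $C_{qq'}$ generate an off-diagonal coupling that turns the naive $k$ scalar equations into a matrix Dyson equation; identifying $L$ as the block resolvent requires inverting a $k$-dimensional operator whose structure mixes the local Stieltjes transform $b^\star$ with the global coupling $C$. A secondary delicacy is establishing the correct analyticity sector for $b^\star$ so that the fixed point is unique; for this I would rely on Montel/analytic-continuation arguments from a strip where $|\operatorname{Im}(z)|$ is large (where contractivity of the self-consistent map is straightforward) down to the stated domain $z\in\mathbb{C}\setminus\mathbb{R}^+$, as in standard treatments of anisotropic Marchenko-Pastur laws \citep{knowles2017anisotropic}.
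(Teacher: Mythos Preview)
Your proposal is essentially correct but takes a genuinely different route from the paper. You perform leave-one-out on the \emph{rows} of $W$ (neurons $i\in[p]$), obtaining a rank-one Schur complement for each diagonal entry $(R_{C,D})_{ii}$. The paper instead performs leave-one-out on the \emph{columns} of $W$ (coordinates $i\in[d]$): writing $C_e\odot WW^\top = C_e\odot W_{-i}W_{-i}^\top + E_{w_i}CE_{w_i}^\top$ with $E_{w_i}=w_i\odot[e_1,\ldots,e_k]\in\mathbb{R}^{p\times k}$, each column removal is a rank-$k$ perturbation. The Woodbury identity then produces the factor $(C^{-1}+E_{w_i}^\top R_{-i}E_{w_i})^{-1}$ directly inside the self-consistent relation, and concentration of $E_{w_i}^\top R_{-i}E_{w_i}$ to the diagonal matrix $\operatorname{diag}(b^\star)$ immediately yields $L=(C^{-1}+\operatorname{diag}(b^\star))^{-1}$ without any resummation.

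This is exactly the step you flag as your ``main obstacle'': in your row-based scheme the quadratic form $m_i^\top R^{(i)}m_i$ concentrates to $\tfrac{1}{d}\operatorname{Tr}(\operatorname{diag}(\mathbf{c}_{q,:})R\,\operatorname{diag}(\mathbf{c}_{q,:})WW^\top)$, which is not a functional of the diagonal averages $t_q$ alone but of the full $k\times k$ object $\Psi_{q',q''}=\tfrac{1}{d}\sum_{j\in s_{q'},\ell\in s_{q''}}R_{j\ell}(WW^\top)_{\ell j}$---essentially $K^\star$ itself. Closing your system therefore requires solving for $t$ and $\Psi$ simultaneously and then algebraically identifying the result with $(C^{-1}+\operatorname{diag}(b^\star))^{-1}$. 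This can be done (via the resolvent identity $R(C_e\odot WW^\top)=I-R(\operatorname{diag}(D_e)-zI)$ restricted to blocks), but it is genuinely more work than the paper's route, where the column leave-one-out makes the $k\times k$ structure emerge for free from Woodbury. The trade-off is that your approach stays closer to the classical Bai--Silverstein template and treats $K^\star$ and $b^\star$ in one unified system, whereas the paper obtains $b^\star$ first and reads off $\psi$ afterwards.
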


\begin{proof}
    The proof relies on the observation that due to the block structure in $V$, removing a coordinate in $\bW \bW ^\top$ results in a finite rank perturbation to $M^\star$. We then proceed through a leave-one out argument similar to section \ref{app:proof:one}. The proof is deferred to section \ref{app:extra}.
\end{proof}

In light of Lemma \ref{lemma:block_res}, we obtain the following candidate for the deterministic equivalent to $G_w(z)$:
\begin{equation}
     (\frac{\alpha}{p}\tilde{\Sigma^\star} - zI),
\end{equation}
where $\tilde{\Sigma^\star}$ is obtained by replacing $(\bV^{(d)}_e \odot \bW \bW ^\top \!+ \operatorname{diag}(\nu^{(d)}_{e})+\lambda \bI_p) = (M^*)^{-1}$ by $\mathcal{M}^\star(V,\diag(\nu))^{-1}$.

However, while obtaining $\mathcal{M}^\star(V,\diag(\nu))^{-1}$, we averaged over the dependence on $\theta=W w^\star$ in addition to the remaining components of $W$. This is undesirable since the dependence on $\theta$ captures the correlations amongst blocks $\Sigma_{22}, \Sigma_{21}$ in the extended resolvent which will be relevant for the characterization of the generalization error. We obtain back this dependence through in the following result:
\begin{lemma}\label{lem:mthet}
Let $M^\star_d =(\bV^{(d)}_e \odot \bW \bW ^\top \!+ \operatorname{diag}(\nu^{(d)}_{e})+\lambda \bI_p)^{-1}$. Define:
\begin{equation}\label{eq:mtheta}
    M^\theta_d \coloneqq (\operatorname{diag}(\frac{b^\star}{\pi\beta})_e  -(\bV_d^{-1}+\operatorname{diag}(b^\star))^{-1} \odot (\frac{b^\star}{\pi \beta})_e (\frac{b^\star}{\pi \beta})_e^\top  \odot \theta \theta^\top),
\end{equation}
where $\psi(V_d,\nu_d)$ is as defined in Lemma \ref{lemma:block_res}.
    For any sequence of matrices $A \in \mathbb{C}^{p \times p}$, possibly dependent on $\theta=W w^\star$:
    \begin{equation}\label{eq:det_eq_ms}
        \Tr{(M^\star_d A)} = \Tr{(M^\theta_d A)} + \mathcal{O}_\prec (\frac{\norm{A}_{\Tr}}{\sqrt{d}}). 
    \end{equation}
\end{lemma}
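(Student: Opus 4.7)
The dependence of $M^\star_d$ on $\theta = W w^\star$ enters only through a rank-$k$ perturbation, which can be peeled off via the Woodbury identity. Decomposing $W = W_\perp + \theta (w^\star)^\top$ with $W_\perp = W(I_d - w^\star(w^\star)^\top)$ gives $WW^\top = W_\perp W_\perp^\top + \theta \theta^\top$, and because $\bV^{(d)}_e$ is constant on each block, the Hadamard contribution of $\theta\theta^\top$ has rank at most $k$:
\[
  \bV^{(d)}_e \odot \theta\theta^\top = U V U^\top, \qquad U = [\,\theta\odot e^1,\ldots,\theta\odot e^k\,] \in \R^{p\times k},
\]
with $V \in \R^{k\times k}$ the kernel matrix of Definition \ref{def:equivalent}. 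Setting $\tilde M^\star = (\bV^{(d)}_e \odot W_\perp W_\perp^\top + \operatorname{diag}(\nu^{(d)}_e) + \lambda I)^{-1}$, the Woodbury identity yields
\[
  M^\star_d = \tilde M^\star - \tilde M^\star U (V^{-1} + U^\top \tilde M^\star U)^{-1} U^\top \tilde M^\star,
\]
which cleanly separates a bulk term with no $\theta$-dependence (in distribution) from a rank-$k$ correction carrying the full $\theta$-dependence linearly through $U$.

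\textbf{Evaluating the pieces.} Conditional on $\theta$, the rows of $W_\perp$ are uniform on spheres of radius $\sqrt{1-\theta_j^2} = 1 + \mathcal{O}(1/d)$ in $(w^\star)^\perp$, so up to negligible corrections Lemma \ref{lemma:block_res} with $(C,D)=(V,\operatorname{diag}(\nu))$ applies and gives $\Tr(A \tilde M^\star) = \Tr(A \operatorname{diag}(b^\star/(\pi\beta))_e) + \mathcal{O}_\prec(\norm{A}_{\operatorname{tr}}/\sqrt d)$ for any test matrix $A$ of bounded trace norm, including $\theta$-dependent ones (conditioning on $\theta$ freezes $A$ while leaving $W_\perp$ essentially Haar in one lower dimension). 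For the inner $k\times k$ matrix, each entry $(U^\top \tilde M^\star U)_{q,q'}$ is a trace of $\tilde M^\star$ against the rank-one matrix $(\theta\odot e^{q'})(\theta\odot e^q)^\top$ of trace norm $\norm{\theta_{B_q}}\norm{\theta_{B_{q'}}} = \mathcal{O}(1)$; Lemma \ref{lemma:block_res} combined with the Gaussian concentration $\norm{\theta_{B_q}}^2 = \pi_q\beta + \mathcal{O}_\prec(1/\sqrt d)$ gives $U^\top \tilde M^\star U = \operatorname{diag}(b^\star) + \mathcal{O}_\prec(1/\sqrt d)$, hence $(V^{-1} + U^\top \tilde M^\star U)^{-1} = L + \mathcal{O}_\prec(1/\sqrt d)$ with $L = (V^{-1} + \operatorname{diag}(b^\star))^{-1}$. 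Finally, writing the SVD $A = \sum_s \sigma_s u_s v_s^\top$ with $\sum_s \sigma_s = \norm{A}_{\operatorname{tr}}$, each outer bilinear form reduces to a rank-one trace: $v_s^\top \tilde M^\star (\theta\odot e^q) = \tfrac{b^\star_q}{\pi_q\beta}\, v_s^\top (\theta\odot e^q) + \mathcal{O}_\prec(\norm{v_s}/\sqrt d)$, and symmetrically for $(\theta\odot e^{q'})^\top \tilde M^\star u_s$. Reassembling the sum over $s,q,q'$, the Woodbury correction equals $\Tr(A\,[L_e \odot (b^\star/(\pi\beta))_e (b^\star/(\pi\beta))_e^\top \odot \theta\theta^\top])$ up to $\mathcal{O}_\prec(\norm{A}_{\operatorname{tr}}/\sqrt d)$, which is precisely the second term of $M^\theta_d$ in \eqref{eq:mtheta}.

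\textbf{Main obstacle.} The central difficulty is that the Woodbury correction $\tilde M^\star U L' U^\top \tilde M^\star$ contains the random $\tilde M^\star$ on both sides, while Lemma \ref{lemma:block_res} only controls \emph{single} traces of $\tilde M^\star$ against test matrices of bounded trace norm. The SVD decomposition of $A$ circumvents this by reducing the quadratic-in-$\tilde M^\star$ object to a weighted sum of rank-one bilinear forms $v^\top \tilde M^\star w$ with test matrices $wv^\top$ of controlled trace norm, each handled by one invocation of Lemma \ref{lemma:block_res}. The delicate step is bookkeeping: the at most $p$ rank-one errors must accumulate only as $\sum_s \sigma_s \cdot \mathcal{O}_\prec(1/\sqrt d) = \mathcal{O}_\prec(\norm{A}_{\operatorname{tr}}/\sqrt d)$, which rests on the uniformity of the rate in Lemma \ref{lemma:block_res} over the choice of rank-one test matrix and the closure of stochastic dominance under polynomially many unions; the distributional mismatch between $W_\perp$ and $W$ is, by contrast, a minor $\mathcal{O}(1/d)$ perturbation of the self-consistent coefficients.
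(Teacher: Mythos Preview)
Your proposal is correct and follows essentially the same route as the paper: decompose $WW^\top = W_\perp W_\perp^\top + \theta\theta^\top$, recognize $\bV_e^{(d)}\odot\theta\theta^\top$ as the rank-$k$ perturbation $E_\theta V E_\theta^\top$, apply Woodbury, and then invoke Lemma~\ref{lemma:block_res} on the $\theta$-independent bulk $\tilde M$ to replace it by $\operatorname{diag}(b^\star/(\pi\beta))_e$. Your SVD reduction of $A$ to handle the two-sided $\tilde M^\star U L' U^\top \tilde M^\star$ term is more explicit than the paper, which simply ``replaces the occurrences of $\tilde M$'' without spelling out why the quadratic appearance is harmless; your bookkeeping of the accumulated rank-one errors via $\sum_s \sigma_s = \norm{A}_{\operatorname{tr}}$ is exactly the right justification for that step.
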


\begin{proof}
    
To obtain back the dependence on $\theta$, we write:
\begin{equation}
    M^*_d = (\bV^{(d)}_e \odot (\bW_\bot (\bW_\bot) ^\top)+ \bV^{(d)}_e \odot \theta\theta^\top) \!+ \operatorname{diag}(\nu^{(d)}_{e})+\lambda \bI_p)^{-1},
\end{equation}
where $\bW_\bot=\bW-\theta (w^\star)^\top$ denotes the components of the weights upon the removal of the components $\theta$ along $w^\star$. 

Next, note that $\bV^{(d)}_e \odot \theta\theta^\top$ is a finite-rank perturbation along directions $e^1,\cdots,e^k$ defined in Equation \eqref{eq:def:eq}. Concretely,
let $E \in \R^{p \times k}$ denote the matrix with columns $e^1, \cdots e^k$  and define $E_{\theta}= \theta \odot E$. Then, $\bV^{(d)}_e \odot \theta\theta^\top$ can be expressed as:

\begin{equation}\label{eq:blockmult}
    \bV^{(d)}_e \odot \theta\theta^\top = E_{\theta}\bV_d E_{\theta}^\top.
\end{equation}
Therefore, we apply the Woodbury matrix identity to obtain:
\begin{equation}
     M^* = \tilde{M}- \tilde{M}E_{\theta}(\bV_d^{-1}+ E^\top_{\theta}\tilde{M} E_{\theta})^{-1} E_{\theta}\tilde{M}
\end{equation},
where $\tilde{M}=(\bV^{(d)}_e \odot(\bW_\bot (\bW_\bot) ^\top)+ \operatorname{diag}(\nu^{(d)}_{e})-z\bI_p)^{-1}$ denotes the inverse upon the removal of components alog $\theta$. By the rotational invariance of $w_i$, $\tilde{M}$ asymptotically shares the deterministic equivalent for $M^\star$ described in Lemma \ref{lemma:block_res}. 
Since $\tilde{M}$ is independent of $\theta$, and $\theta_i$ are asymptotically distributed as $\mathcal{N}(0,\frac{1}{d})$, 
$E^\top_{\theta}\tilde{M} E_{\theta}$ in turn simplifies to $\operatorname{diag}(b^\star(V_d, \nu_d))$. We then replace the occurances of $\tilde{M}$ with $\mathcal{M}^\star(V_d,\nu_d)=\frac{b^\star}{\pi \beta}$ to obtain:
\begin{equation} \operatorname{diag}(\frac{b^\star}{\pi \beta})- \operatorname{diag}(\frac{b^\star}{\pi \beta})_e E_{\theta}(\bV_d^{-1}+\operatorname{diag}(b^\star))^{-1} E_{\theta}\operatorname{diag}(\frac{b^\star}{\pi \beta})_e.
\end{equation}
Subsequently, analogous to Equation \eqref{eq:blockmult}, we have the following equivalent representation of the middle-block:
\begin{equation}
    E_{\theta}(\bV_d^{-1}+\operatorname{diag}(b^\star))^{-1} E_{\theta} = (\bV_d^{-1}+\operatorname{diag}(b^\star))^{-1}_e \odot \theta\theta^\top.
\end{equation}
We thus obtain:
\begin{equation}
    M^\star \simeq \operatorname{diag}(\frac{b^\star}{\pi \beta})- (\bV_d^{-1}+\operatorname{diag}(b^\star))^{-1} \odot (\frac{b^\star}{\pi \beta})_e (\frac{b^\star}{\pi \beta})_e^\top \odot \theta \theta^\top
\end{equation}
where $\simeq$ denotes deterministic equivalence in the sense of Equation \eqref{eq:det_eq_ms}, which follows from proposition \ref{lemma:block_res}.
\end{proof}

We can finally claim the second level of deterministic equivalence, by replacing $\mathcal{G}_W(z)$ with a sequence of matrices depending only on $\theta, u$: 
\begin{proposition}\label{prop:sec_stag}
Consider the sequence of (random) resolvents defined by:
\begin{equation}
\mathcal{G}_W(z)=\left(\frac{\alpha}{p} 
        \Sigma^\star-zI\right)^{-1},
    \end{equation},
and:
\begin{equation}\label{eq:new_det_def}
  \tilde{\mathcal{G}}_e(z) =   \begin{pmatrix}
\bA^\ast_{11} -z I_{k+1}& (\bA^\ast_{21})^\top \odot \btheta^\top \\
           \btheta\odot \vec{A}^{\ast}_{21} & (\frac{b^\star}{\pi\beta}-\frac{b^\star}{\pi\beta} (\bV^{-1}_d+\operatorname{diag}(\frac{b^\star}{\pi \beta}))^{-1}_e \odot \theta\theta^\top
    \frac{b^\star}{\pi\beta})
        \end{pmatrix}
\end{equation}
Then, for any sequence of deterministic matrices $A$:
\begin{equation}
    \abs{\Tr(\mathcal{G}_W(z)A)-\Tr(\tilde{\mathcal{G}}_e(z)A)} = \mathcal{O}_\prec(\frac{\norm{A}_{\text{tr}}}{\sqrt{d}})
\end{equation}
\end{proposition}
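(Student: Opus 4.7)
My plan is to combine the resolvent identity (Lemma \ref{lem:diff_inv}) with Lemma \ref{lem:mthet}, exploiting the fact that the defining matrices $M_W \bydef \mathcal{G}_W(z)^{-1}$ and $M_\infty \bydef \tilde{\mathcal{G}}_e(z)^{-1}$ differ only in their bottom-right $p \times p$ block. Reading off \eqref{eq:new_det_def} and the block form of $\Sigma^\star$, the top-left block $\bA^\ast_{11} - zI_{k+1}$ and the off-diagonal blocks $\theta \odot \bA^\ast_{21}$ are identical on both sides (these are either deterministic or $\theta$-measurable); only the bottom-right block differs, containing the random Wishart-like quantity $\frac{\alpha}{p}(\bV_e \odot \bW \bW^\top + \operatorname{diag}(\nu_e)) + \frac{\alpha^2}{p} S_e \odot \theta\theta^\top - zI_p$ in $M_W$ versus $(M^\theta_d)^{-1}$ (plus the analogous finite-rank $\theta\theta^\top$ spike) in $M_\infty$.

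Using the resolvent identity and the fact that $M_\infty - M_W$ is supported on the bottom-right block, for any deterministic test matrix $A$,
\begin{equation}
\Tr(A[\mathcal{G}_W(z) - \tilde{\mathcal{G}}_e(z)]) = \Tr(T \, [M_\infty - M_W]_{22}), \qquad T \bydef [\mathcal{G}_W(z) A \tilde{\mathcal{G}}_e(z)]_{22}.
\end{equation}
By Lemmas \ref{lem:op_norm} and \ref{lem:trace_in}, $\norm{T}_{\operatorname{tr}} \le \zeta^{-2}\norm{A}_{\operatorname{tr}}$ uniformly in $d$. After a Woodbury expansion that peels off the $\theta \theta^\top$-spike contribution, the remaining bulk trace takes the form $\Tr(T'[(M^\star_d)^{-1} - (M^\theta_d)^{-1}])$ for some $T'$ of comparable trace norm, and another resolvent identity rewrites this as $\Tr((M^\theta_d)^{-1}T'(M^\star_d)^{-1}[M^\theta_d - M^\star_d])$ -- exactly the object controlled by Lemma \ref{lem:mthet}, provided the effective test matrix is admissible for that lemma.

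The main obstacle is that $T$ (and hence $T'$) depends on the full random weight matrix $\bW$ through $\mathcal{G}_W(z)$, whereas Lemma \ref{lem:mthet} only accommodates $\theta$-measurable test matrices. My plan to resolve this is a bootstrapping decomposition $T = T_\theta + (T - T_\theta)$ with $T_\theta \bydef [\tilde{\mathcal{G}}_e(z) A \tilde{\mathcal{G}}_e(z)]_{22}$, which is $\theta$-measurable and hence directly admissible. The remainder $T - T_\theta$ factors through $\mathcal{G}_W(z) - \tilde{\mathcal{G}}_e(z)$, producing a self-consistent inequality in the quantity to be bounded that closes once the ambient trace-norm terms are collected. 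A cleaner alternative is to upgrade Lemma \ref{lem:mthet} to allow Lipschitz-in-$\bW$ test matrices by replaying its leave-one-out argument together with the Hanson--Wright concentration of Lemma \ref{lem:han_wright} (exactly as in Proposition \ref{prop:first_stag}). The Woodbury accounting of finite-rank $\theta \theta^\top$ corrections and the uniform bounds on $b^\star, \bV_d, \nu_d$ supply the remaining routine pieces.
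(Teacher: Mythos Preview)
Your reduction via two applications of the resolvent identity to a trace of the form $\Tr(B[M^\star_d - M^\theta_d])$, followed by an appeal to Lemma~\ref{lem:mthet}, is exactly the paper's argument: the paper applies Lemma~\ref{lem:diff_inv} twice, bounds the outer factors $\mathcal{G}_W$, $\tilde{\mathcal{G}}_e$, $(M^\star_d)^{-1}$, $(M^\theta_d)^{-1}$ in operator norm, and then invokes Lemma~\ref{lem:mthet} without further comment. You are right to flag that the effective test matrix $B$ depends on the full $W$ through $\mathcal{G}_W(z)$ and $(M^\star_d)^{-1}$, whereas Lemma~\ref{lem:mthet} as stated only accommodates $\theta$-measurable test matrices; the paper's proof is silent on this point.

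Your bootstrapping fix, however, does not close. After splitting $T = T_\theta + (T - T_\theta)$, the remainder unwinds to $\Tr\big(A'[\mathcal{G}_W - \tilde{\mathcal{G}}_e]\big)$ with $A' = A\,\tilde{\mathcal{G}}_e\, D_{\mathrm{ext}}$, where $D = (M^\theta_d)^{-1} - (M^\star_d)^{-1}$. Since $D$ contains the bulk term $\bV_e \odot WW^\top$, its operator norm is of order one, so $\norm{A'}_{\operatorname{tr}} \le C\norm{A}_{\operatorname{tr}}$ with a constant $C$ that is not contractive. Iterating yields a divergent geometric series: there is no small parameter to close the self-consistent inequality, because $M^\star_d - M^\theta_d$ is small only in the weak trace-pairing sense, never in operator norm. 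Your second route---replaying the leave-one-out argument behind Lemma~\ref{lemma:block_res} with a resolvent-type $W$-dependent test matrix, using that removing a single column $w_i$ perturbs both $R_{C,D}$ and the factors $\mathcal{G}_W$, $(M^\star_d)^{-1}$ inside $B$ by rank-$k$ corrections controllable via Sherman--Morrison and the concentration in Lemma~\ref{lem:han_wright} (as in Lemma~\ref{lem:chi_conc})---is the correct fix and supplies what the paper's proof tacitly assumes.
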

\begin{proof}
    Applying Lemma \ref{lem:diff_inv} twice, we obtain:
    \begin{align*}
        \mathcal{G}_W(z)-\tilde{\mathcal{G}}_e(z) &=  \frac{\alpha}{p}\mathcal{G}_W(z)((M^*_d)^{-1}-M^{\theta^{-1}_d})\tilde{\mathcal{G}}_e(z)\\
        & \frac{\alpha}{p}\mathcal{G}_W(z)(M^*_d)^{-1}(M^*-M^{\theta^{-1}_d})M^{\theta^{-1}_d}\tilde{\mathcal{G}}_e(z)
  \end{align*}
By Lemma \ref{lem:op_norm}, $\norm{\mathcal{G}_W(z)}\norm{\tilde{\mathcal{G}}_e(z)}$ are bounded by $\frac{1}{\zeta}$ while the norms of $\norm{(M^*_d)^{-1}}, \norm{(M^\theta)^{-1}_d}$ are bounded by constants due to Lemma \ref{lem:operator-norms}.
Therefore:
\begin{equation}
    \abs{\Tr(\mathcal{G}_W(z)A)-\Tr(\tilde{\mathcal{G}}_e(z)A)} \leq \frac{C}{\zeta^2} \abs{\Tr(\mathcal{G}_W(z)M^\star)-\Tr(\tilde{\mathcal{G}}_e(z) M^\theta_d)}
\end{equation}
Applying Lemma \ref{lem:mthet} then completes the proof. 
\end{proof}

\subsection{Self-consistent Equation for $\chi(\kappa)$}

$\tilde{\mathcal{G}}(z)$ is almost identical to the desired equivalent matrix $\mathcal{G}_e(z)$ in Theorem \ref{thm:det_eq}, except for $\chi_d(\kappa)$ still being dimension-dependent unknowns. We resolve this by utilizing \ref{prop:first_stag} to obtain a self-consistent equation for $\chi(\kappa)$

\begin{lemma}\label{lem:chi_self}
Let $\psi,b^\star$ be as defined in Lemma \ref{lemma:block_res}. Then, almost surely over $\kappa \sim \mathcal{N}(0,1)$, the following holds:
\begin{align*}
   \chi_d(\kappa,z)=  \sum_{q,q' \in[k]} \psi_{qq'}(V_d,\nu_d) c_1(\kappa,\zeta^{u}_{q})c_1(\kappa,\zeta^{u}_{q'})  + \sum_{q\in[k]}b^\star_{q}((V_d,\nu_d))\sum_{\ell\ge 2}  c_{\ell}^2(\kappa,\zeta^{u}_{q})+\mathcal{O}_\prec(\frac{\kappa^\ell}{\sqrt{d}}).
\end{align*}
\end{lemma}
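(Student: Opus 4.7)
\textbf{Proof proposal for Lemma \ref{lem:chi_self}.} The plan is to start from the definition $\chi_d(\kappa) = \E[\Tr(G^\mu_e R^\star_\kappa)]/p$ and successively substitute the deterministic equivalents already established in Propositions \ref{prop:first_stag} and \ref{prop:sec_stag}. Since $G^\mu_e$ differs from $G_e$ by a finite-rank Sherman--Morrison perturbation whose contribution to the normalised trace is $\mathcal{O}(1/p)$, and since $\|R^\star_\kappa\|_{\mathrm{tr}} = \mathcal{O}_\prec(\kappa^\ell)$ by Proposition \ref{prop:phi_moments} combined with Assumption \ref{ass:activation}, one can replace $G^\mu_e$ by the block-structured deterministic matrix $\tilde{\mathcal{G}}_e(z)$ from \eqref{eq:new_det_def} at the cost of an error $\mathcal{O}_\prec(\kappa^\ell/\sqrt{d})$ via Lemma \ref{lem:trace_in}. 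The problem is thus reduced to computing $\tfrac{1}{p}\Tr(\tilde{\mathcal{G}}_e(z) R^\star_\kappa)$ in closed form.

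Next, I would decompose $R^\star_\kappa$ into the block pieces of Proposition \ref{prop:phi_moments} and observe that only the $R^\star_{22}(\kappa)$ bulk block contributes at leading order: the blocks $R^\star_{11}, R^\star_{21}$ have rank at most $k+1$, and the $(\kappa^2-1)c_1 c_1^\top\odot \theta\theta^\top$ and $\tfrac{1}{d} c_2 c_2^\top$ terms inside $R^\star_{22}$ are finite rank with bounded trace norm, so each contributes $\mathcal{O}_\prec(1/p)$ against the uniformly bounded $\|\tilde{\mathcal{G}}_e\| \leq 1/\zeta$. It remains to handle the two dominant terms $\operatorname{diag}(\sum_{\ell\geq 2} c_\ell^2(\kappa,u))$ and $c_1(\kappa,u)c_1(\kappa,u)^\top\odot WW^\top$, paired against the $(2,2)$-block $M^\theta_d$ of $\tilde{\mathcal{G}}_e$; within $M^\theta_d$ the rank-$k$ Woodbury correction along $\theta$ contributes $\mathcal{O}_\prec(1/p)$, so effectively one pairs these terms with $\operatorname{diag}(b^\star/(\pi\beta))_e$ via Lemma \ref{lemma:block_res}.

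For the diagonal contribution, block-constancy of $c_\ell(\kappa,u)$ together with the law of large numbers $p_q/p \to \pi_q$ immediately yields $\tfrac{1}{p}\sum_j [b^\star_{u_j}/(\pi_{u_j}\beta)]\sum_{\ell\geq 2}c_\ell^2(\kappa,u_j) \to \tfrac{1}{\beta}\sum_q b^\star_q \sum_{\ell\geq 2} c_\ell^2(\kappa,\zeta^u_q)$. For the Hadamard term, I would decompose $c_1(\kappa,u) = \sum_q c_1(\kappa,\zeta^u_q)\chi^q$ with $\chi^q$ the block indicator, so that $c_1 c_1^\top\odot WW^\top = \sum_{q,q'} c_1(\kappa,\zeta^u_q) c_1(\kappa,\zeta^u_{q'})\, P_q WW^\top P_{q'}$, and then note that after cyclicity of trace,
\[
\tfrac{1}{p}\Tr\!\left(M^\theta_d\, P_q WW^\top P_{q'}\right) = \tfrac{1}{\beta}\cdot \tfrac{1}{d}\Tr\!\left(P_{q'}M^\theta_d P_q WW^\top\right),
\]
which is (up to the $1/\beta$ factor and the $\mathcal{O}_\prec(1/p)$ Woodbury correction) exactly the bilinear statistic $K^*_{qq'}(V_d,\nu_d)$ of Lemma \ref{lemma:block_res}. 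Invoking the second part of that lemma gives $K^*_{qq'} = \psi_{qq'}(V_d,\nu_d) + \mathcal{O}_\prec(1/\sqrt{d})$, and summing the two contributions yields the claimed formula (with the standard $\beta\chi$ convention of Definition \ref{def:equivalent}). The $\kappa^\ell$ growth of the error is tracked through the polynomial bounds on $c_\ell(\kappa,\zeta^u_q)$ from Assumption \ref{ass:activation}, and the almost-sure upgrade is inherited from the concentration in Lemma \ref{lem:chi_conc} combined with $\sup_{\mu\in[n]}|\kappa_\mu| = \mathcal{O}_\prec(\sqrt{\log d})$.

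The main obstacle is the Hadamard term $c_1 c_1^\top\odot WW^\top$: a naive bound replacing $WW^\top$ by its expectation $I_p$ would collapse all off-diagonal cross-block correlations and lose the $L = (V^{\star\,-1}+\operatorname{diag}(b^\star))^{-1}$ contribution present in $\psi$. Bypassing this loss is precisely the reason Lemma \ref{lemma:block_res} goes beyond a pure resolvent equivalent and additionally characterises the bilinear statistic $K^*$; any self-consistent equation for $\chi$ must pass through this refinement, which relies crucially on the finite-support block structure of $u$ imposed by Assumption \ref{ass:init} and on the leave-one-neuron-out identities already developed in Section \ref{app:proof:one}.
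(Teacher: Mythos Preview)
There is a genuine gap. You invoke Proposition \ref{prop:sec_stag} to pass from $G_e$ to the second-stage equivalent $\tilde{\mathcal{G}}_e$, but that proposition requires the test matrix $A$ to be \emph{deterministic}, whereas $R^\star_\kappa$ depends on $W$ through its bulk term $c_1(\kappa,u)c_1(\kappa,u)^\top\odot WW^\top$. After your replacement, the $(2,2)$-block you pair against is (modulo the rank-$k$ Woodbury correction in $\theta$) the diagonal matrix $\operatorname{diag}(b^\star/(\pi\beta))_e$, so the cross-block traces $\tfrac{1}{d}\Tr\bigl(P_{q'} M^\theta_d P_q\, WW^\top\bigr)$ vanish for $q\neq q'$ and equal $b^\star_q$ on the diagonal. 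You therefore recover only the $\operatorname{diag}(b^\star)$ part of $\psi$ and drop precisely the $-L\odot b^\star(b^\star)^\top$ correction you yourself flag in the last paragraph. The step that breaks is the identification with $K^*_{qq'}$: in Lemma \ref{lemma:block_res} the statistic $K^*$ is defined through the \emph{random} block resolvent $M^* = (V_e\odot WW^\top + \operatorname{diag}(\nu_e) - zI)^{-1}$, not through its deterministic surrogate $M^\theta_d$, and the two differ by exactly the $L$-term when traced against $WW^\top$.

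The paper's route is to stop at the first-stage equivalent $\mathcal{G}_W$ from Proposition \ref{prop:first_stag}; this is legitimate because $R^\star_\kappa$ is independent of $X$ (it depends only on $W$ and $\kappa$), which is all that proposition requires. The Schur $(2,2)$-block of $\mathcal{G}_W$ is, up to the finite-rank $\alpha S\odot\theta\theta^\top$ perturbation, exactly the random $M^*$; pairing it with $c_1 c_1^\top\odot WW^\top$ then genuinely produces the statistics $K^*_{qq'}$, and the second assertion of Lemma \ref{lemma:block_res} delivers $\psi$. A minor side remark: $\|R^\star_\kappa\|_{\mathrm{tr}}$ is of order $p\,\kappa^\ell$, not $\kappa^\ell$; it is the operator norm that is $\mathcal{O}_\prec(\kappa^\ell)$. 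This is harmless after dividing by $p$, but the bound as you stated it is off by a factor $p$.
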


\begin{proof}
We first recall that $\norm{R^\star_\kappa}_2 = 
\mathcal{O}_\prec(\kappa^\ell)$ for some $\ell \in \mathbb{N}$. Now, note that $\chi_d(\kappa) = \Ea{\Tr( \tilde{\mathcal{G}}_e(z)R^\star_\kappa)}$ involves dependency on $W$ in $R^\star_\kappa$. Therefore, we cannot directly apply Proposition \ref{prop:first_stag} and must resort to Proposition \ref{prop:first_stag}.
 we have:
\begin{equation}
     \chi_d(\kappa) = \Ea{\frac{1}{p}\Tr( \mathcal{G}_W(z)R^\star_\kappa)} +\mathcal{O}_\prec(\frac{\kappa^\ell}{\sqrt{d}})
\end{equation}
    We start by expressing $\mathcal{G}_W(z)$ through a Schur-complement decomposition:
    \begin{equation}\label{eq:schur_chi}
    \mathcal{G}_W(z) = \begin{bmatrix}
        C_W & -C_W Q_W^\top\\
        -Q_W C_W & \tilde{G}_W + Q_QC_WQ_W^\top,
    \end{bmatrix},
    \end{equation},
where:
\begin{equation}
    \tilde{G}_W = ((\bV_e \odot \bW \bW ^\top \!+ \diag{\nu_d}+\lambda \bI_p))+\alpha S \odot \theta\theta^\top-zI)^{-1},
\end{equation}
and $C_W, Q_W$ in Equation \eqref{eq:schur_chi} contribute only finite-rank components of bounded operator norm. Since $\norm{R^\star_\kappa}_2 = 
\mathcal{O}_\prec(\kappa^\ell)$, such components contribute $\mathcal{O}(\frac{1}{p})$ to $\Ea{\frac{1}{p}\Tr{ \mathcal{G}_W(z)R^\star_{22}(\kappa)}}$ and can therefore be ignored. We're left with:
    \begin{equation}
        \frac{1}{p}\Ea{\Tr{\tilde{G}_W R^\star_{22}(\kappa)}}
    \end{equation},
which evaluates to:
\begin{align*}
&\underbrace{\Ea{\Tr{(c_1(\kappa, u)c_1(\kappa, u)^\top) \odot W W ^\top )(\bV_e \odot \bW \bW ^\top \!+ \diag{(\nu_d)}+\lambda \bI_p)^{-1}}}}_{T_1}\\&+ \underbrace{\Ea{\Tr{\operatorname{diag}(\bigg(\sum_{k \ge 2} c^2_k(\kappa,u)\bigg) (\bV_e \odot \bW \bW ^\top \!+ \diag{(\nu_d)}+\lambda \bI_p)^{-1}}}}_{T_2}+\mathcal{O}(\frac{1}{p}),
\end{align*}
where we again supressed contributions from finite-rank terms. Since $\operatorname{diag}(\bigg(\sum_{k \ge 2} c^2_k(\kappa,u)\bigg)$ is independent of $W$, by Lemma \ref{lemma:block_res}, $T_2$ simplifies to:
\begin{equation}
   \sum_{q\in[k]}b^\star_{q}((V_d,\nu_d))\sum_{\ell\ge 2}  c_{\ell}^2(\kappa,\zeta^{u}_{q})
\end{equation}
While $T_1$ can be decomposed into terms of the form $\Tr( e^i \odot W M^\ast W \odot e^j)$, which converge to $\psi(V,\nu)$ by Lemma \ref{lemma:block_res}.

\end{proof}


\subsection{Self-consistent equations for $V^\star, D^\star$}

Using Lemma \ref{lemma:block_res}, we obtain a deterministic equivalent $\mathcal{G}'_e$ that doesn't depend on the realizations of $X,W$.
However, the quantities $V_d, D_d$
still depend on dimension $d$ due to the dimension dependent definition of $\chi_d(\kappa)$. While \ref{lem:chi_self} defines a self-consistent equation for $\chi(\kappa)$, it remains an infinite-dimensional (functional) order parameter. Instead, we show that we can directly construct dimension-independent self-consistent equations on $V_d, \nu_d$: 

\begin{proposition}\label{prop:fix_pt_sat}
For $C,D \in \R^{k,k}$, let $\psi_{i,j}$ be as defined in Theorem \ref{thm:det_eq}. Define:
\begin{equation}
    \chi_{C,D}(\kappa) = \frac{1}{\beta}\sum_{i=1,j=1}^{k} \psi_{i,j}(C,D)c_1(u_i,\kappa)c_1(u_j,\kappa)+\frac{1}{\beta}\sum_{i=1}^{k}(b_{ii}(\sum_{k\ge 2}  c_k^2(\kappa, u_i))
\end{equation}
Then,  $V, D$ defined as: 
\begin{align*}
    V = \Ea{\frac{\alpha}{1+\chi_d(\kappa)}c_1(u_i,\kappa) c_1(u_J,\kappa)},
\end{align*}
\begin{align*}
    D= \Ea{\frac{\alpha}{1+\chi_d(\kappa)} \sum_{k \geq 2}c^2_k(u_i,\kappa) }
\end{align*}
satisfy:
\begin{equation}
  V_{i,j}  = \Ea{\frac{\alpha}{1+\chi_{V, D}(\kappa)}c_1(u_i,\kappa) c_1(u_J,\kappa)} + \mathcal{O}_{\prec}(\frac{1}{\sqrt{d}})
\end{equation}
\begin{equation}
  D_{i,i}  = \Ea{\frac{\alpha}{1+\chi_{V, D}(\kappa)}\sum_{k \geq 2}c^2_k(u_i,\kappa)}+ \mathcal{O}_{\prec}(\frac{1}{\sqrt{d}}).
\end{equation}
\end{proposition}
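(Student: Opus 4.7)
The plan is to substitute the asymptotic representation for $\chi_d(\kappa)$ provided by Lemma \ref{lem:chi_self} directly into the definitions of $V, D$ and then bound the resulting perturbation under a Gaussian expectation over $\kappa$. By construction,
$V_{i,j} = \mathbb{E}_\kappa[\alpha\, c_1(u_i,\kappa) c_1(u_j,\kappa)/(1+\chi_d(\kappa))]$
and analogously for $D_{i,i}$, so the task reduces to quantifying the error in replacing $\chi_d$ with the candidate $\chi_{V,D}$ of the statement. A preliminary bookkeeping step is to reconcile the normalisation between Lemma \ref{lem:chi_self} and the definition of $\chi_{V,D}$ in Proposition \ref{prop:fix_pt_sat} (the latter carries an extra factor of $1/\beta$, which must be absorbed consistently into the definitions of $\psi$ and $b^\star$ so that the two expressions agree). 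After this adjustment, Lemma \ref{lem:chi_self} reads $\chi_d(\kappa) = \chi_{V_d,\nu_d}(\kappa) + \mathcal{O}_\prec(\kappa^\ell/\sqrt d)$.

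The core manipulation is the identity
\[
\frac{1}{1+\chi_d(\kappa)} - \frac{1}{1+\chi_{V,D}(\kappa)} = \frac{\chi_{V,D}(\kappa) - \chi_d(\kappa)}{(1+\chi_d(\kappa))(1+\chi_{V,D}(\kappa))}.
\]
Because both $\chi_d$ and $\chi_{V,D}$ are non-negative on the physical analytic branch (Definition \ref{def:equivalent}(ii) guarantees $b^\star_q \geq 0$ and the remaining contributions are sums of squared Hermite projections), the denominator is bounded below by $1$, so the left-hand side is pointwise dominated by $|\chi_d(\kappa) - \chi_{V,D}(\kappa)|$. Multiplying by the integrands $\alpha c_1(u_i,\kappa) c_1(u_j,\kappa)$ and $\alpha \sum_{k \geq 2} c_k^2(u_i,\kappa)$ respectively, and integrating against the standard Gaussian measure on $\kappa$, reduces the claim to showing that
$\mathbb{E}_\kappa[|\chi_d(\kappa) - \chi_{V,D}(\kappa)| \, P(\kappa)] = \mathcal{O}_\prec(1/\sqrt d)$
for every polynomial $P$.

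The step I expect to be the main obstacle is lifting the pointwise $\mathcal{O}_\prec(\kappa^\ell/\sqrt d)$ bound to this expectation bound. Two ingredients are required. First, under Assumption \ref{ass:activation} the shifted Hermite coefficients grow at most polynomially in $\kappa$: Lipschitz-ness of $\sigma$ gives $|c_1(u_i,\kappa)| \leq C(1+|\kappa|)$, and Parseval yields $\sum_{k \geq 2} c_k^2(u_i,\kappa) \leq \mathbb{E}_z[\sigma^2(z + u_i \kappa)] = O(1 + \kappa^2)$, so both integrands are polynomially bounded. Second, the $\mathcal{O}_\prec$ in Lemma \ref{lem:chi_self} holds off a dimension-dependent exceptional set $B_d$ of Gaussian measure at most $d^{-k}$ for any $k$. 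On the good set $B_d^c$ one multiplies the pointwise $d^{\epsilon - 1/2} \kappa^\ell$ bound by the polynomial integrand and uses finiteness of all Gaussian moments of $\kappa$ to obtain the desired $\mathcal{O}_\prec(1/\sqrt d)$ contribution; on $B_d$ the deterministic crude bound $|\chi_d(\kappa) - \chi_{V,D}(\kappa)| \leq \chi_d(\kappa) + \chi_{V,D}(\kappa)$, which itself has polynomial growth, combined with Cauchy--Schwarz and the $d^{-k}$ decay of $\Pr(\kappa \in B_d)$ for $k$ chosen large enough, suppresses the residual. Running this argument separately for the $V$-equation and the $D$-equation completes the proof.
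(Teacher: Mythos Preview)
Your approach is correct and in fact somewhat more direct than the paper's. The paper does not integrate the pointwise bound from Lemma \ref{lem:chi_self} against the Gaussian measure as you do; instead it replaces the expectation over $\kappa$ by an empirical average over $n'\propto d$ i.i.d.\ samples $\kappa_1,\dots,\kappa_{n'}$, uses the sub-Gaussian tail to cap $\sup_i|\kappa_i|$ at $\sqrt{\log d}$, and then invokes Lemma \ref{lem:chi_self} samplewise so that the $\kappa^\ell$ factor becomes polylogarithmic and is absorbed into $\mathcal{O}_\prec(1/\sqrt d)$. Your direct integration avoids this detour and uses only the finiteness of Gaussian moments, which is cleaner.

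Two small points to tighten. First, recall from \eqref{eq:chi_mu} that $\chi_d(\kappa)$ is already a full expectation over $(X,W)$, so the bound in Lemma \ref{lem:chi_self} is in fact deterministic in $d$ (the $\mathcal{O}_\prec$ reduces to a $d^\epsilon$ slack for each $\epsilon>0$); your exceptional-set argument with $B_d$ is therefore unnecessary, though harmless. Second, your justification that the denominator $(1+\chi_d)(1+\chi_{V,D})$ is bounded below by $1$ via non-negativity is only literally correct for real $z<0$; for general $z\in\mathbb{C}\setminus\mathbb{R}^+$ you should instead invoke the observation used in the proof of Proposition \ref{prop:first_stag}, namely that $R^\star_\kappa\succeq 0$ and the resolvent has positive-definite Hermitian part, which gives $|1/(1+\chi)|\le 1$ directly in the complex setting.
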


\begin{proof}
Note that $\frac{\alpha}{1+\chi_{V, D}(\kappa)}$ is uniformly Lipschitz in $\kappa$. Consider independent sample $\kappa_1, \cdots, \kappa_n'$ for some $n' \propto d$. Then $\sup \abs{\kappa_i} = \sqrt{\log d}$ and:
\begin{equation}
    \Ea{\frac{\alpha}{1+\chi_{V, D}(\kappa)}c_1(u_i,\kappa) c_1(u_J,\kappa)} = \frac{1}{n'} \sum_{i=1}^{n'} \frac{\alpha}{1+\chi_{V, D}(\kappa_i)}c_1(u_i,\kappa) c_1(u_J,\kappa) + \mathcal{O}_\prec(\frac{1}{\sqrt{d}}).
\end{equation}
   Proposition \ref{prop:fix_pt_sat} then follows from Proposition \ref{prop:sec_stag} and the self-consistent equation for $\chi(\kappa)$ in Lemma \ref{lem:chi_self}.
\end{proof}

Next, we show that the above fixed point equations are contractive,
allowing us to translate approximate satisfiability to distance of $V_d, D_d$ from the unique fixed points
satisfied by $V^\star, D^\star$ defined in Theorem \ref{thm:det_eq}:
\begin{lemma}\label{lem:contrac}
Let $z \in \mathbb{C}^+$. Define $\mathcal{S}(z)$ as the set of $C,D \in {\mathbb{C}^{-}}^{k \times k} \times {\mathbb{C}^{-}}^k$ satisfying:
\begin{itemize}
    \item $b^\star(C,D) \in \mathbb{C}^+$.
    \item $\abs{b^\star(C,D)} \leq \frac{\pi_i}{\beta\zeta}$
\end{itemize}
Then, there exists $C>0$ such that for $\zeta = \text{Im}(z)> C$, the fixed point iteration defined in Proposition \ref{prop:fix_pt_sat} i.e:
    \begin{align*}
        &F: {\mathbb{C}^{-}}^{k \times k} \times {\mathbb{C}^{-}}^k \rightarrow {\mathbb{C}^{-}}^{k \times k} \times {\mathbb{C}^{-}}^k\\
        &F(C,D) = \Ea{\frac{\alpha}{1+\chi_{V, D}(\kappa)}c_1(\vec{u},\kappa) c_1(\vec{u},\kappa)^\top},  \Ea{\frac{\alpha}{1+\chi_{C, D}(\kappa)}\sum_{k \geq 2}c^2_k(\vec{u},\kappa)},
     \end{align*}
     is contractive in $\mathcal{S}(z)$, i.e for any $C, D$:
     \begin{equation}
         \norm{F(C_1,D_1)-F(C_2,D_2)} \leq \norm{(C_1, D_1)- (C_2, D_2)}
     \end{equation}
\end{lemma}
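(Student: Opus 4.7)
The plan is to prove contractivity via a Jacobian argument: I bound the differential $DF(C,D)$ uniformly on $\mathcal{S}(z)$ by some quantity $\rho(\zeta)$ that vanishes as $\zeta = \operatorname{Im}(z) \to \infty$, and conclude via the mean value inequality (the convexity of $\mathcal{S}(z)$ follows directly from its defining inequalities). The key simplification driving the proof is that for large $\zeta$ every order parameter is of order $1/\zeta$, so $F$ becomes a small perturbation of a constant map whose derivative is explicitly controllable.

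The technical ingredients are as follows. \emph{(i) A priori smallness}: by the constraint $|b^\star_q| \leq \pi_q/(\beta\zeta)$ built into $\mathcal{S}(z)$, one obtains from Lemma \ref{lem:op_norm} applied to the matrix $(C^{-1}+\operatorname{diag}(b^\star))^{-1} + \operatorname{diag}(D) - zI$ that $\|L(C,D)\|$ stays bounded while $\|\psi(C,D)\| = O(1/\zeta)$. Since $\chi_{C,D}(\kappa)$ is linear in entries of $\psi$ and $b^\star$ with coefficients $c_\ell(\kappa,\zeta^u_q)$ of at most polynomial growth in $\kappa$ (via Assumption \ref{ass:activation}), one gets $|\chi_{C,D}(\kappa)| \leq \Gamma(\kappa)/\zeta$ pointwise, with $\Gamma(\kappa)$ having finite moments of all orders under the Gaussian. \emph{(ii) Implicit differentiation} of $b^\star_q = \pi_q \beta M^{-1}_{qq}$, where $M = (C^{-1}+\operatorname{diag}(b^\star))^{-1} + \operatorname{diag}(D) - zI$: combining $\partial M^{-1} = -M^{-1}(\partial M)M^{-1}$ with the recursive $b^\star$-dependence yields a linear system $(I - \mathcal{T})\nabla_{C,D}\, b^\star = (\text{explicit source})$ where $\|\mathcal{T}\| = O(\|M^{-1}\|^2) = O(1/\zeta^2)$; Neumann inversion then gives $\|\nabla_{C,D}\, b^\star\| = O(1/\zeta^2)$, and by chain rule through $\psi = b^\star - L \odot b^\star(b^\star)^\top$ one obtains $\|\nabla_{C,D}\,\chi_{C,D}(\kappa)\| \leq \tilde{\Gamma}(\kappa)/\zeta$. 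Substituting into the identity
\begin{equation*}
F_1(C_1,D_1)_{ij} - F_1(C_2,D_2)_{ij} = \mathbb{E}_\kappa\!\left[\frac{\alpha\,\bigl(\chi_{C_2,D_2}(\kappa) - \chi_{C_1,D_1}(\kappa)\bigr)}{(1+\chi_{C_1,D_1}(\kappa))(1+\chi_{C_2,D_2}(\kappa))}\, c_1(u_i,\kappa) c_1(u_j,\kappa)\right]
\end{equation*}
and its analogue for the $D$-component, together with the lower bound $|1+\chi_{C,D}(\kappa)| \geq 1/2$ (from the smallness of $\chi$), one arrives at $\|F(C_1,D_1) - F(C_2,D_2)\| \leq \rho(\zeta)\|(C_1,D_1) - (C_2,D_2)\|$ with $\rho(\zeta) = O(1/\zeta)$, which is strictly less than one for $\zeta$ large enough.

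The main obstacle will be the implicit differentiation step: invertibility of $I - \mathcal{T}$ rests critically on the a priori bound $\|M^{-1}\| = O(1/\zeta)$, which itself uses \emph{all} defining constraints of $\mathcal{S}(z)$ --- in particular that $b^\star \in (\mathbb{C}^+)^k$ preserves the sign structure of imaginary parts so that $M$ does not degenerate at the scale $\zeta$. A secondary technical point is verifying $F$-invariance $F(\mathcal{S}(z)) \subset \mathcal{S}(z)$, needed for the iteration to be well-defined; this amounts to substituting the output into the defining inequalities and using the same smallness estimates together with the analytic (Stieltjes-transform-like) structure inherited by $b^\star$ from its definition.
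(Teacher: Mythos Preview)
Your approach is essentially the same as the paper's: both exploit that on $\mathcal{S}(z)$ the constraint $|b^\star_q|\le \pi_q/(\beta\zeta)$ forces $|\chi_{C,D}(\kappa)|=O(1/\zeta)$, deduce from this (and from the structure of the inner fixed point for $b^\star$) that $\|b^\star(C_1,D_1)-b^\star(C_2,D_2)\|=O(\zeta^{-2})\|(C_1,D_1)-(C_2,D_2)\|$, feed this back into $\chi$ to get an $O(1/\zeta)$ Lipschitz constant, and conclude.  The paper derives these Lipschitz bounds directly via the resolvent identity (Lemma~\ref{lem:diff_inv}) applied twice to $b^\star_q=\pi_q\beta\,M^{-1}_{qq}$, whereas you package the same computation as an implicit-differentiation/Neumann-series argument; the resulting estimates and constants are identical.

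There is one point where your framing is slightly weaker than the paper's.  You invoke the mean value inequality along the segment joining $(C_1,D_1)$ to $(C_2,D_2)$, and claim that ``the convexity of $\mathcal{S}(z)$ follows directly from its defining inequalities.''  But $\mathcal{S}(z)$ is the preimage under the nonlinear map $(C,D)\mapsto b^\star(C,D)$ of a convex region, and preimages of convex sets under nonlinear maps are not convex in general---so this step is not justified as stated.  The paper avoids the issue entirely: since the resolvent identity gives an \emph{algebraic} expression for $b^\star(C_1,D_1)-b^\star(C_2,D_2)$ (and hence for $\chi_{C_1,D_1}-\chi_{C_2,D_2}$) in terms of quantities evaluated only at the endpoints, no path between them is ever needed, and no convexity hypothesis enters.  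If you want to keep your differential formulation, the cleanest fix is to replace the mean value inequality by the same endpoint-only resolvent identity; alternatively, you can enlarge $\mathcal{S}(z)$ to the convex ambient box $\{(C,D): |C_{ij}|,|D_i|\le K'\}$ (which the paper shows contains $F(\mathcal{S}(z))$) and run your Jacobian bound there.
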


\begin{proof}
From the definition of $\chi(\kappa)$, we directly obtain that for any $C,D$ such that $\abs{b} \leq \frac{\beta\pi}{\zeta}$
 \begin{equation}\label{chi:cont}
     \abs{\chi(\kappa, b_1)-\chi(\kappa, b_2)} \leq C_1\norm{b_1-b_2} +C_2\abs{b}\norm{C_1, D_1- C_2, D_2}
 \end{equation}
The restriction $\abs{b^{\star}_i(z)} \leq \frac{\pi_i}{\beta\zeta}$ therefore implies that $\abs{\chi_{C,D}(\kappa)} \leq \frac{K}{\zeta}$ for some $K>0$. 
Now, since:
\begin{equation}
    \abs{\frac{1}{1+\chi_{V,D}(\kappa)}} \leq 1-\abs{\chi_{V,D}(\kappa)},
\end{equation}
for small enough $\abs{\chi_{V,D}(\kappa)}$, we obtain the following entry-wise upper -bound for any-feasible solution of $C,D$:
\begin{equation}\label{eq:lowerb}
   \abs{C}  \leq K', \abs{D} \leq K',
\end{equation}
for some $K' > 0$. This ensures that the boundedness condition on $C,D$ in Lemma \ref{lemma:block_res} applies. 

Recall the definition of $b^\star(C,D)$ :
\begin{equation}
    b^\star(C,D) = \pi_{u_i} \beta ((C^{-1}+ \operatorname{diag}({b}))^{-1}+ (\operatorname{diag}(D)-z I_p))^{-1}_{p,p}
\end{equation}

Subsequently, we may apply Lemma \ref{lem:diff_inv} to obtain: (See also Lemma \ref{lem:inner_contr} in the proof of Lemma \ref{lemma:block_res})

 \begin{equation}
    \norm{b(C_1,D_1)-b(C_2,D_2)} \leq \frac{K''}{\zeta^2}\norm{C_1, D_1- C_2, D_2},
 \end{equation},
 for some constant $K''>0$
Combining the above with \ref{chi:cont} yields:
\begin{equation}\label{chi:cont}
     \abs{\chi(\kappa,C_1,D_1)-\chi(\kappa, C_2,D_2)} \leq \frac{K'''}{\zeta}\norm{C_1, D_1- C_2, D_2}
 \end{equation},
 for some constant $K'''>0$.
    
\end{proof}

\subsection{Proof of Theorem \ref{thm:det_eq}}

Lemma \ref{lem:contrac} and the Banach-fixed point theorem imply that $F(C,D)$ admit unique fixed points $V^\star(z), \nu^\star(z)$ for $\zeta > C$ and that:
\begin{equation}
    V_d(z) \xrightarrow[a.s]{}V^\star(z), \quad \nu_d(z)\xrightarrow[a.s]
 \nu^\star(z)
\end{equation}
Next, we note that from Proposition \ref{prop:sec_stag}, for each $q \in [k]$, $b^\star(z)_q$ can be expressed as:
\begin{equation}
    b^\star(z)_q  = \sum_{i=1}^p \frac{(v_i^\top e^q)^2}{z-\lambda_i} + \mathcal{O}_\prec(\frac{1}{\sqrt{d}}).
\end{equation}

Theorem \ref{thm:det_eq} then follows by noting that by the standard properties of Stieltjes transforms \cite{bai2008large}, $b^\star(z), V^\star(z),  \nu^\star(z)$ admit unique analytic-continuations to $\mathbb{C}/\mathbb{R}^+$.

\section{Generalization Error}\label{sec:gen}

Having obtained the full-deterministic equivalent in Theorem \ref{thm:det_eq}, we now show how it can be exploited to yield the asymptotic generalization error after a gradient step.

We will proceed as follows:
\begin{itemize}
    \item Use the covariance approximation in Proposition \ref{prop:phi_moments} to identify certain ``statistics" involving the resolvent that characterize the asymptotic generalization error.
    \item Relate these statistics to projections of the extendent resolvent onto certain deterministic matrices. Introduce perturbation terms to extract more complicated functionals.
    \item Replace these statistics with the corresponding quantities obtained through the deterministic equivalent $\mathcal{G}_e$, and where necessary average over $\theta$ to obtain the asymptotic generalization as a function of the parameters $V^\star, \nu^\star, b^\star$ in Theorem \ref{thm:det_eq}.
\end{itemize}

\subsection{Order Parameters for Generalization Error}

We start by using the covariance approximation \ref{prop:phi_moments} to simplify the expression for the generalization error with an arbitrary fixed choice of $a \in \R^p$

\begin{lemma}\label{lemma:eg_non_un} 

Let  $a \in \R^p$ be a fixed vector such that $\norm{a}/\sqrt{p} = \mathcal{O}(1)$ and let $e^1, \cdots, e^k$ denote the ``spike" directions defined in Equation \eqref{eq:def:eq}.
Define for 
$i \in [k]$:
\begin{equation}\label{eq:tau_observables}
    \tau^d_{0,i} \bydef  \frac{a^\top e_i}{\sqrt{p}}, \qquad \tau^d_{1,i} \bydef \frac{a^\top e_i \odot w^\star}{\sqrt{p}}, \qquad \tau^d_{2} \bydef \frac{ a^\top C \odot WW^\top a}{p} \quad \text{and}\quad \tau^d_4 \bydef \frac{a^\top D a}{p},
\end{equation}
where $C = \Ea{c_1(\kappa, u)c_1(\kappa, u)^\top}$ and $D_{j,j,} = \Eb{\kappa}{\sum_{k\ge 2} c_k^2(\kappa, u_j)}$. Then, the generalization error can be expressed as:
    \begin{equation}\label{eq:eg}
    \begin{aligned}
        \Ea{e_g} &= \Ea{\left[\sigma_\star(s) - \sum_{j=1}^k c_0(\kappa,u_j) \tau_{0,j} - \sum_{j=1}^k c_1(\kappa, u_j) \kappa \tau_{1,j})\right]^2} + \tau_3\\ &- \Ea{(\sum_{j=1}^k c_1(\kappa, u_j)\tau_{1,j})^2 + (\sum_{j=1}^k c_1(\kappa, u_j)\tau_{2,j})^2} + \tau_3  + \mathcal{O}_\prec(d^{-1/2}).
        \end{aligned}
    \end{equation}
\end{lemma}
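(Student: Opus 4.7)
The strategy is a standard bias--variance decomposition via tower-property conditioning on the scalar $\kappa \coloneqq x_{\rm new}^\top w^\star$, followed by substitution of the Gaussian-equivalent conditional moments supplied by Proposition~\ref{prop:phi_moments}. Writing $f(x_{\rm new}; W^1, a) = \tfrac{1}{\sqrt{p}}\, a^\top \phi_{\rm new}$ and expanding the square, one obtains
\begin{equation}
\varepsilon_{\rm gen} \;=\; \E_\kappa\!\left[\bigl(g(\kappa) - \tfrac{1}{\sqrt{p}}\, a^\top \E[\phi_{\rm new} \mid \kappa]\bigr)^2\right] + \E_\kappa\!\left[\tfrac{1}{p}\, a^\top \mathrm{Cov}(\phi_{\rm new}\mid\kappa)\, a\right],
\end{equation}
reducing the task to evaluating these two conditional expectations and identifying the resulting scalar observables in~\eqref{eq:tau_observables}.

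For the bias term I would use the conditional-mean expansion derived en route to Proposition~\ref{prop:phi_moments} (cf.~\eqref{eq:E_phi_0}), namely
\begin{equation}
\E[\phi_i \mid \kappa] \;=\; c_0(\kappa, u_i) + c_1(\kappa, u_i)\,\kappa\,(w_i^\top w^\star) + \mathcal{O}_\prec(d^{-1/2}).
\end{equation}
Grouping neurons by their block index $u_i = \zeta^u_q$ through the spike vectors $e^q$ in~\eqref{eq:def:eq} collapses the two sums into $\sum_{q} c_0(\kappa,\zeta^u_q)\,\tau^d_{0,q}$ and $\kappa\sum_{q} c_1(\kappa,\zeta^u_q)\,\tau^d_{1,q}$ respectively (where the $w^\star$ appearing in the definition of $\tau^d_{1,q}$ is understood through its pullback $\theta = W w^\star$), up to an $\mathcal{O}_\prec(d^{-1/2})$ remainder controlled by Cauchy--Schwarz and the hypothesis $\|a\|=\mathcal{O}(\sqrt{p})$. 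Squaring and taking the expectation over $\kappa$ yields the first bracketed term in~\eqref{eq:eg}.

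For the variance term I would substitute the approximation of $R^\star_{22}(\kappa)$ from Proposition~\ref{prop:phi_moments} (after suitably accounting for the block-mean part $\bar\phi$) and split into three pieces. The diagonal contribution $\operatorname{diag}\bigl(\sum_{k\ge 2} c_k^2(\kappa,u)\bigr)$ and the bulk Hadamard piece $c_1(\kappa,u)c_1(\kappa,u)^\top \odot WW^\top$ depend on $\kappa$ only through the scalars $c_\ell(\kappa,u_i)$; after the outer $\kappa$-expectation these collapse onto the constants $C$ and $D$ of~\eqref{eq:tau_observables}, producing $\tau^d_2$ and $\tau^d_4$ (jointly playing the role of the $\tau_3$ appearing in~\eqref{eq:eg}). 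The spike piece $(\kappa^2-1)\,c_1(\kappa,u)c_1(\kappa,u)^\top \odot \theta\theta^\top$ is a rank-one quadratic form in $a$; block-grouping converts $\tfrac{1}{\sqrt{p}}\sum_i a_i c_1(\kappa,u_i)\theta_i$ into $\sum_q c_1(\kappa,\zeta^u_q)\,\tau^d_{1,q}$, yielding $\E_\kappa[(\kappa^2-1)(\sum_q c_1(\kappa,\zeta^u_q)\tau^d_{1,q})^2]$. The ``$-1$'' part provides exactly the subtracted correction $-\E[(\sum_j c_1(\kappa,u_j)\tau_{1,j})^2]$ displayed in~\eqref{eq:eg}, while the ``$\kappa^2$'' part combines with the cross term produced when squaring the bias, producing the $c_1(\kappa,u_j)\kappa\,\tau_{1,j}$ contribution inside the first bracket. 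The residual $\tfrac{1}{d}\,c_2(\kappa,u)c_2(\kappa,u)^\top$ has operator norm $\mathcal{O}_\prec(d^{-1})$ and is absorbed into the error.

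The main technical obstacle I anticipate is this last reshuffling: the $\kappa^2$ part of the spike variance contribution must be combined correctly with the cross term coming from the squared bias, so that no spurious factor of $p$ (which would diverge in the proportional limit) survives and the final expression assumes the clean form~\eqref{eq:eg}. Once this bookkeeping is in place, all error terms are uniformly $\mathcal{O}_\prec(d^{-1/2})$ by Proposition~\ref{prop:phi_moments}, Assumption~\ref{ass:activation}, and the hypothesis $\|a\|/\sqrt{p}=\mathcal{O}(1)$, which propagates through the quadratic form via the trace-norm inequality (Lemma~\ref{lem:trace_in}).
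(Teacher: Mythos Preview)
Your proposal is correct and will reach the same conclusion, but the paper organizes the computation differently and thereby sidesteps precisely the ``reshuffling'' obstacle you flag at the end. Rather than splitting into bias and variance, the paper writes the prediction as $f(x;\tilde W,a)=\sum_{q}\tau_{0,q}\,\bar\phi^q_x + \tfrac{1}{\sqrt p}\,a^\top\tilde\phi_x$ and then packages the entire squared error $(y-f)^2$ as a single rank--one quadratic form on the \emph{extended} feature vector $\phi^e=(y,\bar\phi,\tilde\phi)$, namely $e_g=(\phi^e)^\top u_a u_a^\top \phi^e=\Tr(u_a u_a^\top\,\phi^e(\phi^e)^\top)$ with $u_a=[1,-\tau_0,\tilde a]$. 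One then invokes Proposition~\ref{prop:phi_moments} once to replace $\E[\phi^e(\phi^e)^\top\mid\kappa]$ by the full block matrix $R^\star(\kappa)$, and the formula~\eqref{eq:eg} drops out by reading off the blocks $R^\star_{11},R^\star_{21},R^\star_{22}$. In particular the cross term you would need to extract from the bias square and reattach to the $(\kappa^2-1)$ spike piece of the variance never has to be isolated: the off--diagonal block $R^\star_{21}$ already carries it. Your route is perfectly sound and arguably more transparent for a reader unfamiliar with the extended-resolvent machinery; the paper's is shorter because that machinery has already been set up.
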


\begin{proof}

The prediction at a point $x$ under the simplified updated weights is given by:
\begin{equation}
    f(x,\tilde{W},a) = \sum_{j=1}^k \tau_{0,j} \bar{\phi}^j_x + a^\top \tilde{\phi}_x
\end{equation}
The generalization error is then given by:
\begin{align*}
        \Ea{e_g(a)} &= \Ea{[\sigma_\star(s)-f(x,\tilde{W},a)]^2}
\end{align*}
$\Ea{e_g}$ can be equivalent expressed through the following quadratic form applied to the extended features $\phi_\mu^e$:
\begin{align*}
     \Ea{e_g(a)} &= \Ea{(\phi_\mu^e)^\top u_au_a^\top \phi_\mu^e}\\
     & = \Ea{\Tr(u_au_a^\top \phi^e_\mu(\phi_\mu^e)^\top)}\\
\end{align*}
where $u_a = [1,-\tau_0, \tilde{a}] \in \mathbb{R}^{p+k+1}$. By Proposition \ref{prop:phi_moments}, we have:
\begin{equation}
    \Ea{\Tr(u_au_a^\top \phi^e_\mu(\phi_\mu^e)^\top)} = \Ea{\Tr(u_a u_a^\top R^\star_\kappa)} + \mathcal{O}(\frac{\polylog d}{\sqrt{d}}).
\end{equation}

Expanding each of the terms in $R^\star_\kappa$ then yields Equation \eqref{eq:eg}.
\end{proof}
\subsection{Extended Resolvent to Generalization Error}

From Lemma \ref{lemma:eg_non_un}, we note that obtaining the limiting generalization error requires characterizing the functionals $\tau^d_{0,i},\tau^d_{1,i},\cdots$ when $\hat{a}$ is set as the ridge-regression estimator:
\begin{align}
        \label{eq:def:erm}
        \hat{\ba}_{\lambda}&=\underset{\ba\in\mathbb{R}^{p}}{\rm argmin} \sum\limits_{i\in[n]}\left(y_{i}-f(\bx_{i};\ba,\bW^{1})\right)^{2}+\lambda||\ba||_{2}^{2}\notag\\
        &=\left(\Phi^{\top}\Phi+\lambda \bI_{n}\right)^{-1}\Phi^{\top} \by .
    \end{align}

To extract these relevant functionals, we introduce certain perturbation terms in the extended resolvent.
\begin{align}
\label{eq:G_e}
    \vec{G}_e(\rho_1, \rho_2) \bydef \left((\Phi^e)^{\top}(\Phi^e)+   \lambda \vec{I} + \rho_1\vec{L}+\rho_2 \vec{D'}\right)^{-1}\in\mathbb{R}^{(p+1)\times (p+1)},
\end{align}
where the matrices $\vec{D}', \bL\in\mathbb{R}^{(p+1)\times (p+1)}$ are given by:
\begin{align}
    \vec{L} = \begin{bmatrix}
       \vec{0}_{k+1,k+1} & \vec{0}_{k+1, p}\\
        \vec{0}_{p, k+1}&   \vec{C}\odot \vec{W} \vec{W}^{\top} 
    \end{bmatrix},&&
    \vec{D}' = \begin{bmatrix}
        \vec{0}_{k+1, k+1} & \vec{0}_{k+1, p} \\
        \vec{0}_{p,k+1} & \Vec{D}
    \end{bmatrix},
\end{align}
with $C = \Ea{c_1(\kappa, u)c_1(\kappa, u)^\top}$ and $D_{j,j,} = \Eb{\kappa}{\sum_{k\ge 2} c_k^2(\kappa, u_j)}$.
We will demonstrate that the introduction of the perturbation terms $\vec{L},\vec{D}'$ allows extraction of all the relevant functionals in Lemma \ref{lemma:eg_non_un}.

\begin{proposition}\label{prop:gen_e_par}
Let $\hat{a}_\lambda$ denote the ridge-regression minimizer with regularization $\lambda > 0$ i.e:
\begin{align}
        \label{eq:def:erm}
        \hat{\ba}_{\lambda}&=\underset{\ba\in\mathbb{R}^{p}}{\rm argmin} \sum\limits_{i\in[n]}\left(y_{i}-f(\bx_{i};\ba,\bW^{1})\right)^{2}+\lambda||\ba||_{2}^{2}\notag\\
        &=\left(\Phi^{\top}\Phi+\lambda \bI_{n}\right)^{-1}\Phi^{\top} \by .
    \end{align}

Let $\tilde{G}(\rho_1,\rho_2)$ denote the resolvent of the ``bulk" component i.e:
\begin{equation}
    \tilde{G}(\rho_1,\rho_2) = (\tilde{\phi}\tilde{\phi}^\top+\lambda I_n)^{-1}
\end{equation}
Consider the following Schur-complement decomposition of $G_e$:
\begin{equation}\label{eq:Ge_block}
    G_e(\rho_1,\rho_2) = \begin{bmatrix}
        C & -C Q^\top\\
        -Q C & P
    \end{bmatrix},
\end{equation}
where 
\begin{equation}
    P = \tilde{G}(\rho_1,\rho_2) + Q C Q^\top,
\end{equation}
\begin{equation}
    Q = \tilde{G}(\rho_1,\rho_2) \Sigma_{21}/p,
\end{equation}
where $\Sigma_{12} =  \tilde{\Phi}\begin{bmatrix} y & \bar{\Phi}\end{bmatrix} $
with $y \bydef [y^1, y^2, \ldots, y^n]$, $\bar{\Phi} \bydef [\bar{\phi}^1, \bar{\phi}^2, \ldots, [\bar{\phi}^n]$, and 

\begin{equation}\label{eq:c1}
    C^{-1} = \begin{bmatrix}
        \norm{y}^2/p + \lambda & y^\top \bar{\Phi}/p\\
        y^\top \bar{\Phi}/p & \bar{\Phi}^\top \bar{\Phi}/p + \lambda
    \end{bmatrix} - \frac 1 p \begin{bmatrix} y^\top\\
    \end{bmatrix} \tilde{\Phi}^\top G(\rho_1, \rho_2) \tilde{\Phi} \begin{bmatrix} y & \bar{\Phi}\end{bmatrix}.
\end{equation}
Let $\hat{\tau_0}, \hat{\tau_1}_i, \hat{\tau_2}_i, \hat{\tau_3}$ be as defined in Lemma \ref{lemma:eg_non_un}, then: 
\begin{equation}\label{eq:tau_0}
  \hat{\tau}_0 = (C^{-1})_{01}((C^{-1})_{11}+\lambda (1-\frac{1}{p}\operatorname{diag}(\pi_u)))^{-1}
\end{equation}
\begin{equation}
    \hat{\tau}_{1,j} = (\theta \odot e^j)^\top Q \begin{bmatrix}
        1 \\  -\hat{\tau}_0
    \end{bmatrix} + \mathcal{O}(\frac{\hat{\tau}_0}{\sqrt{p}})
\end{equation}
\begin{equation}\label{eq:tau3_asymp}
    \hat{\tau}_2 =  \begin{bmatrix} 1 & -\tau^\ast_0\end{bmatrix} \frac{\partial}{\partial \rho_1} \left[(C)^{-1}\right]_{\rho = 0}\begin{bmatrix} 1 \\ -\hat{\tau}_0\end{bmatrix}+ \mathcal{O}(\frac{\hat{\tau}^2_0}{p})
\end{equation}
and
\begin{equation}\label{eq:tau4_asymp}
    \hat{\tau}_3 = \begin{bmatrix} 1 & -\tau^\ast_0\end{bmatrix} \frac{\partial}{\partial \rho_2} \left[(C)^{-1}\right]_{\rho= 0}\begin{bmatrix} 1 \\ -\hat{\tau}_0\end{bmatrix}+ \mathcal{O}(\frac{\hat{\tau}^2_0}{p}).
\end{equation},
where $\mathcal{O}(\hat{\tau}_0)$ denotes error bounded as $C\norm{\hat{\tau}_0}$ for some constant $C>0$.
\end{proposition}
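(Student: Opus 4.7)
My strategy will be to view the ridge solution $\hat a$ through the lens of the block normal system obtained by expanding $\hat a$ in the basis $\{\tilde e^q\}_{q\in[k]}\cup\Span\{\tilde e^q\}^\perp$, where $\tilde e^q_j=\mathbf{1}[u_j=\zeta_q^u]$, so that $\phi_\mu=\sum_q\bar\phi_\mu^q\tilde e^q+\tilde\phi_\mu$ and the predictor reparametrizes as $\sum_q\tau_{0,q}\bar\phi^q_\mu+\hat a_\perp^\top\tilde\phi_\mu/\sqrt p$ after setting $\gamma_q=\tau_{0,q}\sqrt p/p_q$ with $\hat a=\sum_q\gamma_q\tilde e^q+\hat a_\perp$. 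Each observable $\hat\tau_0,\hat\tau_{1,j},\hat\tau_2,\hat\tau_3$ can then be read off from a block or $\rho$-derivative of $G_e(\rho_1,\rho_2)$. First I would substitute this decomposition into $\|y-\Phi\hat a/\sqrt p\|^2+\lambda\|\hat a\|^2$ to obtain a $(k+p)$-dimensional block linear system in $(\tau_0,\hat a_\perp)$, whose upper block assembles $y^\top y$, $y^\top\bar\Phi$, $\bar\Phi^\top\bar\Phi$ together with the regularization $\lambda\operatorname{diag}(p/p_q)$ induced by $\|\hat a\|^2=p\sum_q\tau_{0,q}^2/p_q+\|\hat a_\perp\|^2$, and whose lower block is $\tilde\Phi^\top\tilde\Phi/p+\lambda I_p$ with off-diagonal coupling $\tilde\Phi^\top\bar\Phi/p$. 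Eliminating $\hat a_\perp$ via the Schur complement with respect to the bulk block produces exactly~\eqref{eq:c1} for $C^{-1}$, after absorbing $\lambda\operatorname{diag}(p/p_q)-\lambda=\lambda\operatorname{diag}(1-\pi_u)/p+\mathcal{O}(1/p^2)$ into the regularization correction appearing in~\eqref{eq:tau_0}.

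\medskip

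Next, the same block system will yield the bulk component in closed form,
\begin{equation}
\hat a_\perp/\sqrt p \;=\; \tilde G\cdot\Sigma_{12}/p\cdot\begin{bmatrix}1 \\ -\hat\tau_0\end{bmatrix} \;=\; Q\begin{bmatrix}1 \\ -\hat\tau_0\end{bmatrix}.
\end{equation}
Pairing with $\theta\odot e^j$ will immediately deliver the claimed formula for $\hat\tau_{1,j}$; the neglected spike piece $\sum_q\gamma_q\tilde e^q$ contributes at most $\mathcal{O}(\hat\tau_{0,j}/\sqrt p)$ to this inner product, since $\gamma_q$ has $p_q$ nonzero entries of magnitude $\mathcal{O}(\|\hat\tau_0\|/\sqrt p)$ and $\theta\odot e^j$ has $p_j$ nonzero entries of typical magnitude $\mathcal{O}(1/\sqrt p)$.

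\medskip

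For the quadratic forms $\hat\tau_2,\hat\tau_3$, I will exploit that $\rho_1 L$ and $\rho_2 D'$ act only on the bottom $p\times p$ block of $G_e^{-1}$, so the $\rho$-dependence of $C^{-1}(\rho_1,\rho_2)$ passes exclusively through the perturbed inner resolvent $\tilde G(\rho_1,\rho_2)=(\tilde\Phi^\top\tilde\Phi/p+\lambda I_p+\rho_1\, C\odot WW^\top+\rho_2 D)^{-1}$. Applying the standard identity $\partial_\rho[(M_0+\rho N)^{-1}]|_0=-M_0^{-1}NM_0^{-1}$ then gives
\begin{equation}
\frac{\partial[C^{-1}]}{\partial\rho_1}\bigg|_{0} = Q^\top(C\odot WW^\top)Q,\qquad \frac{\partial[C^{-1}]}{\partial\rho_2}\bigg|_{0} = Q^\top D Q.
\end{equation}
Sandwiching between $\begin{bmatrix}1 & -\tau_0^*\end{bmatrix}$ and $\begin{bmatrix}1 \\ -\hat\tau_0\end{bmatrix}$ reproduces $(\hat a_\perp/\sqrt p)^\top(C\odot WW^\top)(\hat a_\perp/\sqrt p)$ and the analogous $D$-form, equal to $\hat\tau_2,\hat\tau_3$ up to $\mathcal{O}(\|\hat\tau_0\|^2/p)$: the omitted spike component has squared norm $\sum_q\gamma_q^2 p_q=\sum_q\hat\tau_{0,q}^2\, p/p_q=\mathcal{O}(\|\hat\tau_0\|^2)$, while $\|C\odot WW^\top\|$ and $\|D\|$ are uniformly bounded in $d$ via Lemma~\ref{lem:operator-norms} and Assumption~\ref{ass:activation}.

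\medskip

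\textbf{Main obstacle.} The most delicate step will be the replacement of $\hat\tau_0$ by its deterministic limit $\tau_0^*$ on the \emph{left} of the quadratic forms in~\eqref{eq:tau3_asymp}--\eqref{eq:tau4_asymp}. This requires $o(1)$ concentration of $\hat\tau_0$ around a deterministic value, which I would obtain by applying Theorem~\ref{thm:det_eq} to the trace functionals defining the entries of $C$, combined with uniform boundedness of the perturbed resolvents $\tilde G(\rho_1,\rho_2)$ on a high-probability event. A secondary point requiring care is the clean tracking of the $\lambda(1-\operatorname{diag}(\pi_u)/p)$ correction when matching the ridge normal equations to~\eqref{eq:tau_0}: although only of order $1/p$, it multiplies the potentially ill-conditioned factor $(C^{-1})_{11}^{-1}$, so its contribution must be retained exactly in the stated formula.
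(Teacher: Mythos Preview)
Your approach is essentially identical to the paper's: decompose $\hat a$ along the $e^q$ directions and their orthogonal complement, rewrite the ridge objective in block form, eliminate the bulk via the Schur complement to read off $\hat\tau_0$ and $\hat a_\perp=\sqrt{p}\,Q\begin{bmatrix}1\\-\hat\tau_0\end{bmatrix}$, and extract $\hat\tau_2,\hat\tau_3$ by differentiating $C^{-1}$ in $\rho_1,\rho_2$. Your derivation is if anything more explicit---the paper simply asserts that the $\hat\tau_2,\hat\tau_3$ formulas ``follow directly'' without writing down the resolvent-derivative identity $\partial_\rho C^{-1}=Q^\top N Q$---and the $\tau_0^\ast$-versus-$\hat\tau_0$ asymmetry you flag as the main obstacle is not resolved inside the paper's proof of this proposition either, but silently deferred to the downstream application of the deterministic equivalent.
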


\begin{proof}
    We decompose $\hat{a}_\lambda$ into projections along $\vec{e}^1, \cdots, \vec{e}^k$ and the orthogonal complement, denoted as $\tilde{a} = (I-\Pi)a$, where $\Pi$ denotes the projection operator along $\vec{e}^1, \cdots, \vec{e}^k$.

The ridge-regression objective can be re-expressed as:
\begin{equation}\label{eq:ridge_H_nonun}
   \mathcal{R}(\tau_0, \tilde{a})= \underset{\tau_0 \in \R^k, \tilde a}{\min}\sum_{\mu = 1}^n \big(y_\mu - \bar{\phi}^\top_\mu \tau_0 - \tilde a^\top  \bar{\phi}_\mu\big)^2 + \lambda \norm{\frac{1}{\sqrt{\pi}}\cdot\tau_0}^2  + \lambda \norm{\tilde a}^2,
\end{equation}
where $\cdot$ denotes element-wise multiplication by  $\pi \in \mathbb{R}^k$, accounting for the variability in the number of entries with value $\zeta^u_q$ for $q \in [k]$. 

The optimality condition can similarly be expressed in terms of $\tau_0, \tilde{a}$.
 Differentiating the objective in Eq. \eqref{eq:ridge_H_nonun} w.r.t $\tau_0$ yields:

\begin{align*}
    \hat{\tau}_0&=(\sum_\mu \bar{\phi}_\mu y_\mu - \sum_{\mu, \nu} \bar{\phi}_\mu \tilde{\phi}_\mu^\top \tilde{G}  \tilde{\phi}_\mu y_\nu)(\sum_\mu \bar{\phi}_\mu \bar{\phi}_\mu^\top - \sum_{\mu, \nu} \bar{\phi}_\mu \tilde{\phi}_\mu \tilde{G}\tilde{\phi}_\mu \bar{\phi}_\mu + \lambda \operatorname{diag}(\pi))^{-1}.
\end{align*}
Similarly, differentiating w.r.t $\tilde{a}$, we obtain:
\begin{equation}
    \sum_\mu \bar{\phi}_\mu \bar{\phi}^\top_\mu \tau_0 + \lambda \operatorname{diag}{(1/\pi)}\tau_0= \textstyle\sum_\mu \bar{\phi}_\mu y_\mu - \textstyle\sum_\mu \bar{\phi}_\mu \tilde{\phi}_\mu  \tilde a
\end{equation}
Simplifying, we obtain that orthogonal component of the minimizer $\tilde a$ is given by:
\begin{equation}\label{eq:a_grad_nonu}
    \tilde a = \tilde{G} \sum_\mu \tilde{\phi}_\mu (y_\mu -  \bar{\phi}^\top_\mu \tau_{0}),
\end{equation}
where $\tilde{G}$ is defined as before:
\begin{equation}
    \tilde{G} \bydef (\sum_{\mu}  \tilde{\phi}_\mu (\tilde{\phi}_\mu)^\top  + \lambda I)^{-1}.
\end{equation}

Therefore, we obtain:
\begin{equation}
   \tilde a = Q \begin{bmatrix}
        1 \\  -\hat{\tau}_0
    \end{bmatrix}, 
\end{equation}
Isolating the contributions from the projections along $e^1, \cdots, e^k$ in $\tau_1,i$, we obtain: 
\begin{align*}
    \frac{a^\top e_i \odot w^\star}{\sqrt{p}} =  \frac{\tilde{a}^\top e_i \odot w^\star}{\sqrt{p}} + \mathcal{O}(\frac{\hat{\tau}_0}{\sqrt{p}})
\end{align*}

Comparing the above with Equation \eqref{eq:c1}, we obtain Equation \eqref{eq:tau_0}. 

The expressions for $\hat{\tau_2}_i, \hat{\tau_3}$ then follow directly from the expressions for $\frac{\partial}{\partial \rho_1} \left[(C)^{-1}\right]_{\rho = 0}, \frac{\partial}{\partial \rho_2} \left[(C)^{-1}\right]_{\rho = 0}$
\end{proof}

\subsection{Deterministic Equivalent for the perturbed Resolvent}

Note that the relations established in Proposition \ref{prop:gen_e_par} still involve the full high-dimensional matrices $X,W$. Our goal now will be to use the deterministic equivalence we established earlier to obtain deterministic limits for each of the quantities.

We begin by incorporating the perturbation terms into our deterministic equivalence. Fortunately, this simply corresponds to rescaling certain terms by constants, as we explain below:

\begin{theorem}\label{thm:pertub}
   Let $V^{\star} \in \mathbb{C}^{k\times k}, \nu^{\star} \in \mathbb{C}^k, b^\star \in \mathbb{C}^k$ be uniquely defined through the following conditions:
\begin{itemize}
    \item $V^{\star},\nu^{\star},b^\star$ satisfy the following set of self-consistent equations:
    \begin{align*}
       &V^{\star}_{qq'}(z) = \Eb{\kappa}{\alpha\frac{c_1(\kappa, \zeta^{u}_{q})c_1(\kappa, \zeta^{u}_{q'}) + \rho_1 + \rho_1 \chi_{V,d,b}}{1+\chi_{V,d,b}\left(\kappa\right)}} \\
    &\nu^{\star}_{q}(z) =\Eb{\kappa}{\sum_{\ell \geq 2}\frac{\alpha c^2_{\ell}(\kappa, \zeta^{u}_{q})}{1+\chi_{V,d,b}(\kappa)} +\rho_2 \sum_{\ell \geq 2} c^2_{\ell}(\kappa, \zeta^{u}_{q})} ..,\\
    & b^\star_q(z) = \pi_{q} \beta ((V^\star)^{-1}+ \operatorname{diag}({b^\star})+ (\operatorname{diag}(\nu^{\star}) -zI_p))^{-1}_{q,q}
    \end{align*}
where $\kappa \sim\mathcal{N}(0,1)$   $(c_{\ell}(\kappa,\zeta))_{\ell >0}$ are defined in~\ref{def:main:shifted_hermite} and
where $\kappa\sim\mathcal{N}(0,1)$, $(c_{\ell}(\kappa,\zeta))_{\ell >0}$ are defined in~\ref{def:main:shifted_hermite} and $(\chi(z;\kappa), L(z))$ read as follows:
    \begin{align*}
        &\beta \chi(z; \kappa) = \sum_{q,q' \in[k]} \psi_{qq'} c_1(\kappa,\zeta^{u}_{q})c_1(\kappa,\zeta^{u}_{q'})+ \sum_{q\in[k]}b^\star_{q}\sum_{\ell\ge 2}  c_{\ell}^2(\kappa,\zeta^{u}_{q}), \\
        &L(z) =\left(V^\star(z)^{-1} + \operatorname{diag}({b^\star(z)})\right)^{-1},
    \end{align*}
    where $ \psi(z) \in \mathbb{R}^{k \times k}$ is defined as:
    \begin{align}
        \psi(z)&=b^\star(z) - L(z) \odot (b^\star(z) (b^\star(z))^\top), \notag \\
    \end{align}
\item $V^{\star}, \nu^{\star}, b^\star$ are analytic mappings satisfying $V^{\star}_{i,j}: \mathbb{C}^+ \rightarrow \mathbb{C}^-$ for $i,j \in [k], \nu^{\star}_i : \mathbb{C}^+ \rightarrow \mathbb{C}^-$ for $i\in [k]$,  $b^\star:\mathbb{C}^+_i \rightarrow \mathbb{C}^+$ for $i\in [k]$.
\item $\abs{b^{\star}_i(z)} \leq \frac{\pi_i}{\beta\zeta}$.

\end{itemize}

Define ${\mathcal{G}}_e(z,\rho_1,\rho_2) \in\mathbb{R}^{(p+1)\times (p+1)}$ as:
\begin{align}
     \label{eq:def:detequiv}
        &{\mathcal{G}}_e(z) =\begin{bmatrix}
            A^\ast_{11}-zI_{k} & (A^\ast_{21})^\top \odot \theta^\top \\
          \theta \odot {A}^{\ast}_{21} & A^\ast_{22}+\alpha S_e^* \odot \theta \theta^\top
        \end{bmatrix}^{-1}, 
\end{align}
where $A^\ast_{11},A^\ast_{12}, S_e$ are identical to Theorem \ref{thm:det_eq}.
 
 Then, for any $z \in \mathbb{C}/\mathbb{R}^+$ and sequence of deterministic matrices ${A} \in \mathbb{C}^{(p+1) \times (p+1)}$ with $\norm{A}_{\operatorname{tr}}=\Tr(({A}{A}^*)^{1/2})$ uniformly bounded in $d$:
\begin{equation}
{\rm Tr}(A G_e(z,\rho_1,\rho_2))  \xrightarrow[d \rightarrow \infty]{a.s} {\rm Tr}(A \mathcal{G}_e(z,\rho_1,\rho_2))). 
\end{equation}
\end{theorem}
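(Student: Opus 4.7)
The plan is to adapt the multi-stage argument used to prove Theorem \ref{thm:det_eq}, observing that the perturbations $\rho_1 \vec{L} + \rho_2 \vec{D}'$ are deterministic conditional on $\vec{W}$, are supported in the bottom-right $p\times p$ block of the sample covariance, and are structurally compatible with the bulk matrix $\vec{V}_e \odot \vec{W}\vec{W}^\top + \operatorname{diag}(\nu_e)$ that already appears in the equivalent. This compatibility lets me reuse almost all of the technical machinery (leave-one-out, Hanson-Wright-type bounds, martingale concentration, Woodbury identity) essentially unchanged.

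Step 1 (first-stage equivalent). Replay the leave-one-out argument of Proposition \ref{prop:first_stag} for the perturbed resolvent $\vec{G}_e(z,\rho_1,\rho_2)$. Because $\rho_1 \vec{L} + \rho_2 \vec{D}'$ does not depend on any individual sample $\mu$, the Sherman-Morrison step, the generalized Hanson-Wright bound (Lemma \ref{lem:han_wright}), and the concentration estimate for $\chi_{X,W}(\mu)$ (Lemma \ref{lem:chi_conc}) all go through verbatim after absorbing the perturbation into the ambient matrix. This yields
\begin{equation*}
    \vec{G}_e(z,\rho_1,\rho_2) \;\simeq\; \Bigl(\tfrac{\alpha}{p}\,\Sigma^\star + \rho_1 \vec{L} + \rho_2 \vec{D}' - z\vec{I}\Bigr)^{-1},
\end{equation*}
where $\Sigma^\star$ has the same functional form as before, but with $\chi(\kappa)$ now redefined as the trace of the new extended resolvent against $R^\star_\kappa$.

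Step 2 (second-stage equivalent). Collect the $\vec{W}\vec{W}^\top$ and diagonal contributions inside the bottom-right block. Since $\vec{C}\odot \vec{W}\vec{W}^\top$ and $\vec{D}$ share the block structure of $\vec{V}_e \odot \vec{W}\vec{W}^\top$ and $\operatorname{diag}(\nu_e)$ respectively, the $(2,2)$ block of the matrix to invert becomes
\begin{equation*}
    (\vec{V} + \rho_1 \vec{C})_e \odot \vec{W}\vec{W}^\top + \operatorname{diag}\bigl((\nu + \rho_2 \vec{D})_e\bigr) + \alpha \vec{S}_e \odot \theta\theta^\top - z\vec{I}.
\end{equation*}
This is precisely the template handled by Lemma \ref{lemma:block_res}, applied with the substitutions $C \mapsto \vec{V} + \rho_1 \vec{C}$ and $D \mapsto \nu + \rho_2 \vec{D}$. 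Reintroducing the spike directions via Lemma \ref{lem:mthet} then yields the perturbed candidate $\mathcal{G}_e(z,\rho_1,\rho_2)$ with exactly the block form displayed in Proposition \ref{prop:sec_stag}, but with all occurrences of $(\vec{V},\nu)$ replaced by the perturbed arguments.

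Step 3 (perturbed self-consistent equations). Revisit Lemma \ref{lem:chi_self}: the identity $\chi(\kappa) = \Eb{X}{\tfrac{1}{p}\Tr(\tilde G_W R^\star_{22}(\kappa))} + o_\prec(1)$ still holds, and plugging Lemma \ref{lemma:block_res} with the perturbed arguments yields the same functional dependence
\begin{equation*}
    \beta \chi(z;\kappa) \;=\; \sum_{q,q'} \psi_{qq'}\,c_1(\kappa,\zeta^u_q)c_1(\kappa,\zeta^u_{q'}) + \sum_q b^\star_q \sum_{\ell\geq 2} c_\ell^2(\kappa,\zeta^u_q),
\end{equation*}
now with $\psi$ and $b^\star$ evaluated at the perturbed $(V^\star,\nu^\star)$. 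Closing the loop by identifying the diagonal blocks of the bulk equivalent with the Hermite-weighted averages produces the stated self-consistent equations for $V^\star,\nu^\star,b^\star$, in which the $\rho_1,\rho_2$ contributions appear as the extra additive terms in the numerators.

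Step 4 (uniqueness, analyticity and conclusion). Extend Lemma \ref{lem:contrac} to the perturbed fixed-point map $F_{\rho_1,\rho_2}$. The added summands are Lipschitz in $\chi$ with constants controlled by $|\rho_1|,|\rho_2|$, so for $\operatorname{Im}(z)$ sufficiently large the map remains a contraction on the domain $\mathcal{S}(z)$, and the Banach fixed-point theorem delivers a unique $(V^\star,\nu^\star,b^\star)$ satisfying the analyticity/sign constraints. Analytic continuation to $\mathbb{C}\setminus \mathbb{R}^+$ is standard via Stieltjes-transform monotonicity. Combining Steps 1--3 with this uniqueness then yields the deterministic-equivalence statement $\Tr(A\vec{G}_e(z,\rho_1,\rho_2)) - \Tr(A\mathcal{G}_e(z,\rho_1,\rho_2)) \xrightarrow{a.s.} 0$ for any admissible test matrix $A$, concluding the proof.

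The main obstacle is Step 3: the argument of Lemma \ref{lem:chi_self} uses the precise block-matrix identity linking $\tilde G_W$ to $\psi,b^\star$, and one has to check carefully that the modified block resolvent $((\vec{V}+\rho_1 \vec{C})^{-1} + \operatorname{diag}(b^\star))^{-1} + \operatorname{diag}(\nu + \rho_2 \vec{D}) - z\vec{I}$ still produces a valid deterministic equivalent with the same trace-domination bounds. The rest of the proof is essentially a bookkeeping exercise tracking the perturbations through the proof of Theorem \ref{thm:det_eq}.
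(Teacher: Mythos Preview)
Your proposal is correct and follows essentially the same approach as the paper: the key observations are that the perturbation $\rho_1 \vec{L}+\rho_2 \vec{D}'$ is independent of $X$ (so the first-stage leave-one-out argument goes through with $\vec{L},\vec{D}'$ carried along as constant terms, exactly as you describe in Step~1), and that $\vec{L},\vec{D}'$ share the block structure of $\vec{V}_e\odot \vec{W}\vec{W}^\top$ and $\operatorname{diag}(\nu_e)$, so they are absorbed into $M^\star$ via the substitutions $V\mapsto V+\rho_1 C$, $\nu\mapsto \nu+\rho_2 D$ (your Step~2). The paper's own proof is much terser than yours---it simply states these two observations and asserts that the remainder of the argument for Theorem~\ref{thm:det_eq} carries over verbatim---so your Steps~3--4 and the ``main obstacle'' you flag are already implicitly subsumed by the paper's appeal to the unperturbed proof.
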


\begin{proof}
    We observe that the introduction of perturbation terms $\rho_1, \rho_2$ simply amounts to shifting $V^\star, \nu^\star$ by constants. To see this, first note that  the perturbation matrices $C\odot WW^\top, D$ do not depend on $X$. Therefore the proof of Proposition \ref{prop:first_stag} applies to $G_e(z,\rho_1,\rho_2)$ with the only modification being the inclusion of $L,D'$ along with $-z I$ as constant terms. 
    Subsequently, in the second stage, the structure of $L,D'$ allows them to be directly absorbed into the component $M^\star$ with changes upto constants in $V_d, \nu_d$.
\end{proof}

As a direct corollary, we obtain the deterministic equivalents of the Schur-complement Representations in the following sense:
\begin{corollary}\label{cor:eq_order_par}
Let $\mathcal{G}_e(\rho_1,\rho_2)$ denote the deterministic equivalent for the perturbed resolvent as per Theorem \ref{thm:pertub} with $z$ set as $-\lambda$.
Define the following block-matrix representations for $G_e, \mathcal{G}_e$:
\begin{equation}\label{eq:Ge_block}
    G_e(\rho_1,\rho_2) = \begin{bmatrix}
        C & -C Q^\top\\
        -Q C & P
    \end{bmatrix}.
\end{equation}
\begin{equation}\label{eq:Ge_equivalent_block}
   \mathcal{G}_e(\rho_1,\rho_2) = \begin{bmatrix}
        \mathcal{C} & -\mathcal{C} \mathcal{Q}^\top\\
        - \mathcal{Q} \mathcal{C} & \mathcal{P}
    \end{bmatrix}.
\end{equation}
Then:
\begin{equation}
    C = \mathcal{C} + \cO_\prec{(\frac{1}{\sqrt{d}})},
\end{equation}
Furthermore, for any fixed $\vec{r} \in \R^{p-k}$ with $\norm{\vec{r}}=\mathcal{O}(1)$:
\begin{equation}
   Q\vec{r} = \mathcal{Q} \vec{r}  + \cO_\prec{(\frac{1}{\sqrt{d}})},
\end{equation}
\end{corollary}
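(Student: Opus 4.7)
The plan is to derive both estimates by specializing Theorem \ref{thm:pertub} (whose proof produces the quantitative $\mathcal{O}_\prec(d^{-1/2})$ rate through Propositions \ref{prop:first_stag} and \ref{prop:sec_stag}) to low-rank test matrices whose trace norm is uniformly bounded in $d$. Since $C$ is a block of fixed size $(k+1) \times (k+1)$, closeness to $\mathcal{C}$ reduces to finitely many scalar estimates. Concerning $Q\vec{r}$, symmetry of $G_e$ applied to the block identity $(G_e)_{12} = -CQ^\top$ yields $Q = -(G_e)_{21} C^{-1}$, and analogously $\mathcal{Q} = -(\mathcal{G}_e)_{21}\mathcal{C}^{-1}$; the approximation of $Q\vec{r}$ therefore splits into controlling $(C^{-1} - \mathcal{C}^{-1})$ and the low-rank projection of $(G_e)_{21} - (\mathcal{G}_e)_{21}$ against a fixed direction.

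For the first statement, observe that $C_{ij} = (G_e)_{ij} = \mathrm{Tr}(E_{ji}\,G_e)$ for $i,j \in [k+1]$, where $E_{ji}$ is the $(p+k+1) \times (p+k+1)$ elementary matrix with a single $1$ at position $(j,i)$; since $\|E_{ji}\|_{\mathrm{tr}} = 1$, Theorem \ref{thm:pertub} gives $|C_{ij} - \mathcal{C}_{ij}| = \mathcal{O}_\prec(d^{-1/2})$. A union bound over the $(k+1)^2$ entries, together with equivalence of norms on a fixed-dimensional space, upgrades this to $\|C - \mathcal{C}\|_{\mathrm{op}} = \mathcal{O}_\prec(d^{-1/2})$. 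Combined with the deterministic bound $\|\mathcal{C}^{-1}\| = \mathcal{O}(1)$ — which follows from Assumption \ref{ass:non-deg} and the well-posedness of the self-consistent equations in Definition \ref{def:equivalent} — the resolvent identity (Lemma \ref{lem:diff_inv}) lifts this to $\|C^{-1} - \mathcal{C}^{-1}\|_{\mathrm{op}} = \mathcal{O}_\prec(d^{-1/2})$ on a high-probability event.

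For the second statement, fix a deterministic probe $\vec{s} \in \mathbb{R}^p$ with $\|\vec{s}\| = \mathcal{O}(1)$ and decompose
\[
\vec{s}^\top(Q - \mathcal{Q})\vec{r} \;=\; -\vec{s}^\top (G_e)_{21}(C^{-1} - \mathcal{C}^{-1})\vec{r} \;-\; \vec{s}^\top\bigl((G_e)_{21} - (\mathcal{G}_e)_{21}\bigr)\mathcal{C}^{-1}\vec{r}.
\]
The first summand is $\mathcal{O}_\prec(d^{-1/2})$ by the previous paragraph together with the crude bound $\|(G_e)_{21}\|_{\mathrm{op}} \leq \|G_e\|_{\mathrm{op}} = \mathcal{O}(1)$ furnished by Lemma \ref{lem:op_norm}. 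The second summand equals $\mathrm{Tr}\bigl(A(G_e - \mathcal{G}_e)\bigr)$ for the rank-$1$ test matrix $A = \vec{u}\vec{v}^\top$ with $\vec{u} = (0_{k+1}^\top,\,\vec{s}^\top)^\top$ and $\vec{v} = ((\mathcal{C}^{-1}\vec{r})^\top,\,0_p^\top)^\top$, whose trace norm is $\|\vec{s}\|\,\|\mathcal{C}^{-1}\vec{r}\| = \mathcal{O}(1)$. Theorem \ref{thm:pertub} therefore yields an $\mathcal{O}_\prec(d^{-1/2})$ bound, and combining the two contributions gives $\vec{s}^\top(Q - \mathcal{Q})\vec{r} = \mathcal{O}_\prec(d^{-1/2})$. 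This is the anisotropic form of the claim — controlling $Q\vec{r}$ against any fixed deterministic direction — which is precisely what is required in the subsequent extraction of the scalar observables $\hat{\tau}_{1,j},\hat{\tau}_2,\hat{\tau}_3$ in Section \ref{sec:gen}.

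The chief obstacle is the simultaneous control of the random inverse $C^{-1}$ and the low-rank projection of the block $(G_e)_{21}$: without boundedness of $C^{-1}$ on a high-probability event, the cascading error control $C \to \mathcal{C} \Rightarrow C^{-1} \to \mathcal{C}^{-1} \Rightarrow Q\vec{r} \to \mathcal{Q}\vec{r}$ breaks down. Both inverses rely on $\mathcal{C}$ being well-conditioned, which is exactly where Assumption \ref{ass:non-deg} — ensuring the family $\{c_1(\cdot,\zeta^u_q)\}_{q \in [k]}$ spans $\mathbb{R}^k$ — is used essentially; otherwise the block couplings encoded by the self-consistent equations of Definition \ref{def:equivalent} could degenerate. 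Once this conditioning is in hand, the remaining steps are purely algebraic reductions of the advertised quantities to the rank-$1$ specialization of the deterministic equivalent.
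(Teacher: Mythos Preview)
Your proof is correct and follows the same approach as the paper, which simply says: ``The above equivalences follow from Theorem \ref{thm:det_eq} by setting $A$ as matrices acting on the individual blocks of $G_e(\zeta,\rho)$. Concretely, to obtain $C_{i,j}$ we set $A$ as the matrix with $A_{i,j}=1$ and $0$ otherwise.'' Your treatment of the $Q\vec{r}$ block --- writing $Q = -(G_e)_{21}C^{-1}$, splitting off the $C^{-1}-\mathcal{C}^{-1}$ contribution, and reducing the remainder to a rank-one test matrix --- is considerably more explicit than the paper's one-line sketch, and your remark that only the anisotropic (scalar) bound $\vec{s}^\top(Q-\mathcal{Q})\vec{r} = \mathcal{O}_\prec(d^{-1/2})$ is established and needed downstream (rather than the full norm bound suggested by the vector notation) is a genuine clarification the paper omits.
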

\begin{proof}
    The above equivalences follow from Theorem \ref{thm:det_eq} by setting $A$ as matrices acting on the individual blocks of $ G_e(\zeta,\rho)$. Concretely, to obtain $C_{i,j}$ we set $A$ as the matrix with $A_{i,j}=1$ and $0$ otherwise.
\end{proof}

We derive the resulting expressions for $\mathcal{C}, \mathcal{Q}$ below for subsequent usage:
\begin{lemma}\label{lem:eq_schur}
Consider the Schur-decomposition of the equivalent resolvent $\mathcal{G}_e$ (with $z =-\lambda$) defined by Equation \eqref{eq:Ge_equivalent_block} and let $\psi, S$ be as defined in Theorem \ref{thm:pertub}.
The matrices $\mathcal{C}, \mathcal{Q}$ satisfy:
\begin{align*}
    \mathcal{C}^{-1} &= A^\ast_{11} + \lambda I_{k+1} -(A^\ast_{21})^\top ((\psi)^{-1}+\alpha S)^{-1}A^\ast_{21} + \mathcal{O}_\prec({\frac{1}{\sqrt{p}}})
\end{align*}
\begin{equation}
    \mathcal{Q} =  ((\operatorname{diag}(\frac{b^\star}{\pi\beta})_e  -\frac{1}{\beta^2 }(\operatorname{diag}(b^\star) - \psi(V_d, \nu_d))_e  \odot \frac{\theta}{\pi} \frac{\theta}{\pi}^\top  )^{-1}+\alpha S_e \odot\theta \theta^\top)^{-1}  \theta \odot A^\star_{21}  +\mathcal{O}_\prec({\frac{1}{\sqrt{p}}}),
\end{equation}
where in the expression for $\mathcal{C}$, we've further averaged over $\theta$.
\end{lemma}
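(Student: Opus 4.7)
The plan is to apply the block Schur-complement formula (Definition~\ref{def:schur_complement}) to $\mathcal{G}_e(-\lambda)$ as defined in~\eqref{eq:def:detequiv}, then to simplify the resulting expressions through iterated Woodbury identities, and for $\mathcal{C}^{-1}$ to additionally average over $\theta$. Setting
\begin{equation*}
    B \bydef (A^\ast_{21})^\top \odot \theta^\top \in \mathbb{R}^{(k+1) \times p}, \qquad D \bydef A^\ast_{22} + \alpha S_e \odot \theta\theta^\top \in \mathbb{R}^{p \times p},
\end{equation*}
the block inversion identity immediately yields $\mathcal{C}^{-1} = (A^\ast_{11} + \lambda I_{k+1}) - B D^{-1} B^\top$ and $\mathcal{Q} = D^{-1} B^\top$, so that both assertions of the lemma are reduced to manipulating $D^{-1}$ and its sandwich with $B$.

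For $\mathcal{Q}$, the main step is inverting $A^\ast_{22}$. By Lemma~\ref{lem:mthet} together with Definition~\ref{def:equivalent}, $A^\ast_{22}$ is the inverse of a matrix of the form $M^\theta = \operatorname{diag}(b^\star/(\pi\beta))_e - E_\theta M' E_\theta^\top$, where $E_\theta \bydef \theta \odot E$ and $E \in \mathbb{R}^{p \times k}$ is the block-indicator matrix of~\eqref{eq:def:eq}, and $M'$ is the explicit $k \times k$ matrix $(V_d^{-1} + \operatorname{diag}(b^\star))^{-1} \odot (b^\star/(\pi\beta))(b^\star/(\pi\beta))^\top$. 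Applying Woodbury to $(M^\theta)^{-1}$ and then using the identity $\operatorname{diag}(b^\star) - \psi = L \odot b^\star(b^\star)^\top$ (direct from the definition of $\psi$ in Theorem~\ref{thm:pertub}) rewrites $A^\ast_{22}$ in the symmetric form $\operatorname{diag}(b^\star/(\pi\beta))_e - \tfrac{1}{\beta^{2}}(\operatorname{diag}(b^\star) - \psi)_e \odot (\theta/\pi)(\theta/\pi)^\top$ that appears inside the lemma. Adding the further rank-$k$ contribution $\alpha S_e \odot \theta\theta^\top = \alpha E_\theta S E_\theta^\top$, inverting once more, and multiplying by $B^\top = \theta \odot A^\ast_{21}$ yields the stated formula for $\mathcal{Q}$; the $\mathcal{O}_\prec(1/\sqrt{p})$ error comes from the equivalence in Lemma~\ref{lem:mthet}.

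For $\mathcal{C}^{-1}$ the remaining task is to simplify $B D^{-1} B^\top$ and then average over $\theta$. The key structural observation is that each row of $A^\ast_{21}$ depends on $j$ only through the block index $q(j)$; hence $A^\ast_{21} = E \tilde{A}^\ast_{21}$ with $\tilde{A}^\ast_{21} \in \mathbb{R}^{k \times (k+1)}$ block-reduced, giving $B = \tilde{A}^{\ast\top}_{21} E_\theta^\top$ and $B D^{-1} B^\top = \tilde{A}^{\ast\top}_{21}\,(E_\theta^\top D^{-1} E_\theta)\,\tilde{A}^\ast_{21}$. Since every $\theta\theta^\top$-piece in $D$ is already of the form $E_\theta(\cdot) E_\theta^\top$, the matrix inversion lemma collapses the inner quantity to
\begin{equation*}
    E_\theta^\top D^{-1} E_\theta = \bigl((E_\theta^\top A_{22}^{\ast,-1} E_\theta)^{-1} + \alpha S\bigr)^{-1},
\end{equation*}
reducing everything to the single $k \times k$ Gram $E_\theta^\top A_{22}^{\ast,-1} E_\theta$. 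Under Assumption~\ref{ass:init}, $\theta = W^0 w^\star$ has entries approximately i.i.d.\ centred with variance $1/d$, so standard concentration of quadratic forms (cf.\ Lemma~\ref{lem:han_wright}) gives $E_\theta^\top E_\theta = \beta\operatorname{diag}(\pi) + \mathcal{O}_\prec(1/\sqrt{p})$, and analogous concentration for the other block-diagonal Grams arising from the Woodbury expansion of $A_{22}^{\ast,-1}$. Substituting these limits yields $E_\theta^\top A_{22}^{\ast,-1} E_\theta \to \psi$, whence $B D^{-1} B^\top \to \tilde{A}^{\ast\top}_{21}(\psi^{-1} + \alpha S)^{-1}\tilde{A}^\ast_{21}$, which is the stated formula for $\mathcal{C}^{-1}$.

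The main obstacle is the bookkeeping of the factors of $\pi$ and $\beta$ through the nested Woodbury steps and block extensions, and verifying that each intermediate inversion is well-posed; the latter follows from the bounds on $b^\star$ imposed in Definition~\ref{def:equivalent}(ii), while the $\mathcal{O}_\prec(1/\sqrt{p})$ error estimate is inherited from the concentration of the $k \times k$ Grams involving $\theta$ together with the operator-norm bounds on the inverses being propagated.
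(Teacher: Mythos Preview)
Your proposal is correct and follows essentially the same route as the paper: both arguments apply the Schur complement to $\mathcal{G}_e(-\lambda)$, recognize all $\theta\theta^\top$-contributions as rank-$k$ perturbations of the form $E_\theta(\cdot)E_\theta^\top$, simplify via iterated Woodbury identities together with the relation $\operatorname{diag}(b^\star)-\psi = L\odot b^\star(b^\star)^\top$, and use concentration of the $k\times k$ Grams in $\theta$ (the paper's $(e^i\odot\theta)^\top(e^j\odot\theta)\to\pi_i\delta_{ij}$). Your organization via $B=\tilde A^{\ast\top}_{21}E_\theta^\top$ and the collapse $E_\theta^\top D^{-1}E_\theta=((E_\theta^\top A_{22}^{\ast,-1}E_\theta)^{-1}+\alpha S)^{-1}$ is slightly more streamlined than the paper's explicit two-step Woodbury expansion, but the content is the same.
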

\begin{proof}

Throughout, we shall use the following simplification:
\begin{equation}\label{eq:theta_sim}
    (e^i \odot \theta) (e^j \odot \theta) = \pi \delta_{i=j}+\mathcal{O}_\prec(\frac{1}{\sqrt{d}})
\end{equation},
which follows from the definition of $e^i,\pi$ and  $\theta_i$ being independent sub-Gaussian random variables.

By Theorem \ref{thm:pertub} and the definition of $\mathcal{C}$, we have:
    \begin{align*}
     \mathcal{C} &= A^\ast_{11} + \lambda I_{k+1}\\ 
     &-\underbrace{(A^\star_{21})^\top \odot \theta^\top ((\operatorname{diag}(\frac{b^\star}{\pi\beta})_e  -\frac{1}{\beta^2 }(\operatorname{diag}(b^\star) - \psi(V_d, \nu_d))_e  \odot \frac{\theta}{\pi} \frac{\theta}{\pi}^\top  )^{-1}+\alpha S_e \odot\theta \theta^\top)^{-1} \theta \odot A^\star_{21}}_T\\
\end{align*},

To simplify the above expression, we apply the matrix-Woodbury identity to the term $T$.  We first recognize from the definition of $\psi$ in Equation \eqref{eq:def_psi} that the following relation holds:
\begin{equation}
(\bV_d^{-1}+\operatorname{diag}(b^\star))^{-1} \odot (\frac{b^\star}{\pi \beta})_e (\frac{b^\star}{\pi \beta})_e^\top  \odot \theta \theta^\top = \frac{1}{\beta^2 }(\operatorname{diag}(b^\star) - \psi(V_d, \nu_d))_e  \odot \frac{\theta}{\pi} \frac{\theta}{\pi}^\top 
\end{equation}

Further, noting that $\alpha S_e \odot\theta \theta^\top$ corresponds to a finite-rank perturbation $E_\theta S_e E_\theta^\top$ along with additional simplifications by Equation \eqref{eq:theta_sim}, we obtain:
\begin{align*}
    T&= (A^\star_{21})^\top \odot \theta^\top ((\operatorname{diag}(\frac{b^\star}{\pi\beta})_e  -\frac{1}{\beta^2 }(\operatorname{diag}(b^\star) - \psi(V_d, \nu_d))_e  \odot \frac{\theta}{\pi} \frac{\theta}{\pi}^\top  ) \theta \odot A^\star_{21}\\
    & - (A^\star_{21})^\top \psi ((\alpha S)^{-1}+\psi)^{-1} \psi A^\star_{21} + \mathcal{O}_\prec(\frac{1}{\sqrt{d}})\\
    &= (A^\star_{21})^\top \psi A^\star_{21}- (A^\star_{21})^\top \psi ((\alpha S)^{-1}+\psi)^{-1} \psi A^\star_{21} + \mathcal{O}_\prec(\frac{1}{\sqrt{d}})\\
    &= (A^\star_{21})^\top ((\alpha S)+\psi^{-1}) A^\star_{21} + \mathcal{O}_\prec(\frac{1}{\sqrt{d}})
\end{align*}

Next, for $\mathcal{Q}$, the expression follows by direction substitution.
Recall from Proposition \ref{prop:gen_e_par},  that $Q=\tilde{G}(\rho_1,\rho_2)\Sigma_{21}/p$. From the structure of $\mathcal{G}(\rho_1,\rho_2)$ in Theorem \ref{thm:pertub}, we obtain that the corresponding matrix $\mathcal{Q}$, in the equivalent $\mathcal{G}(\rho_1,\rho_2)$ is similarly given by:
\begin{equation}
    ((\operatorname{diag}(\frac{b^\star}{\pi\beta})_e  -\frac{1}{\beta^2 }(\operatorname{diag}(b^\star) - \psi(V_d, \nu_d))_e  \odot \frac{\theta}{\pi} \frac{\theta}{\pi}^\top  )^{-1}+\alpha S_e \odot\theta \theta^\top)^{-1}  \theta \odot A^\star_{21}.
\end{equation}
\end{proof}

\subsection{Extracting order-parameters through the  deterministic equivalent}

Armed with the deterministic equivalent for the perturbed resolvent (Theorem \ref{thm:pertub}), it remains to justify and extract the quantities $\tau_0,\tau_1,\tau_2,\tau_3$.

\begin{lemma}\label{lem:pertub}
Let $C^{-1},\mathcal{C}^{-1}$ denote the corresponding blocks of $G_e$ and $\mathcal{G}_e$ respectively. Then, w.h.p as $d \rightarrow \infty$
\begin{equation}
    \frac{\partial}{\partial \rho_1} \left(C^{-1}\right)_{\rho = 0} = \frac{\partial}{\partial \rho_1} \left(\mathcal{C}^{-1}\right)_{\rho = 0} +\mathcal{O}_\prec(\frac{1}{d^{1/4}}) 
\end{equation}
\begin{equation}
    \frac{\partial}{\partial \rho_2} \left(C^{-1}\right)_{\rho = 0} = \frac{\partial}{\partial \rho_2} \left(\mathcal{C}^{-1}\right)_{\rho = 0} + \mathcal{O}(\frac{1}{d^{1/4}}) 
\end{equation}

\end{lemma}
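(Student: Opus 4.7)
The plan is to deduce convergence of derivatives from the pointwise convergence already granted by Theorem~\ref{thm:pertub} and Corollary~\ref{cor:eq_order_par}, via a quantitative finite-difference argument whose step size is tuned to balance the two competing sources of error.

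First, for any fixed $(\rho_1,\rho_2)$ in a small neighborhood of the origin, Theorem~\ref{thm:pertub} applied to the perturbed resolvent $G_e(-\lambda,\rho_1,\rho_2)$ (equivalently Corollary~\ref{cor:eq_order_par}) yields, since $C$ and $\mathcal{C}$ are $(k+1)\times(k+1)$ blocks with bounded operator norm,
\[
\bigl\|C^{-1}(\rho_1,\rho_2)-\mathcal{C}^{-1}(\rho_1,\rho_2)\bigr\| = \mathcal{O}_\prec(d^{-1/2}).
\]
Because stochastic dominance is closed under polynomially many unions, this holds simultaneously on any fixed finite grid of values of $(\rho_1,\rho_2)$. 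For a step $h>0$, write the centered finite difference
\[
\frac{\partial (C^{-1})}{\partial \rho_1}\bigg|_{\rho=0}
= \frac{C^{-1}(h,0)-C^{-1}(-h,0)}{2h} + \mathcal{E}_{h}^{C},
\qquad \|\mathcal{E}_{h}^{C}\|\le \tfrac{h^2}{6}\sup_{|\rho|\le h}\bigl\|\partial^{3}_{\rho_1}C^{-1}(\rho,0)\bigr\|,
\]
and analogously for $\mathcal{C}^{-1}$. Subtracting, the pointwise bound controls the difference of quotients by $\mathcal{O}_\prec(d^{-1/2}/h)$, while the Taylor remainders contribute $\mathcal{O}(h^{2}M)$, with $M$ a uniform bound on the third derivatives. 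Choosing $h=d^{-1/6}$ would already yield $\mathcal{O}_\prec(d^{-1/3})$; $h=d^{-1/4}$ suffices for the stated rate. The argument for $\partial_{\rho_2}$ is identical with $D'$ in place of $L$.

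The main obstacle will be the uniform-in-$d$ bound on the derivatives $\partial^k_{\rho_i}C^{-1}(\rho)$. For this I differentiate under the inverse,
\[
\partial^k_{\rho_1} G_e(\rho) = (-1)^k k!\,\bigl(G_e(\rho)\,L\bigr)^k G_e(\rho),
\]
and similarly with $D'$ for $\partial_{\rho_2}$. Both $L$ and $D'$ are positive semi-definite (a Hadamard product of two p.s.d.\ matrices, respectively a nonnegative diagonal), with operator norms uniformly bounded by Assumption~\ref{ass:activation} and Lemma~\ref{lem:operator-norms}. Hence for $|\rho_1|,|\rho_2|$ smaller than some $d$-independent constant $h_0$, the minimum eigenvalue of $(\Phi^e)^\top\Phi^e+\lambda I+\rho_1 L+\rho_2 D'$ stays above $\lambda/2$, so $\|G_e(\rho)\|\le 2/\lambda$ uniformly, and all derivatives up to any fixed order inherit a uniform bound. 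By the Schur-complement expression for $C^{-1}$ in Proposition~\ref{prop:gen_e_par}, the same uniform bound propagates to $\partial^k_{\rho_i}C^{-1}$. The analogous control on $\partial^k_{\rho_i}\mathcal{C}^{-1}(\rho)$ follows from Theorem~\ref{thm:pertub} together with the smooth dependence of the fixed point $(V^\star,\nu^\star,b^\star)$ on $(\rho_1,\rho_2)$, inherited from the contractivity established in Lemma~\ref{lem:contrac}. Once this uniform regularity is secured, the claimed rate follows from the routine $h$-optimization above.
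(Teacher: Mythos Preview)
Your proof is correct and follows essentially the same approach as the paper: use the pointwise $\mathcal{O}_\prec(d^{-1/2})$ approximation from Corollary~\ref{cor:eq_order_par}, replace the derivative by a finite difference at scale $h$, bound the Taylor remainder via uniform-in-$d$ operator-norm control of higher $\rho$-derivatives of the resolvent, and optimize over $h$. The paper's version uses a one-sided difference with a second-derivative bound (giving remainder $O(h)$ and final rate $d^{-1/4}$ at $h=d^{-1/4}$), whereas you use a centered difference with a third-derivative bound (giving remainder $O(h^2)$ and in principle $d^{-1/3}$), but this is a cosmetic refinement rather than a genuinely different route.
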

\begin{proof}
We have:
\begin{equation}
  \frac{\partial^2}{\partial \zeta^2} \left(C^{-1}\right)_{\zeta} = \frac 1 p \begin{bmatrix} y^\top\\
    \bar{\Phi}^\top\end{bmatrix} \tilde{\Phi}^\top \tilde{G} (W W^\top )^2 G \tilde{\Phi}\begin{bmatrix} y & \bar{\Phi}\end{bmatrix} + \frac 1 p \begin{bmatrix} y^\top\\
    \bar{\Phi}^\top\end{bmatrix} \tilde{\Phi}^\top \tilde{G} (W W^\top) G ( W W^\top ) \tilde{\Phi} \begin{bmatrix} y & \bar{\phi}\end{bmatrix}.
\end{equation}
Recall that $\norm{G}_\mu < \frac{1}{\gamma}$ and $\norm{W W^\top}  =  \mathcal{O}_\prec(1)$ by Lemma \ref{lem:op_norm}.  By the submultiplicity of operator norms, we have that $ norm{\frac{\partial^2}{\partial \zeta^2} \left(C^{-1}\right)_{\zeta}}$ is uniformly bounded in $\zeta, d$ with high-probability as $d \rightarrow \infty$.
Therefore, applying the mean-value-theorem entry-wise to $C^{-1}$ and $C_\star^{-1}$ yields that for any $\zeta >0$
:\begin{equation}
    \frac{\partial}{\partial \zeta} \left(C^{-1}\right)_{\zeta = 0} = \frac{(C^{-1}(\zeta)-C^{-1}(0)}{\zeta} + K\zeta,
\end{equation}
and 
\begin{equation}
    \frac{\partial}{\partial \zeta} \left(\mathcal{C}^{-1}\right)_{\zeta = 0} = \frac{(\mathcal{C}^{-1}(\zeta)-\mathcal{C}^{-1}(0)}{\zeta} + K\zeta,
\end{equation}
for some constant $K > 0$. 

Combining the above with the bounds from Corollary \ref{cor:eq_order_par} yields:
\begin{equation}
    \frac{\partial}{\partial \rho_1} \left( C^{-1}\right)_{\rho = 0} =  \frac{\partial}{\partial \rho_1} \left(\mathcal{C}^{-1}\right)_{\rho = 0} + K\zeta + \frac{1}{\rho}\mathcal{O}_\prec(\frac{1}{\sqrt{d}}) 
\end{equation}
Therefore, setting $\zeta = \frac{1}{d^{1/4}}$ yields:

\begin{equation}
    \frac{\partial}{\partial \rho_2} \left(C^{-1}\right)_{\rho = 0} =  \frac{\partial}{\partial \rho_2} \left(\mathcal{C}^{-1}\right)_{\rho = 0} + \mathcal{O}_\prec(\frac{1}{d^{1/4}}) 
\end{equation}
\end{proof}

\subsection{Proof of Theorem \ref{thm:gen_e}}

The proof of Theorem \ref{thm:gen_e} follows directly from Lemma \ref{lemma:eg_non_un} combined with the next result, which defines the parameters $\hat{\tau}_0, \hat{\tau}_1, \hat{\tau}_2, \hat{\tau}_3$ using the fixed point parameters $V^\star, \nu^\star$:

\begin{proposition}
Let $\hat{\ba}_{\lambda}=\underset{\ba\in\mathbb{R}^{p}}{\rm argmin} \sum\limits_{i\in[n]}\left(y_{i}-f(\bx_{i};\ba,\bW^{1})\right)^{2}+\lambda||\ba||_{2}^{2}$ denote the ridge-regression estimator after a gradient update to the first layer. Let $\hat{\tau}_0, \hat{\tau}_1, \hat{\tau}_2, \hat{\tau}_3$ be defined as in \ref{lemma:eg_non_un} with $a =\hat{a}_\lambda$ and let $\mathcal{C}^\star(V^\star, \nu^\star), \psi(V^\star, \nu^\star)$ be functions of $V^\star, \nu^\star$ defined in Theorems \ref{thm:det_eq}, Lemma \ref{lemma:block_res} with $z$ set as $-\lambda$  and let $S \in \mathbb{R}^{k \times k}$ be as defined in Theorem \ref{thm:det_eq}.
    Then, with $V^\star, \nu^\star$ being the unique solutions to the fixed-point equations defined in Theorem \ref{thm:gen_e}.

    \begin{equation}
        \hat{\tau}_0 = (\mathcal{C}^{-1})_{01}((\mathcal{C}^{-1})_{11}+\lambda (1-\frac{1}{p}\operatorname{diag}(\pi_u)))^{-1}  + \mathcal{O}_\prec(\frac{1}{\sqrt{d}}) 
    \end{equation}
    \begin{equation}\label{eq:tau1_asymp}
    \begin{pmatrix}
        \hat{\tau}_{1}
    \end{pmatrix} 
     = ((\psi)^{-1}+\alpha  S)^{-1}  \tilde{A^\ast_{21}}\begin{bmatrix}
        1 \\  -\hat{\tau}_{0,1} \\ \vdots \\ -\hat{\tau}_{0,k} 
    \end{bmatrix} + \mathcal{O}_\prec(\frac{1}{\sqrt{d}})
\end{equation}

\begin{equation}\label{eq:tau2_asymp}
    \hat{\tau}_2 =  \begin{bmatrix} 1 & -\hat{\tau}_0\end{bmatrix} \frac{\partial}{\partial \rho_1} \left[(\mathcal{C})^{-1}\right]_{\rho = 0}\begin{bmatrix} 1 \\ -\hat{\tau}_0\end{bmatrix} + \mathcal{O}_\prec(\frac{1}{\sqrt{d}})
\end{equation}
and
\begin{equation}\label{eq:tau3_asymp}
    \hat{\tau}_2 =  \begin{bmatrix} 1 & -\hat{\tau}_0\end{bmatrix} \frac{\partial}{\partial \rho_2} \left[(\mathcal{C})^{-1}\right]_{\rho = 0}\begin{bmatrix} 1 \\ -\hat{\tau}_0\end{bmatrix} + \mathcal{O}_\prec(\frac{1}{\sqrt{d}})
\end{equation}
where:
\begin{equation}
    \mathcal{C}^{-1} = A^\ast_{11} + \lambda I_{k+1} -(A^\ast_{21})^\top ((\psi)^{-1}+\alpha S)^{-1}A^\ast_{21},
\end{equation}
and $\tilde{A^\ast_{21}} \in \mathbb{R}^{k \times k+1}$ is defined analogous to $A^\ast_{21}$ in Theorem \ref{thm:det_eq} but with $u_1, \cdots, u_p$ replaced by $\zeta^u_1, \cdots, \zeta^u_k$ i.e:
\begin{equation}
    A^\ast_{21}[j,:] = \alpha \, \Eb{\kappa}{\frac{c_1(\kappa,\zeta^u_j)}{1 + \chi(z;\kappa)}\kappa \iota^\top}, \  \forall j \in [k]
\end{equation}

\end{proposition}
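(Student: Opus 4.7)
The plan is to chain together three ingredients already established in the appendix: Proposition~\ref{prop:gen_e_par}, which expresses each $\hat\tau_i$ through Schur-complement blocks of the \emph{true} extended resolvent $G_e$; Corollary~\ref{cor:eq_order_par} together with Lemma~\ref{lem:eq_schur}, which identify the deterministic limits of these blocks via explicit $k$-dimensional formulas; and Lemma~\ref{lem:pertub}, which transfers derivatives in the perturbation parameters $\rho_1,\rho_2$ from $C^{-1}$ to $\mathcal{C}^{-1}$. The proof then proceeds one order parameter at a time.

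\textbf{Step 1 ($\hat\tau_0$).} Starting from Proposition~\ref{prop:gen_e_par}, namely $\hat\tau_0 = (C^{-1})_{01}\bigl((C^{-1})_{11} + \lambda(1-\tfrac{1}{p}\operatorname{diag}(\pi_u))\bigr)^{-1}$, I would replace $C^{-1}$ by the block $\mathcal{C}^{-1}$ of the deterministic equivalent. Corollary~\ref{cor:eq_order_par} gives entrywise convergence at rate $\mathcal{O}_\prec(1/\sqrt{d})$; because $\lambda>0$ renders both matrices uniformly bounded and invertible, inversion is Lipschitz on the relevant domain, and the rate transfers. Plugging in the explicit form of $\mathcal{C}^{-1}$ from Lemma~\ref{lem:eq_schur} gives the stated closed form.

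\textbf{Step 2 ($\hat\tau_1$).} Proposition~\ref{prop:gen_e_par} gives $\hat\tau_{1,j} = (\theta\odot e^j)^\top Q\begin{pmatrix}1 \\ -\hat\tau_0\end{pmatrix} + \mathcal{O}(\hat\tau_0/\sqrt{p})$. Applying Corollary~\ref{cor:eq_order_par} with test vector $r = \theta\odot e^j$ (of norm $\mathcal{O}(1)$ a.s.) yields $(\theta\odot e^j)^\top Q = (\theta\odot e^j)^\top \mathcal{Q} + \mathcal{O}_\prec(1/\sqrt{d})$. I then substitute the explicit formula for $\mathcal{Q}$ from Lemma~\ref{lem:eq_schur} and exploit the block structure: the matrices $\operatorname{diag}(b^\star/(\pi\beta))_e$, $(\operatorname{diag}(b^\star)-\psi)_e$, and $S_e$ factor (up to scaling) as $E_\theta M E_\theta^\top$ with $E_\theta = (\theta\odot e^1,\dots,\theta\odot e^k)$. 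Combined with the concentration $(e^q\odot\theta)^\top(e^{q'}\odot\theta) = \pi_q\delta_{q,q'} + \mathcal{O}_\prec(1/\sqrt{d})$ used repeatedly in Section~\ref{app:2ndstage}, the $p$-dimensional inverse inside $\mathcal{Q}$ collapses via Woodbury to a $k$-dimensional one, and the $\theta$-dependence cancels except through the projection $E_\theta$, which pairs with $\theta\odot A^\star_{21}$ to produce $\tilde A^\star_{21}$.

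\textbf{Step 3 ($\hat\tau_2,\hat\tau_3$).} Proposition~\ref{prop:gen_e_par} expresses these through $\partial_{\rho_i} C^{-1}\big|_{\rho=0}$. Lemma~\ref{lem:pertub} gives $\partial_{\rho_i} C^{-1}\big|_{\rho=0} = \partial_{\rho_i} \mathcal{C}^{-1}\big|_{\rho=0} + \mathcal{O}_\prec(d^{-1/4})$. Combined with the convergence $\hat\tau_0 \to $ its deterministic limit from Step~1, taking the quadratic form $[1,-\hat\tau_0]\,(\cdot)\,[1,-\hat\tau_0]^\top$ and dominating $\hat\tau_0$ by $\mathcal{O}_\prec(1)$ closes the argument.

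\textbf{Main obstacle.} The delicate point is the Woodbury reduction in Step~2. The matrix inside $\mathcal{Q}$ has a block-diagonal piece together with two rank-$k$ Hadamard corrections of mixed $\psi$- and $S$-type, and it is not obvious a priori that they combine into the clean form $((\psi)^{-1} + \alpha S)^{-1}$. The correct cancellation relies on the identity $\operatorname{diag}(b^\star) - \psi = L \odot b^\star(b^\star)^\top$ built into the definition \eqref{eq:def_psi} of $\psi$, which ensures that the inner inverse, once projected onto the range of $E_\theta$, produces exactly $\psi^{-1}$. Verifying this algebraic identity cleanly, while keeping track of the $\pi$-factors introduced by the block sizes $p_q$, is the technical crux.
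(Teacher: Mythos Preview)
Your proposal is correct and matches the paper's own proof essentially step for step: the paper likewise derives $\hat\tau_0$ directly from Proposition~\ref{prop:gen_e_par}, Corollary~\ref{cor:eq_order_par}, and Lemma~\ref{lem:eq_schur}; obtains $\hat\tau_2,\hat\tau_3$ by invoking Lemma~\ref{lem:pertub} to pass the $\rho$-derivatives through the deterministic equivalent; and handles $\hat\tau_1$ by applying Corollary~\ref{cor:eq_order_par} with $r=\theta\odot e^j$, substituting $\mathcal{Q}$ from Lemma~\ref{lem:eq_schur}, and reducing via Woodbury ``analogous to the derivation of $\mathcal{C}$''. The obstacle you flag---the algebraic collapse to $((\psi)^{-1}+\alpha S)^{-1}$ via the identity $\operatorname{diag}(b^\star)-\psi = L\odot b^\star(b^\star)^\top$---is exactly the computation the paper performs inside the proof of Lemma~\ref{lem:eq_schur} for $\mathcal{C}$, and simply reuses here.
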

\begin{proof}
For $\tau^*_0$, the result follows directly from Proposition \ref{prop:gen_e_par}, Corollary \ref{cor:eq_order_par} and Lemma \ref{lem:eq_schur}. This further implies that $\hat{\tau}_0$ has entries $\mathcal{O}(1)$.

Therefore, we may again apply Proposition \ref{prop:gen_e_par} to obtain:

\begin{equation}\label{eq:tau3_asymp}
    \hat{\tau}_2 =  \begin{bmatrix} 1 & -\tau^\ast_0\end{bmatrix} \frac{\partial}{\partial \rho_1} \left[(C)^{-1}\right]_{\rho = 0}\begin{bmatrix} 1 \\ -\hat{\tau}_0\end{bmatrix}+ \mathcal{O}(\frac{1}{p})
\end{equation}
and
\begin{equation}\label{eq:tau4_asymp}
    \hat{\tau}_3 = \begin{bmatrix} 1 & -\tau^\ast_0\end{bmatrix} \frac{\partial}{\partial \rho_2} \left[(C)^{-1}\right]_{\rho= 0}\begin{bmatrix} 1 \\ -\hat{\tau}_0\end{bmatrix}+ \mathcal{O}(\frac{1}{p}).
\end{equation}
 The expressions for $\hat{\tau}_2, \hat{\tau}_3$ in Equations \eqref{eq:tau2_asymp},\eqref{eq:tau3_asymp} are then direct consequences of Corollary \ref{cor:eq_order_par}, Lemma \ref{lem:eq_schur} and 
Lemma \ref{lem:pertub} (to justify differentiating through the deterministic equivalent).
It remains to obtain the resulting expression for  $\hat{\tau}_1$. Applying Proposition \ref{prop:gen_e_par}, and using $\hat{\tau}_0 = \mathcal{O}(1)$,
we obtain:
\begin{equation}
    \hat{\tau}_{1,j} = (\theta \odot e^j)^\top Q \begin{bmatrix}
        1 \\  -\hat{\tau}_0
    \end{bmatrix} + \mathcal{O}(\frac{1}{\sqrt{p}}).
\end{equation}
Next, applying Lemma \ref{lem:eq_schur} and Corollary \ref{cor:eq_order_par} with $r=(\theta \odot e^j)$ yields:
\begin{equation}
   \hat{\tau}_{1,j} =  (\theta \odot e^j)^\top((\operatorname{diag}(\frac{b^\star}{\pi\beta})_e  -\frac{1}{\beta^2 }(\operatorname{diag}(b^\star) - \psi(V_d, \nu_d))_e  \odot \frac{\theta}{\pi} \frac{\theta}{\pi}^\top  )^{-1}+\alpha S_e \odot\theta \theta^\top)^{-1}  \theta \odot A^\star_{21}
\end{equation}
Subsequently, analogous to the derivation of $\mathcal{C}$ in Lemma \ref{lem:eq_schur}, we simplify the above expression using the Woodbury identity to obtain Equation \eqref{eq:tau1_asymp}.
\end{proof}

\section{Proofs of auxilary results}\label{app:extra}
\subsection{Proof of Lemma \ref{lemma:block_res}}

We proceed using the leave-one out argument for the Wishart spectrum. Let  $W_{-i}$ denote the weight matrix with the $i_{th}$ column removed and define:
\begin{equation}
    E_{w_i} \coloneqq w_i \odot [e_1, \cdots, e_k] \in \mathbb{R}^{p \times k}.
\end{equation}

Then:
\begin{equation}
    C_e \odot W W ^\top =  C_e \odot W_{-i} W_{-i}^\top +  C_e \odot w_i w_i^\top.
\end{equation}

Similar to the term $V_e\odot \theta \theta^\top$ term in Section \ref{app:2ndstage}, we observe that $C_e  \odot w_i w_i^\top$ is a rank $k$ matrix generated through the vectors  $e_1, \cdots, e_k$:
\begin{equation}
    C_e \odot w_i w_i^\top = E_{w_i}  C E_{w_i}^\top,
\end{equation}

The Woodbury matrix identity yields:
\begin{equation}\label{eq:woodburry}
    R = R_{-i}-R_{-i}E_{w_i} ((C)^{-1} + E_{w_i}^\top R_{-i} E_{w_i})^{-1} E_{w_i}^\top R_{-i},
\end{equation}
where:
\begin{equation}
   R_{-i} = (C_e \odot W_{-i} W_{-i} ^\top + (\diag(D)_e-z I_p))^{-1}
\end{equation}

We substitute the above into the 
relation:
\begin{equation}
      R (C_e \odot W W ^\top + (\operatorname{diag}(D)_e -zI_p)) = I
\end{equation}
to obtain:
\begin{equation}
   \sum_{i=1}^d  R_{-i} E_{w_i}  C E_{w_i}^\top  - \sum_{i=1}^d R_{-i} E_{w_i} ((C)^{-1} + B_m)^{-1}B_m C E_{w_i}^\top + R(\operatorname{diag}(D)_e -z I_p) = I
\end{equation}

Therefore, we obtain:
\begin{equation}
   d R \Ea{E_{w} C E_{w}^\top}- \sum_{i=1}^d R_{-i} E_{w_i} (C^{-1} + B_m)^{-1}B_m C E_{w_i}^\top + R(\operatorname{diag}(D)_e-z I_p) = I + \Delta
\end{equation}
where $\Delta$ includes error terms and $B_m$ is an $k \times k$ matrix given by:
\begin{equation}\label{eq:b_def}
    B_m=\Ea{E_w^\top  R_{-i} E_w}.
\end{equation}
Bounding the error terms exactly as in Proposition \ref{prop:first_stag}, we obtain that contributions from $\delta$ are of order $\mathcal{O}_\prec (\frac{\norm{A}_\text{Tr}}{\sqrt{d}})$

Now, since $(C^{-1} + B_m)^{-1}B_m = I-(C^{-1} + B_m)^{-1}(C)^{-1}$,we obtain:
\begin{align*}
d R E_{w_i} ((C)^{-1} + B_m)^{-1} E_{w_i}^\top + R(\diag{D}_e-z I_p) = I+\mathcal{O}_\prec (\frac{\norm{A}_\text{Tr}}{\sqrt{d}}).
\end{align*}
which leads to the following deterministic equivalent for $R(C,D)$:
\begin{equation}
    \mathcal{R} = (\mathcal{M}+ (\diag{D}_e-z I_p))^{-1},
\end{equation}
where:
\begin{equation}\label{eq:R_b}
    \mathcal{M} = d\Ea{E_w( C^{-1}+B_m)^{-1}E_w^\top}, 
\end{equation}

To obtain the deterministic equivalent, it remains to next derive a self-consistent equation for the matrix $B_m$ itself.
Substituting Equation \eqref{eq:R_b}
into Equation \eqref{eq:b_def} we obtain that $B_m$ is diagonal with entries satisfying
\begin{equation}
    b_q \coloneqq B_{q,q} = \pi_q\beta (((C)^{-1}+ \operatorname{diag}({b}))^{-1}+ (\operatorname{diag}(D)-z I_p))^{-1}_{q,q},
\end{equation}
for $q \in [p]$.

Lastly, it remains to establish the uniqueness of the fixed points of $b_q$:

\begin{lemma}\label{lem:inner_contr}
Let $F_b:\mathbb{C}^+\rightarrow \mathbb{C}^+$ denote the mapping:
\begin{equation}
    F(b)_q =  \pi_{q} \beta (((C)^{-1}+ \operatorname{diag}({b}))^{-1}+ (\operatorname{diag}(D)-z I_p))^{-1}_{q,q},
\end{equation}
for $q \in [k]$. Then for large enough $\zeta$ and $C,D$ with entries in $\mathbb{C}^{-}$
satisfying $\abs{C}_{ij} \leq K$ for some constant $K$ independent of $\zeta$
:
    \begin{equation}
        \norm{F(b')-F(b)}_2 < \frac{K'}{\zeta^2}\norm{b-b}_2
    \end{equation},
for $b, b'$ satisfying $\abs{b}_q \leq \frac{\pi}{\beta \zeta}$ and some constant $K>0$.
\end{lemma}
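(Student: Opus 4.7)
\medskip

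\noindent\textbf{Proof plan for Lemma \ref{lem:inner_contr}.}

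The plan is to write $F(b') - F(b)$ as a linear functional of $b' - b$ by applying the resolvent identity (Lemma \ref{lem:diff_inv}) twice, and then control the resulting matrix product through operator-norm bounds. Let me abbreviate $M(b) := (C^{-1} + \operatorname{diag}(b))^{-1} + \operatorname{diag}(D) - z I_k$, so that $F(b)_q = \pi_q\beta\,[M(b)^{-1}]_{q,q}$. The first application of the resolvent identity yields $M(b')^{-1} - M(b)^{-1} = M(b)^{-1}[M(b) - M(b')]M(b')^{-1}$, and a second application to the inner difference gives $M(b) - M(b') = (C^{-1} + \operatorname{diag}(b))^{-1}\operatorname{diag}(b'-b)(C^{-1} + \operatorname{diag}(b'))^{-1}$. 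Thus $F(b')_q - F(b)_q$ is $\pi_q\beta$ times the $(q,q)$-entry of a four-factor product in which $\operatorname{diag}(b'-b)$ appears linearly and sandwiched by two copies of $M(\cdot)^{-1}$ and two copies of $(C^{-1}+\operatorname{diag}(\cdot))^{-1}$.

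The next step is to establish two uniform operator-norm bounds on the sandwiching factors. First, I would show $\|M(b)^{-1}\| \le 2/\zeta$ by tracking imaginary parts: since $\operatorname{Im}(-zI_k) = -\zeta I_k$ and $\operatorname{Im}(\operatorname{diag}(D)) \preceq 0$ by the assumption $D \in (\mathbb{C}^-)^k$, it suffices to show $\operatorname{Im}((C^{-1}+\operatorname{diag}(b))^{-1}) \preceq O(1/\zeta) \cdot I_k$. For this I would Neumann-expand $(C^{-1} + \operatorname{diag}(b))^{-1} = C - C \operatorname{diag}(b)(I + C\operatorname{diag}(b))^{-1} C$, using that $\|\operatorname{diag}(b)\| \le \max_q \pi_q/(\beta\zeta)$ is small for large $\zeta$ and $\|C\| \le kK$. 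The leading term $C$ has entrywise non-positive imaginary part, while the correction has norm $O(1/\zeta)$, so $\operatorname{Im}(M(b)) \preceq -(\zeta/2) I_k$ for $\zeta$ larger than some threshold depending only on $K$. Second, the same Neumann-series representation directly yields $\|(C^{-1}+\operatorname{diag}(b))^{-1}\| \le 2\|C\| \le 2kK$ uniformly over admissible $b$, giving an $O(1)$ bound independent of $\zeta$.

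Finally, I would assemble these bounds. Writing the sum over the inner index $r \in [k]$ and applying Cauchy–Schwarz,
\begin{align*}
|F(b')_q - F(b)_q|^2
&\le (\pi_q\beta)^2 \Big(\sum_r |[M(b)^{-1}(C^{-1}+\operatorname{diag}(b))^{-1}]_{q,r}|\,|b'_r - b_r|\,|[(C^{-1}+\operatorname{diag}(b'))^{-1}M(b')^{-1}]_{r,q}|\Big)^{2}\\
&\le (\pi_q\beta)^2 \|M(b)^{-1}(C^{-1}+\operatorname{diag}(b))^{-1}e_q\|^2 \cdot \|(C^{-1}+\operatorname{diag}(b'))^{-1}M(b')^{-1} e_q\|^2 \cdot \|b'-b\|_2^2.
\end{align*}
Summing over $q \in [k]$ and bounding each factor by the product of operator norms from the previous step gives $\|F(b')-F(b)\|_2 \le (K'/\zeta^2)\|b'-b\|_2$ with $K' = \beta^2(\max_q \pi_q)^2 \cdot (2)^2 (2kK)^2 \cdot k$, which is the claimed contraction.

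The main obstacle is the imaginary-part bookkeeping in step two: although $C$ has entries in $\mathbb{C}^-$, the matrix $C\operatorname{diag}(b)C$ appearing in the Neumann correction need not have a sign-definite imaginary part, so the argument genuinely relies on the smallness hypothesis $|b_q| \le \pi_q/(\beta\zeta)$ to make this correction subdominant to the $-\zeta I_k$ term coming from $-zI_k$. Once that control is in place, the two $1/\zeta$ factors from the outer resolvents $M(b)^{-1}$ are what produce the $1/\zeta^2$ contraction rate, while the sandwiched $(C^{-1}+\operatorname{diag}(b))^{-1}$ factors contribute only dimension- and $K$-dependent constants.
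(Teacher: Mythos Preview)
Your proposal is correct and follows essentially the same approach as the paper: two applications of the resolvent identity (Lemma \ref{lem:diff_inv}), the outer one yielding the two factors of $1/\zeta$ from $M(b)^{-1}$ and $M(b')^{-1}$, and the inner one yielding the linear dependence on $b'-b$ with an $O(1)$ constant from the bounded norm of $(C^{-1}+\operatorname{diag}(b))^{-1}$. The paper's proof is extremely terse and simply invokes the resolvent identity and the stated bounds, whereas you supply the missing justification for $\|M(b)^{-1}\| = O(1/\zeta)$ via imaginary-part tracking and for $\|(C^{-1}+\operatorname{diag}(b))^{-1}\| = O(1)$ via Neumann expansion; both are the natural ways to fill in those steps and your identification of the key role of the smallness constraint $|b_q| \le \pi_q/(\beta\zeta)$ is exactly right.
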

\begin{proof}
Lemma \ref{lem:diff_inv} and Lemma \ref{lem:operator-norms} along with the bounds on $b,C, D$ imply:
\begin{equation}
     \norm{F(b')-F(b)}_2 \leq \frac{K_1}{\zeta^2} \norm{((C)^{-1}+ \operatorname{diag}({b}'))^{-1}-((C)^{-1}+ \operatorname{diag}({b}))^{-1}},
\end{equation}
for some constant $K_1 > 0$. Again applying Lemma \ref{lem:diff_inv} and using the bounds on entries of $C$ yields:
\begin{equation}
     \norm{F(b')-F(b)}_2 \leq \frac{K_2}{\zeta^2}\norm{b'-b}_2
\end{equation}
\end{proof}


The control of all the error terms and concentration bounds follows ezactly as in Proposition \ref{prop:first_stag} 

\subsection{Equivalence between gradient update and the spiked random feature model}
 \label{sec:app:mappting_sfrm}
In this section, we discuss the regime $n_0 = \alpha_0 d$ for some constant $\alpha_0$. In this regime, we show that the ``bulk" in the gradient update possesses a non-isotropic component.

 The starting point is the following relationship between the initial weight matrix $W^0$ and the new version $W^1$ after one gradient step:
 \begin{equation}
     W^1 = W^0 + (\eta p) G,
 \end{equation}
 where $G$ is the gradient matrix constructed as
 \begin{equation}
     G = \frac{1}{n \sqrt p } \diag\{a_1, \ldots a_p\} \sigma'(W^0 X) \diag\{y_1, \ldots, y_n\} X^T,
 \end{equation}
 where $X$ is an $d \times n_0$ matrix whose columns are the data vectors in the first batch, $\set{y_i}_{i \in [n]}$ are the corresponding labels, and $\sigma'(\cdot)$ is the derivative of the student activation function. 

 Consider the Hermite expansion of $\sigma$:
 \begin{equation}
     \sigma(x) = c_0 + c_1 h_1(x) + c_2 h_2(x) + \ldots.
 \end{equation}
 We can then write
 \begin{equation}\label{eq:dsigma_decomp}
     \sigma'(x) = c_1 + \sigma'_{>1}(x),
 \end{equation}
 where
 \begin{equation}
     \sigma'_{>1}(x) \bydef \sum_{k \ge 2} \sqrt{k!}c_{k} \, h_{k-1}(x).
 \end{equation}
 Using the decomposition in \eqref{eq:dsigma_decomp}, we can rewrite the weight matrix $W^1$ as
 \begin{equation}
     W^1 = \underbrace{W^0 + \frac{\eta}{n_0} \diag\{\sqrt p a_1, \ldots \sqrt p a_p\} \sigma'_{>1}(W^0 X) \diag\{y_1, \ldots, y_n\} X^T}_{\widetilde W}  + u v^\top,
 \end{equation}
 where $u = c_1 c_1^\star \eta \sqrt{p} a$ and $v = \frac{1}{c^\star_1}Xy / n_0$, where $c^\star_1$ denotes the first Hermite coefficient of the target activation $g$.
 Next, we examine $\widetilde W$, which is the ``bulk'' of the weight matrix $W^1$ after one gradient step.

 Observe that, conditioned on $X$ and $y$, the rows of $\widetilde W$ are independent centered random vectors. For $i \in [n]$, the $i$th row of $\widetilde W$, denoted by $b_i \in \R^d$, is
 \begin{equation}\label{eq:W_tilde_row}
     b_i = w_i + \frac{\eta (\sqrt{p} a_i)}{n_0} X \diag\set{y_1, \ldots, y_n} \sigma'_{>1}(X^\top w_i).
 \end{equation}

 \begin{proposition}
     Let $\Eb{w_i}{\cdot}$ denote the conditional expectation with respect to $w_i$, with $X$, $\set{y_j}_{j \in [n_0]}$ and $a_i$ kept fixed. Then
     \begin{equation} \label{eq:Delta_cov}
 \begin{aligned}
     &d\cdot \Eb{w_i}{b_i b_i^\top} = I + \frac{2\sqrt{2} c_2 \eta(\sqrt{p}a_i)}{n_0} X \diag\set{y_1, \ldots, y_n} X^\top\\
    &\qquad+ 2\eta^2c_2^2 (\sqrt{p}a_i)^2 \left(\frac 1 n X \diag\set{y_1, \ldots, y_{n_0}} X^\top\right)^2  + \frac{6c_3^2 \eta^2(\sqrt{p}a_i)^2}{n_0^2} X y y^\top X^\top\\
     &\qquad+ \left(\sum_{k \ge 3} k! c_k^2\right) \eta^2(\sqrt{p}a_i)^2 (d/n_0) \left(\frac 1 n_0 X \diag\set{y_1^2, \ldots, y_{n_0}^2} X^\top\right) + \Delta,
 \end{aligned}
 \end{equation}
     where $\Delta \in \R^{d \times d}$ is small error term such that
    \begin{equation}\label{eq:Ebb_Delta}
         \norm{\Delta}_\mathsf{op} \prec d^{-1/2}.
     \end{equation}
 \end{proposition}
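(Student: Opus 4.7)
The plan is to compute $\Eb{w_i}{b_i b_i^\top}$ by expanding $b_i$ from \eqref{eq:W_tilde_row} as $b_i = w_i + (\eta\sqrt{p}a_i/n_0)\,Z(w_i)$ with $Z(w_i) = X \diag\{y_1,\ldots,y_{n_0}\} \sigma'_{>1}(X^\top w_i)$, and evaluating each of the four pieces of $b_i b_i^\top$ separately. Since $w_i \sim \mathrm{Unif}(\mathbb{S}^{d-1})$ agrees with $\mathcal{N}(0, I_d/d)$ up to an $\mathcal{O}_\prec(d^{-1/2})$ correction in high dimension, and $\|x_j\|^2/d = 1 + \mathcal{O}_\prec(d^{-1/2})$, I will treat $x_j^\top w_i$ as standard Gaussian at leading order and absorb the discrepancies into $\Delta$.

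The piece $\Eb{w_i}{w_i w_i^\top} = I/d$ yields the identity term after multiplying by $d$. The cross piece is handled by a Stein-type integration by parts: componentwise, $\Eb{w_i}{(w_i)_\alpha \sigma'_{>1}(x_j^\top w_i)} = (x_j)_\alpha/d \cdot \Eb{w_i}{\sigma''_{>1}(x_j^\top w_i)} + \mathcal{O}_\prec(d^{-3/2})$, and the Hermite expansion of $\sigma''_{>1}$ has leading constant $\sqrt{2}\,c_2$ (coming from differentiating the $k'=2$ term in $\sigma'_{>1}(z) = \sum_{k'\ge 2}\sqrt{k'!}\,c_{k'} h_{k'-1}(z)$). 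Summing the cross piece with its transpose then gives the $2\sqrt{2}c_2 \eta(\sqrt{p}a_i)/n_0 \cdot X\diag\{y_j\}X^\top$ contribution after multiplication by $d$.

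The bulk of the work is in the quadratic piece $M = \Eb{w_i}{\sigma'_{>1}(X^\top w_i) \sigma'_{>1}(X^\top w_i)^\top} \in \mathbb{R}^{n_0 \times n_0}$. Applying Lemma~\ref{lemma:Hermite} to the Hermite expansion of $\sigma'_{>1}$ with correlation $\rho_{jk} = x_j^\top x_k / (\|x_j\|\,\|x_k\|)$, one obtains $M_{jk} = \sum_{k'\ge 2} k'!\,c_{k'}^2\,\rho_{jk}^{k'-1}$ up to lower-order normalization corrections. I would then decompose $M$: (i) the $k'=2$ Hermite term contributes $2c_2^2\cdot X^\top X/d$, which after sandwiching by $X\diag\{y\}$ and its transpose produces $2\eta^2c_2^2(\sqrt{p}a_i)^2(X\diag\{y\}X^\top/n_0)^2$; (ii) the diagonal entries of $M$ for $k'\ge 3$ contribute $(\sum_{k'\ge 3} k'!c_{k'}^2)\,I_{n_0}$, giving the $(d/n_0)\cdot X\diag\{y^2\}X^\top/n_0$ term with the correct prefactor; (iii) the $k'=3$ off-diagonal entries $6c_3^2(x_j^\top x_k/d)^2$ must be expanded via $(x_j^\top x_k)^2 = \sum_{\alpha,\beta}(x_j)_\alpha (x_k)_\alpha (x_j)_\beta (x_k)_\beta$ and paired with the labels $y_j y_k$; applying Stein's identity to $y_j = g(x_j^\top w^\star)$ to evaluate $\Ea{y_j x_{sj} (x_j)_\alpha (x_j)_\beta}$ isolates a leading $c_1^\star$-aligned contribution proportional to $(Xy/n_0)(Xy/n_0)^\top \simeq X y y^\top X^\top / n_0^2$, with prefactor $6c_3^2$.

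Finally, to establish $\|\Delta\|_\mathrm{op} \prec d^{-1/2}$ I would combine: the sphere-vs-Gaussian concentration and Stein remainder terms (each $\mathcal{O}_\prec(d^{-1/2})$) together with the operator-norm bounds on $X$ and $W^0$ from Lemma~\ref{lem:operator-norms}; operator-norm estimates on the off-diagonal Hermite matrices $(X^\top X/d)^{\odot(k'-1)}$ for $k' \ge 4$, whose entries are of order $d^{-(k'-1)/2}$ and whose contributions therefore become negligible after sandwiching and the $\eta^2 p/n_0^2 = \Theta(1)$ rescaling; and Hanson-Wright-type bounds (as in Lemma~\ref{lem:han_wright}) to control the subleading Wick pairings in the $c_3^2$ piece that do not produce the $w^\star$-aligned rank-one contribution. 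The main obstacle is step (iii): rigorously extracting the rank-one $X y y^\top X^\top$ piece requires carefully tracking the interaction between the multiple Wick pairings in $(x_j^\top x_k)^2$ and the spiked label structure $y_j = g(x_j^\top w^\star)$, and showing that the non-$w^\star$-aligned pairings (including all higher Hermite coefficients of $g$) contribute only to $\Delta$ at the required rate.
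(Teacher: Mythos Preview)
Your treatment of the diagonal piece, the cross piece via Stein/Hermite, and parts (i)--(ii) of the quadratic piece is correct and matches the paper. The problem is step (iii), and the ``main obstacle'' you flag is illusory.

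The proposition computes $\Eb{w_i}{\cdot}$ with $X$, $\{y_j\}$, $a_i$ \emph{held fixed}. There is therefore no expectation over $x_j$ or $y_j$ anywhere in the argument, and invoking Stein's identity on $y_j = g(x_j^\top w^\star)$ to evaluate $\Ea{y_j (x_j)_\alpha (x_j)_\beta \cdots}$ is a category error: those quantities are deterministic here. The single-index structure of the labels and the Hermite coefficients of $g$ play no role whatsoever in this proposition.

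The correct (and much shorter) route for (iii) is purely algebraic. The $k'=3$ contribution to the matrix $M_{jk}=\Eb{w_i}{\sigma'_{>1}(x_j^\top w_i)\sigma'_{>1}(x_k^\top w_i)}$ is $6c_3^2\,\rho_{jk}^2$ with $\rho_{jk}=x_j^\top x_k/d$. Writing $\rho_{jk}^2 = \tfrac{1}{d} + \sqrt{\tfrac{2}{d^2}}\,h_2(\sqrt{d}\,\rho_{jk})\cdot\tfrac{1}{\sqrt{2}}$ (the same $H_2$-kernel decomposition used earlier in the paper for $(w_i^\top w_j)^2$), the leading constant $\tfrac{1}{d}$ produces the rank-one matrix $\tfrac{6c_3^2}{d}\,1_{n_0}1_{n_0}^\top$ inside $M$. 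Sandwiching this by $X\diag\{y\}$ and its transpose gives
\[
\frac{6c_3^2}{d}\, X\diag\{y\}\,1_{n_0}1_{n_0}^\top\,\diag\{y\}X^\top \;=\; \frac{6c_3^2}{d}\,(Xy)(Xy)^\top \;=\; \frac{6c_3^2}{d}\,Xyy^\top X^\top,
\]
and after the prefactor $\eta^2(\sqrt{p}a_i)^2/n_0^2$ and the overall factor $d$ you recover exactly the stated term. The fluctuation part (the $h_2$-kernel Gram matrix) has bounded operator norm, so its contribution lands in $\widetilde\Delta$ with $\norm{\widetilde\Delta}_{\mathsf{op}}\prec d^{-1/2}$, and after sandwiching and rescaling it is absorbed into $\Delta$. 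No Wick pairings, no $w^\star$, no $c_1^\star$.
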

 \begin{proof}
 From \eqref{eq:W_tilde_row}, 
 \begin{equation}\label{eq:Ebb_1}
 \begin{aligned}
     &\Eb{w_i}{b_i b_i^\top} = \Eb{w_i}{w_i w_i^\top} + \underbrace{\frac{\eta (\sqrt{p} a_i)}{n_0} X \diag\set{y_1, \ldots, y_{n_0}} \Eb{w_i}{\sigma'_{>1}(X^\top w_i) w_i^\top}}_{(*)} + (*)^\top\\ 
     &\quad + \frac{\eta^2 (\sqrt{p} a_i)^2}{n_0^2} X \diag\set{y_1, \ldots, y_{n_0}} \Eb{w_i}{\sigma'_{>1}(X^\top w_i) (\sigma'_{>1}(X^\top w_i))^\top}\diag\set{y_1, \ldots, y_{n_0}} X^\top.
 \end{aligned}
 \end{equation}
 By symmetry, it is straightforward to check that $\Eb{w_i}{w_i w_i^\top} = I/d$. To compute the last three terms on the right-hand side of the above equation, we first make the simplifying assumptions that $w_i \sim \mathcal{N}(0, I/d)$ and that each column of $Z$ has a fixed norm equal to $\sqrt{d}$. Note that these assumptions are asymptotically accurate as $d \to \infty$. For finite $d$, we can absorb the errors introduced by these assumptions into the error matrix $\Delta$. Under the above assumptions, it is easy to check that:
 \begin{equation}
     \Eb{w_i}{\sigma'_{>1}(X^\top w_i) w_i^\top} = \frac{\sqrt{2}c_2}{d} X^\top
 \end{equation}
 and
 \begin{equation}
     \Eb{w_i}{\sigma'_{>1}(X^\top w_i) (\sigma'_{>1}(X^\top w_i))^\top} = \frac{2c_2^2}{d} X^\top X + \frac{6c_3^2}{d} 1 1^\top + \left(\sum_{k \ge 3} k! c_k^3\right) I_n + \widetilde{\Delta},
 \end{equation}
 where $\norm{\widetilde \Delta}_\mathsf{op} \prec d^{-1/2}$. Substituting these estimates into \eqref{eq:Ebb_1} gives us \eqref{eq:Delta_cov}, with
 \begin{equation}
     \Delta = \eta^2(\sqrt{p}a_i)^2 (d/n_0) \left(\frac 1 n_0 X \diag\set{y_1, \ldots, y_n} \widetilde \Delta \diag\set{y_1, \ldots, y_n}X^\top\right).
 \end{equation}

 Let us further evaluate the trace of the expression \eqref{eq:Delta_cov} in the setting of \cite{cui2024asymptotics}, for odd activations (implying in particular $\mu_2=0$), and uniform initializations $\forall i,~~\sqrt{p}a_i=1$. Since
\begin{align}
    \frac{1}{d}\Tr[\frac 1 {n_0} X \diag\set{y_1^2, \ldots, y_n^2} X^\top] &= \frac{1}{d}\sum\limits_{i=1}^d\mathbb{E}_{x\sim\mathcal{N}(0,I_d)}[g(w_\star^\top x)^2x_i^2]\notag\\
    &=\frac{1}{d}\sum\limits_{i=1}^d \left(2w^\star_i\mathbb{E}_{x\sim\mathcal{N}(0,I_d)}[g(w_\star^\top x)g^\prime(w_\star^\top x)x_i]
    +\mathbb{E}_{z\sim\mathcal{N}(0,I_d)}[g(w_\star^\top z)^2]
    \right)\notag\\
    &\asymp\mathbb{E}_{\xi\sim\mathcal{N}(0,1)}[g(\xi)^2]
\end{align}
Thus,
\begin{align}
    \frac{1}{d}\Tr[\Eb{w_i}{b_i b_i^\top}]=1+\left(\sum\limits_{k\ge 3}k!c_k^2\right)\eta^2 \frac{1}{\alpha_0}\mathbb{E}_{\xi\sim\mathcal{N}(0,1)}[g(\xi)^2]=1+\mathbb{E}_{\xi\sim\mathcal{N}(0,1)}[\sigma^\prime_{>1}(\xi)^2]\eta^2 \frac{1}{\alpha_0}\mathbb{E}_{\xi\sim\mathcal{N}(0,1)},
\end{align}
which corresponds to the term $c$ in \cite{cui2024asymptotics} up to conventions.
\end{proof}
\subsection{Proof of Lemma \ref{lemma:weak_correlation}}

\begin{proof}
    We start by rewriting $g = z + w' (s - z^\top w')$, where $s \sim \mathcal{N}(0, 1)$ and $z \sim \mathcal{N}(0, I)$ are independent. It follows that
    \begin{align}
        f_1(w_1^\top g) f_2(w_2^\top g) g^\top w' &= s f_1\big(w_1^\top z + (w_1^\top w')(s - z^\top w')\big)f_2\big(w_2^\top z + (w_2^\top w')(s - z^\top w')\big)\\
        &= s f_1(w_1^\top z) f_2(w_2^\top z) + s f_1(w_1^\top z) f_2'(w_2^\top x) (w_2^\top w')(s - z^\top w')\\
        &\qquad\qquad\qquad+ s f_2(w_2^\top z) f_1'(w_1^\top x) (w_1^\top w')(s - z^\top w') + \mathcal{O}_\prec(d^{-1}).
    \end{align}
Taking expectation, and applying Lemma~\ref{lemma:Hermite}, we get \eqref{eq:f1f2_linear}. The estimate in \eqref{eq:f1f2_quadratic} can be treated similarly. Note that
\begin{equation}
    f_1(w_1^\top g) f_2(w_2^\top g) (\theta^\top g)^2 = s^2 f_1(w_1^\top x) f_2(w_2^\top z) + \mathcal{O}_\prec(d^{-1/2}),
\end{equation}
which immediately leads to \eqref{eq:f1f2_quadratic}. 
\end{proof}









\bibliographystyle{unsrtnat}
\bibliography{biblio}

\end{document}